\newif\ifblackandwhitecycle
\gdef\patternnumber{0}
\newcommand\phantomimage{%
    \phantom{%
        \rule{\imagewidth}{\imageheight}%
    }%
}
\newcommand\zoombox[2][]{
    \begin{scope}[zoombox paths]
        \pgfmathsetmacro\xpos{
            (\columncount-1)*(\pgfkeysvalueof{/tikz/zoomboxarray width} / \pgfkeysvalueof{/tikz/zoomboxarray columns} + \pgfkeysvalueof{/tikz/zoomboxarray inner gap} / \pgfkeysvalueof{/tikz/zoomboxarray columns} ) + \pgflinewidth
        }
        \pgfmathsetmacro\ypos{
            (\rowcount-1)*( \pgfkeysvalueof{/tikz/zoomboxarray height} / \pgfkeysvalueof{/tikz/zoomboxarray rows} + \pgfkeysvalueof{/tikz/zoomboxarray inner gap} / \pgfkeysvalueof{/tikz/zoomboxarray rows} ) + 0.5*\pgflinewidth
        }
        \edef\dospy{\noexpand\spy [
            #1,
            zoombox paths/.append style={
                black and white pattern=\patternnumber
            },
            every spy on node/.append style={#1},
            x=\imagewidth,
            y=\imageheight
        ] on (#2) in node [anchor=north west] at ($(zoomboxes container.north west)+(\xpos pt,-\ypos pt)$);}
        \dospy
        \pgfmathtruncatemacro\pgfmathresult{ifthenelse(\columncount==\pgfkeysvalueof{/tikz/zoomboxarray columns},\rowcount+1,\rowcount)}
        \global\let\rowcount=\pgfmathresult
        \pgfmathtruncatemacro\pgfmathresult{ifthenelse(\columncount==\pgfkeysvalueof{/tikz/zoomboxarray columns},1,\columncount+1)}
        \global\let\columncount=\pgfmathresult
        \ifblackandwhitecycle
            \pgfmathtruncatemacro{\newpatternnumber}{\patternnumber+1}
            \global\edef\patternnumber{\newpatternnumber}
        \fi
    \end{scope}
}
        \gdef\patternnumber{1}
        \gdef\patternnumber{1}
     \gdef\patternnumber{0}
        \pgfgetlastxy{\imagewidth}{\imageheight}
        \global\let\imagewidth=\imagewidth
        \global\let\imageheight=\imageheight
        \gdef\columncount{1}
        \gdef\rowcount{1}
\def\bI{\ensuremath{{\bf I}}}
\def\bD{\ensuremath{{\bf D}}}
\def\bB{\ensuremath{{\bf B}}}
\def\bF{\ensuremath{{\bf F}}}
\def\bJ{\ensuremath{{\bf J}}}
\def\bX{\ensuremath{{\bf X}}}
\def\bE{\ensuremath{{\bf E}}}
\def\bM{\ensuremath{{\bf M}}}
\def\bv{\ensuremath{{\bf v}}}
\def\bu{\ensuremath{{\bf u}}}
\def\bL{\ensuremath{{\bf L}}}
\def\bQ{\ensuremath{{\bf Q}}}
\def\bR{\ensuremath{{\bf R}}}
\def\bA{\ensuremath{{\bf A}}}
\def\bB{\ensuremath{{\bf B}}}
\def\bW{\ensuremath{{\bf W}}}
\def\bP{\ensuremath{{\bf P}}}
\def\bS{\ensuremath{{\bf S}}}
\def\bG{\ensuremath{{\bf G}}}
\def\bF{\ensuremath{{\bf F}}}
\def\bH{\ensuremath{{\bf H}}}
\def\bN{\ensuremath{{\bf N}}}
\def\bp{\ensuremath{{\bf p}}}
\def\br{\ensuremath{{\bf r}}}
\def\bt{\ensuremath{{\bf t}}}
\def\bw{\ensuremath{{\bf w}}}
\def\bs{\ensuremath{{\bf s}}}
\def\be{\ensuremath{{\bf e}}}
\def\bq{\ensuremath{{\bf q}}}
\def\vec{\ensuremath{{\mathrm{vec}}}}
\def\cP{\ensuremath{{\mathcal P}}}
\def\cI{\ensuremath{{\mathcal I}}}
\def\cO{\ensuremath{{\mathcal O}}}
\def\cU{\ensuremath{{\mathcal U}}}
\def\cV{\ensuremath{{\mathcal V}}}
\def\cE{\ensuremath{{\mathcal E}}}
\def\cN{\ensuremath{{\mathcal N}}}
\def\cR{\ensuremath{{\mathcal R}}}
\def\cT{\ensuremath{{\mathcal T}}}
\def\cS{\ensuremath{{\mathcal S}}}
\def\supp{\ensuremath{{\mathrm{Supp}}}}
\def\dt{\ensuremath{\Delta\tau}}
\def\0{\ensuremath{{\bf 0}}}
\def\rank{\ensuremath{\mathrm{rank}}}
\def\tcr{\textcolor{black}}
\def\tcred{\textcolor{black}}
\newtheorem*{remark}{Remark}
\newtheorem{theorem}{Theorem}
\newtheorem{lemma}{Lemma}
\begin{document}

\title{Robust Alignment for Panoramic Stitching\\via An Exact Rank Constraint}

\author{Yuelong~Li,~\IEEEmembership{Student~Member,~IEEE,}
        Mohammad~Tofighi,~\IEEEmembership{Student~Member,~IEEE,}
        and~Vishal~Monga,~\IEEEmembership{Senior~Member,~IEEE}

	    	\thanks{Y. Li, M. Tofighi, and V. Monga are with Department of Electriccal Engineering, The Pennsylvania State University, University Park,
	    		PA, 16802 USA, Emails: liyuelongee@gmail.com, tofighi@psu.edu, vmonga@engr.psu.edu
	    	}
	    	\thanks{Research supported by a National Science Foundation CAREER Award.}}

\markboth{IEEE Transactions on Image Processing, Accepted for Publication, March 2019}%
{Shell \MakeLowercase{\textit{et al.}}: Bare Demo of IEEEtran.cls for IEEE Journals}

\maketitle

\begin{abstract}
We study the problem of image alignment for panoramic
    stitching. Unlike most existing approaches that are feature-based, our
    algorithm works on pixels directly, and accounts for errors across the
    whole images globally. Technically, we formulate the alignment problem
    as rank-1 and sparse matrix decomposition over transformed images,
    and develop an efficient algorithm for solving this challenging
    non-convex optimization problem. The algorithm reduces to solving a sequence
    of subproblems, where we analytically establish exact recovery
    conditions, convergence and optimality, together with convergence rate
    and complexity. We generalize it to simultaneously align multiple
    images and recover multiple homographies, extending its application
    scope towards vast majority of practical scenarios.  Experimental
    results demonstrate that the proposed algorithm is capable of more
   accurately aligning the images and generating higher quality stitched
    images than state-of-the-art methods.
\end{abstract}

\begin{IEEEkeywords}
Image alignment, stitching, pixel-based alignment, alternating minimization, rank-1 and sparse decomposition
\end{IEEEkeywords}

\IEEEpeerreviewmaketitle

\section{Introduction}\label{sec:introduction}
\IEEEPARstart{P}{anoramic} stitching refers to the process of stitching several
images together, assuming these images contain pairwise overlapping regions.
Despite their variety, panoramic stitching algorithms generally follow the same
pipelines~\cite{szeliski_image_2006}: first, estimate the parametric
correspondences between input images, and then warp them towards a common
canvas; this step is called alignment.  Second, compose these aligned images
together deliberately to conceal visual artifacts; this step is called
stitching. Afterwards, several post-processing steps can be incorporated to
further improve visual quality.  Among these procedures, alignment serves as an
essential step since properly aligned images can significantly reduce the
burden of subsequent procedures.  Besides, other applications such as object
recognition can greatly benefit from more accurately aligned
images~\cite{liu_sift_2011,peng_rasl:_2012}.

\subsection{Related Previous Work}\label{subsec:review}
Broadly speaking, alignment methods could be divided into two categories:
pixel-based vs feature-based~\cite{szeliski_image_2006}.  Pixel-based
approaches typically estimate a dense correspondence field per pixel, while
feature-based approaches generally rely on feature extraction, detection and
matching. Despite extensive studies about pixel-based
approaches~\cite{szeliski_creating_1997,bartoli_groupwise_2008,evangelidis_parametric_2008,gay-bellile_direct_2010}
especially in the early stage of image alignment, nowadays feature-based
methods are much more widely used since pixel-based methods require close
initializations under large displacements. To enhance the alignment accuracy of
feature-based methods, a post-processing strategy called bundle
adjustment~\cite{triggs_bundle_1999} is usually performed.

In the seminal work of~\cite{brown_automatic_2007}, a fully automated panorama
system called AutoStitch is introduced. It employs a
SIFT~\cite{lowe_distinctive_2004}-based alignment technique, followed by
RANSAC~\cite{fischler_random_1981} and probabilistic matching verification to
enhance robustness. In a natural evolution of panoramic stitching, it was
observed that a single homography motion model is only accurate for planar and
rotational scenes~\cite{zaragoza_as-projective-as-possible_2014}. For casually
taken images, this model is error-prone. Consequently, the stitched images may
contain ghosting artifacts or distorted objects. Instead of a single global
homography, in~\cite{gao_constructing_2011} the authors propose a
dual-homography model, i.e., a composition of two distinct homographies. As a
further generalization, in~\cite{lin_smoothly_2011} a spatially varying affine
model is proposed. In this approach, local deviations are superposed to the
global affine transformations to correct local misalignment errors. However,
this method suffers from shape distortions in the non-overlapping regions as
affine transformation may be suboptimal for extrapolation. Therefore,
in~\cite{zaragoza_as-projective-as-possible_2014}, Zaragoza {\it et al.}
estimate spatially moving homographies instead of affine transformations.

In a parallel direction, Gao {\it et al.}~\cite{gao_seam-driven_2013} propose
to select the homography that produces the ``best'' seam in the subsequent
seam-cutting procedure~\cite{szeliski_image_2006} by minimizing an 
energy function. However, this method is inapplicable to cases where the single
homography model is highly inaccurate. Lin {\it et al.}~\cite{lin_seam_2016}
advance the idea by combining it with content-preserving
warps~\cite{liu_content-preserving_2009} to deal with large parallax. Other
works inspired by content-preserving warps
include~\cite{zhang_parallax-tolerant_2014,li_dual-feature_2015,lin_direct_2017},
etc.

Other feature-based methods have focused on aspects of research somewhat
adjacent to the topic of this paper such as 3D reconstruction, improving the
aesthetic value of the stitched images, \tcr{handling specific scenarios such as large parallax and low textures} etc. These
include~\cite{chang_shape-preserving_2014, li_quasi-homography_2018,
	lin_adaptive_2015, hu_multi-objective_2015, chen_natural_2016,
agarwala_photographing_2006,qi_zhi_toward_2012,zhang_multi-viewpoint_2016,li_parallax-tolerant_2018,xiang_image_2018}, to
name a few.

In the face alignment literature, a pixel-based method called
RASL~\cite{peng_rasl:_2012} has been shown to be quite successful. It formulates the alignment
problem as a low-rank and sparse matrix decomposition problem on the warped
images.  RASL exhibits robustness to object occlusions and
photometric differences (in the image set) can be overcome to some extent.
However, RASL relaxes the true optimization problem and relies heavily on close
initializations that can be unrealistic. Furthermore, the assumption
that all aligned images overlap on a common region can be overly restrictive in
practice.

\subsection{Motivation and Contributions}\label{subsec:motivation}
There are two fundamental elements in an alignment algorithm: 
\begin{enumerate*}
	\item motion models to represent geometric transformations across images and
	\item methods for fitting the motion models.
\end{enumerate*}
Motivated by the limitations of the traditional single homography model, recent
studies are focused on development of more flexible geometric transformation
models and have reduced misalignment errors to some extent.

Nevertheless, methods for fitting the newly-invented models are commonly built
on the ground of feature matching, which may suffer from several drawbacks.
First, feature points are less accurately localized than pixels. Additionally,
feature detection may also lead to loss of information.  In particular, the
edge thresholding procedure in detecting SIFT features may lead to
underemphasis on line and edge structures, which contribute to evident
artifacts in the final results. Finally, the feature points are spatially
non-uniform. Consequently, absence of feature points in certain local areas may
cause instabilities when fitting locally adaptive models. Therefore, however
general the geometric transformation model is, there exist irreducible errors
that cannot be overcome by the nature of feature-based methods. It is therefore
interesting to investigate possible performance gains achievable through
alternative pixel-based strategies.

To address the aforementioned drawbacks, we develop a new pixel-based method. Specifically, we make the following
\textbf{contributions} in this work\footnote{Preliminary version of this work
	has been presented in~\cite{li_siasm:_2016}. In this paper, we provide novel, more effective and efficient optimization algorithms and associated theoretical analysis. The experimental comparisons are also significantly expanded.
}:

\begin{enumerate}
	\item We propose a novel image alignment algorithm based on decomposing a
		set of panoramic images (data matrix) as the sum of a rank-1 and a
		sparse matrix. Our work extends the framework of low-rank plus sparse
		matrix decomposition now widely used in computer vision, except that in
		our approach an exact capture of the rank (as opposed to approximations
		via the nuclear norm) plays a crucial role.  We show that explicitly
		forcing the low-rank component to be of rank-1 is not only physically
		meaningful but also practically beneficial.

	\item Our analytical contributions involve solving an inherently non-convex
		problem (induced by the exact rank constraint) and analyzing
		convergence properties of the proposed algorithm as well as optimality
		of the final solution. Recent advances in {\it low-rank matrix
		recovery\/}~\cite{jain_low-rank_2013,netrapalli_non-convex_2014}
		facilitate our development of efficient algorithms and solutions. As
		opposed to~\cite{wright_compressive_2013}, our results rely on
		deterministic conditions and are more specific to the alignment
		problem.

	\item As our practical contributions, we generalize the aforementioned
		alignment strategy to handle realistic scenarios such as multiple
		overlapping regions and multiple underlying homographies. The complete
		algorithm is called {\it Bundle Robust Alignment and Stitching\/}
		(BRAS). The complexity is linear in both the number of pixels and
		images, ensuring scalability.  We verify the enhanced alignment
		accuracy for panoramic stitching applications compared to many
		state-of-the-art methods through extensive experiments.

	\item Finally, our code and datasets can be accessed freely online for
		reproducibility~\footnote{\url{signal.ee.psu.edu/panorama.html}}.
\end{enumerate}

The rest of the paper is organized as follows: we provide a concrete
mathematical formulation of the robust alignment problem in
Section~\ref{sec:robust_alignment}, in particular focusing on decomposing the
data matrix as the sum of a rank-1 and a sparse matrix. Our iterative alignment
algorithm is introduced here along with convergence analysis and examination of
the optimality of the resulting solution. We then describe a complete panoramic
composition system in Section~\ref{sec:bundle_alignment}, which includes issues
of initialization, handling of multiple overlapping regions and dealing with
multiple homographies. Experimental validation on various popular datasets and
real world applications to panoramic stitching are presented in
Section~\ref{sec:results}. Finally, Section~\ref{sec:conclusion} concludes the
paper.

\section{Robust Image Alignment via Rank-1 and Sparse Decomposition}\label{sec:robust_alignment}

For ease of exposition, we first describe our approach for the case where all input images overlap on a single region. We formulate it as a rank-constrained sparse error minimization
problem in~\ref{subsec:formulate} and then discuss an efficient solution
in~\ref{subsec:solution} and associated theoretical analysis
in~\ref{subsec:analysis}. We also discuss its relationship to other pixel-based methods in~\ref{subsec:relation}.

{\bf Notation:} We will adopt the following notation henceforth: we use $\|\cdot\|_{\ell_p}$ to denote the $\ell_p$ norm on
vectors; when acting on matrices, we first stack the elements into a
vector and then operate on it. $\|\cdot\|_p$ denotes the induced $p$ norm
on matrices. $\|\bX\|_F=\sqrt{\sum_{ij}\bX_{ij}^2}$ is the Frobenius norm
of matrix $\bX$. We let ${\{\be_i\}}_{i=1}^n$ be the standard basis vectors
in $\mathbb{R}^n$ where all elements except the $i$-th are zero. $\bX^\dag$ denotes the pseudo-inverse of matrix $\bX$. $\supp(\bX)=\{(i,j):\bX_{ij}\neq 0\}$ denotes the support of $\bX$. $\circ$ denotes functional composition (image warping in practice). A listing of the symbols can be found in  the supplementary document.

\subsection{Alignment Model and Problem Formulation}\label{subsec:formulate}
Our goal is to estimate the geometric transformations
associated with several input images to align the images. These transformations are from the
original image plane towards a common reference coordinate axis (we call it
canvas). In this work we focus on the case where all the images are taken
on the same natural scene.  Automatic algorithms for discovering different
scenes have been proposed in~\cite{brown_automatic_2007}.

\begin{figure}
    \centering
    \includegraphics[width=\linewidth]{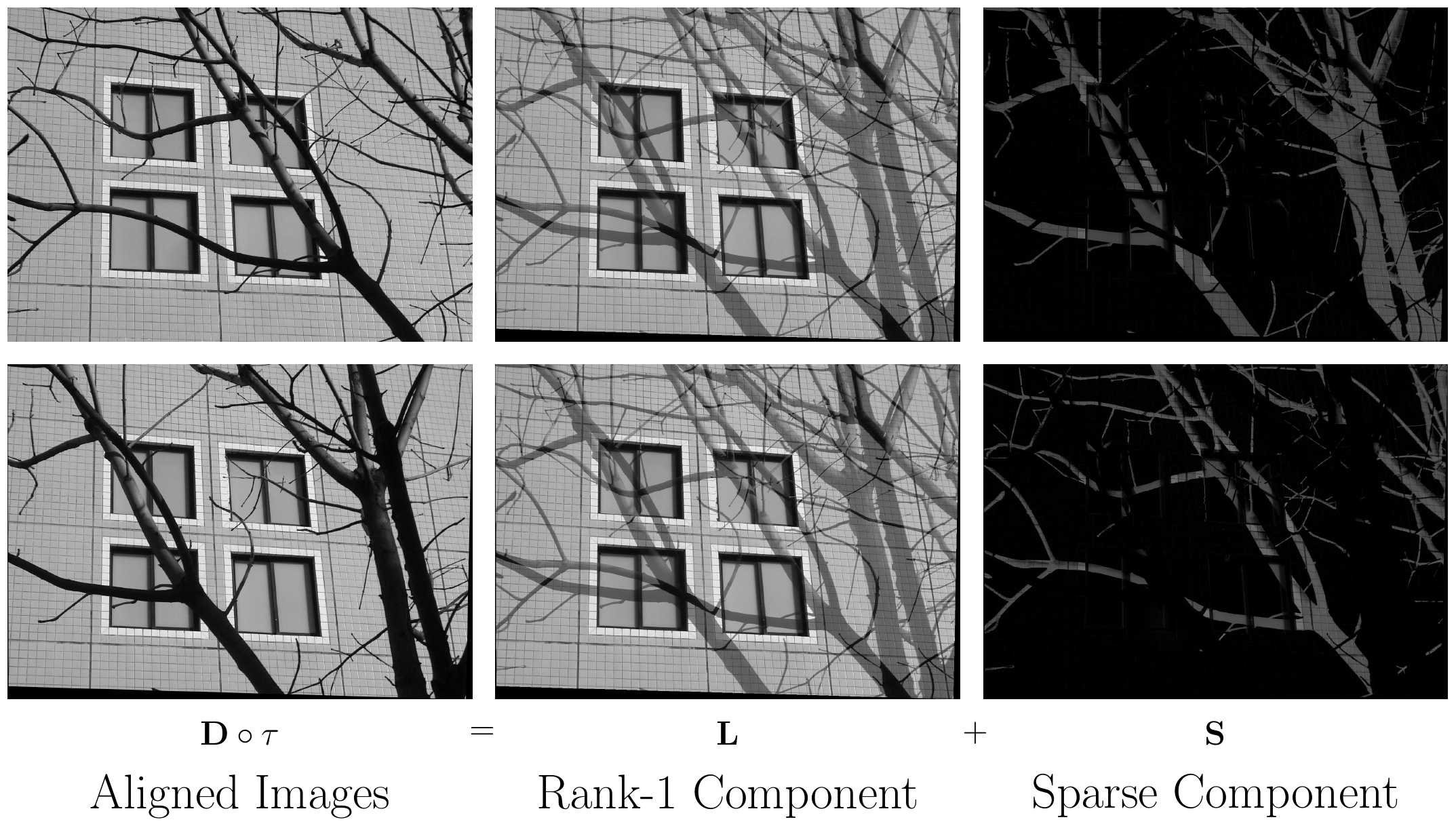}
	\caption{Rank-1 and sparse decomposition on an aligned image pair in the {\it windows\/} dataset~\cite{peng_rasl:_2012}: note that the two images as they appear in the rows above (second column) are nearly identical --- emphasizing that $\mathbf L$ is rank-1, while the error component ($\mathbf S$) is spatially sparse.}\label{fig:example}
\end{figure}

Given $n$ grayscale images ${\{I_i\}}_{i=1}^n$ of the same
real world scene, let $t_i:\mathbb{R}^2\rightarrow\mathbb{R}^2$ be the geometric
transformation warping $I_i$ to the canvas axis, and let $M_i\in\mathbb{R}^{h\times w}$ be the
corresponding warped image on canvas.\footnote{\tcr{In practice, warped images usually contain out-of-domain elements and the observed portions are generally not rectangular, can be seen in Fig.~\ref{fig:example}. We assume fully-observed rectangular $M_i$s in this Section. Handling more general (and practical) cases is discussed in Section~\ref{sec:bundle_alignment}.}} \tcr{Concretely, we implement the canvas as the minimum rectangle enclosing each warped image.} In most practical scenarios, the geometric transformations ${\{t_i\}}_{i=1}^n$ admit low-dimensional parametric representations. In particular, for planar and rotational scenes they can be well-approximated as 8-parameter plane homographies~\cite{szeliski_image_2006}. We hereafter represent $t_i$ with its $d$ parameters $\tau_i\in\mathbb{R}^d$, and express it as $t(\cdot;\tau_i)$.
Therefore,
\begin{equation}
	\left(I_i\circ\tau_i\right)(x,y) = I_i(t((x,y);\tau_i)) = M_i(x,y),
    \label{eqn:transform}
\end{equation}
$\forall (x,y)\in\mathbb{R}^2$. 
As we postulate that ${\{M_i\}}_i$ are fully overlapping on a common region,
under perfect alignment the content will appear largely similar. For images $M_1$, $M_2$ that differ only in photometric differences, a widely used model is the following ``gain and bias'' model~\cite{bartoli_groupwise_2008,evangelidis_parametric_2008}:
\[
	M_1=gM_2+b,
\]
where $g,b$ are scalar constants that model contrast and brightness changes, respectively, and the operations between matrices and scalars act elementwise. In this work, we assume brightness changes are negligible, i.e., $b\approx 0$. In doing so, the pixel values in the same position $(x,y)$ from different images $M_i$ will only differ by the gain factors:
\begin{equation*}
	M_i(x,y)=g_{i}B(x,y)
\end{equation*}
where $B\in\mathbb{R}^{h\times w}$ is the underlying background. Further deviations may come from moving objects, parallax, noises and errors due to non-linearity of the camera response curve~\cite{debevec_recovering_1997,lee_ghost-free_2014}. In consideration of such deviations, the following model may be used instead:
\begin{equation}
	M_i = g_{i}B + S_i,\qquad i=1,\ldots,n,\label{eqn:model}
\end{equation}
where $g_i>0$ is the gain factor, and $S_i\in\mathbb{R}^{h\times w}$ is the error term.
To capture the background similarity between $n$ images, we re-express the $n$ equations in~(\ref{eqn:model}) jointly as follows:
\begin{equation}
    \bM = \bL + \bS,
    \label{eqn:vec}
\end{equation}
where $\bM = \left[\vec(M_1), \vec(M_2), \dots,
\vec(M_n)\right]\in\mathbb{R}^{m\times n}$, $m=hw$, $\bL=[g_1\vec(B),g_2\vec(B),\dots,g_n\vec(B)]\in\mathbb{R}^{m\times n}$, $\bS=[\vec(S_1),\vec(S_2),\dots,\vec(S_n)]\in\mathbb{R}^{m\times n}$ and $\vec:\mathbb{R}^{h\times w}\longrightarrow\mathbb{R}^m$ is the linear operator that stacks the elements of a matrix into a vector. Plugging~(\ref{eqn:transform}) into~(\ref{eqn:vec}), we get
\begin{equation*}
    \bD\circ\tau=\bL+\bS,
\end{equation*}
where
$\bD\circ\tau=[\vec(I_1\circ\tau_1),\vec(I_2\circ\tau_2),\dots,\vec(I_n\circ\tau_n)]$,$\tau=[\tau_1,\tau_2,\dots,\tau_n]\in\mathbb{R}^{d \times n}$.
In all cases of interest, $g_i\neq 0$ and $B\neq 0$, and thus $\bL$ is clearly a rank-1 matrix; furthermore, we observe that
the errors $\{S_i\}$ usually appear spatially sparse, i.e., most of their elements
have very small magnitudes while a small number of elements can appear
quite large in magnitude (please refer to Fig.~\ref{fig:example} for a concrete example). Therefore, we employ the $\ell_1$ norm as the error
metric since it is both robust against gross sparse corruptions and stable
against small but dense noises~\cite{peng_rasl:_2012}.Consolidating the above models, we formulate the alignment problem as the following minimization problem:
\begin{align}
    \min_{\bL,\bS,\tau}\quad \|\bS\|_{\ell_1}\label{eqn:nonlinear}&\\
    \text{subject to }\quad\bD\circ\tau&=\bL+\bS,~\rank(\bL)=1,\nonumber
\end{align}
i.e., we try to fit a rank-1 matrix that deviates the warped images
(collectively) as small as possible, and simultaneously seek for a collection of
aligning geometric transformations.

In case of brightness changes, the $\bS$ component will accommodate the
differences as long as they are relatively small in magnitude.  When they become
large, we may revise the optimization problem~\eqref{eqn:nonlinear}
accordingly but this is an investigation outside the scope of this work.


\subsection{Efficient Optimization Algorithm}\label{subsec:solution}

A common practice
for handling non-linearity in numerical analysis and optimization is to
iteratively linearize the problem and solve the linearized version.
Indeed, this technique has been widely used in the image alignment
literature~\cite{baker_lucas-kanade_2004,peng_rasl:_2012}. Specifically,
under small perturbation $\Delta\tau\in\mathbb{R}^{d\times n}$ of $\tau$, we can expand
$\bD\circ(\tau+\Delta\tau)\approx
\bD\circ\tau+\sum_{i=1}^n\bJ_i\Delta\tau_i\be_i^T$, where
$\bJ_i=\left.\frac{\partial}{\partial\xi}\vec\left(I_i\circ\xi\right)\right|_{\xi=\tau_i}\in\mathbb{R}^{m\times
    d}$ is the Jacobian matrix of the $i$-th image with respect to its
	transformation parameters $\tau_i$. Problem~\eqref{eqn:nonlinear}
    then reduces to:
\begin{align}
	\min_{\bL,\bS,\Delta\tau} \quad &\|\bS\|_{\ell_1}\label{eqn:linearized}\\
	\text{subject to }\quad\bD_\tau+\sum_{i=1}^n\bJ_i\Delta\tau_i\be_i^T&=\bL+\bS,\label{eqn:eq_constraint}\\
					  \quad\rank(\bL)&=1,\nonumber
\end{align}
where we write $\bD_\tau=\bD\circ\tau$ for notational brevity.

For relatively large deviation of $\tau$, we can successively
solve~\eqref{eqn:linearized} and apply the update\footnote{Convergence of such
	linearization schemes is discussed in~\cite{peng_rasl:_2012}.}
$\tau\gets\tau+\dt$.
The remaining problem is how to efficiently solve~(\ref{eqn:linearized}).

Problem~\eqref{eqn:linearized} is non-smooth due to the $\ell_1$ norm, and
the dimensionality of the variables $\bL$ and $\bS$ can be very large.
Therefore, the solution method needs to be derivative-free and scalable.
For such purposes we choose the penalty method~\cite{nocedal_numerical_2006}. To
this end, we first form the penalty function
\[
    \mathrm{P}_{\zeta}(\bL,\bS,\dt)=\|\bS\|_{\ell_1}+\frac{1}{2\zeta}\left\|\bD_\tau+\sum_{i=1}^n\bJ_i\Delta\tau_i\be_i^T-\bL-\bS\right\|_F^2,
\]
where $\zeta>0$ is a constant parameter to control the strength of the
constraint~\eqref{eqn:eq_constraint}, and $\|\cdot\|_F$ denotes the Frobenius norm. When
$\zeta\rightarrow0$ the original equality constraint is strictly enforced.
Based on this fact, standard penalty method requires iteratively
solving the problem
\begin{equation}
    \min_{\bL,\bS,\dt}\mathrm{P}_{\zeta_k}(\bL,\bS,\dt)\quad\text{subject to}\quad\rank(\bL)=1,
    \label{eqn:penalty}
\end{equation}
for some positive real sequence ${\{\zeta_k\}}_k$ obeying $\zeta_k\rightarrow0$ as $k\rightarrow\infty$. However, directly solving the joint minimization problem~(\ref{eqn:penalty}) is difficult due to the rank-1 constraint, and we apply an alternating minimization scheme as follows:
\begin{align*}
    \bL^{k}&=\arg\min_{\bL:\rank(\bL)=1}\mathrm{P}_{\zeta_k}(\bL,\bS^{k-1},\dt^{k-1}),\\
    \bS^{k}&=\arg\min_{\bS}\mathrm{P}_{\zeta_k}(\bL^{k},\bS,\dt^{k-1}),\\
    \dt^{k}&=\arg\min_{\dt}\mathrm{P}_{\zeta_k}(\bL^{k},\bS^{k},\dt).
\end{align*}
Closed form solutions for every subproblems are available. We begin by defining the soft-thresholding operator $\cS_\zeta:\mathbb{R}^{m\times n}\longrightarrow\mathbb{R}^{m\times n}$ as
\begin{equation}
	\cS_\zeta\{\bX\}=\mathrm{sgn}(\bX)\cdot\max\{|\bX|-\zeta,0\},\label{eqn:sthr}
\end{equation}
where $\mathrm{sgn}(x)=1$ if $x>0$ and $0$ otherwise, $\cdot$ denotes elementwise product, and the sgn, max and $\vert {\mathbf X} \vert$ (absolute value) operators act elementwise. We also define the rank-1 projection operator $\cT_1:\mathbb{R}^{m\times n}\rightarrow\mathbb{R}^{m\times n}$ as
\begin{equation*}
    \cT_1\{\bX\}=\sigma_1\bu_1\bv_1^T,
\end{equation*}
where $\sigma_1$ is the largest singular value of $\bX$ and $\bu_1\in\mathbb{R}^m$,
$\bv_1\in\mathbb{R}^n$ are the corresponding left and right singular vectors,
respectively. 
\begin{algorithm}[t!]
    \renewcommand{\algorithmicrequire}{\textbf{Input:}}
    \renewcommand{\algorithmicensure}{\textbf{Output:}}
    \caption{Alternating Minimization for Solving~\eqref{eqn:linearized}}

    \begin{algorithmic}[1]
		\REQUIRE{$\bD_\tau\in\mathbb{R}^{m\times n}$, ${\{\bJ_i\in\mathbb{R}^{m\times d}\}}_{i=1}^n$,$\beta_0$,$\beta_1>0$,$q\in(0,1)$.}
		\STATE{$\bL^0\gets 0$,$\dt^0\gets 0$,$\zeta_0=\beta_0\frac{\|\bD_\tau\|_2}{\sqrt{mn}}$,$\bS^0\gets\cS_{\zeta_0}(\bD_\tau)$.\label{initialization}}
        \FOR{$k=1, 2, \dots, K$}
		\STATE{$\bL^{k}\gets\cT_1\left\{\bD_\tau+\sum_{i=1}^n\bJ_i\Delta\tau_i^{k-1}\be_i^T-\bS^{k-1}\right\}$,\label{lupdate1}}
		\STATE{$\zeta_{k}\gets\beta_1\frac{q^{k}}{\sqrt{mn}}\|\bL^{k}\|_2$,\label{parameter1}}
		\STATE{$\bS^{k}\gets\cS_{\zeta_{k}}\left\{\bD_\tau+\sum_{i=1}^n\bJ_i\Delta\tau_i^{k-1}\be_i^T-\bL^{k}\right\}$,\label{supdate1}}
		\STATE{$\dt^{k}\gets\sum_{i=1}^n\bJ_i^\dag(\bL^{k}+\bS^{k}-\bD_\tau)\be_i\be_i^T$.\label{dtupdate}}
        \ENDFOR
		\ENSURE{$\widehat{\bL}\gets\bL^K$, $\widehat{\bS}\gets\bS^K$, $\widehat{\dt}\gets\dt^K$.}
    \end{algorithmic}\label{alg:single}
\end{algorithm}

The complete algorithm is summarized in
Algorithm~\ref{alg:single}.
Interestingly, Algorithm~\ref{alg:single} is closely
related to recent studies about non-convex robust principal component
analysis~\cite{netrapalli_non-convex_2014}. In particular, if we set all the $\dt^k$ to $0$, and replace soft-thresholding with hard-thresholding, then
Algorithm~\ref{alg:single} reduces to Algorithm 1
in~\cite{netrapalli_non-convex_2014} {\it exactly\/} for the rank-1 case.

The choice of
${\{\zeta_k\}}_k$ as in Step 4 of Algorithm~\ref{alg:single} is crucial in our work. Justification for this choice is provided in Theorem~\ref{thm:convergence} which essentially guarantees that, with $\zeta_k$ chosen as in step~\ref{parameter1}, the sequences ${\{\bL^k\}}_k$, ${\{\bS^k\}}_k$ and
${\{\dt^k\}}_k$
converge to $\bL^\ast$, $\bS^\ast$ and $\dt^\ast$ (to be defined in Section~\ref{subsec:analysis}) under certain conditions and proper
choices of parameters.\footnote{For simplicity, we provide an
analysis assuming a noiseless model; given our result,
stability under small dense noise can be derived in  a manner similar to~\cite{netrapalli_non-convex_2014}}. Additionally, the maximum number of iterations $K$ in Algorithm~\ref{alg:single} can be predetermined given desired accuracy $\varepsilon>0$. 

\subsection{Convergence Analysis}\label{subsec:analysis}
To begin with, we suppose there
are underlying rank-1 matrix $\bL^\ast\in\mathbb{R}^{m\times n}$, sparse matrix $\bS^\ast\in\mathbb{R}^{m\times n}$ and
incremental transformation parameters
$\dt^\ast=[\Delta\tau^\ast_1,\Delta\tau^\ast_2,\dots,\Delta\tau^\ast_n]\in\mathbb{R}^{d\times n}$
acting together to generate the data matrix $\bD_\tau$:
\begin{equation}
    \bD_\tau=\bL^\ast+\bS^\ast-\sum_{i=1}^n\bJ_i\Delta\tau_i^\ast\be_i^T.
    \label{eqn:observation}
\end{equation}

Our next goal is to specify the conditions under which we can
recover $\bL^\ast$, $\bS^\ast$ and $\dt^\ast$ {\it exactly\/} from the
observed $\bD_\tau$. To quantify those conditions, we invoke the Singular Value
Decomposition (SVD) of $\bL^\ast:\bL^\ast=\sigma^\ast\bu^\ast{\bv^\ast}^T$ (note
$\bL^\ast$ is rank-1) and the QR decomposition of
$\bJ_i:\bJ_i=\bQ_i\bR_i$ where $\bQ_i\in\mathbb{R}^{m\times d}$ and $\bR_i\in\mathbb{R}^{d\times d},i=1,2,\dots,n$. Recall that $\dt^\ast$ is one
small incremental step of transformation parameters, so it is reasonable to assume
that it has small size; in particular, we assume
\begin{enumerate}
    \item [A.1] $\max_i\|\bR_i\Delta\tau^\ast_i\|_{\ell_2}\leq\gamma\frac{\sigma^\ast}{\sqrt{nd}}$ for some $\gamma>0$
\end{enumerate}
to ensure relatively small Frobenius norm of $\sum_{i=1}^n\bJ_i\dt_i^\ast\be_i^T$ compared to $\bL^\ast$.
On the other hand, we don't want to lose any generality on $\dt^\ast$, so
we won't pose any other assumptions on $\dt^\ast$. Therefore, for the
special case $\dt^\ast=0$, $\bL^\ast$ and $\bS^\ast$ must still be
recoverable. This falls back to the well-studied {\it robust principal
component analysis\/} problem~\cite{wright_robust_2009,candes_robust_2011}
and different sets of conditions~\cite{recht_guaranteed_2010,chandrasekaran_rank-sparsity_2011} on $\bL^\ast$ and $\bS^\ast$ have been proposed.
Intuitively they guard $\bL^\ast$ against being ``sparse'' and $\bS^\ast$
against being ``low-rank'' to resolve the identifiability issue.
We adopt the ones
discussed in~\cite{netrapalli_non-convex_2014}:
\begin{enumerate}
    \item [A.2] $\max_i|\bu_i^\ast|\leq\frac{\mu}{\sqrt{m}}, \max_i|\bv_i^\ast|\leq\frac{\mu}{\sqrt{n}}$ for some $\mu>0$;
    \item [A.3] $\bS^\ast$ has a fraction of at most $\alpha_1,\alpha_2$ non-zeros in each column and row respectively, for some $\alpha_1,\alpha_2\in(0,1)$.
\end{enumerate}

The parameter $\mu$ in A.2 is commonly referred to as {\it incoherence parameter\/}~\cite{candes_robust_2011};
intuitively it measures the ``similarity'' between the singular
vectors and the standard basis vectors ${\{\be_i\}}_i$.  As standard basis vectors are sparse, a  small incoherence parameter typically implies ``dense'' singular vectors and effectively prevents $\bL^\ast$ being sparse. A.3 prevents the nonzeros entries in $\bS^\ast$ gathering in the same rows or columns, and in turn prevents $\bS^\ast$ being low-rank.

\begin{figure}
    \subfloat[\label{fig:corrupt_L}] {\includegraphics[width=0.45\linewidth]{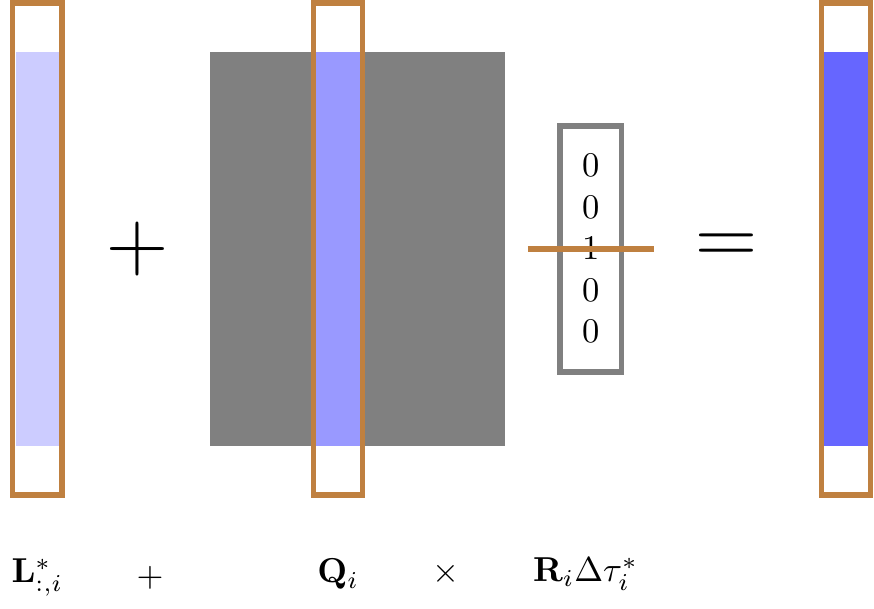}}\hfill
    \subfloat[\label{fig:corrupt_S}] {\includegraphics[width=0.45\linewidth]{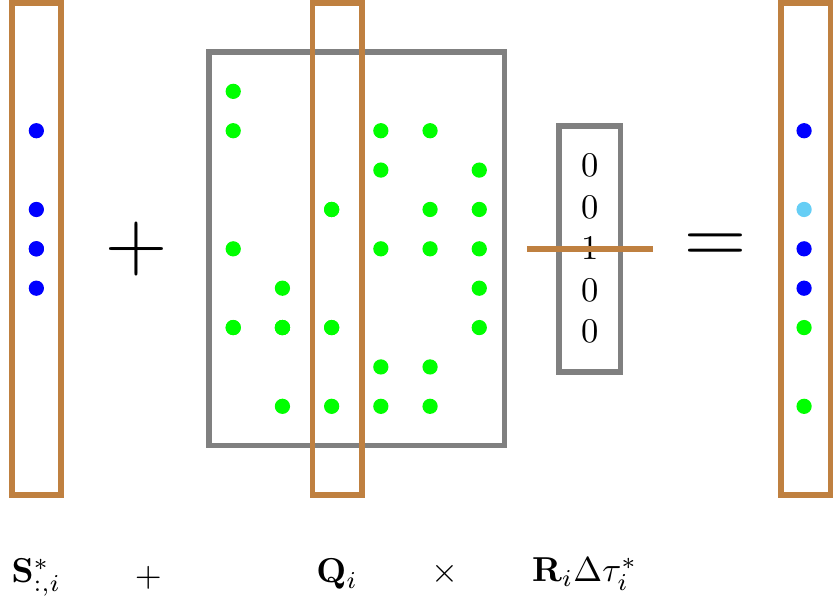}}\\

    \subfloat[\label{fig:low_rank}] {\includegraphics[width=\linewidth]{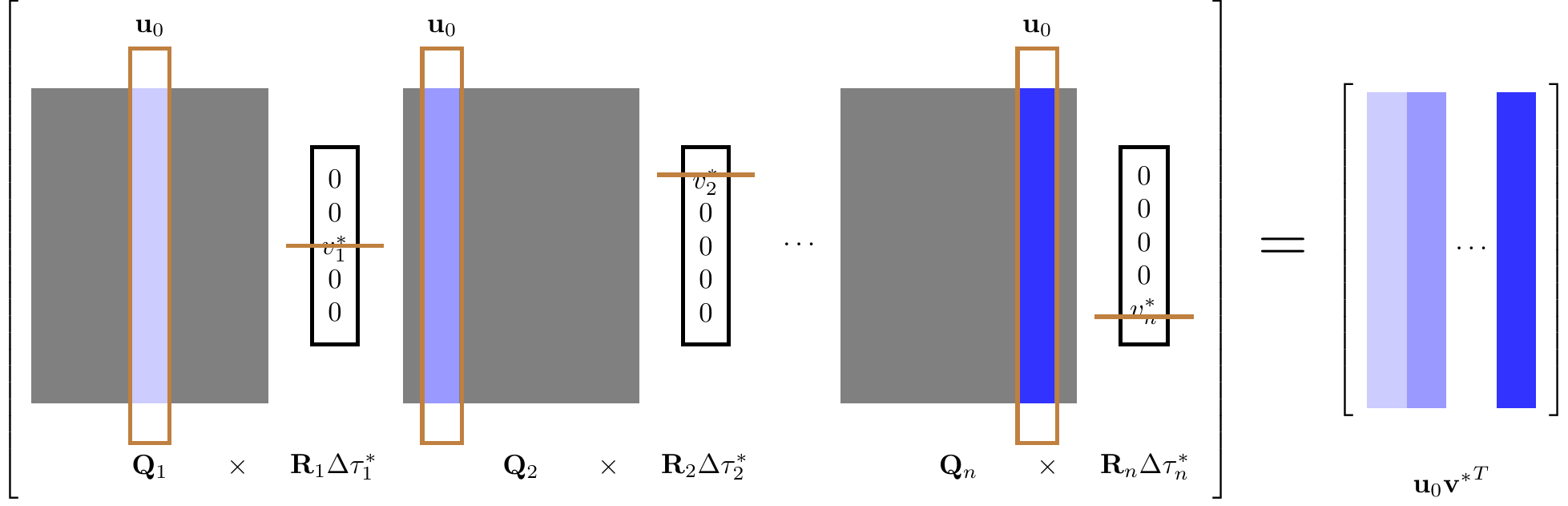}}
	\caption{Illustration of the technical assumptions A.4-A.6: (a) a $\bQ_i$ with some columns similar to $\bu^\ast$ may induce a vector $\bQ_i\bR_i\dt_i^\ast$ unidentifiable from $\bL_{:, i}^\ast$, the $i$-th column of $\bL$; (b) a sparse (coherent) $\bQ_i$ may induce a vector $\bQ_i\bR_i\dt^\ast_i$ unidentifiable from $\bS^\ast_{:,i}$, the $i$-th column of $\bS^\ast$; (c) highly correlated $\bQ_i$'s (with a common column $\bu_0\in\mathbb{R}^m$) may induce a rank-1 matrix $\sum_{i=1}^n\bJ_i\Delta\tau^\ast_i\be_i^T=\sum_{i=1}^n\bQ_i\bR_i\dt_i^\ast\be_i^T=\bu_0{\bv^\ast}^T$ that is unidentifiable from $\bL^\ast=\sigma^\ast\bu^\ast{\bv^\ast}^T$. }\label{fig:theory}\vspace{-10pt}
\end{figure}

Generally when $\dt^\ast\neq 0$ we have an additional component
$\sum_{i=1}^n\bJ_i\Delta\tau_i^\ast\be_i^T$, and in the same spirit we
need to ensure it is identifiable from both $\bL^\ast$ and $\bS^\ast$.
Note A.1 does not prevent $\bR_i\Delta\tau_i^\ast$ from moving towards
any directions; in particular, it may be aligned with any one of the
standard basis vectors in $\mathbb{R}^d$. In such a case
$\bJ_i\Delta\tau_i^\ast$ will be aligned with one of the columns in
$\bQ_i$, and it turns out either a sparse $\bQ_i$ or a $\bQ_i$ with some
columns similar to $\bu^\ast$ can potentially cause identifiability issues with certain columns of
$\bL^\ast$ or $\bS^\ast$ (see Fig.~\ref{fig:corrupt_L} and
Fig.~\ref{fig:corrupt_S} for a depiction). To avoid such cases we need two additional assumptions:
\begin{enumerate}
    \item [A.4] $\max_{ij}\|\bQ_j^T\be_i\|_{\ell_2}\leq\nu\sqrt{\frac{d}{m}}$ for some $\nu>0$;
    \item [A.5] $\max_i\|\bQ_i^T\bu^\ast\|_{\ell_2}\leq\kappa\sqrt{\frac{d}{m}}$ for some $\kappa>0$;
\end{enumerate}
here we follow the idea of incoherence in A.4 to measure non-sparsity of
$\bQ_i$, and in A.5 to measure the dissimilarity between $\bQ_i$ and
$\bu^\ast$. Finally, note that highly-correlated column vectors in different
$\bQ_i$'s may raise identifiability issues between $\bL^\ast$ and
$\sum_{i=1}^n\bJ_i\Delta\tau_i^\ast\be_i^T$ (see Fig.~\ref{fig:low_rank}
for an example); hence we further
assume small {\it first principal
angles\/}~\cite{soltanolkotabi_robust_2014} between $\bQ_i$'s (on average):
\begin{enumerate}
    \item [A.6] $\frac{1}{n-1}\sum_{j\neq i}\|\bQ_j^T\bQ_i\|_2\leq \delta$ for some $\delta>0$.
\end{enumerate}

Equipped with assumptions A.1-A.6, we are now ready
to state our major theorem as follows:
\begin{theorem}
	There are positive constants $C_\delta$, $C_q<1$, $C_d=\mathcal{O}(\sqrt{m})$, $C_{\alpha_1}=\mathcal{O}\left(d^{-1}\right)$, $C_{\alpha_2}=\mathcal{O}\left(d^{-\frac{1}{2}}\right)$ such that,
	if $\alpha_1\leq C_{\alpha_1}$, $\alpha_2\leq C_{\alpha_2}$, $d\leq C_d$, $\delta\leq C_\delta$, $C_q\leq q < 1$  then $\forall\beta_0\in\left[\frac{\widetilde{\mu}\sigma^\ast}{\|\bD_\tau\|_2},\frac{2\widetilde{\mu}\sigma^\ast}{\|\bD_\tau\|_2}\right]$, $\beta_1\in\left[2\widetilde{\mu},
	2.2\widetilde{\mu}\right]$ where $\widetilde{\mu}=\mu^2+\gamma\nu$, each $\bS^k
	(k\geq 0)$ in Algorithm~\ref{alg:single}~\footnote{Note that initialization of Algorithm~\ref{alg:single} is fixed as shown in Step~\ref{initialization}.} satisfies
	$\supp(\bS^k)\subset\supp(\bS^\ast)$; furthermore, $\forall\varepsilon>0$ and $K\geq\log_q\left(\frac{\varepsilon}{\beta_0\|\bD_\tau\|_2}\right)$,\footnote{Note that $K$ is decreasing w.r.t.\ increase in $\varepsilon$ as $q<1$.}
	$\|\widehat{\bL}-\bL^\ast\|_F\leq\varepsilon$,
	$\|\widehat{\bS}-\bS^\ast\|_{\ell_\infty}\leq\frac{5\varepsilon}{\sqrt{mn}}$,
	$\|\widehat{\dt}-\dt^\ast\|_F\leq M\varepsilon$
	for some $M>0$.\label{thm:convergence}
\end{theorem}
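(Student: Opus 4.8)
The plan is to prove the statement by induction on $k$, mirroring the non-convex RPCA analysis of~\cite{netrapalli_non-convex_2014} while accounting for the coupled transformation updates $\dt^k$. First I would substitute the observation model~\eqref{eqn:observation} into each update of Algorithm~\ref{alg:single}: writing $\bG^{k-1}=(\bS^\ast-\bS^{k-1})+\sum_{i=1}^n\bJ_i(\Delta\tau_i^{k-1}-\Delta\tau_i^\ast)\be_i^T$, the argument of $\cT_1$ in Step~\ref{lupdate1} becomes $\bL^\ast+\bG^{k-1}$, the argument of $\cS_{\zeta_k}$ in Step~\ref{supdate1} becomes $\bS^\ast+(\bL^\ast-\bL^k)+\sum_i\bJ_i(\Delta\tau_i^{k-1}-\Delta\tau_i^\ast)\be_i^T$, and, since $\bJ_i^\dag\bJ_i=\bI$, Step~\ref{dtupdate} gives $\Delta\tau_i^k-\Delta\tau_i^\ast=\bJ_i^\dag[(\bL^k-\bL^\ast)+(\bS^k-\bS^\ast)]\be_i$. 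The induction hypothesis I would carry is threefold: (i)~$\supp(\bS^k)\subset\supp(\bS^\ast)$; (ii)~$\|\bL^k-\bL^\ast\|_{\ell_\infty}\le c_1 q^k\sigma^\ast/\sqrt{mn}$; and (iii)~$\max_i\|\bR_i(\Delta\tau_i^k-\Delta\tau_i^\ast)\|_{\ell_2}\le c_2 q^k\sigma^\ast/\sqrt{nd}$, so that all three error quantities contract geometrically at rate $q$. The base case $k=0$ (where $\bL^0=\dt^0=0$) is checked directly: by A.2 and A.4, $\|\bL^\ast\|_{\ell_\infty}+\|\sum_i\bJ_i\Delta\tau_i^\ast\be_i^T\|_{\ell_\infty}\le\widetilde{\mu}\sigma^\ast/\sqrt{mn}\le\zeta_0$, so $\cS_{\zeta_0}(\bD_\tau)$ is supported in $\supp(\bS^\ast)$.

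The heart of the argument is an element-wise perturbation bound for the rank-$1$ projection: for incoherent $\bL^\ast$ (A.2), $\|\cT_1\{\bL^\ast+\bG\}-\bL^\ast\|_{\ell_\infty}\le\frac{C\mu^2}{\sqrt{mn}}\|\bG\|_2$ whenever $\|\bG\|_2$ is a small fraction of $\sigma^\ast$. Applied with $\bG=\bG^{k-1}$, this reduces the $\bL$-update error to controlling $\|\bG^{k-1}\|_2$, which I would split as $\|\bG^{k-1}\|_2\le\|\bS^\ast-\bS^{k-1}\|_2+\|\sum_i\bQ_i\bR_i(\Delta\tau_i^{k-1}-\Delta\tau_i^\ast)\be_i^T\|_2$. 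The first term is handled by the sparsity assumption A.3: because $\bS^\ast-\bS^{k-1}$ inherits support in $\supp(\bS^\ast)$ by (i), $\|\bS^\ast-\bS^{k-1}\|_2\le\sqrt{\alpha_1\alpha_2 mn}\,\|\bS^\ast-\bS^{k-1}\|_{\ell_\infty}$. The second term is the genuinely new ingredient: writing $\bP=\sum_i\bQ_i\ba_i\be_i^T$ with $\ba_i=\bR_i(\Delta\tau_i^{k-1}-\Delta\tau_i^\ast)$ and expanding $\|\bP\|_2^2=\|\bP^T\bP\|_2$, the diagonal blocks contribute $\max_i\|\ba_i\|_{\ell_2}^2$ (each $\bQ_i$ having orthonormal columns) while the off-diagonal blocks $\ba_i^T\bQ_i^T\bQ_j\ba_j$ are controlled precisely by A.6, which caps $\tfrac{1}{n-1}\sum_{j\neq i}\|\bQ_j^T\bQ_i\|_2$ by $\delta$. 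This yields $\|\bP\|_2\le\sqrt{1+(n-1)\delta}\,\max_i\|\ba_i\|_{\ell_2}$, which with A.1 and (iii) is of the required order.

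Support containment at level $k$ then follows from the thresholding schedule. On $\supp(\bS^\ast)^c$ the argument of $\cS_{\zeta_k}$ equals $(\bL^\ast-\bL^k)+\sum_i\bJ_i(\Delta\tau_i^{k-1}-\Delta\tau_i^\ast)\be_i^T$, whose $\ell_\infty$ norm I would bound using the projection estimate together with A.4--A.5 (which control the element-wise size of the transformation term through $\|\bQ_j^T\be_i\|_{\ell_2}$ and $\|\bQ_i^T\bu^\ast\|_{\ell_2}$). The range $\beta_1\in[2\widetilde{\mu},2.2\widetilde{\mu}]$ with $\widetilde{\mu}=\mu^2+\gamma\nu$ is calibrated so that $\zeta_k=\beta_1 q^k\|\bL^k\|_2/\sqrt{mn}$ dominates this combined bound of order $\widetilde{\mu}\,q^k\sigma^\ast/\sqrt{mn}$ (the $\mu^2$ part from $\bL^\ast-\bL^k$, the $\gamma\nu$ part from the transformation term via A.1 and A.4); since $\|\bL^k\|_2$ stays within a constant factor of $\sigma^\ast$, soft-thresholding annihilates every off-support entry, giving $\supp(\bS^k)\subset\supp(\bS^\ast)$. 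On $\supp(\bS^\ast)$ the soft-thresholding bias is at most $\zeta_k$ per entry, so $\|\bS^k-\bS^\ast\|_{\ell_\infty}\le\zeta_k+\|\bL^k-\bL^\ast\|_{\ell_\infty}+(\text{transformation }\ell_\infty\text{ term})$ is again of order $\zeta_k\sim q^k\sigma^\ast/\sqrt{mn}$; propagating this through the $\dt$-update identity re-establishes (ii) and (iii) at level $k$. Closing the induction forces the contraction constants to obey an inequality of the form $C\mu^2\sqrt{\alpha_1\alpha_2}\le q$ with $\delta$ small, which is exactly where $\alpha_1\le C_{\alpha_1}$, $\alpha_2\le C_{\alpha_2}$, $\delta\le C_\delta$ and $C_q\le q<1$ come from, while $d\le C_d$ keeps the transformation contribution subordinate. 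The stated guarantees then follow by summing the geometric series: $\bL^k-\bL^\ast$ has rank at most $2$, so $\|\widehat{\bL}-\bL^\ast\|_F\le\sqrt{2}\|\widehat{\bL}-\bL^\ast\|_2$ decays like $q^K\sigma^\ast$ and falls below $\varepsilon$ once $K\ge\log_q(\varepsilon/(\beta_0\|\bD_\tau\|_2))$, and the $\bS$ and $\dt$ estimates (with the explicit constant $5$ and some $M>0$) drop out of the per-entry and $\bJ_i^\dag$ bounds.

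I expect the main obstacle to be the element-wise rank-$1$ projection bound coupled with the spectral-norm control of $\sum_i\bQ_i\bR_i(\Delta\tau_i^{k-1}-\Delta\tau_i^\ast)\be_i^T$ under A.6. Unlike standard RPCA, the perturbation $\bG^{k-1}$ here is not purely sparse: it carries a structured, possibly low-rank transformation component that may align with $\bL^\ast$ (cf.\ Fig.~\ref{fig:theory}), so the Davis--Kahan/incoherence machinery must be fused with the principal-angle bound to secure identifiability and geometric contraction simultaneously. Checking that soft-thresholding (in place of the hard-thresholding of~\cite{netrapalli_non-convex_2014}) still yields support containment and only an $O(\zeta_k)$ bias, while the three coupled error sequences decay at a common rate $q$, is the delicate bookkeeping that the constants $C_\delta,C_q,C_d,C_{\alpha_1},C_{\alpha_2}$ are engineered to support.
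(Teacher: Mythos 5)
Your high-level architecture does coincide with the paper's: induction on $k$; the observation that Step~\ref{dtupdate} gives $\Delta\tau_i^k-\Delta\tau_i^\ast=\bJ_i^\dag[(\bL^k-\bL^\ast)+(\bS^k-\bS^\ast)]\be_i$, so that the transformation error is the $\bQ_i\bQ_i^T$-projection of the combined errors; support containment forced by the thresholding schedule; the base-case computation producing $\widetilde{\mu}=\mu^2+\gamma\nu$; and the Gershgorin-type use of A.6 to control $\bigl\|\sum_i\bQ_i\ba_i\be_i^T\bigr\|_2$ (essentially~\eqref{eqn:qgv1} in the supplement). The fatal problem is the lemma you place at the heart of the argument. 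The bound $\|\cT_1\{\bL^\ast+\bG\}-\bL^\ast\|_{\ell_\infty}\leq C\mu^2\|\bG\|_2/\sqrt{mn}$ is false for general perturbations with $\|\bG\|_2$ a small fraction of $\sigma^\ast$: take $\bG=\epsilon\,\be_1{\bv^\ast}^T$ with $\epsilon=c\,\sigma^\ast$. Then $\bL^\ast+\bG=(\sigma^\ast\bu^\ast+\epsilon\be_1){\bv^\ast}^T$ is already rank one, so $\cT_1\{\bL^\ast+\bG\}-\bL^\ast=\bG$, whose $\ell_\infty$ norm is $\epsilon\|\bv^\ast\|_{\ell_\infty}$, of order $\epsilon/\sqrt{n}$ for a dense $\bv^\ast$ --- larger than the claimed $C\mu^2\epsilon/\sqrt{mn}$ by a factor of order $\sqrt{m}$, and in this application $m$ (pixels) is enormous compared to $n$ (images). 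This counterexample is not artificial here: $\epsilon\be_1{\bv^\ast}^T$ is exactly the shape that $\sum_i\bJ_i(\Delta\tau_i^{k-1}-\Delta\tau_i^\ast)\be_i^T$ takes when a column of some $\bQ_i$ is sparse, which is precisely the identifiability failure that A.4 exists to exclude (Fig.~\ref{fig:corrupt_S}). Since both your re-establishment of hypothesis~(ii) and your support-containment step rest on this reduction of the $\bL$-update error to $\|\bG^{k-1}\|_2$, the induction does not close as proposed.

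The repair cannot stay at the level of spectral norms: the structural assumptions A.3--A.6 must enter the perturbation analysis of $\cT_1$ itself, and doing so is the bulk of the paper's proof. There, $\bL^{k+1}-\bL^\ast$ is expanded exactly as in~\eqref{eqn:lerr}, the perturbed singular vectors $\bu_1,\bv_1$ are shown to remain $2\mu$-incoherent, and the individual interactions $\|{(\bM^k)}^T\bu_1\|_{\ell_\infty}$, $\|\bM^k\bv_1\|_{\ell_\infty}$, $\|\bQ_i^T\bM^k\bv_1\|_{\ell_2}$ and $|\bu_1^T\bM^k\bv_1|$ are bounded by separate lemmas exploiting the row/column sparsity of $\bE^k$ (A.3), the incoherence of the $\bQ_i$ (A.4), the angle between $\bQ_i$ and $\bu^\ast$ (A.5), and the principal angles among the $\bQ_i$ (A.6) --- never just $\|\bM^k\|_2$. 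A secondary but related issue: the paper carries $l_k=\max_i\|\bQ_i\bQ_i^T(\bL^k-\bL^\ast)\be_i\|_{\ell_2}$ as an independent induction quantity with a $1/\sqrt{d}$ built into its bound; deriving this projected error from your hypothesis~(ii) via $\|\bL^k-\bL^\ast\|_{\ell_\infty}$ costs a factor $\sqrt{d}$ against what your hypothesis~(iii) requires, which is exactly why it cannot be treated as a derived quantity. Your closing paragraph correctly senses that the structured, possibly $\bL^\ast$-aligned component of $\bG^{k-1}$ is the crux, but the proposal resolves it by appeal to a generic Davis--Kahan-type elementwise bound that does not exist; supplying the structure-aware substitutes is not bookkeeping, it is the substance of the proof.
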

\begin{proof}
	We define $\bE^k=\bS^\ast-\bS^k$,
	$\bF^k=\sum_{i=1}^n\bQ_i\bQ_i^T(\bS^k-\bS^\ast)\be_i\be_i^T$,
	$\bG^k=\sum_{i=1}^n\bQ_i\bQ_i^T(\bL^k-\bL^\ast)\be_i\be_i^T$ and
	$\bH^k=\bF^k+\bG^k$. We are going to show the two real sequences
	$s_k:=\|\bE^k\|_{\ell_\infty}$ and
	$l_k:=\max_i\|\bG^k\be_i\|_{\ell_2}$ decrease exponentially, and this
	will establish a claim about linear convergence. We proceed by
	induction. For the basic case $k=0$, $\forall(i,
	j)\not\in\supp(\bS^\ast)$, ${(\bS^\ast)}_{ij}=0$ and it follows that
	\begin{align*}
		\left|{(\bD_\tau)}_{ij}\right|&\leq\|\bL^\ast\|_{\ell_\infty}+\left\|\sum_{i=1}^n\bJ_i\dt^\ast\be_i\be_i^T\right\|_{\ell_\infty}\\
						   &\leq\sigma^\ast\|\bu^\ast\|_{\ell_\infty}\|\bv^\ast\|_{\ell_\infty}+\max_{i,j}\left|\be_i^T\bQ_i\bR_i\dt^\ast\be_j\right|\\
						   &\leq\sigma^\ast\frac{\mu^2}{\sqrt{mn}}+\nu\sqrt{\frac{d}{m}}\frac{\sigma^\ast}{\sqrt{nd}}\gamma\leq\zeta_0,
	\end{align*}
	and thus ${\left(\bS^0\right)}_{ij}=0$ and $\supp\left(\bS^0\right)\subset\supp\left(\bS^\ast\right)$.
	Further, for $d\leq\frac{\sqrt{m}}{16\mu^2\nu\kappa}$,
	$l_0=\sigma^\ast\max_i\left\|\bQ_i^T\bu^\ast\right\|_{\ell_2}\left|{(\bv^\ast)}_i\right|\leq\frac{1}{16\mu\nu\sqrt{d}}\frac{\sigma^\ast}{\sqrt{n}}$
	and
	$s_0=\|\bE^0\|_{\ell_\infty}\leq\zeta^0+\|\bL^\ast\|_{\ell_\infty}+\|\sum_{i=1}^n\bJ_i\dt^\ast\be_i\be_i^T\|_{\ell_\infty}\leq\frac{3\sigma^\ast\widetilde{\mu}}{\sqrt{mn}}$.
	Assume that $l_k\leq\frac{1}{16\mu\nu\sqrt{d}}\frac{\sigma^\ast q^k}{\sqrt{n}}$,
	$\supp(\bS^k)\subset\supp(\bS^\ast)$ and $s_k\leq\frac{5\widetilde{\mu}\sigma^\ast q^k}{\sqrt{mn}}$; we can show that
	$l_{k+1}\leq\frac{1}{16\mu\nu\sqrt{d}}\frac{\sigma^\ast q^{k+1}}{\sqrt{n}}$,
	$\supp(\bS^{k+1})\subset\supp(\bS^\ast)$ and $s_{k+1}\leq\frac{5\widetilde{\mu}\sigma^\ast q^{k+1}}{\sqrt{mn}}$. We defer this procedure to the supplementary document.
	For $K\geq\log_q\left(\frac{\varepsilon}{\beta_0\|\bD_\tau\|_2}\right)$, and by~\eqref{ineq:const} in the supplementary document, $\|\bL^K-\bL^\ast\|_{\ell_\infty}\leq\frac{\mu^2\sigma^\ast q^K}{\sqrt{mn}}\leq\beta_0\frac{\|\bD_\tau\|_2q^K}{\sqrt{mn}}\leq\frac{\varepsilon}{\sqrt{mn}}$
	and $\|\widehat{\bS}-\bS^\ast\|_{\ell_\infty}=\|\bS^K-\bS^\ast\|_{\ell_\infty}\leq5\beta_0\frac{\|\bD_\tau\|_2q^K}{\sqrt{mn}}\leq\frac{5\varepsilon}{\sqrt{mn}}$;
	thus $\|\widehat{\bL}-\bL^\ast\|_F=\|\bL^K-\bL^\ast\|_F\leq\sqrt{n}l_K\leq\frac{\varepsilon}{16\mu^3\nu\sqrt{d}}$ and $\|\bS^K-\bS^\ast\|_F\leq\sqrt{\alpha_1mn}s_K$, and therefore
	\begin{align*}
		\|\widehat{\dt}-\dt^\ast\|_F&=\|\dt^K-\dt^\ast\|_F\\
									&=\sqrt{\sum_{i=1}^n\|\bJ_i^\dag(\bL^K-\bL^\ast+\bS^K-\bS^\ast)\be_i\|_{\ell^2}^2}\\
	&\leq\left(\max_{1\leq i\leq n}\|\bJ_i^\dag\|_2\right)\left(\frac{1}{16\mu^3\nu\sqrt{d}}+5\sqrt{\alpha_1}\right)\varepsilon.
	\end{align*}\vspace{-2mm}
\end{proof}
\noindent We only present a sketch of the key arguments above, the detailed proof is available in the supplementary document.

To get a sense for computational complexity, note that step~\ref{lupdate1} in Algorithm~\ref{alg:single} can be done
in $\cO(mn)$ time without invoking full SVD~\cite{golub_MC_1996}. The approximate complexity
for Algorithm~\ref{alg:single} is thus
$\cO(mnd^2\log\left(\frac{1}{\varepsilon}\right))$ to achieve $\varepsilon$ accuracy.

\begin{remark}
	The assumptions stated above are in fact quite reasonable.	Assumption A.1 is reasonable as $\Delta\tau$ is expected to be relatively small (note this condition is on $\Delta\tau$ and not $\tau$) by design. Assumptions A.2 and A.3 are widely accepted in the relevant literature~\cite{jain_low-rank_2013,netrapalli_non-convex_2014,wright_compressive_2013,wright_robust_2009}. Assumptions A.4 to A.6 serve to avoid identifiability issues. We must emphasize that A.4 --- A.6 may not always hold for a ${\mathbf D}_{\tau}$ formed from practical datasets. Note however, that these conditions when fully satisfied provide rather strong exact recovery guarantees as confirmed via Theorem~\ref{thm:convergence} (or perfect alignment). In practice, even with departures from some of these conditions, enforcing a rank-1 constraint is highly meritorious and leads to improved performance. Experimental studies that corroborate this follow in Section~\ref{subsec:rasl}.
\end{remark}

\subsection{Relationship to Traditional Pixel-Based Methods}\label{subsec:relation}
There is a large variety of pixel-based methods in the image alignment literature as~\cite{szeliski_creating_1997,bartoli_groupwise_2008,evangelidis_parametric_2008,gay-bellile_direct_2010}. Invariably, they work on aligning image pairs by solving:
	\begin{equation}
		\min_{\tau,\cP}\rho(I_2\circ \tau,\cP(I_1))\label{eqn:pixel_based}
	\end{equation}
	where $I_1$, $I_2$ are given image pairs to be aligned, and $\tau$, $\cP$
are the geometric and photometric transformations, respectively. The real-valued function $\rho$ obeys $\rho(I_1,I_2)\geq 0$ and equals $0$ if and only if $I_1=I_2$. For
instance, the method in~\cite{szeliski_creating_1997} corresponds to
$\rho(I_1,I_2)=\|I_1-I_2\|^2_2$ and $\cP(I)=I$, whereas in~\cite{bartoli_groupwise_2008} an affine model on $I$ is used. Regularizers may also be added. For
instance, the method in~\cite{gay-bellile_direct_2010} encourages
smoothness of the warp and shrinkage at the self-occlusion boundary.
To handle occlusions, error functions such as the Huber loss function~\cite{huber_robust_1964} may be used.

As a clear benefit, our method can of course work on aligning a batch of images. We can draw an analogy between traditional pixel-based methods and ours when $n=2$. We let $\bD=[\vec(I_2),\vec(I_1)]$. Through eliminating $\bS$, we may rewrite~(\ref{eqn:nonlinear}) as
	\begin{align}
		\min_{\bL,\tau}\|\bD\circ\tau-\bL\|_{\ell^1},\quad\text{ subject to }\rank(\bL)=1,\label{eqn:ll1}
	\end{align}
	Note that $\rank(\bL)=1$ if and only if its two columns $L_1,L_2\in\mathbb{R}^m$ are linearly dependent. In most cases of interest, $\bL=0$ will not be the minimizer to~(\ref{eqn:nonlinear}), and without loss of generality we can write $L_1=gL_2$ for some constant $g>0$. Therefore, we may drop the rank-1 constraint and rewrite~(\ref{eqn:ll1}) as
	\begin{align}
		\min_{g,L_2,\tau}\|I_2\circ\tau-gL_2\|_{\ell^1}+\|I_1-L_2\|_{\ell^1}.\label{eqn:pairwise}
	\end{align}
	Thus our method acts like ``mean absolute deviation'' (around the median) of $I_2\circ\tau$ and $I_1$, with $g$ accounting for the photometric differences. When there are multiple images ($n\geq 3$), pairwise registration may be suboptimal~\cite{szeliski_image_2006}. It is less obvious to extend~\eqref{eqn:pixel_based} to such cases than~\eqref{eqn:ll1}. From this perspective, our method provides a principled approach of jointly aligning multiple images and accounts for photometric differences and occlusions simultaneously.

\section{Bundle Robust Alignment for Practical Image Stitching}\label{sec:bundle_alignment}


The flowchart of our panoramic image composition scheme is illustrated in Fig.~\ref{fig:flowchart}. 

\begin{figure}
	\includegraphics[width=\linewidth]{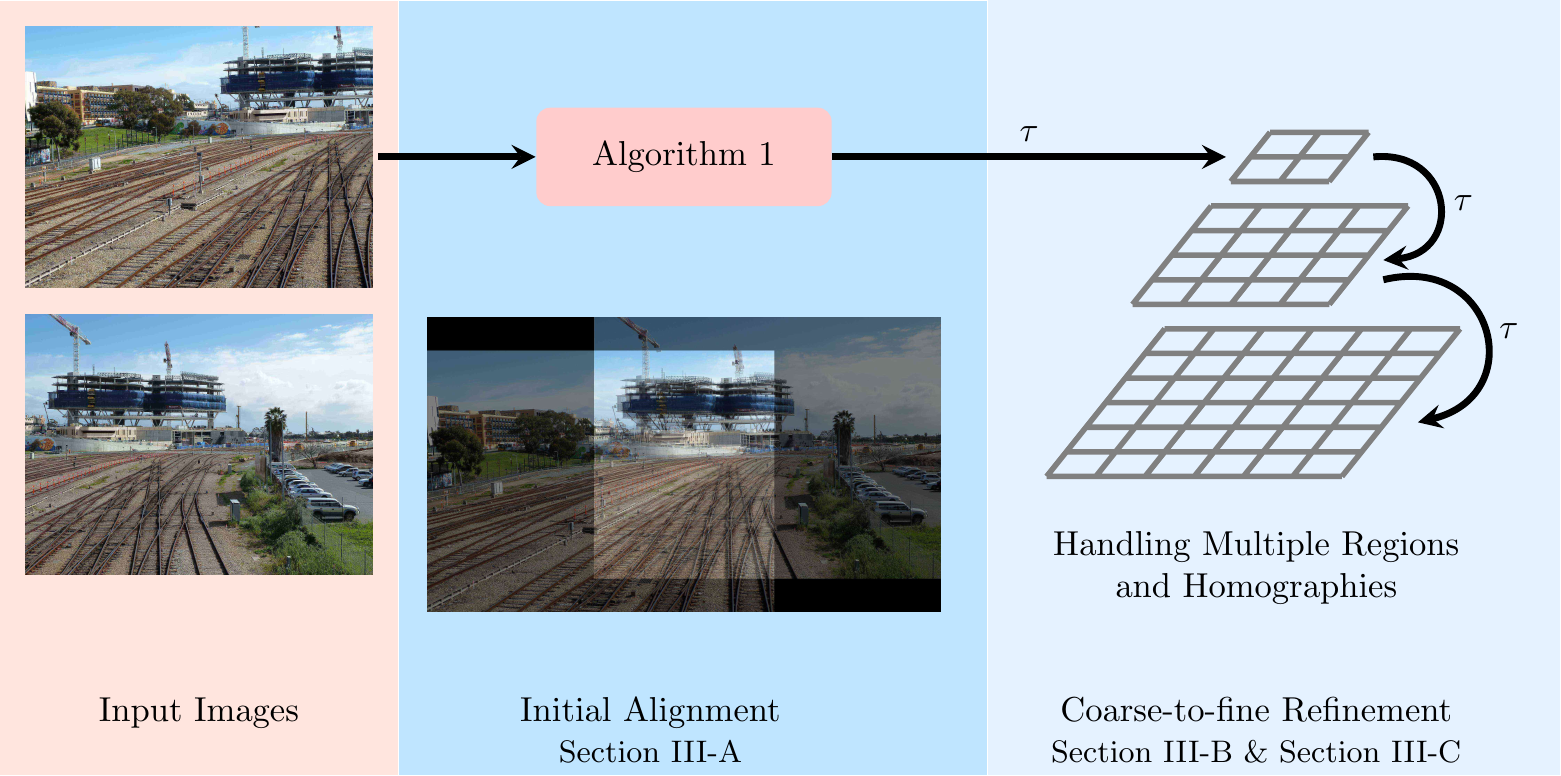}
	\caption{Flowchart of our panoramic alignment for stitching pipeline. See Section~\ref{subsec:initial} to~\ref{subsec:multiple} for details.}\label{fig:flowchart}
\end{figure}

\subsection{Estimating Initial Transformation Parameters}\label{subsec:initial}
\begin{figure}
	\centering
	\subfloat[Input]{\begin{tikzpicture}
			\node (railtracks01) {\includegraphics[width=0.25\linewidth]{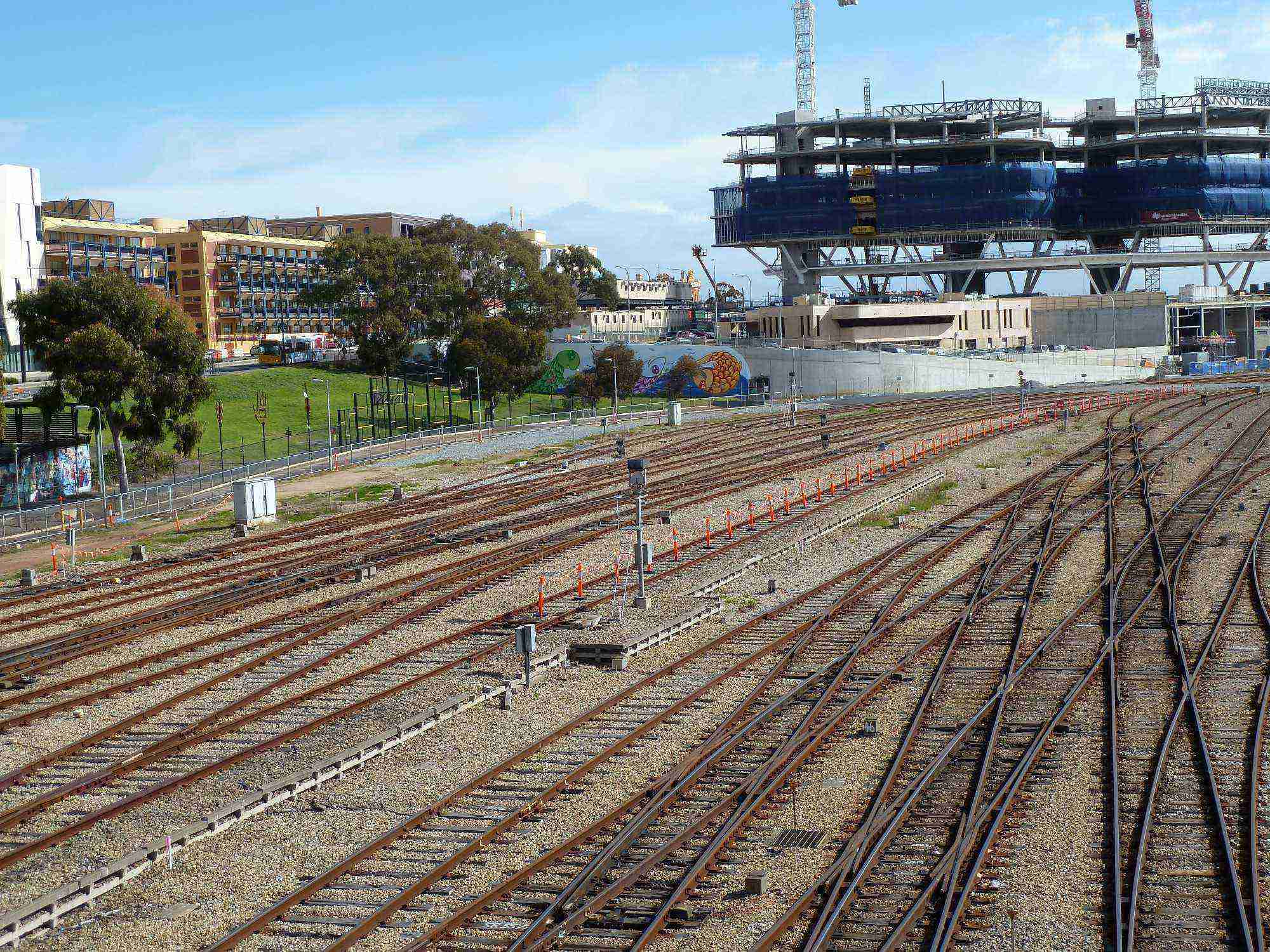}};
			\node [below=-0.3em of railtracks01](railtracks02) {\includegraphics[width=0.25\linewidth]{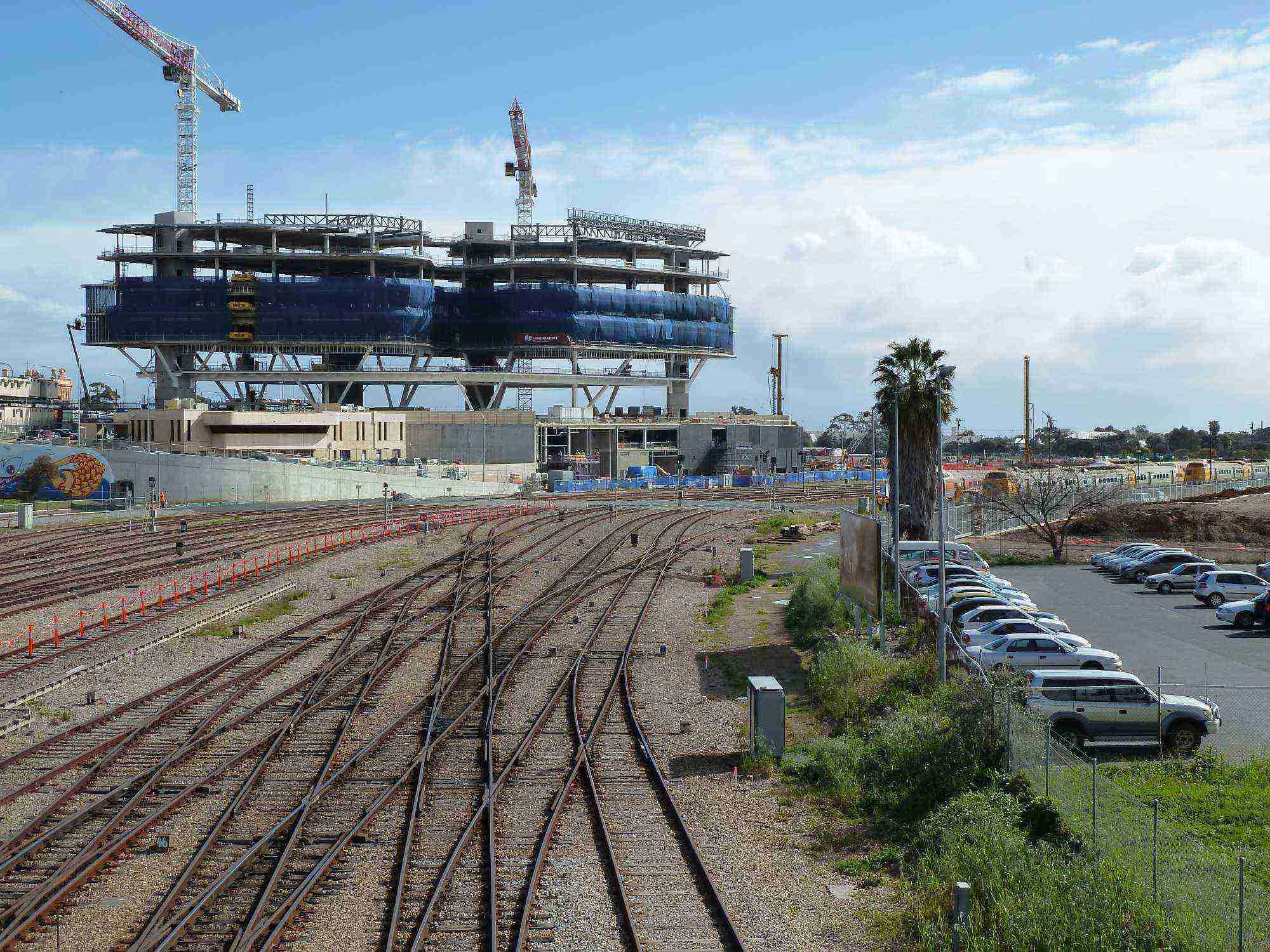}};
		\end{tikzpicture}
	}
	\subfloat[Initially Aligned]{\includegraphics[width=0.7\linewidth]{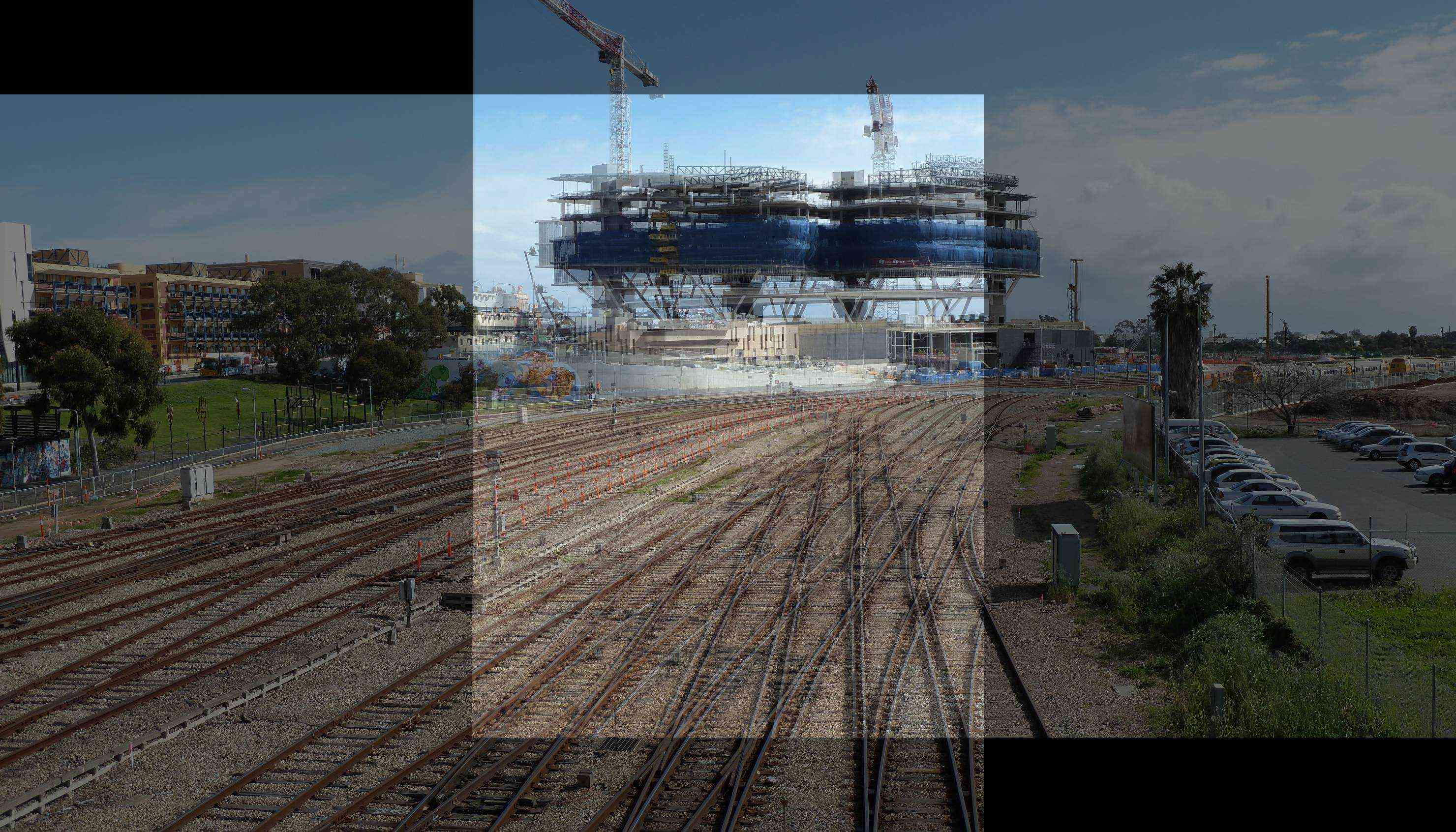}}
	\caption{Initially aligned and overlayed images on the {\it railtracks\/}~\cite{zaragoza_as-projective-as-possible_2014} dataset, using the method in Section~\ref{subsec:initial}. Rescaled for better display.}\label{fig:initial}
\end{figure}
The iterative linearization scheme discussed in
Section~\ref{subsec:formulate} becomes invalid when $\tau$ is large. To mitigate this, we adopt a coarse-to-fine pyramidal
implementation: we decompose the images into Gaussian pyramids, and
progressively refine $\tau$ at each scale.  In image stitching, the
displacements between images are often  large, and we still
need to properly initialize $\tau$ at the beginning of the coarsest scale. One way of accomplishing this is by traditional pixel based methods but they cannot handle large displacements. Therefore,  in the following, we develop a probabilistic approach.

For multiple images, we estimate the parameters between consecutive pairs of images, and
then chain them together. For each pair of images, we extract their
corresponding SIFT images~\cite{liu_sift_2011} $s_1$ and $s_2$ where
$s_i(i=1,2)$ comprises a 128-dimensional SIFT feature vector per pixel.
Let $\tau_{12}$ be the transformation (homography) parameters from $s_2$ to $s_1$. Our idea is to
estimate $\tau_{12}$ from dense correspondences (motion field) between $s_1$
and $s_2$. We leverage a robust error function as the difference measure between $s_1$ and $s_2$ to address large displacements and significant occlusions. We also enforce spatial contiguity on the motion field to enhance robustness.

Let $\bs=\{s_1,s_2\}$. For tractability of minimization, we model the relationship between $\bs$ and $\tau_{12}$ using
Hidden Markov Random Field (HMRF): we view $\bs$ as observed variables, and
the motion vector $\bw(\bp)=(u(\bp),v(\bp))$ at every pixel $\bp$ as
latent variables. Here $u(\bp),v(\bp)$ take integer values between $-L$ and $L$ for a given integer $L$ that serves as an upper bound on the pixel displacements. Let $\bw={\{\bw(\bp)\}}_\bp$. We then set up the following probabilistic models:
\begin{align}
    p(\bs|\bw)\propto\exp\biggl\{-&\sum_{\bp}\min(\|s_1(\bp)-s_2(\bp+\bw(\bp))\|_{\ell_1}, \kappa)\biggr\},\label{eqn:data_cost}\\
	p(\bw|\tau_{12})\propto\exp\biggl\{-&\eta\sum_{\bp}\|\bw(\bp)-t(\bp;\tau_{12})\|_{\ell_1}\label{eqn:fidelty_global}\\
    -&\alpha\sum_{(\bp,\bq)\in\mathcal{N}}\|\bw(\bp)-\bw(\bq)\|_{\ell_2}^2\biggr\},\label{eqn:motion_smoothness}
\end{align}
where $\kappa,\eta,\alpha$ are positive constants, and $\mathcal{N}$ comprises
all pairs of 4-connected neighboring pixels.
(\ref{eqn:data_cost}) enforces
robust feature matching similar to~\cite{liu_sift_2011}; (\ref{eqn:fidelty_global})
encourages small deviations of $\bw(\bp)$ from $t(\bp;\tau_{12})$, the coordinate of $\bp$ under the transformation represented by $\tau_{12}$, while~\eqref{eqn:motion_smoothness} encourages small differences
of $\bw$ between adjacent pixels $\bp$, $\bq$. We estimate $\tau_{12}$ following the maximum likelihood estimation principle:
\begin{equation}
	\widehat{\tau_{12}}\gets\arg\max_{\tau_{12}}\sum_\bw p(\bs|\bw)p(\bw|\tau_{12}).\label{eqn:mle}
\end{equation}
A standard approach to solving~(\ref{eqn:mle}) is the
expectation-maximization (EM) algorithm~\cite{bishop_2006_PRML}, as listed in
Algorithm~\ref{alg:em}, where we let $t(\bp;\tau_{12})=(t_u(\bp;\tau_{12}),t_v(\bp;\tau_{12}))$. To monitor the convergence behavior, we define $d_{h,w}(\tau_1,\tau_2)$ to measure the difference between transformation parameters $\tau_1$ and $\tau_2$:
\begin{equation}
	d_{h, w}(\tau_1,\tau_2)=\frac{1}{hw}\sum_{x=1}^w\sum_{y=1}^h\left\|t((x,y);\tau_1)-t((x,y);\tau_2)\right\|_{\ell_2}^2,
    \label{eqn:rmse}
\end{equation}
i.e., it measures the mean squared error between transformed
coordinates $t((x,y);\tau_1)$ and $t((x,y);\tau_2)$ by applying $t(\cdot;\tau_1)$ and $t(\cdot;\tau_2)$ to an $h\times w$ image. Fig.~\ref{fig:initial} shows an example of the initialization. The images are rescaled to the original resolution for display purposes. After running Algorithm~\ref{alg:em}, the images are coarsely aligned but Algorithm~\ref{alg:region} in Section~\ref{subsec:overlap} (which employ the rank-1 constraint) is required for precise alignment.

\begin{algorithm}[!]
    \renewcommand{\algorithmicrequire}{\textbf{Input:}}
    \renewcommand{\algorithmicensure}{\textbf{Output:}}
    \caption{Initial Transformation Parameter Estimation}

    \begin{algorithmic}[1]
		\REQUIRE{SIFT images $s_1, s_2\in\mathbb{R}^{w_2\times h_2\times 128}$, $\epsilon>0,L\in\mathbb{N}$.}
		\STATE{Initialize $\tau_{12}^0$ to identity transformation, $k\gets0$.}
		\REPEAT{}
		\STATE{Use belief propagation~\cite{felzenszwalb_efficient_2006} to compute probabilities $\omega_{\bp,l}^u\gets p(u(\bp)=l|\bs,\tau_{12}^k)$, $\omega_{\bp,l}^v\gets p(v(\bp)=l|\bs,\tau_{12}^k),\forall$ pixel $\bp,l=-L,-L+1,\dots,L$.}
		\STATE{$k\gets k+1$.}
		\STATE{$\tau_{12}^k\gets\arg\min_{\tau_{12}}\sum_{\bp}\sum_{l=-L}^L|t_u(\bp;\tau_{12})-l|\omega^u_{\bp, l}+|t_v(\bp;\tau_{12})-l|\omega_{\bp, l}^v$.}
        \UNTIL{$d_{h_2,w_2}(\tau_{12}^k,\tau_{12}^{k-1})<\epsilon$}
		\ENSURE{estimated transformation parameters $\widehat{\tau_{12}}$.}
    \end{algorithmic}\label{alg:em}
\end{algorithm}

\subsection{Handling Multiple Overlapping Regions}\label{subsec:overlap}
When there are multiple input images they usually overlap across multiple
regions. \tcr{We have experimentally observed that simply zero-padding the
warped images to the size of canvas often causes divergence; on the other
hand,} the naive way of applying pairwise alignment consecutively is suboptimal
since the alignment errors may propagate and
accumulate~\cite{szeliski_image_2006}. Therefore, it is essential to develop a
method to account for the complicated overlapping relationships simultaneously.

The basic idea is to apply the rank-1 and sparse decomposition discussed in
Section~\ref{sec:robust_alignment} on every overlapping region. A visualization of the model is in Fig.~\ref{fig:region_model}. We again let $m$ be the number of pixels on canvas and $n$ be the number of images. The $m\times n$ matrix thus formed is called the canvas matrix. We will consistently use $r$ as region index
and $i$ as image index. $\sum_r$ and $\sum_i$ means summing over each region and each image, respectively. Assuming
predetermined regions, we reformulate
Equation~\eqref{eqn:nonlinear} as
\begin{align}
	\min_{{\{\bL_r\}}_r,{\{\bS_r\}}_r,\tau}\quad&\sum_r\left\|\bS_r\right\|_{\ell_1}\label{eqn:region}\\
    \text{subject to}\quad\cR_r(\bD\circ\tau)&=\bL_r+\bS_r,\quad \forall r,\nonumber\\
	\mathrm{rank}(\bL_r)&=1,\qquad\forall r,\nonumber
\end{align}
where $\cR_r:\mathbb{R}^{m\times n}\rightarrow\mathbb{R}^{m_r\times n_r}$ is the operator that extracts the portion of $\bD\circ\tau$ belonging to the $r$-th region and $\bL_r$, $\bS_r\in\mathbb{R}^{m_r\times n_r}$ are the corresponding
rank-1 and sparse components. The linearized problem is
\begin{align}
	\min_{{\{\bL_r\}}_r,{\{\bS_r\}}_r,\dt}\quad\sum_r\left\|\bS_r\right\|_{\ell_1}&\nonumber\\
    \text{subject to}\quad\bD_r + \sum_{i\in\cI_r}\bJ_{r,i}\Delta\tau_i\be_{f_r^i}^T&=\bL_r+\bS_r,\quad \forall r,\nonumber\\
    \mathrm{rank}(\bL_r)&=1,\qquad \forall r,\label{eqn:linearized_region}
\end{align}
where we write $\cR_r(\bD\circ\tau)$ as $\bD_r$ for brevity;
$\bJ_{r,i}\in\mathbb{R}^{m_r\times d}$ is the Jacobian of the $i$-th image
restricted to the $r$-th region. We let $\cI_r$ be the indices of images
contributing to the $r$-th overlapping regions, $\bar{\cI}_i$ be the indices of
regions in the $i$-th image, and $f_r^i$ be the column index of the $i$-th
image in $\bD_r$. Algorithm~\ref{alg:region} is used to
solve~\eqref{eqn:linearized_region}, and the accompanying theoretical analysis
is included in the supplementary document.

\begin{figure}[h!]
    \centering
    \includegraphics[width=\linewidth]{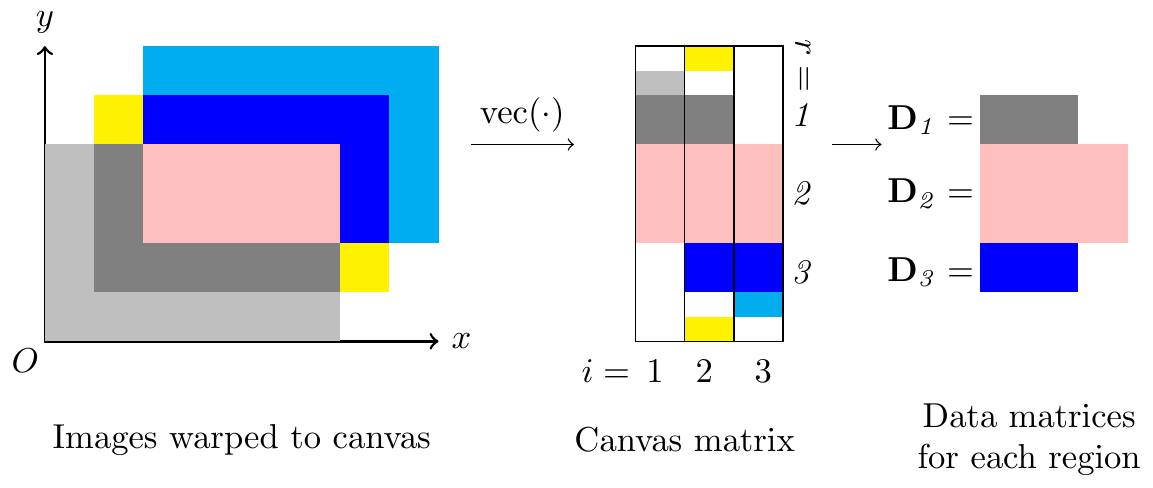}
	\vspace{-7mm}
	\caption{The overlapping region model and relevant notations: images on canvas may overlap in different regions, according to which we may determine the data matrices $\bD_\mathit{1}$,$\bD_\mathit{2}$ and $\bD_\mathit{3}$. Here
		$\cI_{\mathit{1}}=\{1, 2\}$,
		$\cI_{\mathit{2}}=\{1, 2, 3\}$;
        $\bar{\cI}_1=\{\mathit{1},
        \mathit{2}\}$,$\bar{\cI}_2=\{\mathit{1},\mathit{2},\mathit{3}\}$; $f_\mathit{3}^2=1$, $f_\mathit{3}^3=2$,
    etc.}\label{fig:region_model}
\end{figure}

In practice, we determine the regions approximately by updating them at each linearization
step before solving~\eqref{eqn:linearized_region}. Specifically, we
warp the images using the currently estimated $\tau$ and repeatedly scope out the overlapping regions. Each time we pick the region with the highest number of
contributing images until no overlaps remain.\footnote{As a concrete example, in Fig.~\ref{fig:region_model} region $\mathit{2}$ is discovered first as it has the highest number of overlapping images (3).}

\begin{algorithm}[t]
    \renewcommand{\algorithmicrequire}{\textbf{Input:}}
    \renewcommand{\algorithmicensure}{\textbf{Output:}}
    \caption{Alternating Minimization for Solving~(\ref{eqn:linearized_region})}

    \begin{algorithmic}[1]
		\REQUIRE{${\{\bD_r\in\mathbb{R}^{m_r\times n_r}\}}_r$, ${\{\bJ_{r,i}\in\mathbb{R}^{m_r\times d}\}}_{r,i}$,$\beta_0$,$\beta_1$,$q>0$.}
        \FORALL{$r$}
		\STATE{$\bS_r^0\gets\cS_{\zeta_r^0}(\bD_r)$ where $\zeta_r^0=\beta_0\frac{\|\bD_r\|_2}{\sqrt{m_{r}n_r}}$.}
        \ENDFOR
		\STATE{$\dt^0\gets\sum_i\sum_{r\in\bar{\cI}_i}{\left(\sum_{l\in\bar{\cI}_i}\frac{\bJ_{l, i}^T\bJ_{l, i}}{\zeta_l^0}\right)}^\dag\frac{\bJ_{r, i}^T(\bS_r^0-\bD_r)\be_{f_r^i}\be_i^T}{\zeta_r^0}$.}\label{dt0}
        \FOR{$k=1,2, \dots, K$}
        \FORALL{$r$}
		\STATE{$\bL_r^{k}\gets\cT_1\left\{\bD_r+\sum_{i\in\mathcal{I}_r}\bJ_{r,i}\Delta\tau_i^{k-1}\be_{f_r^i}^T-\bS_r^{k-1}\right\}$,}\label{lupdate}
		\STATE{$\zeta_r^{k}\gets\beta_1\frac{q^{k}}{\sqrt{m_{r}n_r}}\|\bL_r^{k}\|_2$,}\label{parameter}
		\STATE{$\bS_r^{k}\gets\cS_{\zeta_r^{k}}\left\{\bD_r+\sum_{i\in\mathcal{I}_r}\bJ_{r,i}\Delta\tau_i^{k-1}\be_{f_r^i}^T-\bL_r^{k}\right\}$.}\label{supdate}
        \ENDFOR
		\STATE{$\dt^{k}\gets\sum_{i,r\in\bar{\cI}_i}\!\!{\left(\sum_{l\in\bar{\cI}_i}\!\!\frac{\bJ_{l, i}^T\bJ_{l, i}}{\zeta_l^{k}}\right)}^\dag\!\frac{\bJ_{r,i}^T(\bL_r^{k}+\bS_r^{k}-\bD_r)\be_{f_r^i}\be_i^T}{\zeta_r^{k}}$.}\label{dtk}
        \ENDFOR
		\ENSURE{$\widehat{\bL_r}\gets\bL_r^K$, $\widehat{\bS_r}=\bS_r^K$, $\widehat{\dt}=\dt^K$.}
    \end{algorithmic}\label{alg:region}
\end{algorithm}

\subsection{Extension to Multiple Homographies}\label{subsec:multiple}
Recent
studies~\cite{gao_constructing_2011,zaragoza_as-projective-as-possible_2014}
have revealed that the classic single homography model is inadequate to
represent camera motions accurately for casually taken photos.
Fig.~\ref{fig:multiple_fail} illustrates one such example. To handle such cases,
we need to extend our method to incorporate more general motion
models.

\begin{figure}[!]
    \centering
    \setcounter{subfigure}{0}
    \subfloat[Input]{\includegraphics[width=0.25\linewidth]{figs/railtracks_01}\,\!
    \includegraphics[width=0.25\linewidth]{figs/railtracks_02}}\,
    \subfloat[Overlayed]{\includegraphics[width=0.45\linewidth]{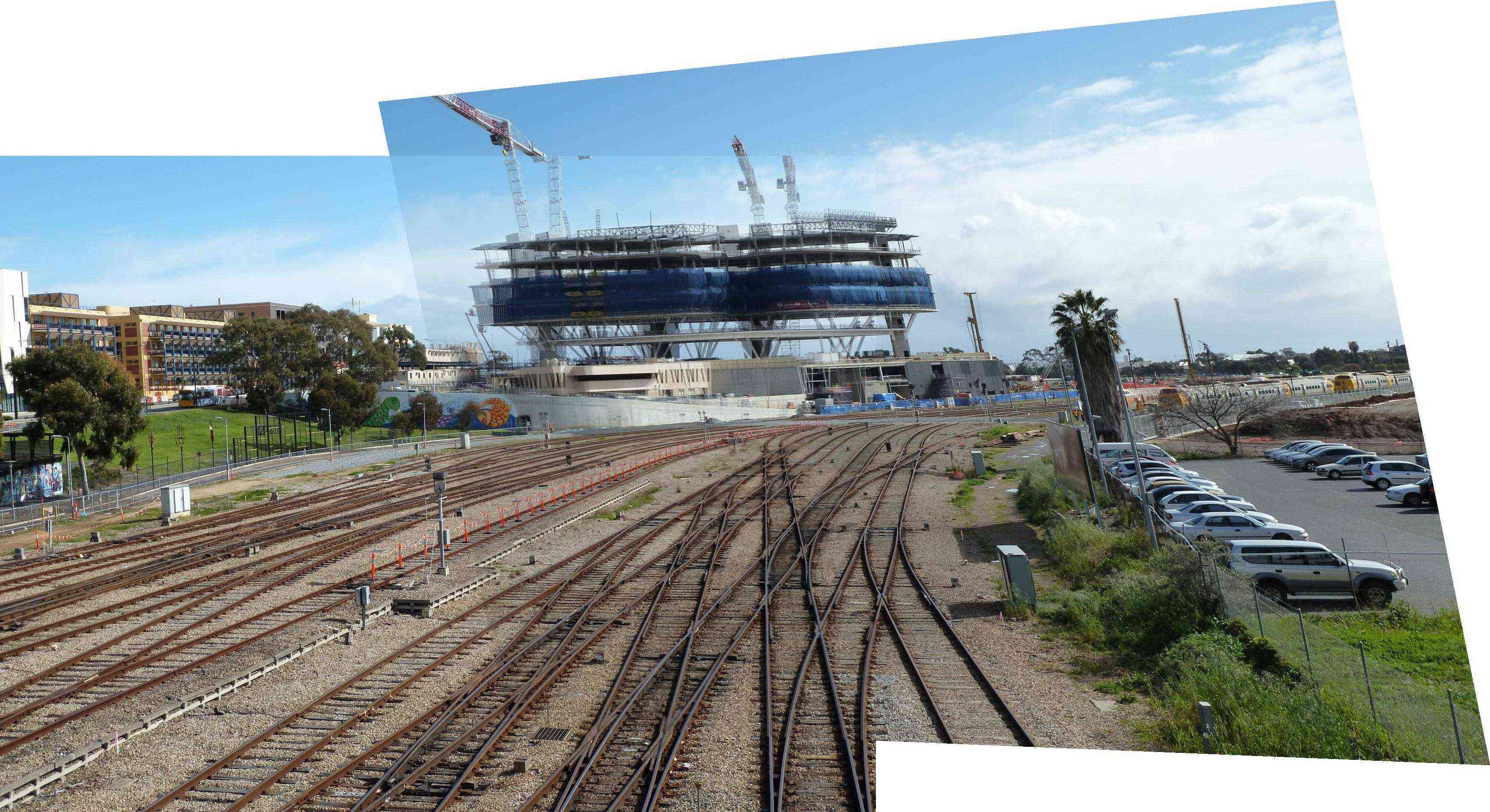}}
	\caption{A failure case of single homography in modeling camera motion: the homography estimated using the ground plane induces apparent misalignments of the building.}\label{fig:multiple_fail}
\end{figure}

\begin{figure}
	\centering
	\includegraphics[width=\linewidth]{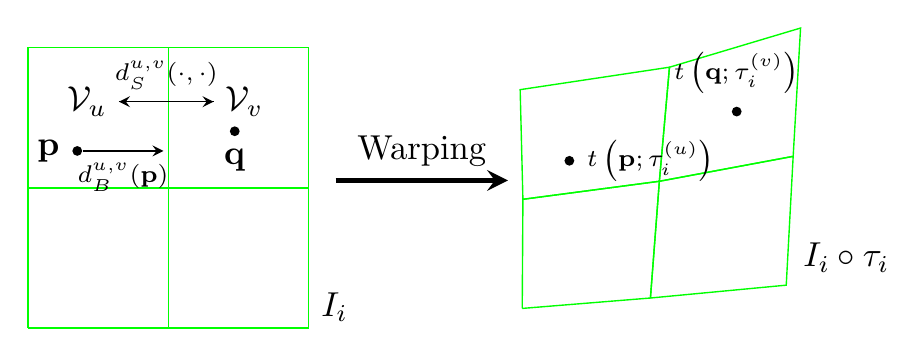}
	\caption{Illustration of the multiple homography model and relevant notations: we divide each image $I_i$ into cells, and associate each cell with its own set of homography parameters. We further encourage smoothness of the homographies in neighboring cells $u$ and $v$.}\label{fig:multiple}
\end{figure}

While our approach may be combined with generic motion models, developing a
dedicated flexible model is outside the scope of this work. Moreover, we would
like to ensure consistency in experimental comparisons. To this end, we
integrate the model in~\cite{zaragoza_as-projective-as-possible_2014} into our
approach. Specifically, we partition each image (indexed by $i$) into
$C_1\times C_2$ cells, and warp each cell (indexed by $u$) using an individual
set of homography parameters $\tau_i^{(u)}$. We stack $\tau_i^{(u)}$s together
as $\tau_i={\left(\tau_i^{(u)}\right)}_u$ and define $I_i\circ\tau_i$ as the
image warped cell by cell. A visual depiction of this model is in
Fig.~\ref{fig:multiple}.

To avoid tearing the objects and maintain stability in extrapolation along
non-overlapping regions, it is necessary to enforce smoothness of the
underlying geometric transformation.  We thus introduce a smoothness term over
$\tau$ and modify problem~\eqref{eqn:region} as:
\begin{align}
	\min_{{\{\bL_r\}}_r,{\{\bS_r\}}_r,\tau}\quad\sum_r\left\|\bS_r\right\|_{\ell_1}&+\frac{\lambda}{2}\sum_{i=1}^n\sum_{(u,v)\in\cE_i}d^{u,v}_S\left(\tau_i^{(u)},\tau_i^{(v)}\right)\nonumber\\
	\text{subject to}\quad\cR_r(\bD\circ\tau)&=\bL_r+\bS_r,\quad\forall r\nonumber\\
	\mathrm{rank}(\bL_r)&=1,\qquad\forall r,\label{eqn:formulate_multiple}
\end{align}
where $\lambda>0$ is a constant parameter that controls the strength of smoothness, $\cE_i$ comprises pairs of adjacent cells in the $i$-th image and $d^{u,v}_S\left(\tau_i^{(u)},\tau_i^{(v)}\right)$ promotes smoothness between $\tau_i^{(u)}$ and $\tau_i^{(v)}$, defined as the following:
\begin{equation}
	\sum_{\bp\in{\cV_{u}\cup\cV_{v}}}\exp{\left\{-\frac{{d_B^{u,v}(\bp)}^2}{\sigma^2}\right\}}\left\|t\left(\bp;\tau_i^{(u)}\right)-t\left(\bp;\tau_i^{(v)}\right)\right\|_{\ell_2}^2,\label{eqn:weight_gaussian}
\end{equation}
where $\sigma>0$ is a constant parameter, $\cV_u$ and $\cV_v$ collect pixels in
cell $u$ and $v$, respectively, and $d_B^{u,v}(\bp)$ is the Euclidean distance
from $\bp$ to the common boundary of cell $u$ and $v$. The reasoning behind
$d^{u,v}_S(\cdot,\cdot)$ is to encourage neighboring homographies to be
consistent on pixels close to the cell boundaries and thus prevent
discontinuities along those boundaries.

To solve~\eqref{eqn:formulate_multiple}, we can adopt the same iterative linearization scheme as in Section~\ref{subsec:solution}; in addition to $I_i\circ\tau_i$, we linearize each $t\left(\bp;\tau^{(u)}_i\right)$ in~\eqref{eqn:weight_gaussian} with respect to $\tau^{(u)}_i$ as:
$t\left(\bp;\tau_i^{(u)}+\Delta\tau_i^{(u)}\right)\approx t\left(\bp;\tau_i^{(u)}\right)+\bJ_i^{(u)}\Delta\tau_i^{(u)}$ where $\bJ_i^{(u)}=\left.\frac{\partial t(\bp;\xi)}{\partial\xi}\right|_{\xi=\tau_i^{(u)}}$. The optimization algorithm for solving the linearized subproblems of~\eqref{eqn:formulate_multiple} resembles Algorithm~\ref{alg:region}, except that in Step~\ref{dt0} and~\ref{dtk} the following formula should be used instead (for $k\geq 0$):
	\[
		\Delta\tau^k\gets\sum_{i=1}^n\!\!{\left(\sum_{l\in\bar{\cI}_i}\!\!\frac{\bJ_{l, i}^T\bJ_{l, i}}{\zeta_l^{k}}+\lambda\bm{\Gamma}_i\right)}^\dag\!\left(\br^k_i+\lambda\bt_i\right)\be_i^T
\]
where $\br^k_i=\sum_{r\in\bar{\cI}_i}\frac{\bJ_{r,i}^T}{\zeta_r^k}\left(\bL_r^{k}+\bS_r^{k}-\bD_r\right)\be_{f_r^i}$ and we define $\bL_r^0=0$. $\bm{\Gamma}_i$, $\bt_i$ are obtained by organizing the following linear system into the form $\bm{\Gamma}_i\Delta\tau_i=\bt_i$:
\begin{align}
	&\sum_{v\in\cN_u}\sum_{\bp\in\cV_u\cup\cV_v}w_{\bp}^{u,v}{\bJ_i^{(u)}}^T\left[\bJ_i^{(u)}\Delta\tau_i^{(u)}-\bJ_i^{(v)}\Delta\tau_i^{(v)}\right]\label{eqn:linear_system}\\
	=&\sum_{v\in\cN_u}\sum_{\bp\in\cV_u\cup\cV_v}w_{\bp}^{u,v}{\bJ_i^{(u)}}^T\left[t\left(\bp;\tau_i^{(v)}\right)-t\left(\bp;\tau_i^{(u)}\right)\right],\,\forall u,\nonumber
\end{align}
where $w_{\bp}^{u,v}=\exp{\left\{-\frac{{d^{u,v}_B(\bp)}^2}{\sigma^2}\right\}}$
and $\cN_u$ comprises cells adjacent to $u$.  After aligning all the images, we
employ seam-cutting~\cite{agarwala_interactive_2004} and gradient-domain
blending~\cite{perez_poisson_2003} to stitch them.

\section{Experimental Results}\label{sec:results}

\subsection{\tcr{Experimental Settings}}\label{sec:setup}

\tcr{To evaluate the influence of different parameter choices on alignment performance, we collect a validation set comprising $5$ sets of images to be aligned. The images are taken from Adobe Panorama Dataset~\cite{brandt_transform_2010}. To quantitatively assess the performance of BRAS, we
introduce truncated $\ell_2$-norm $d_t$ between images $I_1$ and $I_2$:
\begin{align*} d_t(I_1,I_2) = \sqrt{\frac{1}{m}
	\sum_{i=1}^{m}\min(\|I_{1}(\bp_i) - I_{2}(\bp_i)\|^{2}_{\ell_2}, t^2)},
\end{align*}
where $t$ is a positive constant fixed to $25$ intensity values in
all the experiments,\footnote{All images tested are  8-bits per pixel in the
intensity channel.} $I_{1}(\bp_i)$ and $I_{2}(\bp_i)$ are the values of the
$i$-th pixel in the overlapping region (comprising $m$ pixels) from image $I_1$
and $I_2$, respectively. This quantity serves as a robust measure of alignment
accuracy~\cite{ask_optimal_2013}, and smaller value generally indicates higher
alignment accuracy. For more than two images, errors are computed pairwise and then accumulated. The average values of this metric under various parameter settings are summarized in Table~\ref{tab:parameters}.}

\tcr{It can be clearly observed that the performance of BRAS is
robust to small perturbations of its parameter values. Therefore, unless otherwise stated, we}
set $\beta_0=\beta_1=1$, $q=0.7$ in Algorithm~\ref{alg:region}.
In~\eqref{eqn:fidelty_global} $\eta$ takes either $10^{-3}$ or $10^{-4}$ while
in~\eqref{eqn:motion_smoothness} $\alpha$ is chosen as either $1$ or $10$. The
number of cells (as described in~\ref{subsec:multiple}) is fixed to
$C_1=C_2=16$, and we reduce it by half when moving to a higher pyramid level.
In~\eqref{eqn:formulate_multiple} we choose $\lambda=N_p$ or $\lambda=10N_p$
where $N_p$ is the number of pixels in each cell while
in~\eqref{eqn:weight_gaussian} we let $\sigma=0.2$. We choose
$\epsilon=10^{-3}$ in Algorithm~\ref{alg:em}, and $\epsilon=10^{-5}$ in
Algorithm~\ref{alg:region}.  We employ normalized coordinates as suggested
in~\cite{hartley_defense_1997} for better conditioning of the linear system in
Equation~\eqref{eqn:linear_system}.

\begin{table}[h]
	\centering
	\caption{\tcr{Effects of parameter variations on alignment accuracy (measured in truncated $\ell^2$ norm).}}
	\begin{tabular}{ccccccc}
		\toprule
		$\beta_0,\beta_1$ & $q$ & $\eta$ & $\alpha$ & $\sigma$ & $C_1,C_2$ & Truncated $\ell^2$ norm\\
		\toprule
		$1$ & $0.5$ & $10^{-3}$ & $10$ & $0.2$ & $16$ & $7.55$\\
		\midrule
		$1$ & $0.7$ & $10^{-3}$ & $10$ & $0.2$ & $16$ & $7.51$\\
		\midrule
		$1.5$ & $0.5$ & $10^{-3}$ & $10$ & $0.2$ & $16$ & $7.50$\\
		\midrule
		$1$ & $0.5$ & $10^{-4}$ & $10$ & $0.2$ & $16$ & $7.55$\\
		\midrule
		$1$ & $0.5$ & $10^{-3}$ & $1$ & $0.2$ & $16$ & $7.55$\\
		\midrule
		$1$ & $0.5$ & $10^{-3}$ & $10$ & $0.3$ & $16$ & $7.59$\\
		\midrule
		$1$ & $0.5$ & $10^{-3}$ & $10$ & $0.2$ & $8$ & $7.58$\\
		\bottomrule
	\end{tabular}\label{tab:parameters}
\end{table}

For results generated with competing state of the art methods, we either use
publicly available code with instructions made available by the authors of the
work or contacted the authors directly for generation of results. For
fairness in comparison, we consistently apply the same stitching method
described in Section~\ref{subsec:multiple} across all methods.

\subsection{Validation on Random Simulations}\label{subsec:random}
\begin{figure}
    \subfloat[Convergence speed] {\includegraphics[width=0.5\linewidth]{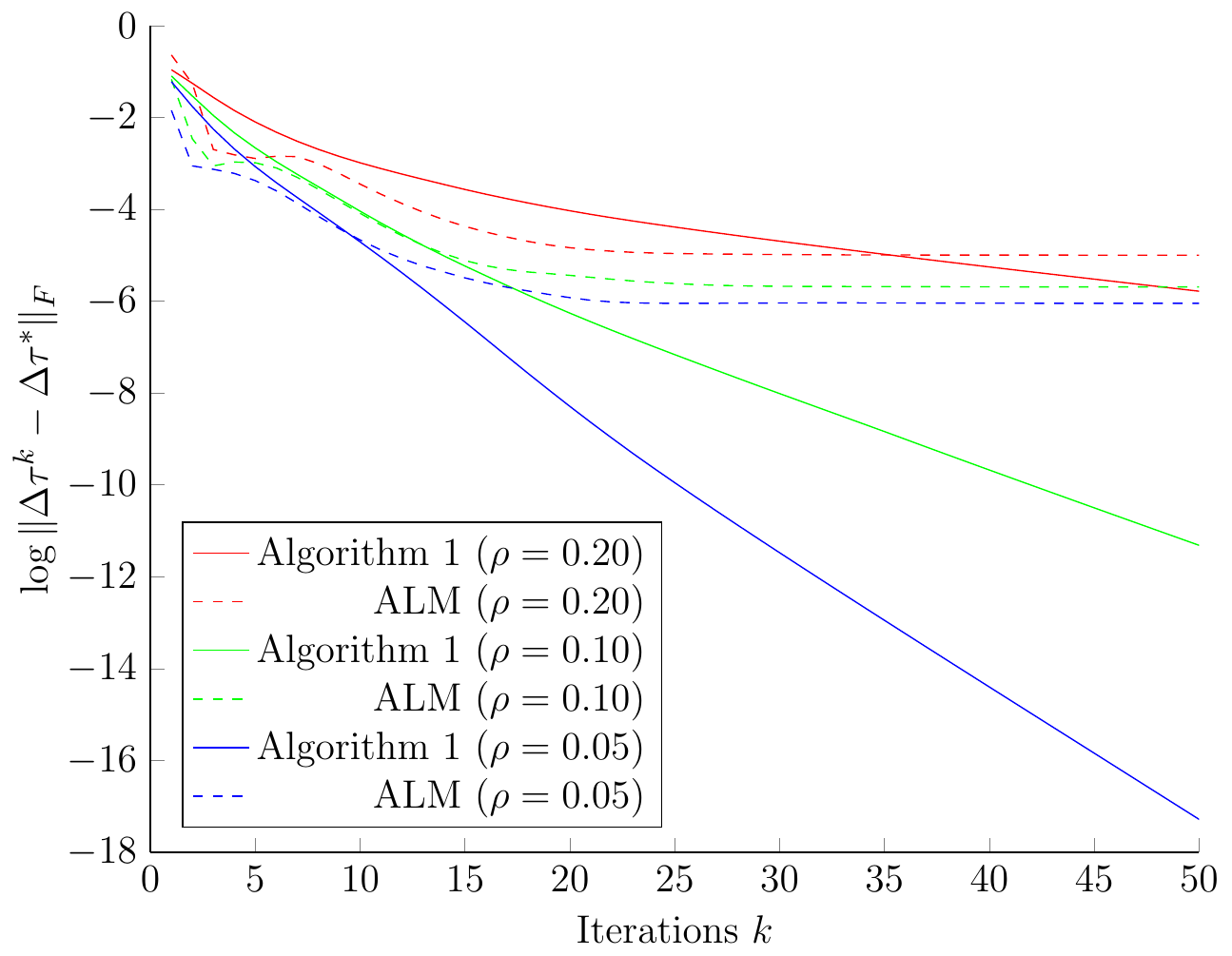}\label{fig:convergence}}
    \subfloat[Stability under noise] {\includegraphics[width=0.5\linewidth]{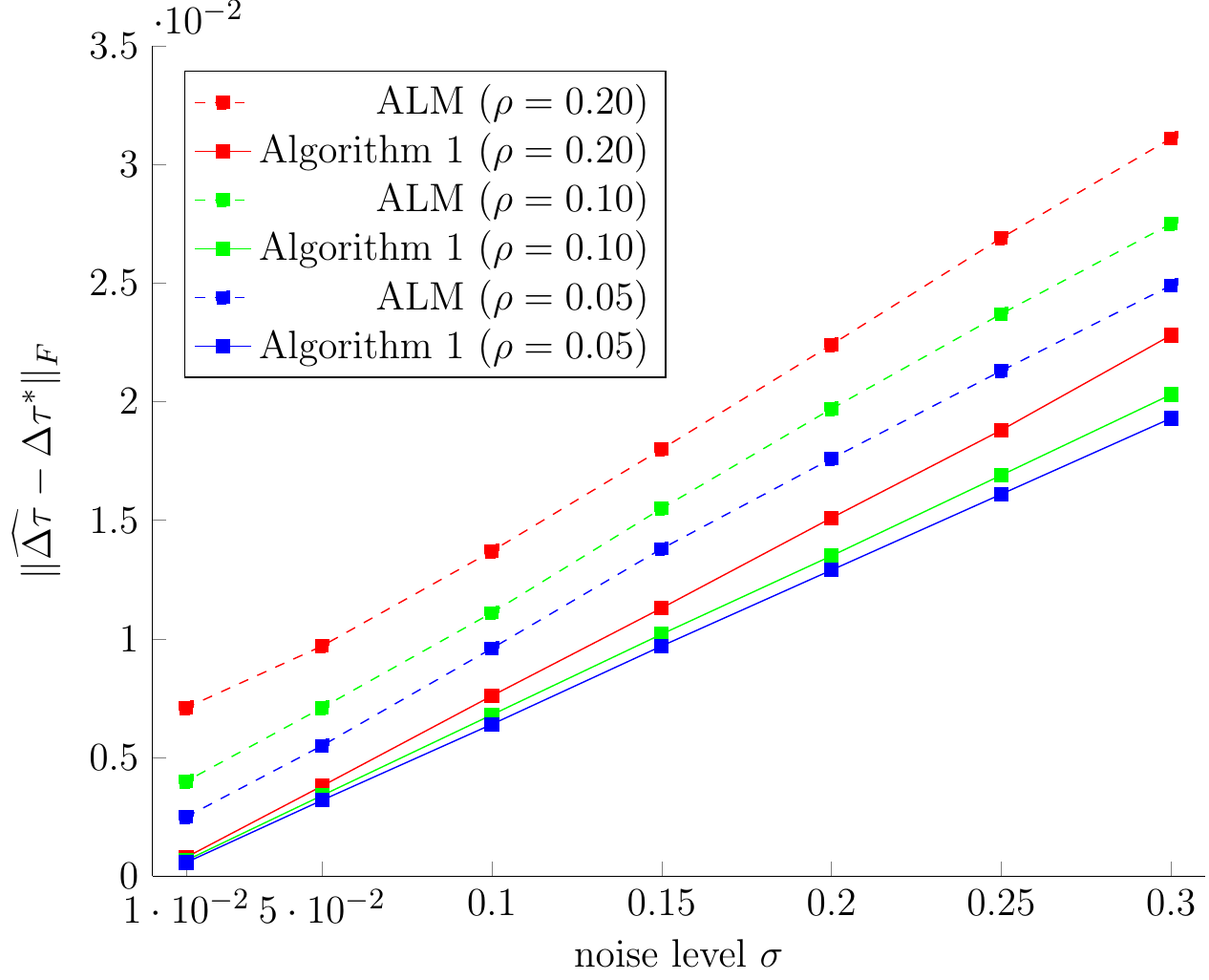}\label{fig:noise}}
	\caption{Performance evaluation under different occlusion levels $\rho$. \protect\subref{fig:convergence} The convergence rate of Algorithm~\ref{alg:single} is linear and it achieves lower final errors than ALM\@; \protect\subref{fig:noise} Algorithm~\ref{alg:single} degrades less with increasing noise levels compared with ALM.}\label{fig:simulation}
\end{figure}
We perform random simulations to quantitatively study the performance of
Algorithm~\ref{alg:single} and verify the assertions of
Theorem~\ref{thm:convergence}. We let $m=500$, $n=10$, and generate $\bD$ according to
formula~(\ref{eqn:observation}). In accordance
with~\cite{candes_robust_2011}, we generate
$\bL^\ast=\bu^\ast{\bv^\ast}^T$ with
$\bu^\ast\sim\cN\left(0,\frac{1}{m}\bI_m\right)$,
$\bv^\ast\sim\cN\left(0,\frac{1}{n}\bI_n\right)$ where $\bI_m$ is the
$m\times m$ identity matrix. The entries of $\bS^\ast$ take $-1$ and $1$
independently and equally likely, while $\supp(\bS^\ast)$ is a randomly
sampled subset of $\{1,\dots,m\}\times\{1,\dots,n\}$ with cardinality
$\rho mn$, $\rho\in(0,1)$.
$\bJ_i\sim\cN\left(0,\frac{d}{m}\bI_{md}\right)$ and
$\bR_i\dt_i^\ast\sim\cN\left(0,\frac{\|\bL^\ast\|_2}{nd}\bI_d\right)$. For
comparison, we include Algorithm 2 in~\cite{peng_rasl:_2012}, where an
Augmented Lagrange Multiplier (ALM) method was proposed to solve the convex
surrogate of~\eqref{eqn:linearized}. We vary $\rho$ to study the influence
of different amount of occlusions. We set $q=0.75,0.85,0.95$ in
Algorithm~\ref{alg:single} when $\rho=0.05,0.1,0.2$, respectively. We
execute both algorithms for $50$ iterations and plot their convergence
curves in Fig.~\ref{fig:convergence}. Both algorithms progress slower
under larger $\rho$, as larger occlusions are generally more difficult to
handle. The errors
$\|\Delta\tau^k-\Delta\tau^\ast\|_F$ for Algorithm~\ref{alg:single} decay
exponentially at rate $q$, in agreement with Theorem~\ref{thm:convergence}. ALM
progresses aggressively up to $20$ iterations, but stays at certain error
levels. In contrast, Algorithm~\ref{alg:single} steadily progresses towards smaller errors.

In practice, it is often the case that $\bD$ is affected by random noise~\cite{zhou_stable_2010};
in particular, some errors may be introduced in the
linearization~(\ref{eqn:linearized}). We study their effects by adding an
noise matrix $\bN$ to $\bD$, where $\bN_{ij}\sim\cN(0,\frac{1}{mn})$. In
all cases we set $q=0.95$ in Algorithm~\ref{alg:single}, and run both
algorithms for $100$ iterations for fairness. We measure the errors in their outputs
$\widehat{\Delta\tau}$ under varying levels of noise, i.e., different
$\sigma$, and different occlusion levels $\rho$. The results are
summarized in Fig.~\ref{fig:noise}. Similar to Fig.~\ref{fig:convergence},
increasing $\rho$ leads to increases in errors in both algorithms.
Clearly, Algorithm~\ref{alg:single} exhibits smaller errors than ALM in
every cases, indicating its higher stability under different amount of
noise. Indeed, as we observed in our experiments on real data, our method can
tolerate larger deviations of $\tau$ than~\cite{peng_rasl:_2012}.

\subsection{Effectiveness of the Rank-1 Constraint}\label{subsec:rasl}
To further verify the effectiveness of the exact rank-1 constraint as opposed to the conventional convex relaxation approaches, we compare with
RASL~\cite{peng_rasl:_2012} over the \emph{temple} dataset~\cite{gao_constructing_2011} under the same settings \emph{except} that the rank-1 constraint is replaced by the convex relaxation. The results are in Fig.~\ref{fig:rasl_align}. It may be inferred from Fig.~\ref{fig:rasl_align} that BRAS generates much better alignment. \tcr{An additional visual comparison is shown in Fig.~\ref{fig:rasl_align_skyscraper} in the supplementary document.}

\begin{figure*}
	\subfloat[RASL~\cite{peng_rasl:_2012}]{%
		\begin{tikzpicture}
			\node {\includegraphics[height=0.17\textheight]{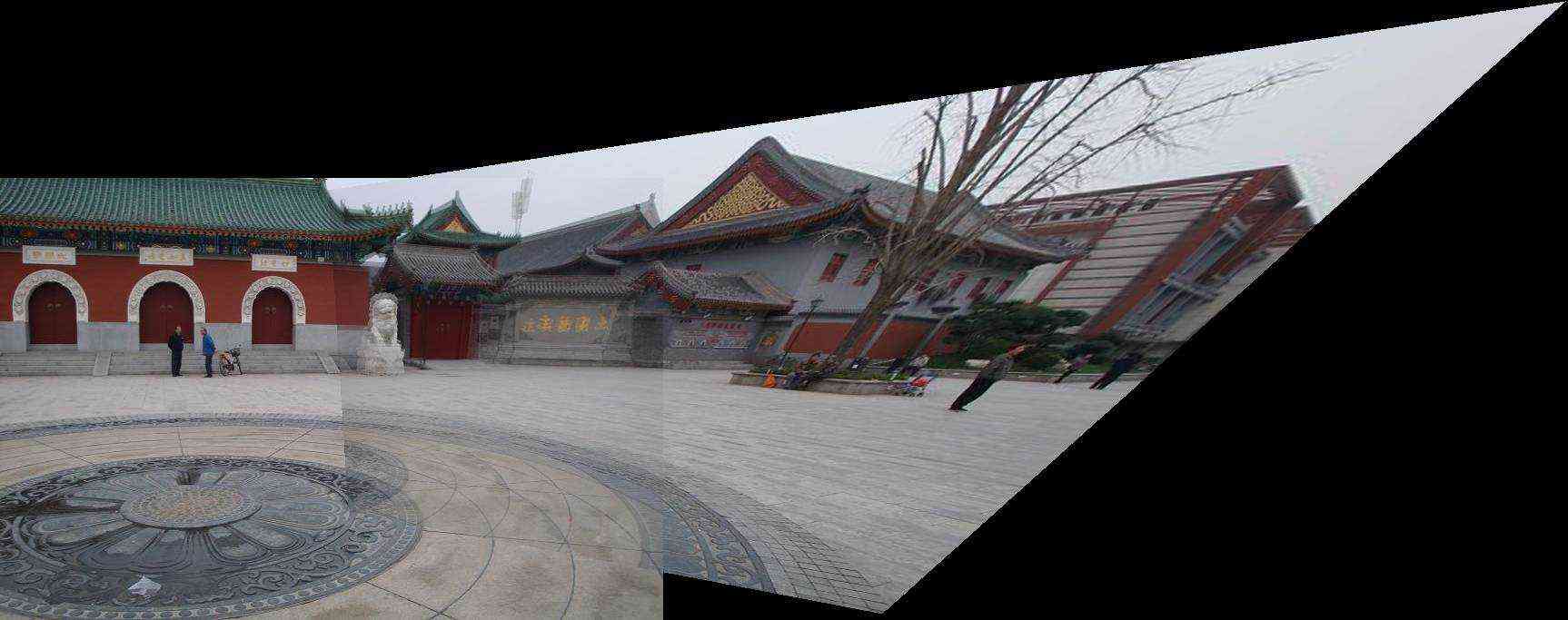}};
			\draw[thick,red] (-1.8,-1.1) ellipse (1cm and 0.4cm);
			\draw[thick,red] (-1,0.25) ellipse (0.3cm and 0.6cm);
			\draw[thick,red] (-1.85,0.3) ellipse (0.3cm and 0.6cm);
		\end{tikzpicture}
	}\hspace{-4mm}
	\subfloat[BRAS]{
		\begin{tikzpicture}
			\node {\includegraphics[height=0.17\textheight]{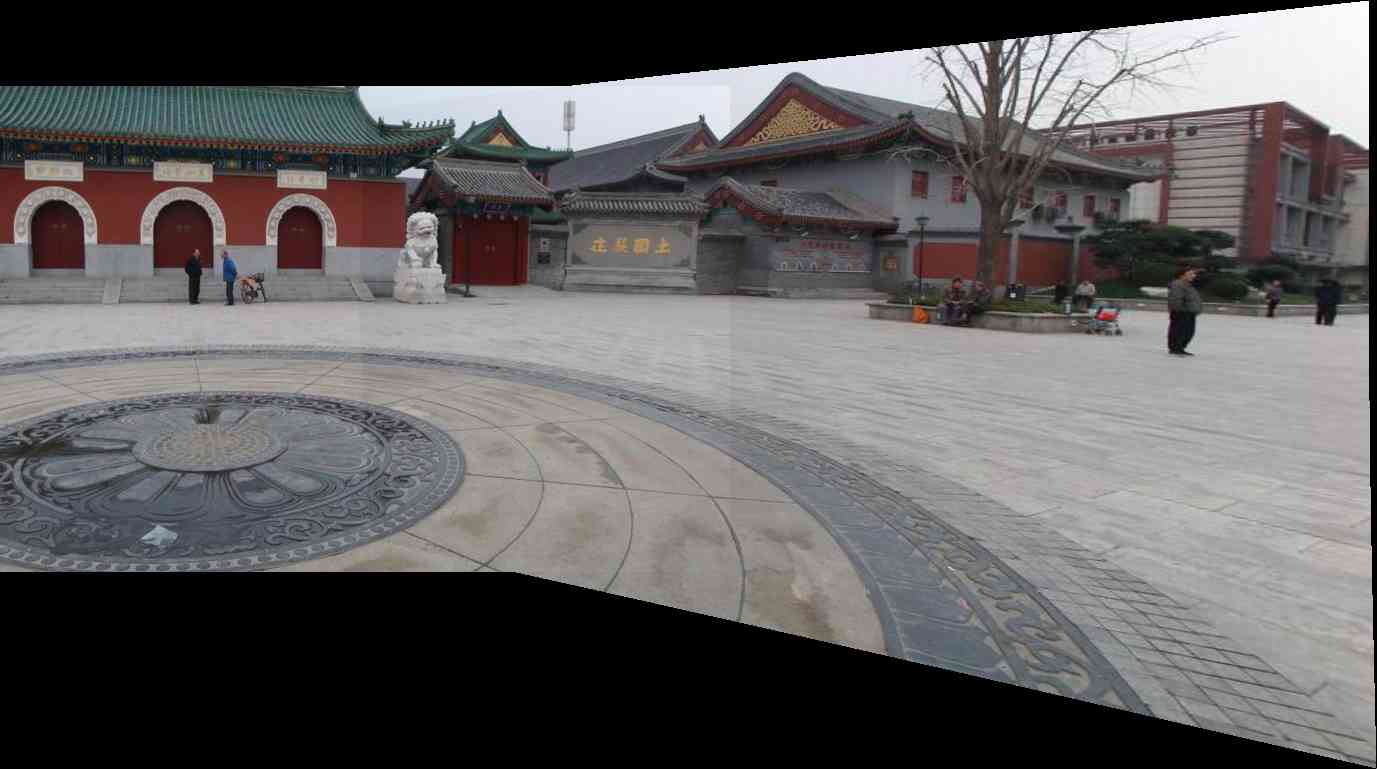}};
		\end{tikzpicture}
	}
	\caption{The effectiveness of the rank-1 constraint: comparison on the \emph{temple} dataset~\cite{gao_constructing_2011} between RASL and BRAS\@. The aligned images are overlayed to show the misaligned regions, which are highlighted in red circles. BRAS aligns the images significantly better, as can be seen by the reduction in ghosting artifacts.}\label{fig:rasl_align}
\end{figure*}

To quantitatively characterize the performance limits of RASL and BRAS, we perform a synthetic experimental comparison. In the same spirit as~\cite{peng_rasl:_2012}, different magnitudes of misalignments (translations in $x$ direction) and different amount of occlusions are studied. Occlusions are simulated by zeroing out pixels at random locations. We consider an alignment successful if for a given translation $t(\cdot;\tau^\ast)$, the final solution
$\widehat{\tau}$ satisfies $d_{h,w}(\tau^\ast,\widehat{\tau})<1$ pixel spacing for
$d_{h,w}(\tau_1,\tau_2)$ defined in~\eqref{eqn:rmse}. We test over $10$
commonly used image sets, and randomly select one image from each image
set. We count the number of successes and
divide it by $10$. The results are summarized in Fig.~\ref{fig:occlusion}, which confirms that BRAS has much higher tolerance against large translations and occlusions.

\begin{figure}
	\subfloat[RASL~\cite{peng_rasl:_2012}]{\includegraphics[width=0.5\linewidth]{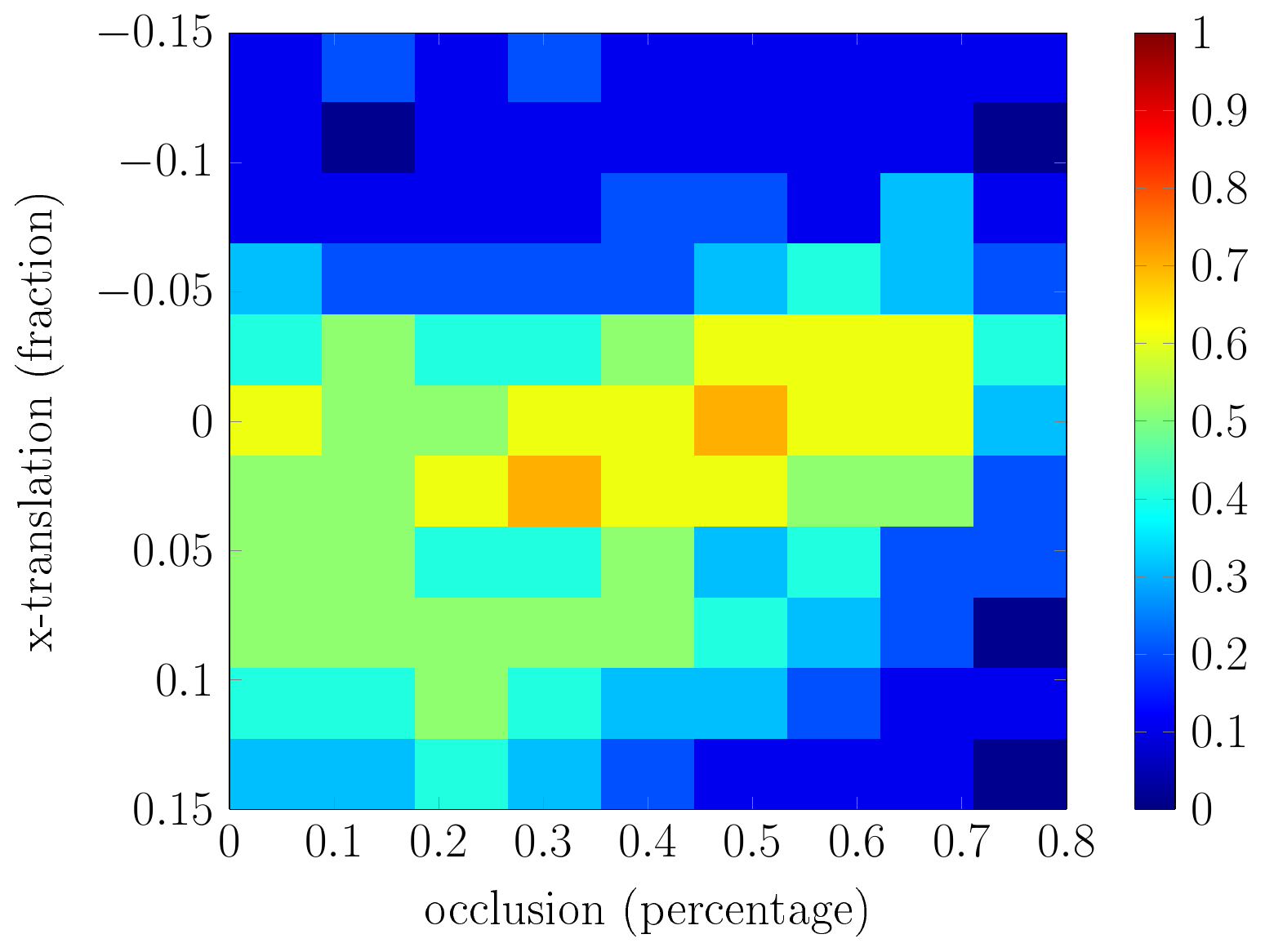}}
	\subfloat[BRAS]{\includegraphics[width=0.5\linewidth]{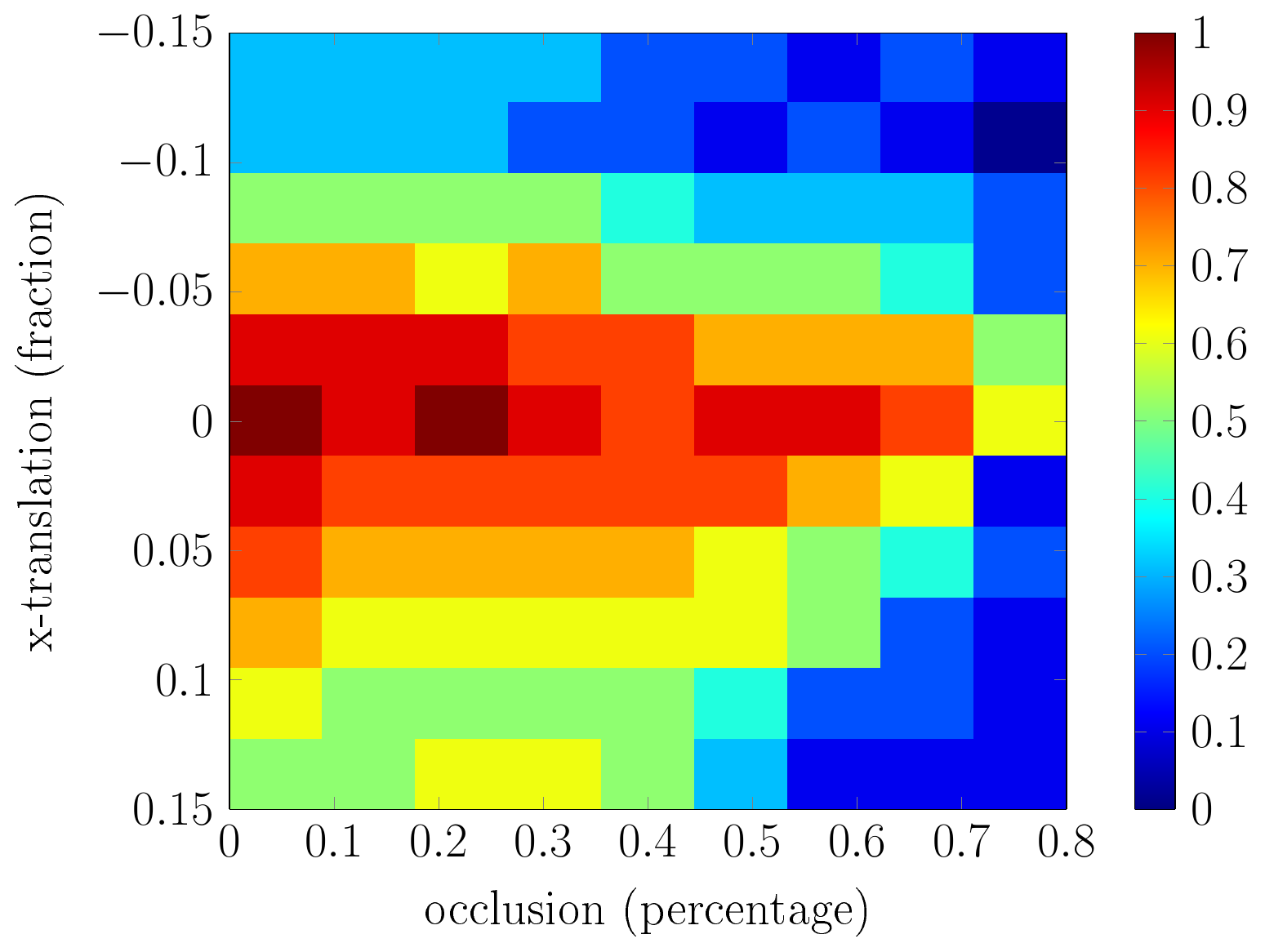}}
	\caption{Fractions of successful alignments are plotted for RASL and BRAS versus different translations and occlusions. The translations are in percentage of the corresponding image dimensions, while the occlusions are in fractions of the number of pixels. }\label{fig:occlusion}
\end{figure}

\subsection{Alignment in Challenging Scenarios}\label{subsec:alignment}
For various reasons, images difficult to align may be produced in real world
photography. However, for practical purposes such as object recognition, it may
be desirable to find a sensible alignment even in such scenarios. We will
analyze two typical examples, and assess the performance of BRAS in terms of
alignment accuracy. For comparisons, we also evaluate a typical feature-based
method with SIFT as descriptors. This method is widely used in panorama
software nowadays~\cite{brown_automatic_2007,hugin}. In all the cases we adhere
to the classic single homography model to rule out the influence of different
motion models.

\begin{figure}
	\centering
	\setcounter{subfigure}{0}
	\subfloat[\label{fig:christSIFT}]{\includegraphics[height=0.1\textheight]{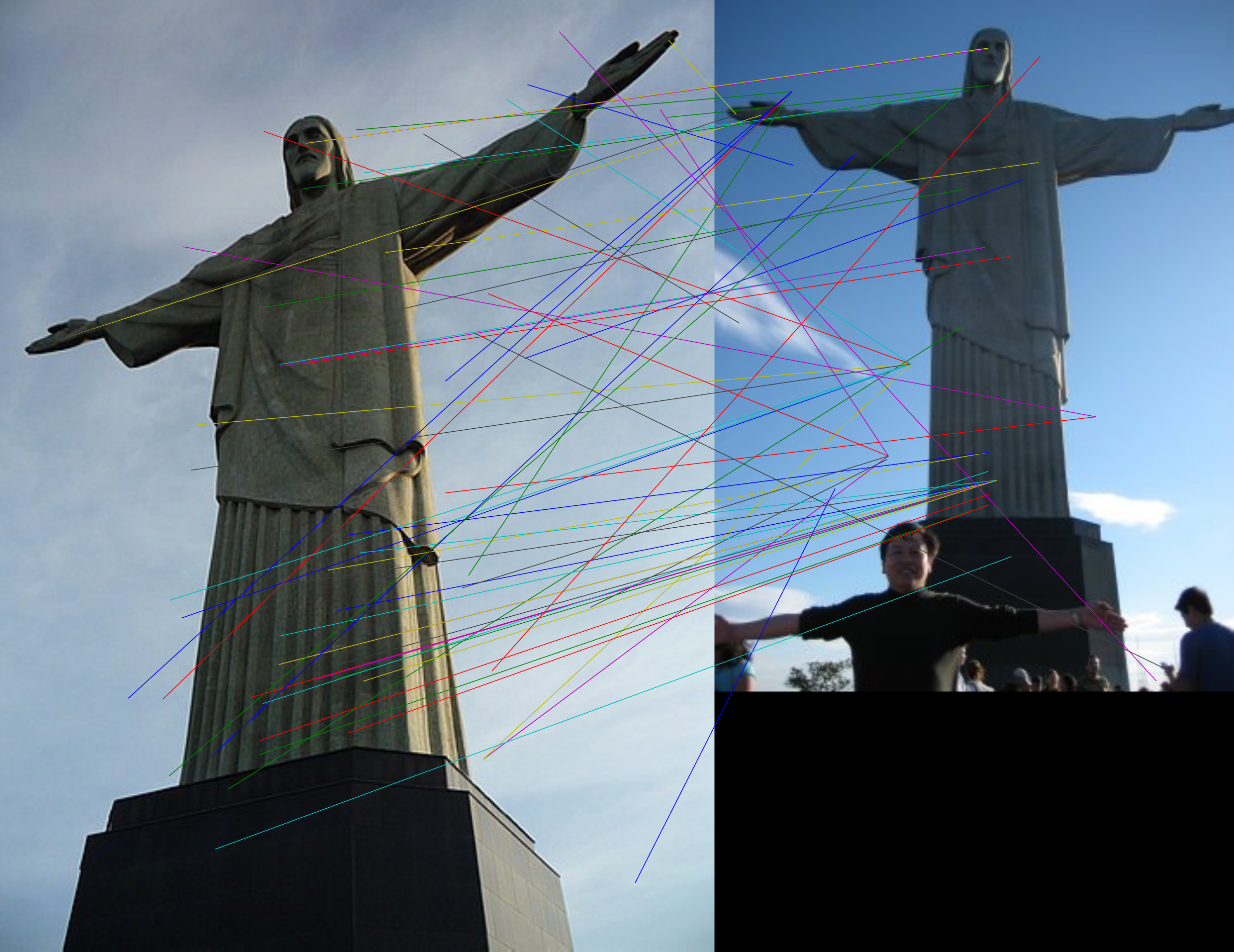}}\,
	\subfloat[\label{fig:christRANSAC}]{\includegraphics[height=0.1\textheight]{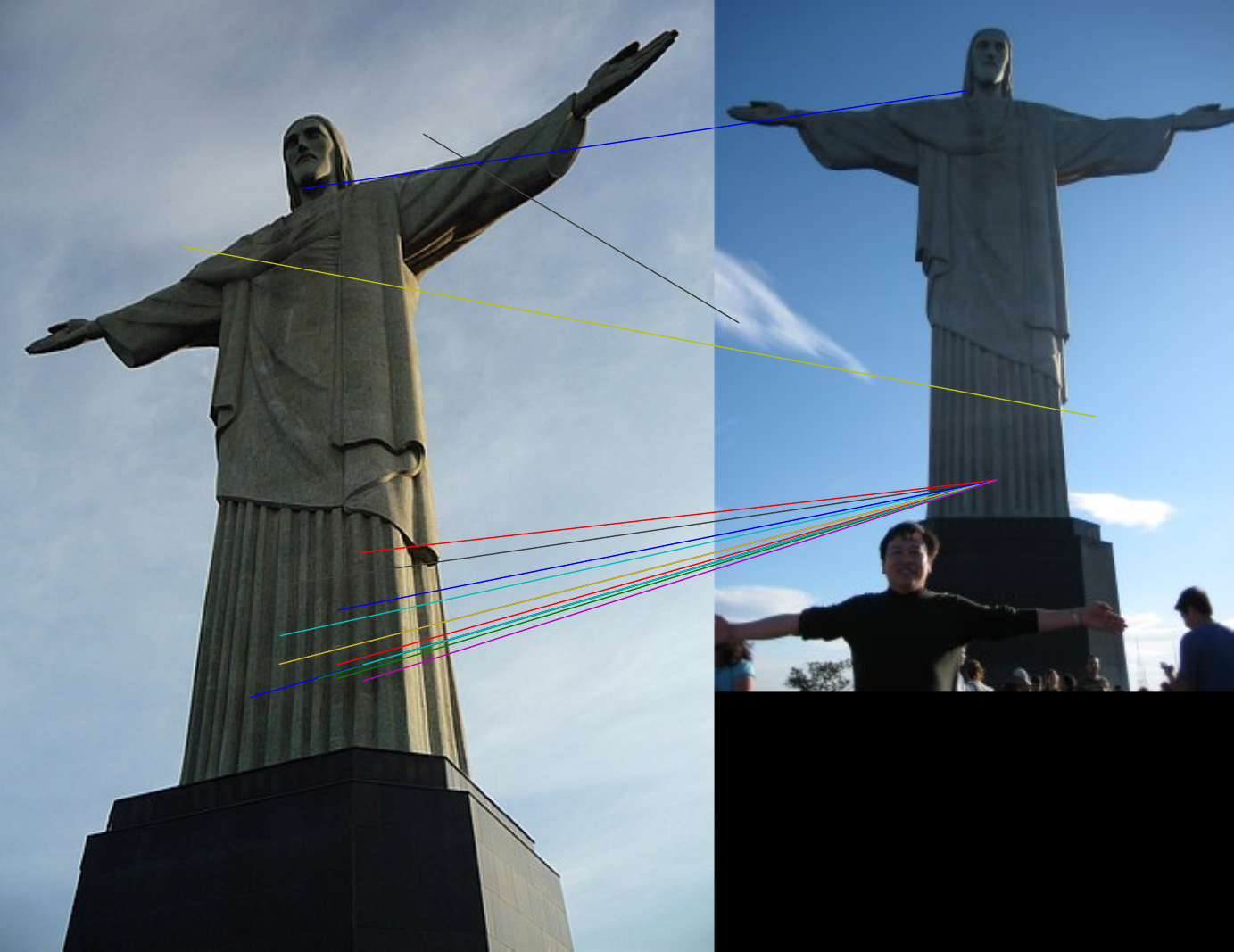}}\,
	\subfloat[\label{fig:christBRAS}]{\includegraphics[height=0.1\textheight, trim={90px 0 0 300px},clip]{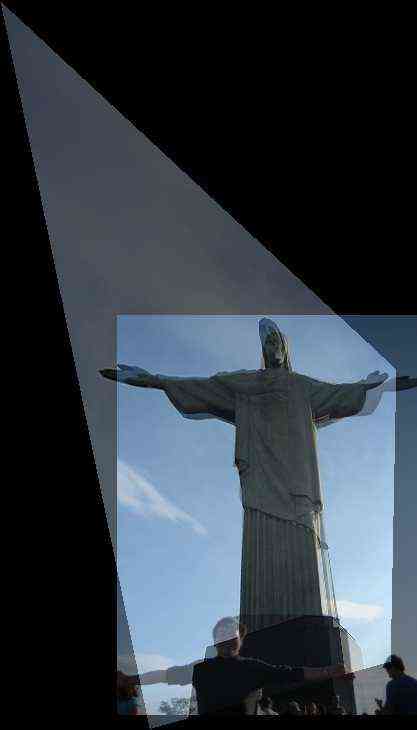}}
	\caption{Alignment results under appearance changes: \protect\subref{fig:christSIFT} appearance changes may mislead feature matching, and \protect\subref{fig:christRANSAC} the RANSAC does not find any sensible alignment; \protect\subref{fig:christBRAS} BRAS successfully aligns the images.}\label{fig:christ}
\end{figure}

\textbf{Appearance changes:} When the same object is captured at different
times, both scene differences and exposure differences may arise. An example is
the {\it christ\/} dataset~\cite{lin_aligning_2012} shown in
Fig.~\ref{fig:christSIFT}. Feature-based method for this dataset breaks down in
10 successive executions. We analyze this phenomenon by showing the SIFT
feature matching in~\ref{fig:christSIFT} and the RANSAC result in
Fig.~\ref{fig:christRANSAC}. It can be observed that, appearance changes
produce enormous amount of falsified feature matches, and RANSAC cannot
faithfully remove the outliers. In contrast, as shown in
Fig.~\ref{fig:christBRAS}, BRAS can still align the images plausibly.

\textbf{Large occlusions:} Occlusions may appear due to object movements,
parallax, etc. Large occlusions contribute to substantial amount of outliers in
feature matching, posing great challenges to RANSAC\@. For instance, in
Fig.~\ref{fig:large_occlusion} the windows and the wall are occluded
significantly by the tree branches. As is shown in~\ref{fig:winSIFT}, a typical
feature-based method poorly aligns the images. In contrast, BRAS still aligns the
images reliably, as shown in Fig.~\ref{fig:winBRAS}.

\tcred{
\subsection{A Panoramic Stitching Example for Composing a Long Image Sequence}\label{subsec:wideview}
We present the results of aligned and stitched images with BRAS for a dataset which includes a large number of images.
The dataset we use is from \cite{CMU}.
The individual image sequence as well as the final stitched result via BRAS are shown in Fig. \ref{fig:NSH}. This verifies the effectiveness of BRAS in handling long image sequences. Note that many state of the art methods such as APAP~\cite{zaragoza_as-projective-as-possible_2014} and CPW~\cite{hu_multi-objective_2015} do not handle such scenarios. Comprehensive comparisons with state of the art panoramic composition methods are reported next.  
}

\begin{figure*}
	\centering
	\subfloat {\includegraphics[width=0.9\textwidth]{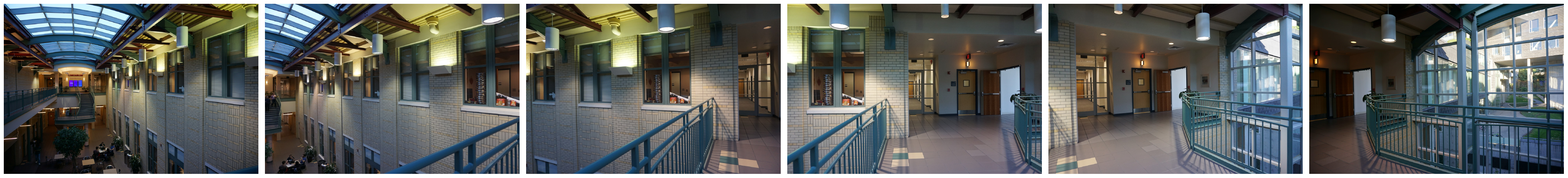}}\\ \vspace{-2mm}
	\subfloat {\includegraphics[width=0.9\textwidth]{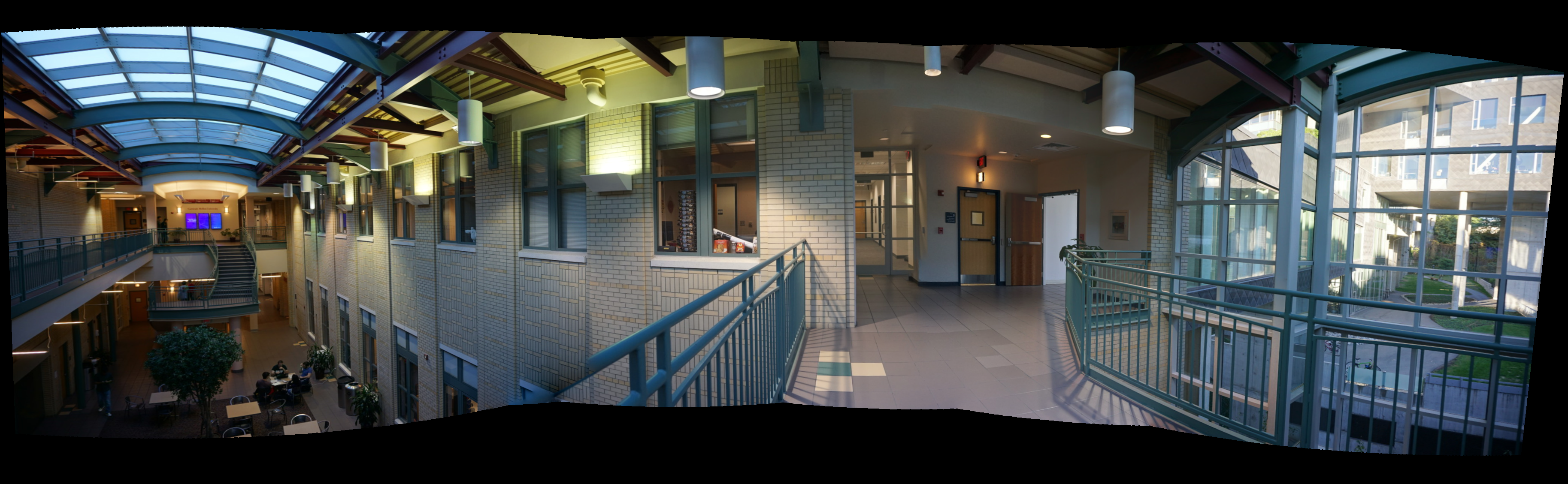}} \vspace{-2mm}
	\caption{\tcred{BRAS panoramic composition for a dataset with a long image sequence. Top: $6$ images from the {\it CMU\/} dataset~\cite{CMU}. Bottom: The final panoramic composition as achieved by BRAS after alignment and stitching.}}\label{fig:NSH}
\end{figure*}

\subsection{Panoramic Stitching Comparisons}\label{subsec:panorama}
We compare BRAS
against recent state-of-the-art methods, including
AutoStitch~\cite{brown_automatic_2007},
APAP~\cite{zaragoza_as-projective-as-possible_2014},
CPW~\cite{hu_multi-objective_2015}, and
SPHP~\cite{chang_shape-preserving_2014}. We also include a cutting-edge
commercial software, Microsoft ICE~\cite{ICE}. We carry out the comparisons over a
large variety of image datasets, all taken from recent publications in the
literature of image alignment and stitching. Because CPW can only handle
pairwise image stitching directly, we only include its results for
the two-image cases.

\subsubsection{Qualitative Evaluation on Image Stitching}\label{subsubsec:qualitative}

Fig.~\ref{fig:railtracks_aligned} shows the aligned images on the
\emph{railtracks} dataset~\cite{zaragoza_as-projective-as-possible_2014}. The
stitched version may be found in the supplementary document. From the insets,
we find that BRAS achieves a superior alignment accuracy than the others, as
evidenced by its significantly reduced ghosting artifacts.

\tcr{Fig.~\ref{fig:skyscraper_aligned} shows an additional alignment example on the \emph{skyscraper} dataset~\cite{chang_shape-preserving_2014}. From the insets, it is clear that BRAS aligns the edges and lines more accurately compared with both SPHP and APAP, and is free from structural distortions which are evident in APAP.}

\begin{figure}[h!]
	\centering
	\setcounter{subfigure}{0}
	\subfloat[SIFT\label{fig:winSIFT}]{%
		\includegraphics[height=0.14\textheight]{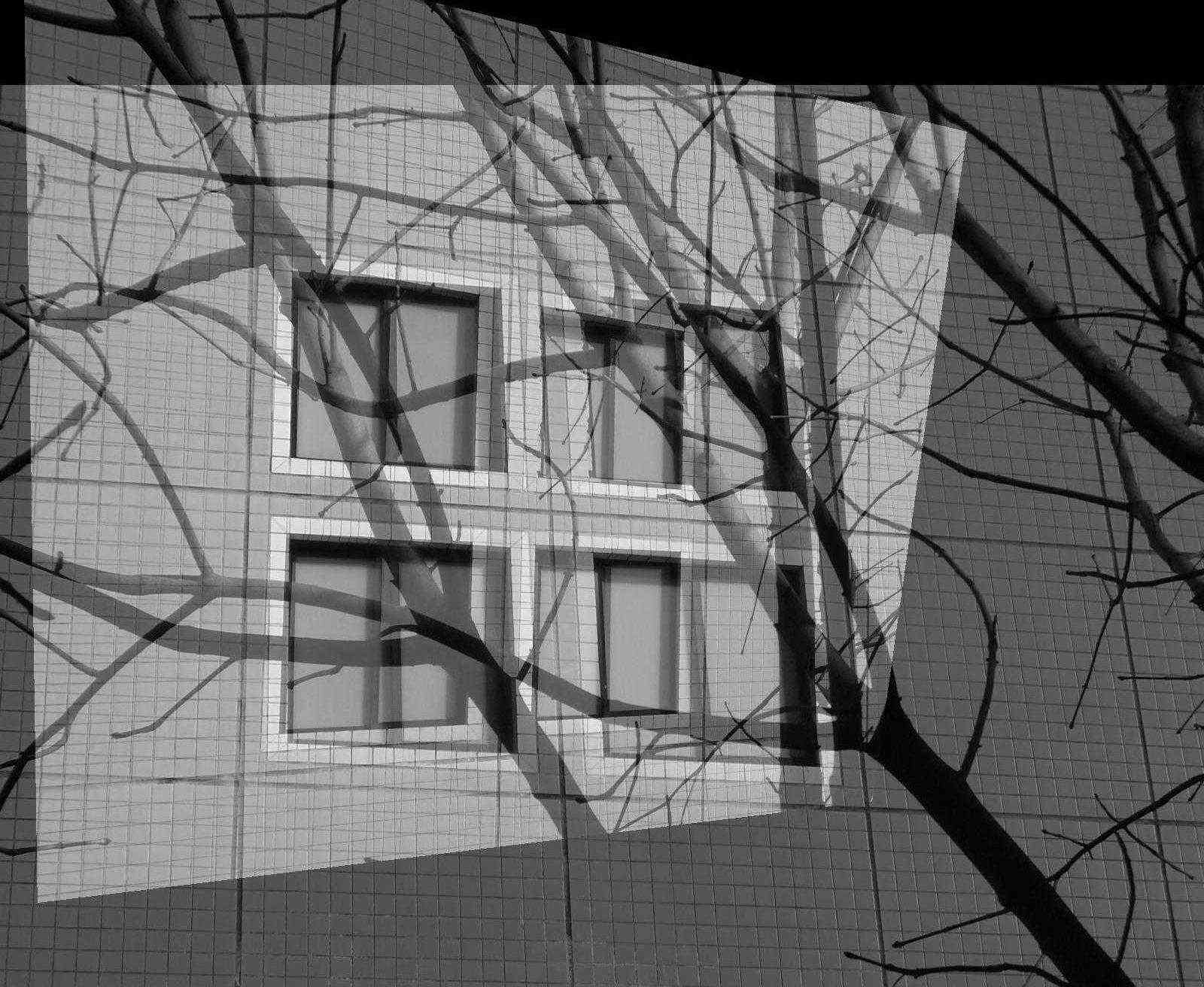}
	}\!\!
	\subfloat[BRAS\label{fig:winBRAS}]{%
		\includegraphics[height=0.14\textheight]{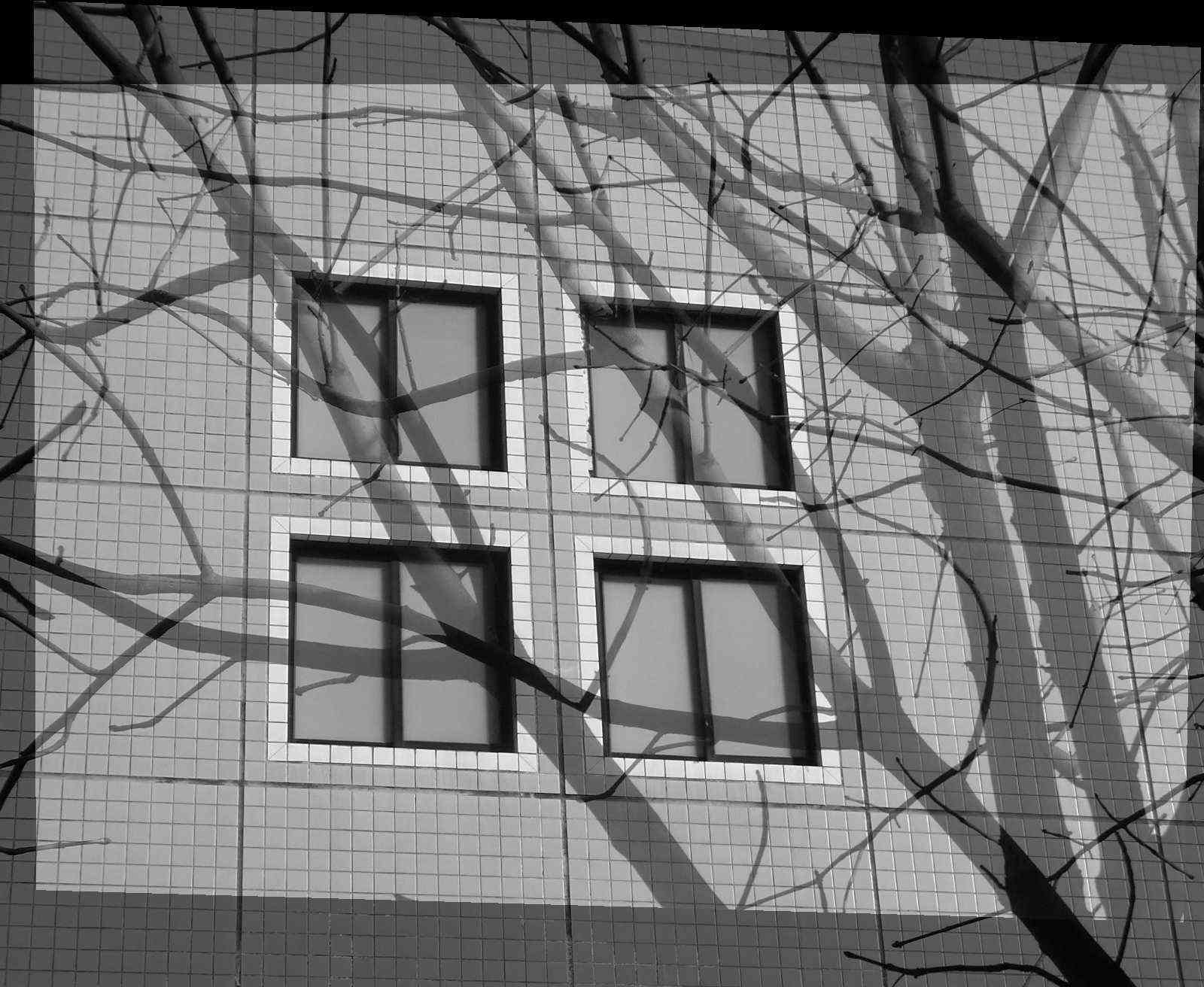}
	}
	\caption{Alignment results under large occlusions. \protect\subref{fig:winSIFT} A representative feature-based method (using SIFT) generates misaligned results because of large occlusions. \protect\subref{fig:winBRAS} BRAS accurately aligns the images. Images are from~\cite{peng_rasl:_2012}.}\label{fig:large_occlusion}
\end{figure}

Fig.~\ref{fig:apartments} shows stitched images for the \textit{apartments}
image set~\cite{gao_constructing_2011}. To show the artifacts we magnify some
regions.  AutoStitch, ICE and SPHP are based on the single homography model,
and thus they perform poorly whenever this simple model is inadequate.
Compared with them, CPW and APAP employ multiple homographies and are more
flexible, but severe artifacts such as shape distortions still remain.  In
contrast, the result of BRAS is significantly more visually appealing.
Additionally, the aligned version (in the supplementary document) shows that
BRAS succeeds in finding a more accurate and sensible alignment than the
others.

\tcr{Finally, we include the stitched images for the \emph{hanger} image
set~\cite{lin_smoothly_2011} in Fig.~\ref{fig:hanger}. As highlighted in the
red circles, AutoStitch and SPHP bend the lines on the wall, while AutoStitch,
ICE and SPHP duplicate part of the bed frame. APAP and CPW perform better, but
the bed frame is distorted by both methods, and appears broken in their stitched
results. The result of BRAS, on the other hand, is clearly of higher visual
quality and free from the aforementioned artifacts.}

\begin{figure}
	\centering
	\subfloat[CPW~\cite{hu_multi-objective_2015}\label{fig:railtracks_CPW}] {%
		\begin{tikzpicture}[zoomboxarray, zoomboxes below, zoomboxarray height=0.12\textheight, zoomboxes yshift=-0.8]
		\node [image node] {\includegraphics[height=0.1\textheight]{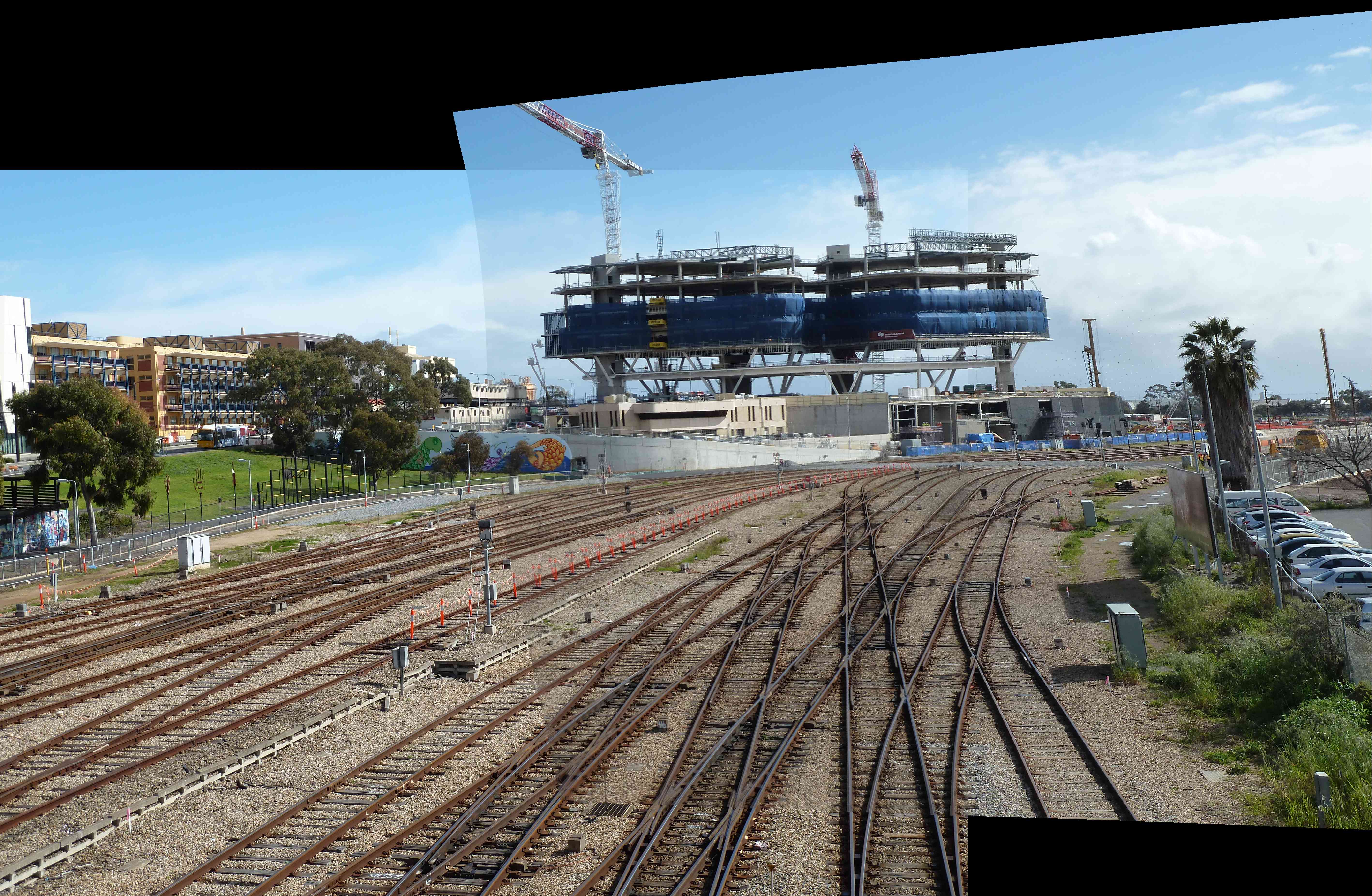}};
		\zoombox[color code=red,magnification=8]{0.54,0.53}
		\zoombox[color code=cyan,magnification=8]{0.44,0.76}
		\zoombox[color code=blue,magnification=8]{0.673,0.53}
		\zoombox[color code=green,magnification=8]{0.685,0.72}
		\end{tikzpicture}
	}\!
	\subfloat[SPHP~\cite{chang_shape-preserving_2014}\label{fig:railtracks_SPHP}] {%
		\begin{tikzpicture}[zoomboxarray, zoomboxes below, zoomboxarray height=0.12\textheight, zoomboxes yshift=-0.8]
		\node [image node] {\includegraphics[height=0.1\textheight]{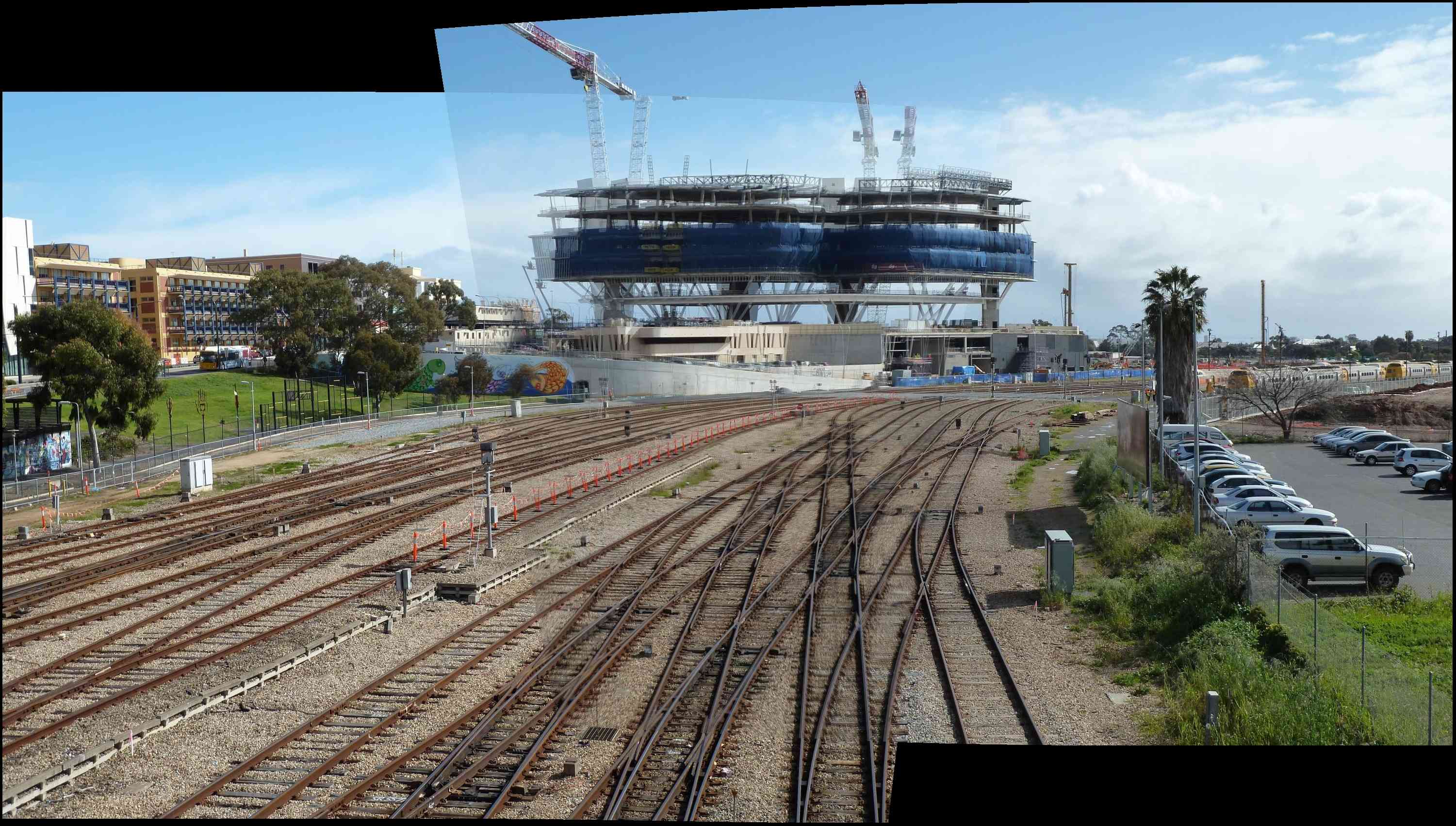}};
		\zoombox[color code=red,magnification=8]{0.51,0.58}
		\zoombox[color code=cyan,magnification=8]{0.43,0.83}
		\zoombox[color code=blue,magnification=8]{0.632,0.58}
		\zoombox[color code=green,magnification=8]{0.64,0.767}
		\end{tikzpicture}
	}\\
	\subfloat[APAP~\cite{zaragoza_as-projective-as-possible_2014}\label{fig:railtracks_APAP}] {%
		\begin{tikzpicture}[zoomboxarray, zoomboxes below, zoomboxarray height=0.12\textheight, zoomboxes yshift=-0.8]
		\node [image node] {\includegraphics[height=0.09\textheight]{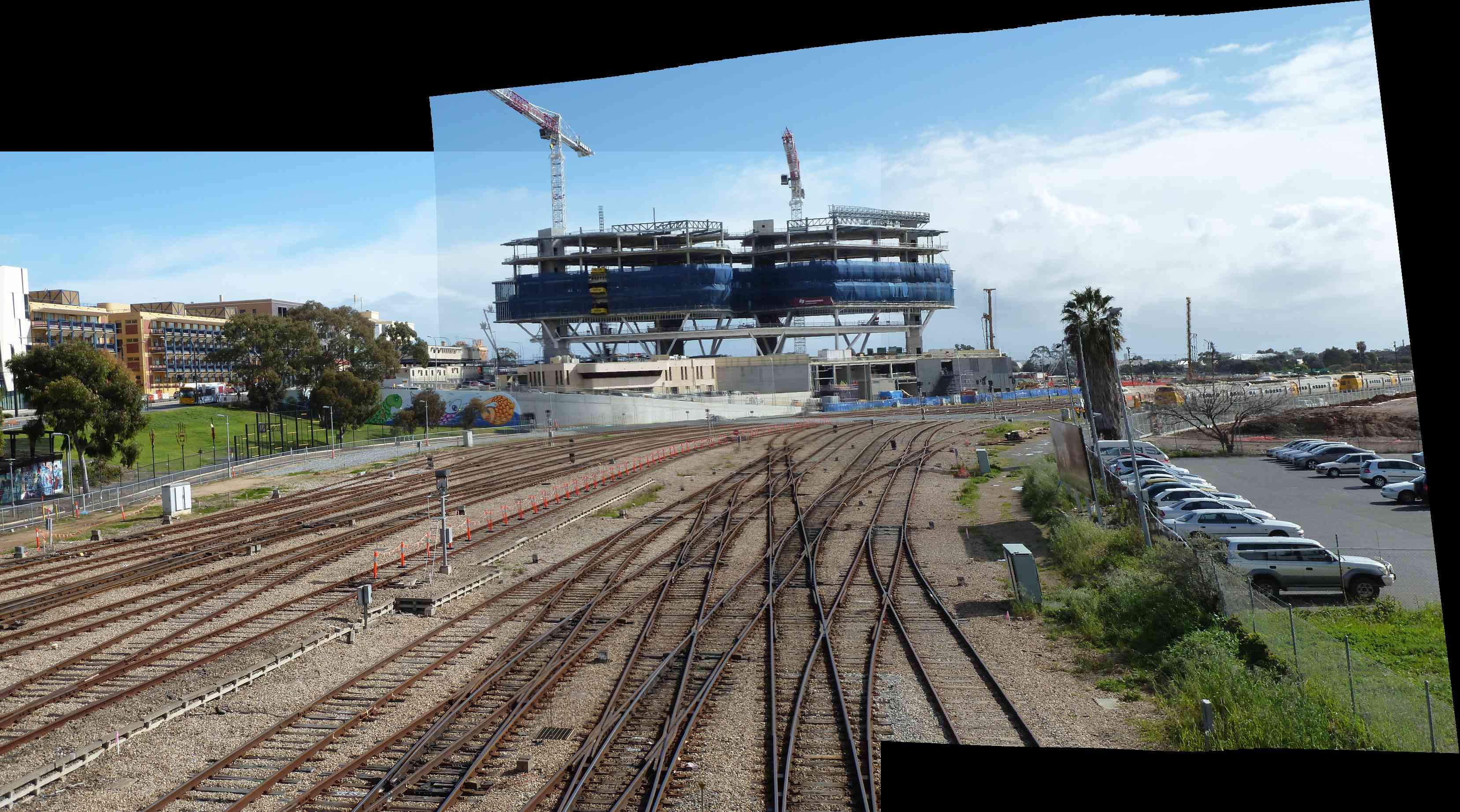}};
		\zoombox[color code=red,magnification=8]{0.458,0.53}
		\zoombox[color code=cyan,magnification=8]{0.38,0.76}
		\zoombox[color code=blue,magnification=8]{0.58,0.53}
		\zoombox[color code=green,magnification=8]{0.59,0.72}
		\end{tikzpicture}
	}\!
	\subfloat[BRAS\label{fig:railtracks_BRAS}] {%
		\begin{tikzpicture}[zoomboxarray, zoomboxes below, zoomboxarray height=0.12\textheight, zoomboxes yshift=-0.8]
		\node [image node] {\includegraphics[height=0.09\textheight]{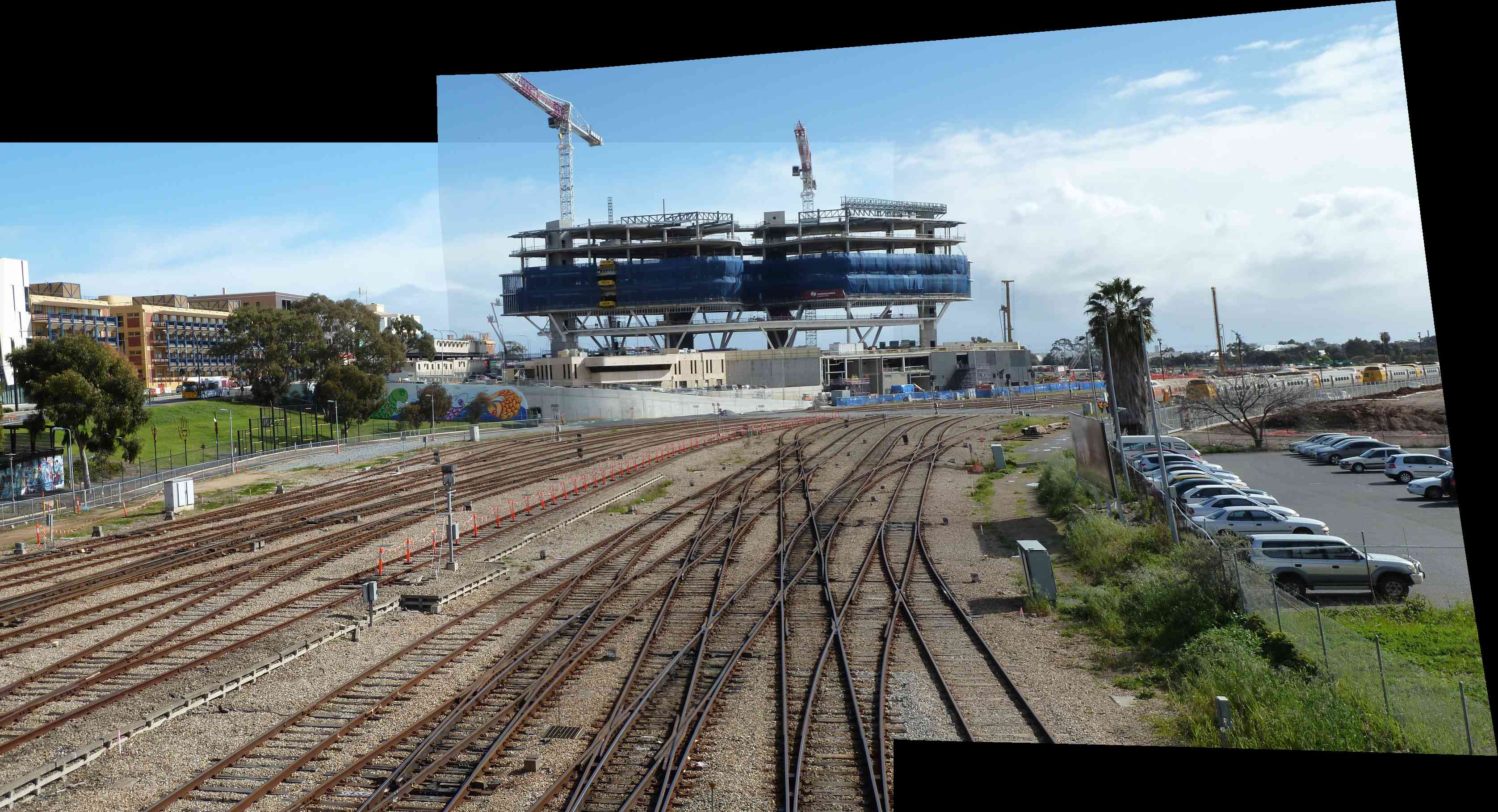}};
		\zoombox[color code=red,magnification=8]{0.454,0.53}
		\zoombox[color code=cyan,magnification=8]{0.38,0.77}
		\zoombox[color code=blue,magnification=8]{0.574,0.53}
		\zoombox[color code=green,magnification=8]{0.584,0.73}
		\end{tikzpicture}
	}\\
	\caption{Aligned and overlayed images on the {\it railtracks\/} dataset~\cite{zaragoza_as-projective-as-possible_2014}. Regions with misalignments are magnified in the insets at the bottom. Please notice the ghosting artifacts produced by \protect\subref{fig:railtracks_CPW} CPW \protect\subref{fig:railtracks_SPHP} SPHP and \protect\subref{fig:railtracks_APAP} APAP, which indicate alignment errors and do not appear in \protect\subref{fig:railtracks_BRAS} BRAS.}\label{fig:railtracks_aligned}
\end{figure}

\begin{figure*}
	\centering
	\subfloat[SPHP~\cite{chang_shape-preserving_2014}] {\includegraphics[height=0.28\textheight]{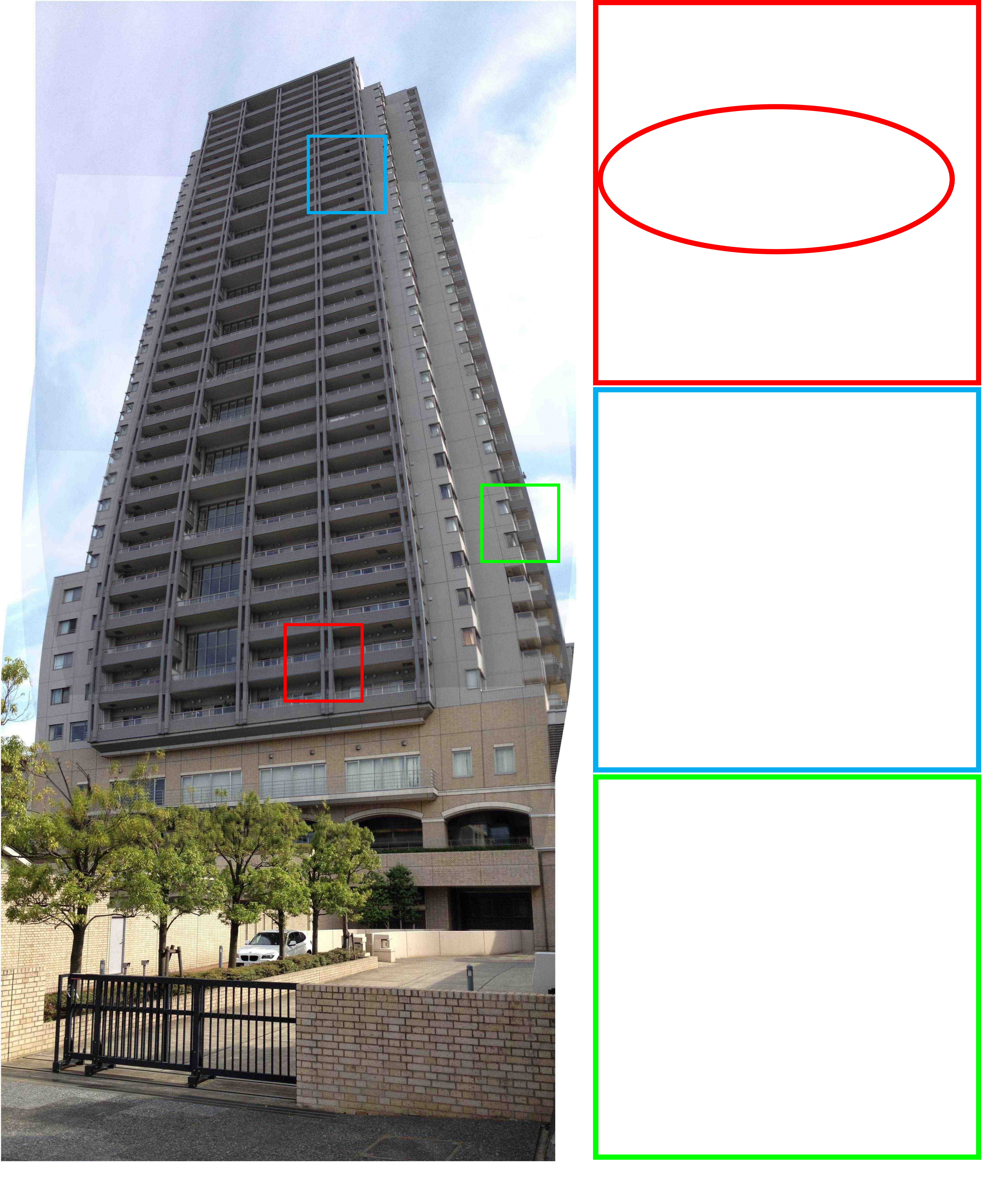}}\,\!
	\subfloat[APAP~\cite{zaragoza_as-projective-as-possible_2014}] {\includegraphics[height=0.28\textheight]{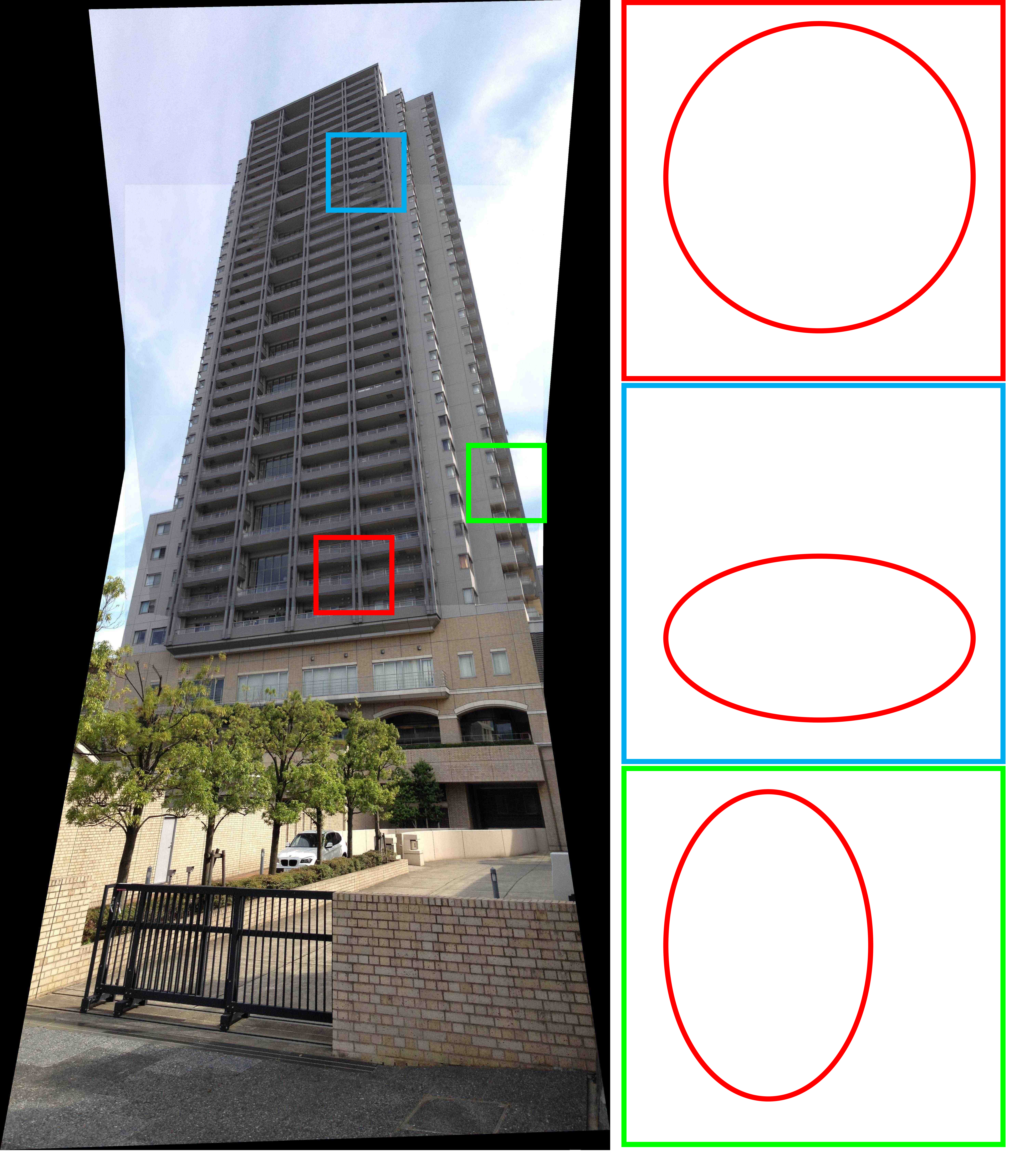}}\,\!
	\subfloat[BRAS] {\includegraphics[height=0.28\textheight]{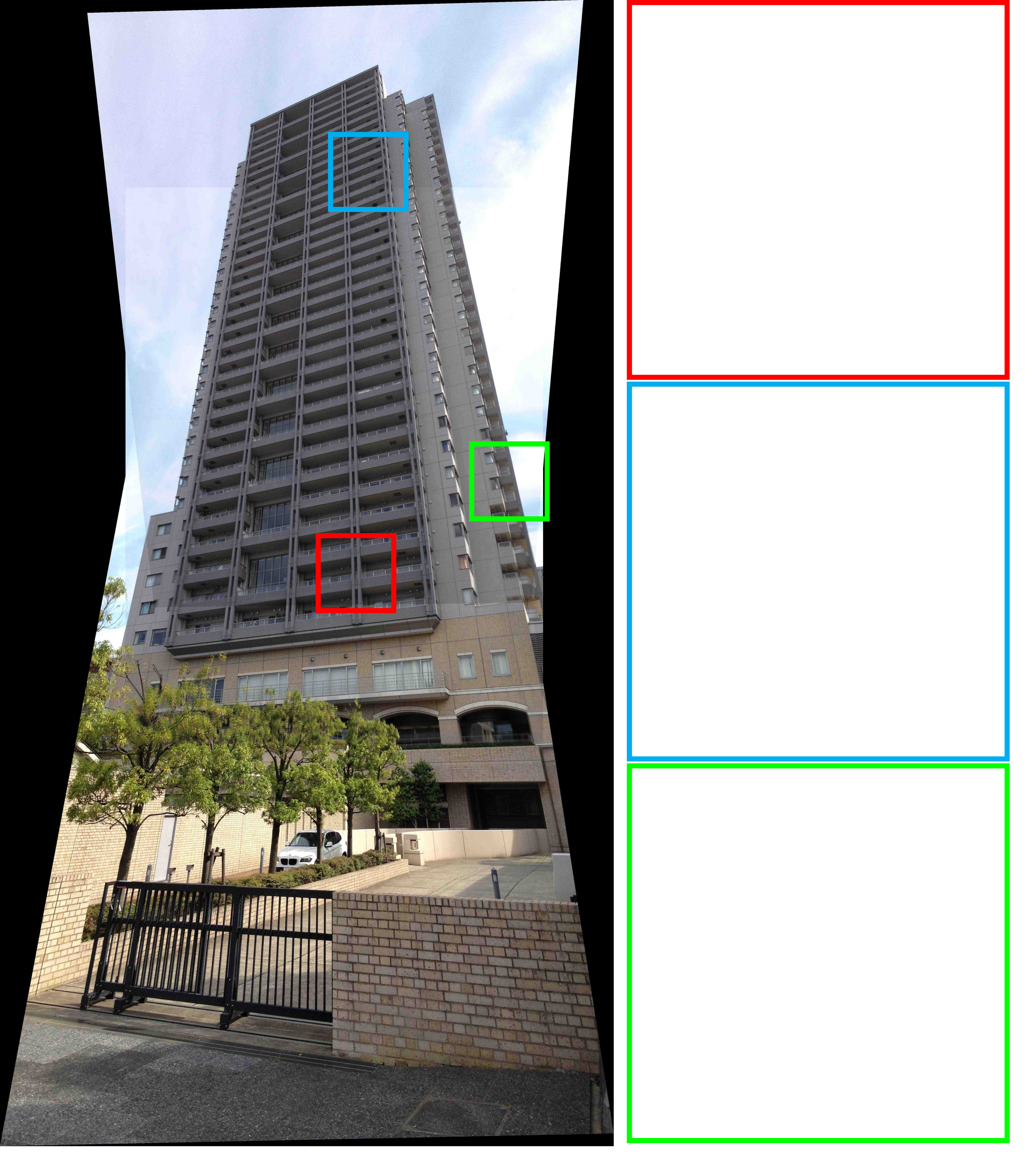}}
	\caption{\tcr{Aligned and overlayed images on the {\it skyscraper\/} dataset~\cite{chang_shape-preserving_2014}. Red circles highlight errors and ghosting indicates misalignments.}}\label{fig:skyscraper_aligned}
\end{figure*}

\begin{figure*}
	\centering
	\subfloat[AutoStitch~\cite{brown_automatic_2007}] {%
		\begin{tikzpicture}[zoomboxarray, zoomboxes below, zoomboxarray rows=1, zoomboxarray height=0.13\textheight, zoomboxes yshift=-0.9, execute at end picture={\draw[thick,red] (2.3,-2.5) circle (.5cm);}, execute at end picture={\draw[thick,red] (4.8,-.8) circle (.5cm);}, execute at end picture={\draw[thick,red] (4.6,-2.9) circle (.5cm);}]
		\node [image node] {\includegraphics[height=0.18\textheight]{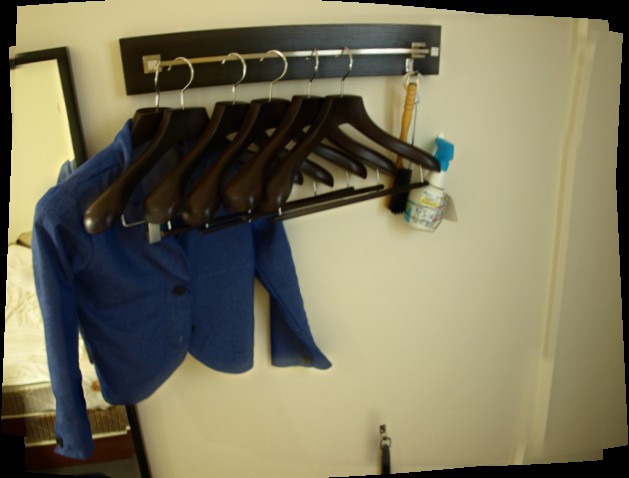}};
		\zoombox[color code=blue]{0.14,0.15}
		\zoombox[color code=green]{0.9,0.8}
		\end{tikzpicture}
	}\,\!
	\subfloat[ICE~\cite{ICE}] {%
		\begin{tikzpicture}[zoomboxarray, zoomboxes below, zoomboxarray rows=1, zoomboxarray height=0.13\textheight, zoomboxes yshift=-0.9, execute at end picture={\draw[rotate=90, thick,red] (-2.2,-.7) ellipse (1cm and .6cm);}]
		\node [image node] {\includegraphics[height=0.18\textheight]{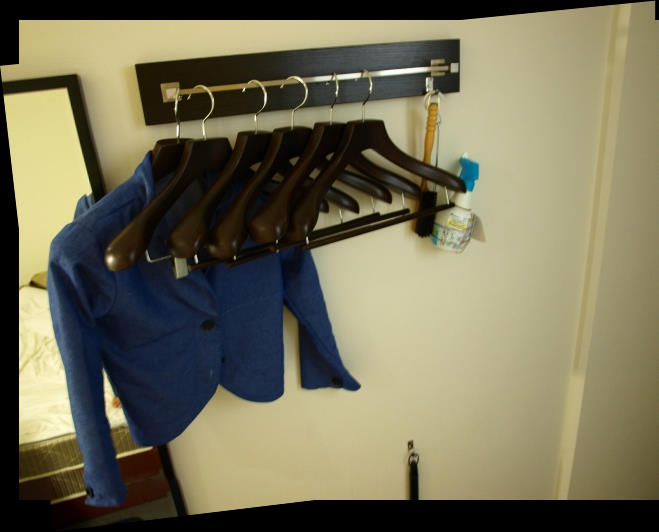}};
		\zoombox[color code=blue]{0.14,0.15}
		\zoombox[color code=green]{0.9,0.8}
		\end{tikzpicture}
	}\,\!
	\subfloat[CPW~\cite{hu_multi-objective_2015}] {%
		\begin{tikzpicture}[zoomboxarray, zoomboxes below, zoomboxarray rows=1, zoomboxarray height=0.13\textheight, zoomboxes yshift=-0.9, execute at end picture={\draw[rotate=70, thick,red] (-1.6,-1.9) ellipse (.6cm and 1cm);}]
		\node [image node] {\includegraphics[height=0.18\textheight]{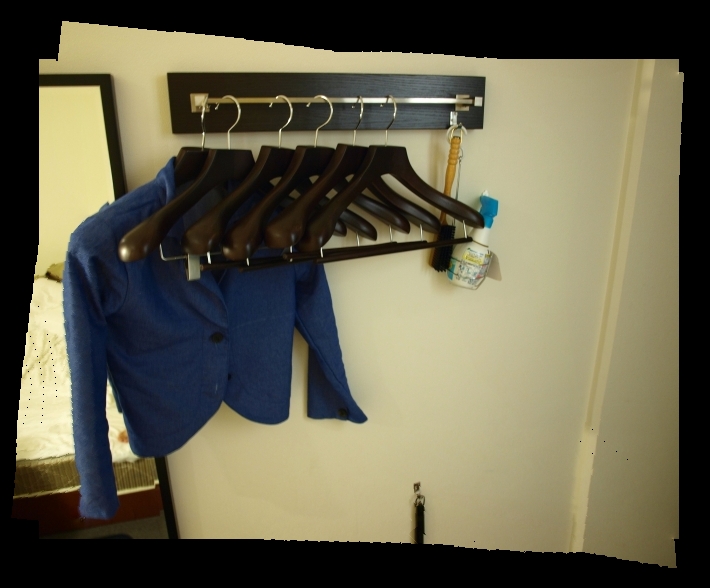}};
		\zoombox[color code=blue]{0.14,0.21}
		\zoombox[color code=green]{0.87,0.75}
		\end{tikzpicture}
	}\\
	
	\subfloat[SPHP~\cite{chang_shape-preserving_2014}] {%
		\begin{tikzpicture}[zoomboxarray, zoomboxes below, zoomboxarray rows=1, zoomboxarray height=0.13\textheight, zoomboxes yshift=-0.9, execute at end picture={\draw[rotate=90, thick,red] (-2,-.7) ellipse (1cm and .6cm);}]
		\node [image node] {\includegraphics[height=0.18\textheight]{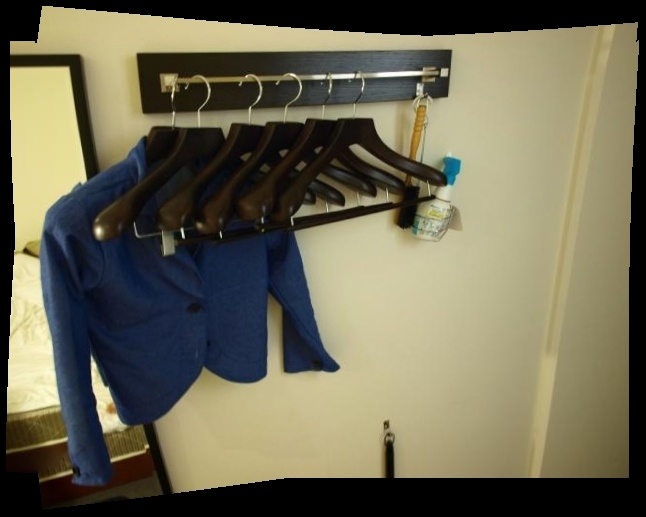}};
		\zoombox[color code=blue]{0.14,0.15}
		\zoombox[color code=green]{0.9,0.8}
		\end{tikzpicture}
	}\,\!
	\subfloat[APAP~\cite{zaragoza_as-projective-as-possible_2014}] {%
		\begin{tikzpicture}[zoomboxarray, zoomboxes below, zoomboxarray rows=1, zoomboxarray height=0.13\textheight, zoomboxes yshift=-0.9, execute at end picture={\draw[rotate=70, thick,red] (-1.6,-2.1) ellipse (.6cm and 1cm);}]
		\node [image node] {\includegraphics[height=0.18\textheight]{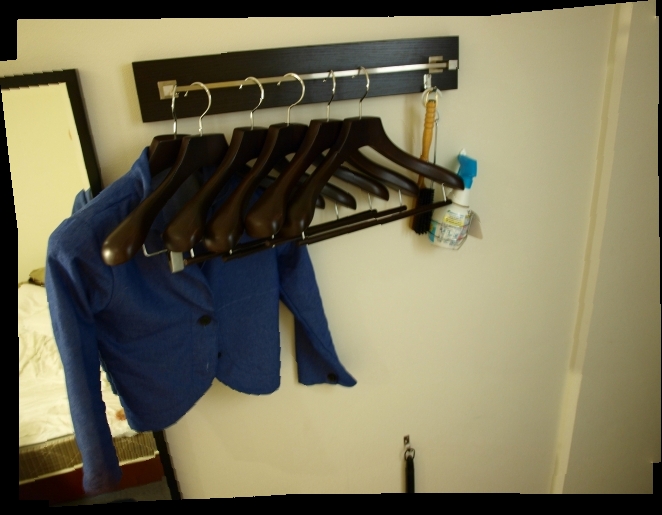}};
		\zoombox[color code=blue]{0.14,0.15}
		\zoombox[color code=green]{0.9,0.8}
		\end{tikzpicture}
	}\,\!
	\subfloat[BRAS] {%
		\begin{tikzpicture}[zoomboxarray, zoomboxes below, zoomboxarray rows=1, zoomboxarray height=0.13\textheight, zoomboxes yshift=-0.9]
		\node [image node] {\includegraphics[height=0.18\textheight]{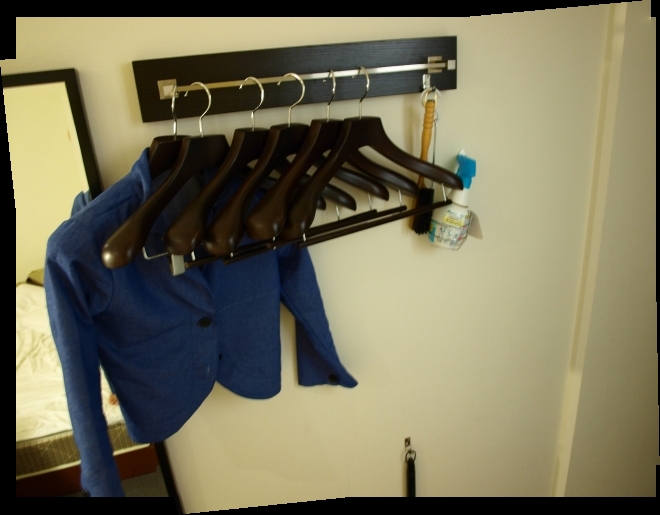}};
		\zoombox[color code=blue]{0.14,0.15}
		\zoombox[color code=green]{0.9,0.8}
		\end{tikzpicture}
	}\\
	\caption{\tcr{Stitched images on the {\it hanger\/} dataset~\cite{lin_smoothly_2011}. Red circles in the insets highlight artifacts.}}\label{fig:hanger}
\end{figure*}

\begin{figure*}
	\centering
	\subfloat[AutoStitch~\cite{brown_automatic_2007}\label{fig:apartments_autostitch}] {%
	\begin{tikzpicture}[zoomboxarray, zoomboxes below, zoomboxarray rows=1, execute at end picture={\draw[thick,red] (1,-1.5) ellipse (0.8cm and 1cm);}]
		\node [image node] {\includegraphics[height=0.12\textheight]{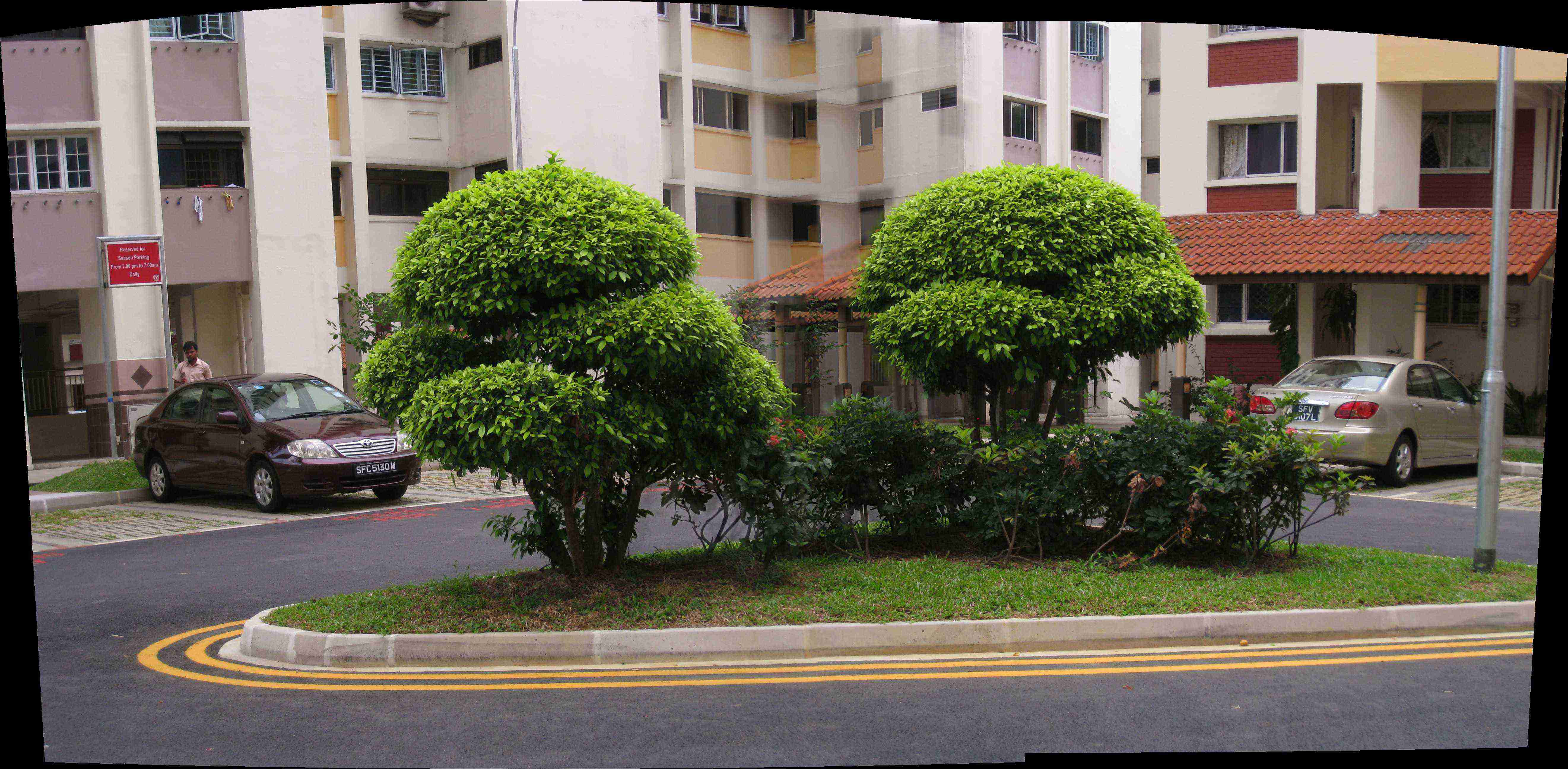}};
		\zoombox[color code=green]{0.55,0.75}
		\zoombox[color code=blue]{0.7,0.8}
		\end{tikzpicture}
	}\,\!
	\subfloat[ICE~\cite{ICE}\label{fig:apartments_ICE}] {%
		\begin{tikzpicture}[zoomboxarray, zoomboxes below, zoomboxarray rows=1, execute at end picture={\draw[thick,red] (4,-1.2) ellipse (1cm and 0.7cm);}]
		\node [image node] {\includegraphics[height=0.12\textheight]{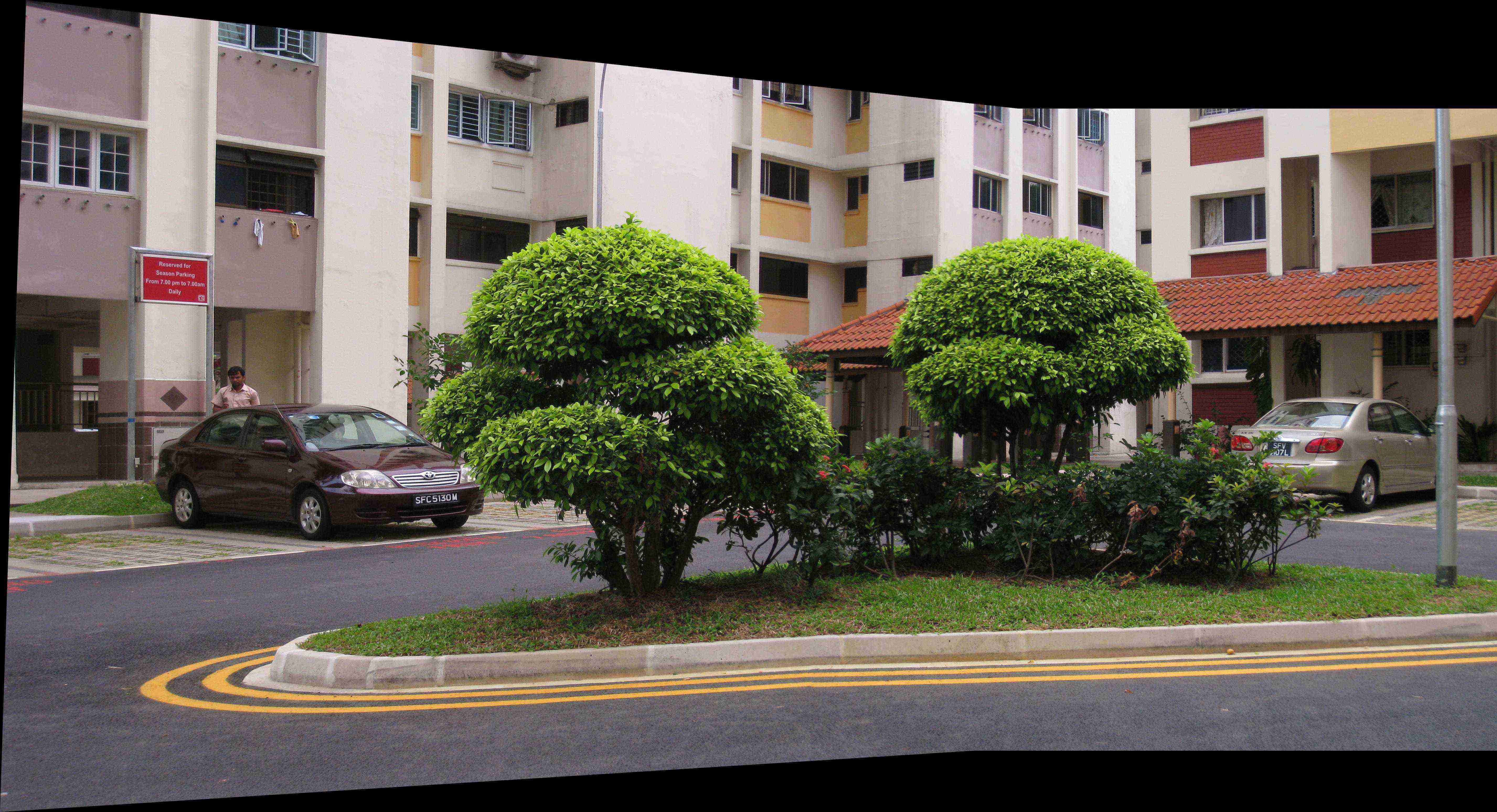}};
		\zoombox[color code=green]{0.57,0.73}
		\zoombox[color code=blue]{0.7,0.72}
		\end{tikzpicture}
	}\,\!
	\subfloat[CPW~\cite{hu_multi-objective_2015}\label{fig:apartments_CPW}] {%
		\begin{tikzpicture}[zoomboxarray, zoomboxes below, zoomboxarray rows=1, execute at end picture={\draw[thick,red] (1.3,-2.3) ellipse (1cm and 0.6cm); \draw[thick,red] (4,-1.5) ellipse (1cm and 0.5cm);}]
		\node [image node] {\includegraphics[height=0.12\textheight]{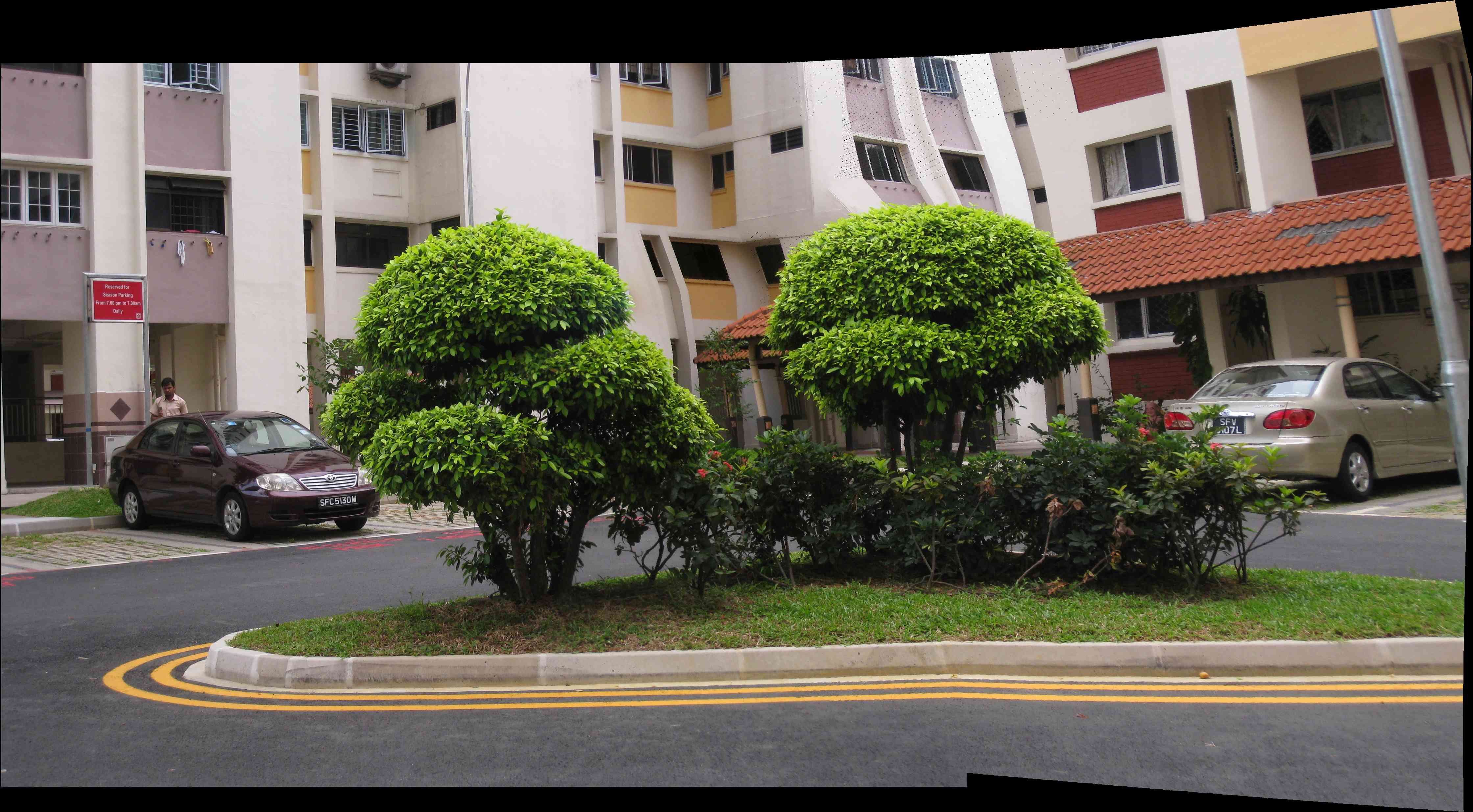}};
		\zoombox[color code=green]{0.47,0.73}
		\zoombox[color code=blue]{0.62,0.73}
		\end{tikzpicture}
	}\\

	\subfloat[SPHP~\cite{chang_shape-preserving_2014}\label{fig:apartments_SPHP}] {%
		\begin{tikzpicture}[zoomboxarray, zoomboxes below, zoomboxarray rows=1, execute at end picture={\draw[thick,red] (1.5,-1.3) ellipse (1.2cm and 0.5cm); \draw[thick,red] (4,-1) ellipse (0.5cm and 0.5cm);}]
		\node [image node] {\includegraphics[height=0.12\textheight]{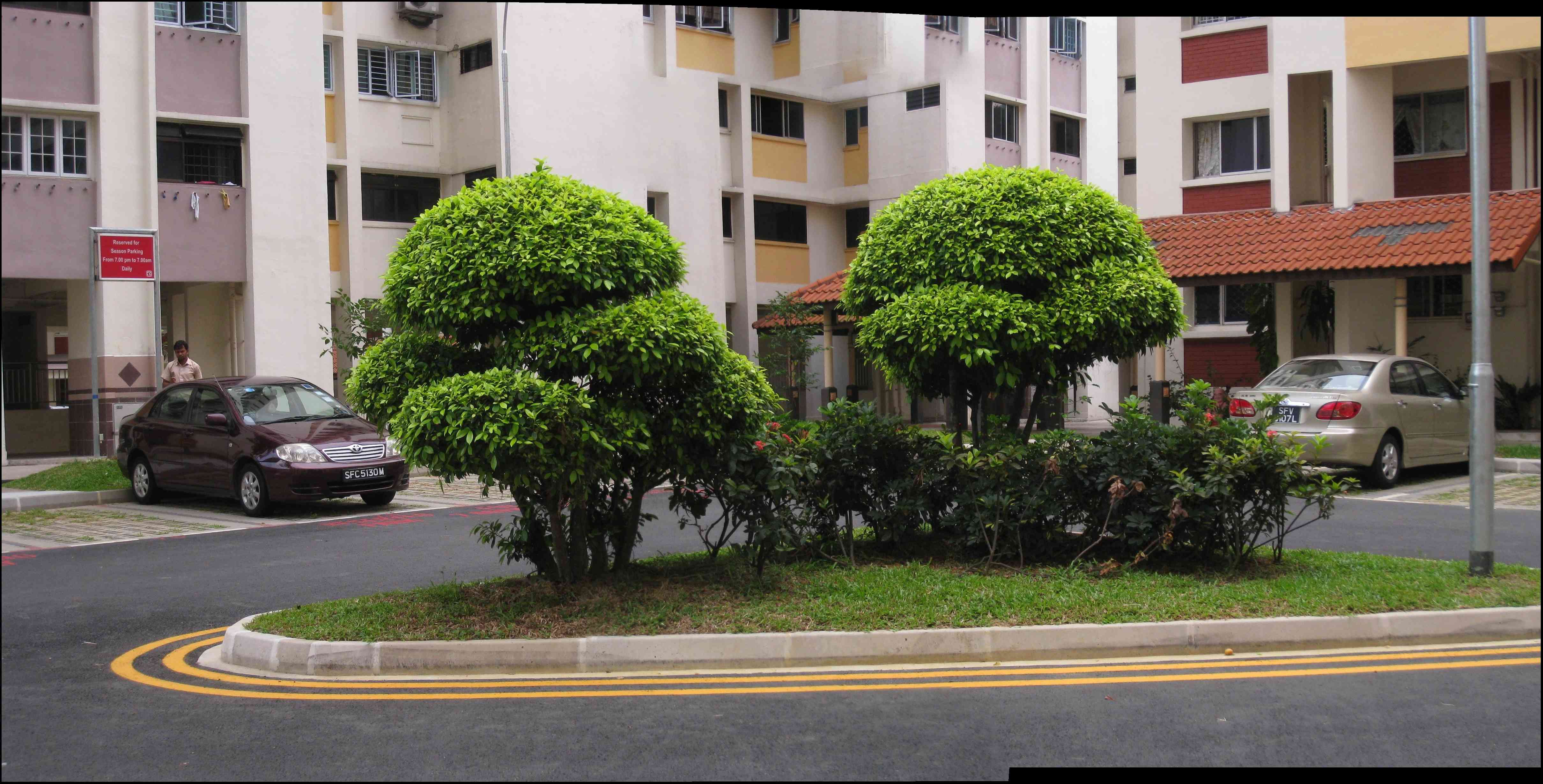}};
		\zoombox[color code=green]{0.5,0.83}
		\zoombox[color code=blue]{0.65,0.79}
		\end{tikzpicture}
	}\,\!
	\subfloat[APAP~\cite{zaragoza_as-projective-as-possible_2014}\label{fig:apartments_APAP}] {%
		\begin{tikzpicture}[zoomboxarray, zoomboxes below, zoomboxarray rows=1, execute at end picture={\draw[thick,red] (4.5,-1) ellipse (1cm and 0.5cm);}]
		\node [image node] {\includegraphics[height=0.12\textheight]{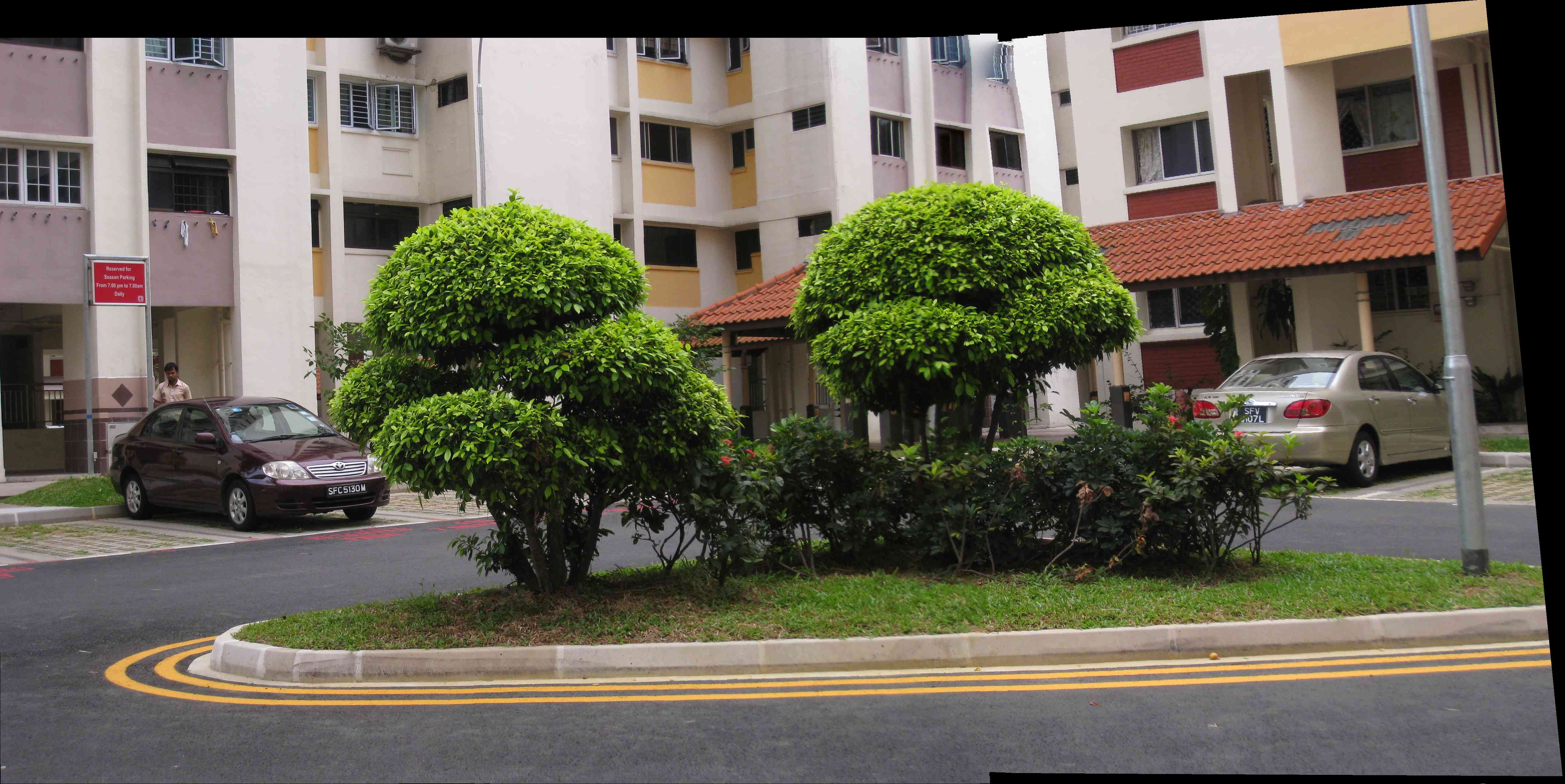}};
		\zoombox[color code=green]{0.5,0.75}
		\zoombox[color code=blue]{0.62,0.82}
		\end{tikzpicture}
	}\,\!
	\subfloat[BRAS\label{fig:apartments_BRAS}] {%
		\begin{tikzpicture}[zoomboxarray, zoomboxes below, zoomboxarray rows=1]
		\node [image node] {\includegraphics[height=0.12\textheight]{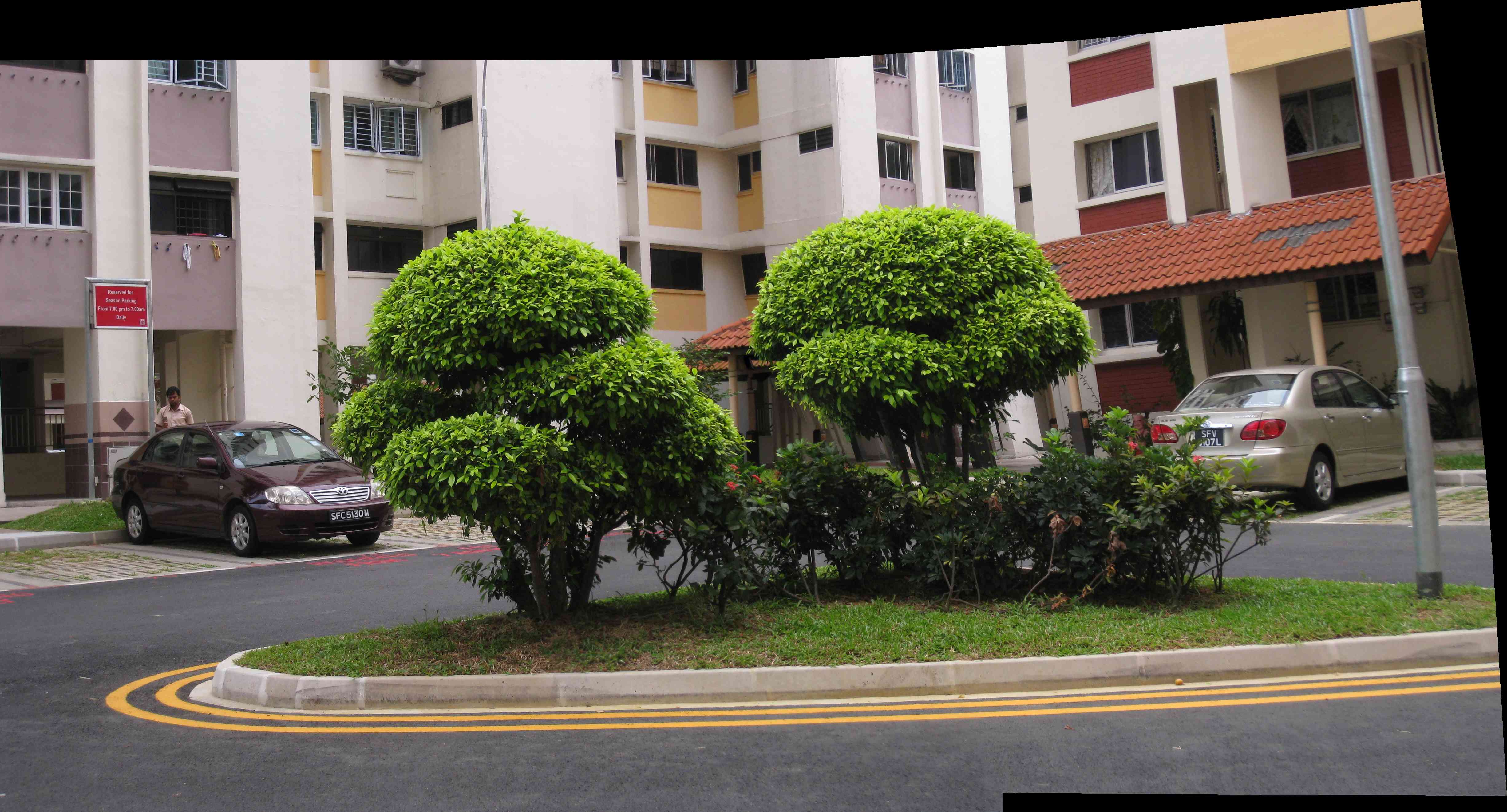}};
		\zoombox[color code=green]{0.5,0.73}
		\zoombox[color code=blue]{0.64,0.77}
		\end{tikzpicture}
	}\\	
	\caption{Qualitative comparison on the stitched images of the {\it apartments\/} dataset~\cite{gao_constructing_2011}. Regions with artifacts are magnified in the insets \tcr{and circled in red} at the bottom. In particular, please notice the duplicated windows in the stitched images of \protect\subref{fig:apartments_ICE} ICE, \protect\subref{fig:apartments_SPHP} SPHP, and \protect\subref{fig:apartments_APAP} APAP (in the blue insets). Also notice the ghosting, line distortion, and misplacement artifacts in the stitched images of \protect\subref{fig:apartments_autostitch} AutoStitch, \protect\subref{fig:apartments_CPW} CPW and \protect\subref{fig:apartments_SPHP} SPHP (in the green insets).}\label{fig:apartments}
\end{figure*}

\subsubsection{Quantitative Evaluation of Alignment Accuracy}\label{subsubsec:quantitative}
The truncated $\ell_2$-norm for several image sets processed by
different algorithms are presented in Table~\ref{tab:quantitive}. \tcr{The corresponding stitched images (other than those already shown) can be found in the supplementary document.} The best
results are highlighted and ``-'' stands for the case where the corresponding
algorithm is not capable of handling multiple images.  It can be observed
	that BRAS consistently achieves the best performance, and usually
outperforms other algorithms by a clear margin.  We note here again that
the underlying motion model for BRAS is essentially the same as APAP;\@ thus
its improved accuracy is attributable to its improved model fitting strategy
which crucially relies on exact capture of the rank. Furthermore, this confirms
the intuition that pixel-based method can achieve superior accuracy as BRAS is
of pixel-based nature (as discussed in Section~\ref{subsec:relation}).

\begin{table}[h!]
    \centering
    \caption{Alignment accuracies of different methods on each dataset, measured in truncated $\ell_2$ norm. Lower values typically indicate higher accuracies.}
	\begin{tabular}{ccccc}
        \toprule
		{\bf Dataset}    & {\bf BRAS}         & {\bf APAP}~\cite{zaragoza_as-projective-as-possible_2014}        & {\bf CPW}~\cite{hu_multi-objective_2015}         & {\bf SPHP}~\cite{chang_shape-preserving_2014}        \\
		\toprule
		{\it skyscraper\/}~\cite{chang_shape-preserving_2014} & $\mathbf{16.83}$ & $17.21$   & $-$           & $17.40$   \\
        \midrule
		{\it railtracks\/}~\cite{zaragoza_as-projective-as-possible_2014} & $\mathbf{16.74}$ & $17.67$   & $18.13$   & $19.23$   \\
        \midrule
		{\it apartments\/}~\cite{gao_constructing_2011} & $\mathbf{14.88}$ & $16.68$   & $15.92$ & $17.45$   \\
        \midrule
		{\it rooftops\/}~\cite{lin_smoothly_2011}   & $\mathbf{16.61}$ & $17.09$   & $-$        & $17.37$   \\
        \midrule
		{\it forest\/}~\cite{zaragoza_as-projective-as-possible_2014}   & $\mathbf{20.53}$ & $20.83$   & $-$           & $20.75$   \\
        \midrule
		{\it carpark\/}~\cite{gao_constructing_2011}   & $\mathbf{13.99}$ & $15.51$   & $16.67$   & $16.52$   \\
        \midrule
		{\it temple\/}~\cite{gao_constructing_2011}   & $\mathbf{12.70}$ & $14.00$   & $14.72$   & $14.43$   \\
        \midrule
		{\it hanger\/}~\cite{lin_smoothly_2011}    & $\mathbf{7.61}$  & $8.15$    & $9.41$    & $10.44$   \\
		\midrule
		{\it couch\/}~\cite{lin_smoothly_2011} & $\mathbf{13.24}$ & $13.75$ & $13.92$ & $14.51$\\
        \bottomrule
	\end{tabular}\label{tab:quantitive}
\end{table}

Finally, we compare in Fig.~\ref{fig:run_time} the running time of each algorithm on \tcr{all the datasets included in Table~\ref{tab:quantitive}. In particular, Fig.\ \ref{fig:run_time} plots the average running times against the total number of pixels in each dataset. The running time for BRAS includes every stage illustrated in Fig.~\ref{fig:flowchart} as described in Section~\ref{subsec:initial} to Section~\ref{subsec:multiple}.
}  AutoStitch and ICE are not included since they are
commercial softwares. We employ a computer with an Intel Core i5--6200, 2.30GHz
CPU and 8GB of RAM.\@ \tcr{SPHP usually runs the fastest, while BRAS usually ranks the second. Moreover, BRAS scales well when the image resolution increases. Note that the \emph{couch} image set (with around $2\times 10^7$ pixels) has relatively small motion and thus all methods run faster on it.}

\begin{figure}[h!]
	\centering
	\includegraphics[width=0.95\linewidth]{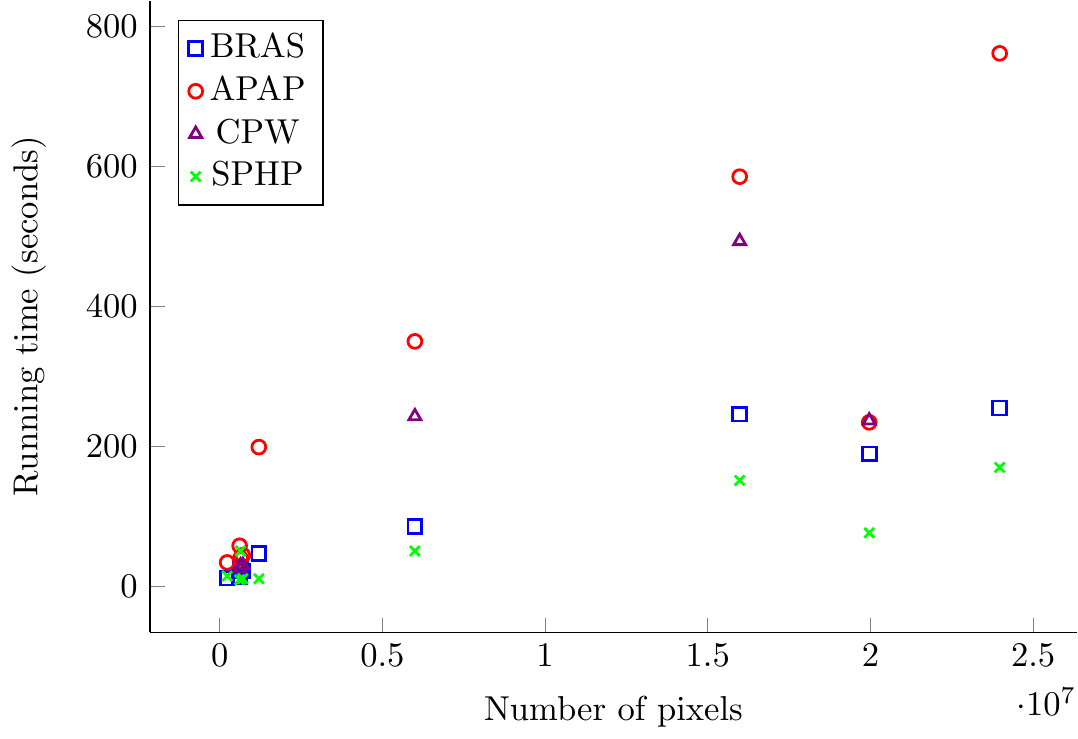}
	\caption{\tcr{Running time of different methods on each dataset in Table~\ref{tab:quantitive}.}}\label{fig:run_time}
\end{figure}

\subsection{Stitching in the Presence of Large Moving Objects}\label{subsec:large_objects}

One of the most difficult cases for panoramic alignment and stitching is when
large moving objects are present in the image set. The \emph{catabus} image
set~\cite{brasWeb} shown in Fig.~\ref{fig:catabusInput} represents such a case.
Note that this example is representative of scenarios where feature based
methods are unlikely to work well. This is because there is a large moving
object in the foreground with significant motion (the bus) while the background
is relatively simple.  That is, feature based methods must largely rely on the
moving object features for alignment and the large motion means that the
accuracy of the alignment is fundamentally limited. 

Fig.~\ref{fig:large_objects} shows the stitched image results generated using
different methods. The corresponding aligned images are in the supplementary
document.  Figs.~\ref{fig:catabusICE},~\ref{fig:catabusCPW}
and~\ref{fig:catabusAPAP}, show results of the most competitive feature based
methods, ICE, CPW and APAP respectively. Distortions due to misalignment can be
seen in the yellow line on the road and the duplication of a tree can be easily
identified.  In contrast, being pixel based, BRAS bases its alignment on common
overlapping regions between the two images in the {\em catabus\/} set, where
the accuracy of the said alignment is enabled by the rank-1 and sparse
decomposition. It is readily apparent from Fig.~\ref{fig:catabusBRAS} that BRAS
composes a realistic stitched image free of the distortions in
Figs.~\ref{fig:large_objects} (b), (c) and (d).

\begin{figure}
	\subfloat[Input\label{fig:catabusInput}]{%
		\begin{tikzpicture}
			\node (input01) {\includegraphics[height=0.17\textheight]{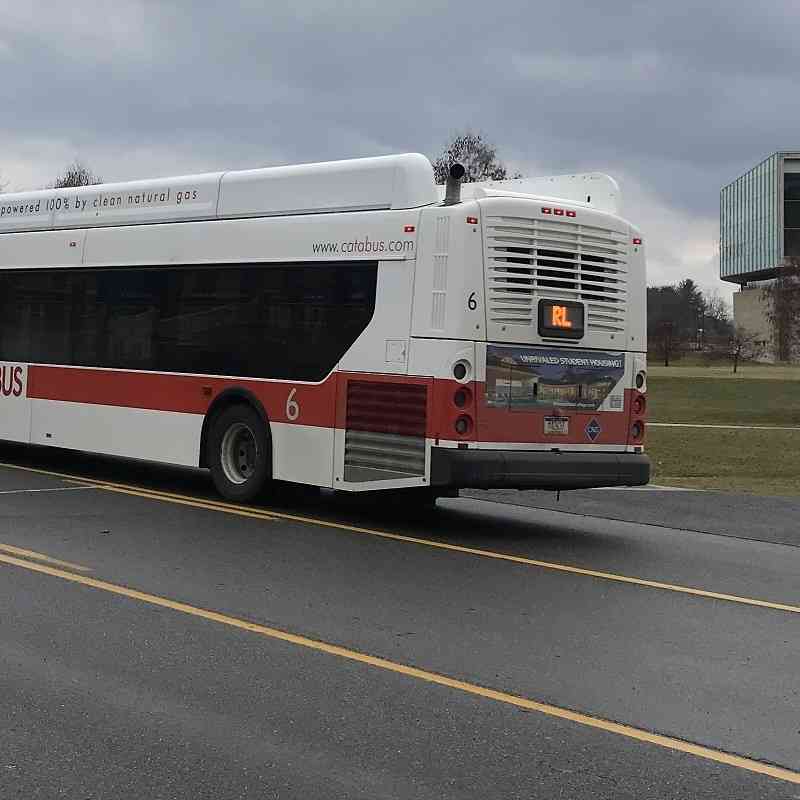}};
			\node[right=-0.3em of input01] {\includegraphics[height=0.17\textheight]{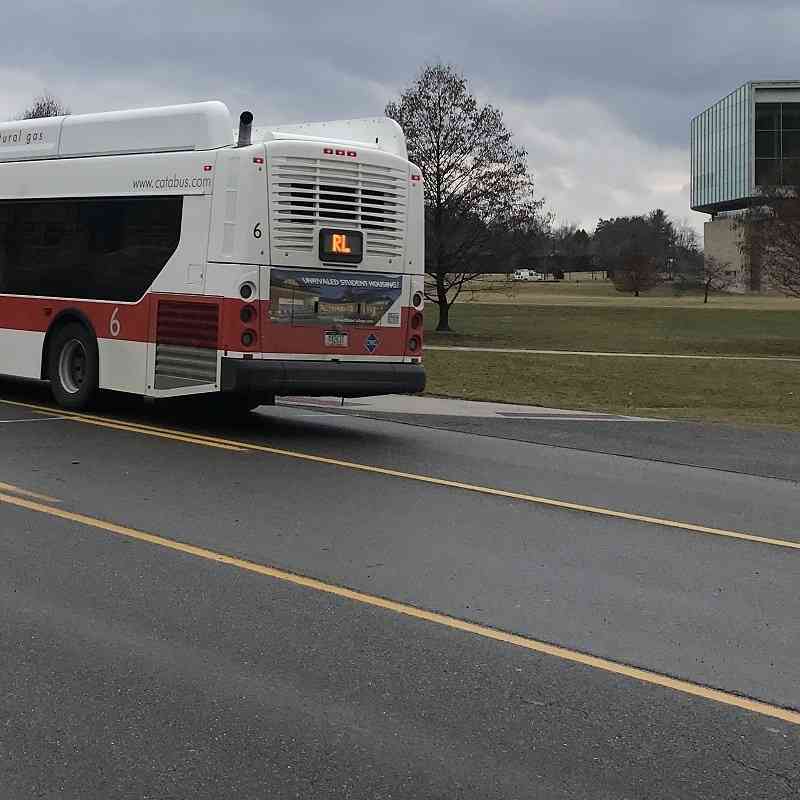}};
		\end{tikzpicture}
	}

	\subfloat[ICE~\cite{ICE}\label{fig:catabusICE}] {%
		\begin{tikzpicture}
			\node {\includegraphics[height=0.163\textheight]{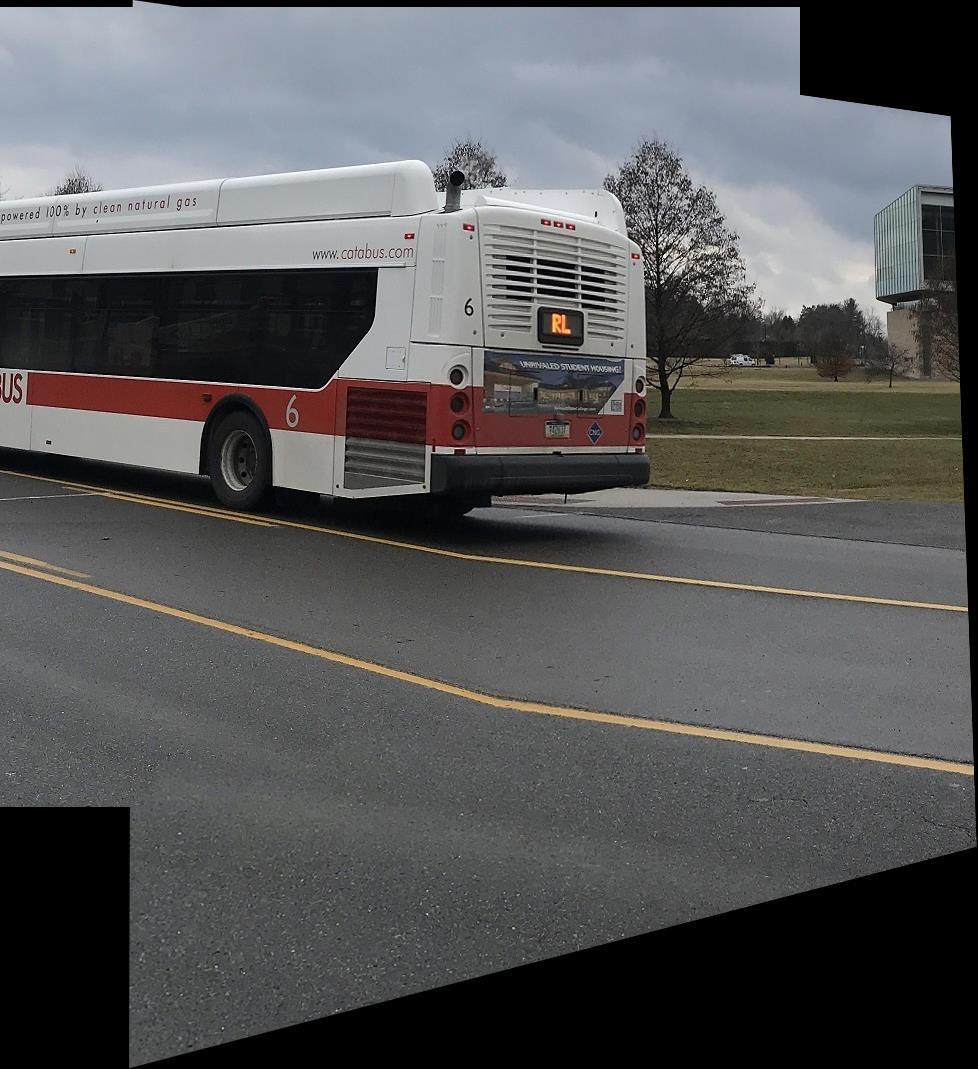}};
			\draw[thin,red] (0.1,-0.7) ellipse (0.4cm and 0.25cm);
			\draw[thin,red] (-0.1,-0.1) ellipse (0.2cm and 0.15cm);
			\draw[thin,red] (-0.1,1.4) ellipse (0.2cm and 0.1cm);
		\end{tikzpicture}
	}\hspace{-3mm}
	\subfloat[CPW~\cite{hu_multi-objective_2015}\label{fig:catabusCPW}]{%
		\begin{tikzpicture}
			\node {\includegraphics[height=0.163\textheight]{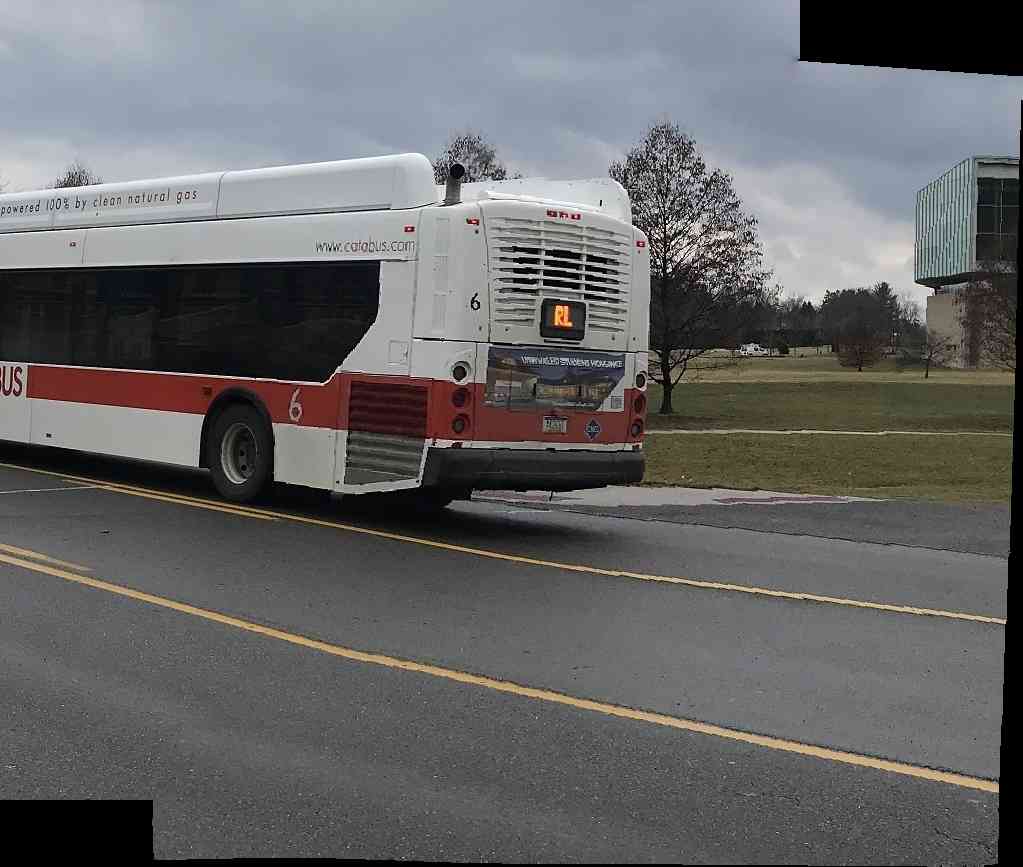}};
			\draw[thin,red] (-0.7,-1) ellipse (0.4cm and 0.25cm);
			\draw[thin,red] (0.3,-0.6) ellipse (0.2cm and 0.15cm);
			\draw[thin,red] (-0.2,1.3) ellipse (0.2cm and 0.1cm);
		\end{tikzpicture}
	}

	\subfloat[APAP~\cite{zaragoza_as-projective-as-possible_2014}\label{fig:catabusAPAP}]{%
		\begin{tikzpicture}
			\node {\includegraphics[height=0.175\textheight]{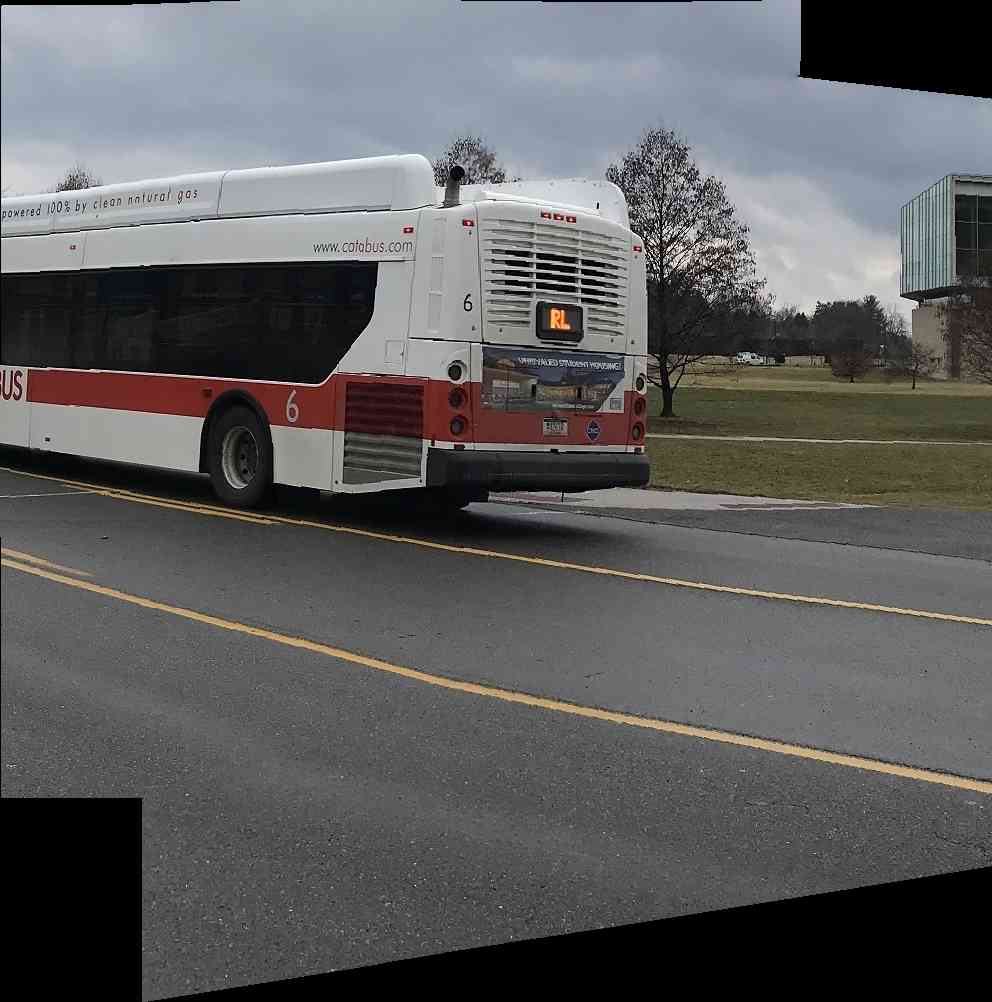}};
			\draw[thin,red] (-0.4,-0.8) ellipse (0.4cm and 0.25cm);
			\draw[thin,red] (0,-0.2) ellipse (0.2cm and 0.15cm);
			\draw[thin,red] (-0.1,1.5) ellipse (0.2cm and 0.1cm);
		\end{tikzpicture}
	}\hspace{-3mm}
	\subfloat[BRAS\label{fig:catabusBRAS}]{%
		\begin{tikzpicture}
			\node {\includegraphics[height=0.175\textheight]{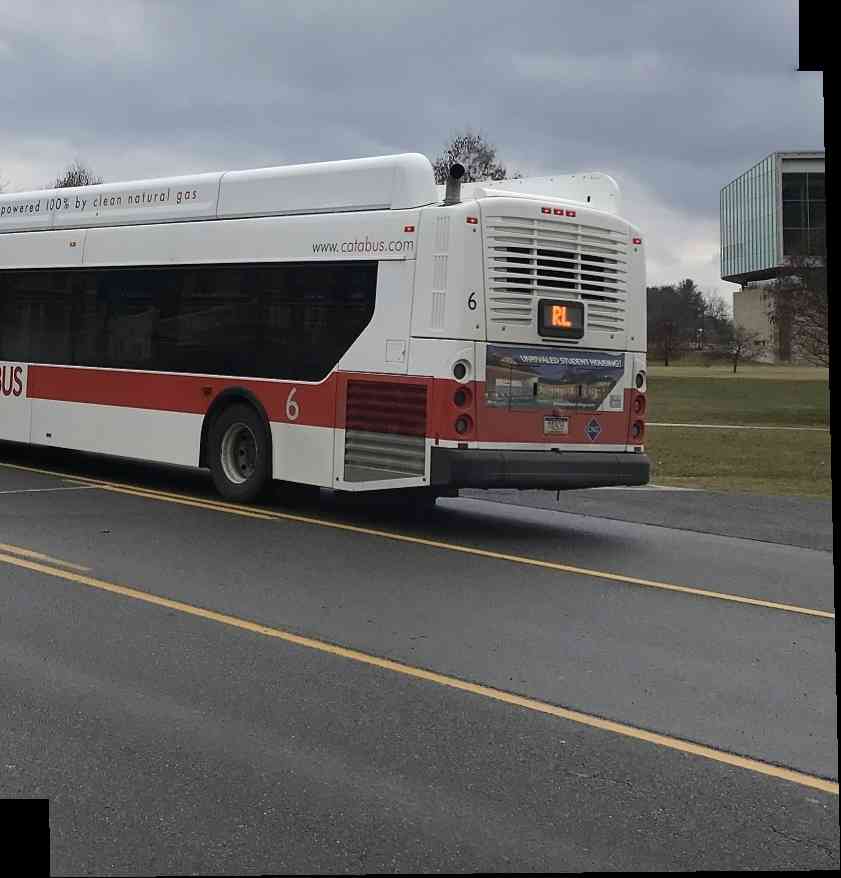}};
		\end{tikzpicture}
	}
	\caption{Stitching results for \emph{catabus} dataset~\cite{brasWeb} -- the case with large moving objects. The red circles highlight line distortions and tree duplication in \protect\subref{fig:catabusICE} ICE, \protect\subref{fig:catabusCPW} CPW and \protect\subref{fig:catabusAPAP} APAP, which are absent in \protect\subref{fig:catabusBRAS} BRAS.}\label{fig:large_objects}
\end{figure}

\subsection{\tcr{Limitations and Future Works}}\label{subsec:limitation}
\tcr{For our implementation, we integrate the geometric transformation
	model proposed in~\cite{zaragoza_as-projective-as-possible_2014}  for its simplicity and relatively high accuracy. Nevertheless,
	visual aspects such as shape/structure preservation were not considered
	in~\cite{zaragoza_as-projective-as-possible_2014}, and shape/structure
	distortions may appear in the stitched images, especially around
	non-overlapping regions. Similarly, our method may exhibit such drawbacks
	as well, as can be seen in Fig.~\ref{fig:railtracks_BRAS} and
	Fig.~\ref{fig:apartments_BRAS}, etc. However, our contributions are
	complementary to those works that focus on improving the geometric
	transformation models, and our framework can be flexibly combined with
	recently developed geometric models. Therefore, addressing shape/structure
	distortions by integrating in the BRAS framework, models such as those in~\cite{chang_shape-preserving_2014,chen_natural_2016} is an interesting direction of
	future exploration.
}

\section{Conclusion}\label{sec:conclusion}

We develop a new bundle robust alignment method for panoramic
stitching (BRAS). We formulate the alignment problem as the recovery of a rank-1
matrix under sparse corruptions in the transformation domain, and develop
efficient algorithms and theoretical guarantees, together with important
generalizations to handle realistic scenarios. Unlike most of the existing
algorithms that employ feature matching, our method works directly on pixels.
In contrast with other panoramic alignment techniques based on matrix
decompositions, exactly forcing a rank-1 constraint (vs.\ existing convex
relaxations) plays a crucial role in ensuring practical successes. Extensive
experiments confirm that BRAS aligns more accurately and often
achieves better visual quality of panoramic stitched images than many
state-of-the art techniques.

\bibliographystyle{IEEEtran}
\bibliography{Panorama,Optimization}

\newpage
\pagebreak

\onecolumn
\begin{center}
	\Huge Supplementary Document
\end{center}
\section{Proof of the Key Induction Step}\label{sec:proof_induction}
\begin{table}[h!]
	\centering
	\caption{Partial list of symbols and their concise meanings used in Section~\ref{sec:proof_induction} and Section~\ref{sec:robust_alignment}. Please refer to the text for detailed explanations. For some other standard definitions, please refer to the {\bf Notation} part of Section~\ref{sec:robust_alignment}.}
	\begin{tabular}{cm{3.5cm}m{10.8cm}}
		\toprule
		\bf Symbols & \bf Space &  \bf Meanings\\
		\toprule
		$m$ & Integers & Number of pixels on canvas\\
		\midrule
		$n$ & Integers & Number of images\\
		\midrule
		$h,w$ & Integers & Dimension of the canvas\\
		\midrule
		$d$ & Integers & Number of transformation parameters per image; equals $8$ for the single homography case\\
		\midrule
		$\beta_0,\beta_1,\zeta,q$ & $\mathbb{R}$ & Positive constant parameters\\
		\midrule
		$\tau$ & $\mathbb{R}^{d\times n}$ & Geometric transformation parameters; represent plane homographies unless otherwise stated\\
		\midrule
		$\dt$ & $\mathbb{R}^{d\times n}$ & Incremental transformation parameters\\
		\midrule
		$\bD$ & $\mathbb{R}^{m\times n}$ & Data matrix: images stacked as columns\\
		\midrule
		$\bL$  & $\mathbb{R}^{m\times n}$ & Rank-1 component \\
		\midrule
		$\bS$  & $\mathbb{R}^{m\times n}$ & Sparse component modeling errors\\
		\midrule
		$\bJ_i$  & $\mathbb{R}^{m\times d}$ & Jacobian of image $i$ with respect to its transformation parameters $\tau_i$\\
		\midrule
		$\bQ_i,\bR_i$	  & $\mathbb{R}^{m\times d},\mathbb{R}^{d\times d}$ & QR decomposition of $\bJ_i$\\
		\bottomrule
	\end{tabular}
	\label{tab:definitions}
\end{table}

\begin{algorithm}
    \renewcommand{\algorithmicrequire}{\textbf{Input:}}
    \renewcommand{\algorithmicensure}{\textbf{Output:}}
	\caption{Alternating Minimization for Solving Linearized Subproblem (Single Overlapping Case)}

    \begin{algorithmic}[1]
		\REQUIRE{$\bD_\tau\in\mathbb{R}^{m\times n}$, ${\{\bJ_i\in\mathbb{R}^{m\times d}\}}_{i=1}^n$,$\beta_0$,$\beta_1>0$,$q\in(0,1)$.}
		\STATE{$\bL^0\gets 0$,$\dt^0\gets 0$,$\zeta_0=\beta_0\frac{\|\bD_\tau\|_2}{\sqrt{mn}}$,$\bS^0\gets\cS_{\zeta_0}(\bD_\tau)$.}
        \FOR{$k=1, 2, \dots, K$}
		\STATE{$\bL^{k}\gets\cT_1\left\{\bD_\tau+\sum_{i=1}^n\bJ_i\Delta\tau_i^{k-1}\be_i^T-\bS^{k-1}\right\}$,\label{rank_projection}}
		\STATE{$\zeta_{k}\gets\beta_1\frac{q^{k}}{\sqrt{mn}}\|\bL^{k}\|_2$,}
		\STATE{$\bS^{k}\gets\cS_{\zeta_{k}}\left\{\bD_\tau+\sum_{i=1}^n\bJ_i\Delta\tau_i^{k-1}\be_i^T-\bL^{k}\right\}$,\label{supdate1_supp}}
		\STATE{$\dt^{k}\gets\sum_{i=1}^n\bJ_i^\dag(\bL^{k}+\bS^{k}-\bD_\tau)\be_i\be_i^T$.\label{dtupdate_supp}}
        \ENDFOR
		\ENSURE{$\widehat{\bL}\gets\bL^K$, $\widehat{\bS}\gets\bS^K$, $\widehat{\dt}\gets\dt^K$.}
    \end{algorithmic}\label{alg:single_supp}
\end{algorithm}

This section is devoted to the proof of the key induction step along the proof of Theorem~\ref{thm:convergence}. Our goal is to prove $l_{k+1}\leq\frac{1}{16\mu\nu\sqrt{d}}\frac{\sigma^\ast q^{k+1}}{\sqrt{n}}$,
	$\supp(\bS^{k+1})\subset\supp(\bS^\ast)$ and $s_{k+1}\leq\frac{5\widetilde{\mu}\sigma^\ast q^{k+1}}{\sqrt{mn}}$ assuming $l_k\leq\frac{1}{16\mu\nu\sqrt{d}}\frac{\sigma^\ast q^k}{\sqrt{n}}$,
    $\supp(\bS^k)\subset\supp(\bS^\ast)$ and $s_k\leq\frac{5\widetilde{\mu}\sigma^\ast q^k}{\sqrt{mn}}$ where $s_k:=\|\bE^k\|_{\ell_\infty}$ and
	$l_k:=\max_i\|\bG^k\be_i\|_{\ell_2}$. We copy Algorithm~\ref{alg:single} in the paper here as Algorithm~\ref{alg:single_supp} for easier reference. We also list some of the symbols and notations in Table~\ref{tab:definitions}. 
Let us define $\bM^k=\bH^k+\bE^k$. We first prove four technical lemmas:
\begin{lemma}
	For $\ k\geq 0$, if $\supp(\bE^k)\subset\supp(\bS^\ast)$, then
	\[
	\|\bQ_i^T\bE^k\be_l\|_{\ell^2}\leq\nu\alpha_1\sqrt{md}s_k,\forall i,l=1, 2, \dots, n.
	\]\label{prop:qle2}
\end{lemma}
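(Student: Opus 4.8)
The plan is to exploit the column-sparsity of $\bE^k$ together with the incoherence bound A.4, reducing the estimate to a triangle-inequality count over the support. First I would observe that $\bE^k\be_l$ is simply the $l$-th column of $\bE^k$, and that the hypothesis $\supp(\bE^k)\subset\supp(\bS^\ast)$ forces its support to lie inside that of the corresponding column of $\bS^\ast$. By assumption A.3, every column of $\bS^\ast$ carries at most $\alpha_1 m$ nonzero entries, so the column $\bE^k\be_l$ has at most $\alpha_1 m$ nonzeros as well. Let $\Omega\subset\{1,\dots,m\}$ denote this (row) support.

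Next I would expand the vector over its support: writing $\be_p\in\mathbb{R}^m$ for the standard basis vectors,
\[
	\bQ_i^T\bE^k\be_l=\sum_{p\in\Omega}{(\bE^k)}_{pl}\,\bQ_i^T\be_p,
\]
and apply the triangle inequality. Each scalar coefficient obeys $\left|{(\bE^k)}_{pl}\right|\leq\|\bE^k\|_{\ell_\infty}=s_k$ by the definition of $s_k$, while each vector factor obeys $\|\bQ_i^T\be_p\|_{\ell_2}\leq\nu\sqrt{d/m}$ directly from A.4. Since there are at most $|\Omega|\leq\alpha_1 m$ summands, this yields
\[
	\|\bQ_i^T\bE^k\be_l\|_{\ell_2}\leq\alpha_1 m\cdot s_k\cdot\nu\sqrt{\tfrac{d}{m}}=\nu\alpha_1\sqrt{md}\,s_k,
\]
which is precisely the claimed bound, and the estimate holds uniformly over $i$ and $l$ since A.4 is a uniform (over both indices) incoherence condition.

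I expect no serious obstacle here; the only point requiring care is the bookkeeping step that transfers the column-sparsity guarantee of A.3 onto $\bE^k$ via the support-containment hypothesis, since it is exactly this count (at most $\alpha_1 m$ nonzeros) that converts the per-entry bound $s_k$ and the per-column incoherence bound $\nu\sqrt{d/m}$ into the stated factor $\nu\alpha_1\sqrt{md}$. Everything else is a direct application of the triangle inequality together with the two structural assumptions, so the resulting estimate is tight in its dependence on $s_k$, $\alpha_1$, and $\nu$.
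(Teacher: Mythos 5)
Your proof is correct and follows essentially the same route as the paper's: both rest on transferring the column-sparsity of $\bS^\ast$ (A.3) to $\bE^k$ via the support containment, the entrywise bound $|{(\bE^k)}_{pl}|\leq s_k$, and the incoherence bound A.4. The only cosmetic difference is that you apply the triangle inequality directly to the norm, whereas the paper expands the squared norm as a double sum and uses Cauchy--Schwarz on the terms $\be_{j_1}^T\bQ_i\bQ_i^T\be_{j_2}$; the paper's double-sum estimate factorizes into exactly the square of your single-sum bound, so the two arguments coincide.
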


\begin{proof}
	Since $\bE^k$ has at most $\alpha_1$ fraction of non-zeros in each column (A.3), using Cauchy-Schwartz inequality,
	\begin{align*}
		\|\bQ_i^T\bE^k\be_l\|_{\ell^2}^2&=\sum_{j_1=1}^m\sum_{j_2=1}^m\be_l^T{(\bE^k)}^T\be_{j_1}\be_{j_1}^T\bQ_i\bQ_i^T\be_{j_2}\be_{j_2}^T\bE^k\be_l\\
										&\leq\sum_{j_1,j_2}|{(\bE^k)}_{j_1, l}||{(\bE^k)}_{j_2, l}|{\left(\max_{1\leq l\leq m}\|\bQ_i^T\be_l\|_{\ell^2}\right)}^2\\
										&\leq{\left(\alpha_1 m\|\bE^k\|_{\ell^\infty}\right)}^2{\left(\nu\sqrt{\frac{d}{m}}\right)}^2={\left(\nu\alpha_1\sqrt{md}s_k\right)}^2.
	\end{align*}
\end{proof}

\begin{lemma}
	For $i=1,2,\dots,n$, $\forall\bv\in\mathbb{R}^n$ and $k\geq 0$, if $\supp(\bE^k)\subset\supp(\bS^\ast)$, $n\geq 5\mu^2$, $0<\delta\leq\frac{1}{20\mu^2}$, then
	\[
	\|\bQ_i^T\bM^k\bv\|_{\ell^2}\leq\left(\frac{1}{4\mu^2}l_k+\frac{5}{4}\alpha_1\nu\sqrt{md}s_k\right)n\|\bv\|_{\ell^\infty}.
	\]\label{lemm:ql2}
\end{lemma}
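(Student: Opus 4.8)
The plan is to exploit the additive structure $\bM^k=\bE^k+\bF^k+\bG^k$ (recall $\bH^k=\bF^k+\bG^k$ and $\bM^k=\bH^k+\bE^k$) and to bound $\bQ_i^T\bM^k\bv$ one piece at a time. First I would expand against the standard basis, writing $\bQ_i^T\bM^k\bv=\sum_{l=1}^n(\bv)_l\,\bQ_i^T\bM^k\be_l$, so the whole estimate reduces to controlling $\|\bQ_i^T\bM^k\be_l\|_{\ell^2}$ for each $l$ and then summing against $|(\bv)_l|\le\|\bv\|_{\ell^\infty}$. The column-wise structure of $\bF^k$ and $\bG^k$ makes this expansion especially clean, since $\bF^k=-\sum_l\bQ_l\bQ_l^T\bE^k\be_l\be_l^T$ and $\bG^k=\sum_l\bQ_l\bQ_l^T(\bL^k-\bL^\ast)\be_l\be_l^T$ act diagonally in the column index.

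For the $\bE^k$ contribution, Lemma~\ref{prop:qle2} directly gives $\|\bQ_i^T\bE^k\be_l\|_{\ell^2}\le\nu\alpha_1\sqrt{md}\,s_k$ for every $l$, so summing over the $n$ columns yields $n\,\nu\alpha_1\sqrt{md}\,s_k\|\bv\|_{\ell^\infty}$. For $\bF^k$ and $\bG^k$ the key is to separate the diagonal term $l=i$ from the off-diagonal terms $l\neq i$. On the diagonal I would use $\bQ_i^T\bQ_i=\bI$ to collapse the projector, reducing the $\bF^k$ piece to $\bQ_i^T\bE^k\be_i$ (again bounded by Lemma~\ref{prop:qle2}) and the $\bG^k$ piece to $\|\bG^k\be_i\|_{\ell^2}\le l_k$ by the very definition of $l_k$. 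Off the diagonal I would insert $\|\bQ_i^T\bQ_l\|_2$ as the controlling factor, bounding $\|\bQ_i^T\bQ_l\bQ_l^T\bE^k\be_l\|_{\ell^2}\le\|\bQ_i^T\bQ_l\|_2\,\nu\alpha_1\sqrt{md}\,s_k$ and $\|\bQ_i^T\bQ_l\bQ_l^T(\bL^k-\bL^\ast)\be_l\|_{\ell^2}\le\|\bQ_i^T\bQ_l\|_2\,l_k$, then summing via assumption~A.6, namely $\sum_{l\neq i}\|\bQ_i^T\bQ_l\|_2\le(n-1)\delta$. This produces a factor $(1+(n-1)\delta)$ in front of both $\nu\alpha_1\sqrt{md}\,s_k$ and $l_k$.

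Collecting the three pieces, the coefficient multiplying $\nu\alpha_1\sqrt{md}\,s_k\|\bv\|_{\ell^\infty}$ is $n+(1+(n-1)\delta)$ and that multiplying $l_k\|\bv\|_{\ell^\infty}$ is $(1+(n-1)\delta)$. The final step is purely arithmetic: using $\delta\le\tfrac{1}{20\mu^2}$ together with $\mu\ge1$ and $n\ge5\mu^2$, I would verify $(1+(n-1)\delta)\le\tfrac{n}{4\mu^2}$, which reduces to $n\ge5\mu^2-\tfrac14$, and $n+(1+(n-1)\delta)\le\tfrac{5n}{4}$, which reduces to $n\ge\tfrac{19}{4}$; both are guaranteed by the hypotheses. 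Substituting these gives exactly $\bigl(\tfrac{1}{4\mu^2}l_k+\tfrac{5}{4}\alpha_1\nu\sqrt{md}\,s_k\bigr)n\|\bv\|_{\ell^\infty}$.

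The main obstacle I anticipate is not any single estimate but the bookkeeping of the off-diagonal cross terms: the orthonormality $\bQ_i^T\bQ_i=\bI$ is available only on the diagonal, so the $l\neq i$ terms genuinely require A.6, and the target constants $\tfrac{1}{4\mu^2}$ and $\tfrac{5}{4}$ are tight enough that the thresholds $n\ge5\mu^2$ and $\delta\le\tfrac{1}{20\mu^2}$ must be used exactly as stated, with the implicit fact $\mu\ge1$ invoked, in order to close the arithmetic.
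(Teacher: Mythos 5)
Your proposal is correct and follows essentially the same route as the paper's proof: decompose $\bM^k=\bE^k+\bF^k+\bG^k$, split the diagonal column $l=i$ (where $\bQ_i^T\bQ_i=\bI$ collapses the projector) from the off-diagonal columns controlled by A.6 via $\sum_{l\neq i}\|\bQ_i^T\bQ_l\|_2\leq(n-1)\delta$, and close with the same arithmetic $1+(n-1)\delta\leq\frac{n}{4\mu^2}$ and $n+1+(n-1)\delta\leq\frac{5n}{4}$ using $n\geq 5\mu^2$, $\delta\leq\frac{1}{20\mu^2}$, $\mu\geq 1$. The only cosmetic difference is the $\bE^k$ term, which the paper bounds by redoing the Cauchy--Schwarz/incoherence computation directly on $\|\bQ_i^T\bE^k\bv\|_{\ell^2}$, whereas you reuse Lemma~\ref{prop:qle2} columnwise with the triangle inequality; both yield the identical bound $n\alpha_1\nu\sqrt{md}\,s_k\|\bv\|_{\ell^\infty}$.
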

\begin{proof}
	Under our choice of $\delta$, $1+(n-1)\delta\leq\frac{n}{4\mu^2}$; thus $n+1+(n-1)\delta\leq\frac{5}{4}n$ since $\mu\geq 1$. Therefore,
	\begin{align}
		\|\bQ_i^T\bG^k\bv\|_{\ell^2}\leq&\|\bQ_i^T\bG^k\be_i\|_{\ell^2}|{(\bv)}_i|+\sum_{j\neq i}\|\bQ_i^T\bQ_j\bQ_j^T\bG^k\be_j\|_{\ell^2}|{(\bv)}_j|\nonumber\\
		\leq& l_k\|\bv\|_{\ell^\infty}+\left(\sum_{j\neq i}\|\bQ_i^T\bQ_j\|_2\right)l_k\|\bv\|_{\ell^\infty}
		\leq[1+(n-1)\delta]l_k\|\bv\|_{\ell^\infty}\leq\frac{n}{4\mu^2}l_k\|\bv\|_{\ell^\infty},\label{eqn:qgv1}
	\end{align}
	
	\begin{align}
		\|\bQ_i^T\bF^k\bv\|_{\ell^2}\leq&[1+(n-1)\delta]\left(\max_{1\leq i\leq n}\|\bQ_i^T\bE^k\be_i\|_2\right)\|\bv\|_{\ell^\infty}\nonumber\\
		\stackrel{\mbox{\tiny Lemma 2}}{\leq}&[1+(n-1)\delta](\alpha_1\nu\sqrt{md}s_k)\|\bv\|_{\ell^\infty},\label{eqn:qfv1}
	\end{align}
	\begin{align}
		\|\bQ^T_i\bE^k\bv\|_{\ell^2}^2&=\sum_{j_1=1}^m\sum_{j_2=1}^m\bv^T{(\bE^k)}^T\be_{j_1}\be_{j_1}^T\bQ_i\bQ_i^T\be_{j_2}\be_{j_2}^T\bE^k\bv\nonumber\\
									  &\leq\sum_{j_1,j_2}{\left(\max_{1\leq j\leq m}\|\bQ_i^T\be_j\|_{\ell^2}\right)}^2\left|\be_{j_1}^T\bE^k\bv\right|\left|\be_{j_2}^T\bE^k\bv\right|\nonumber\\
									  &\leq{\left(\nu\sqrt{\frac{d}{m}}\right)}^2{\left(\sum_{i=1}^n\sum_{j=1}^m{(\bE^k)}_{ji}{(\bv)}_i\right)}^2\nonumber\\
									  &={\left(\alpha_1\nu n\sqrt{md}s_k\|\bv\|_{\ell^\infty}\right)}^2.\label{eqn:qev}
	\end{align}
	Adding~(\ref{eqn:qgv1}),~(\ref{eqn:qfv1}),~(\ref{eqn:qev}) up gives the result.
\end{proof}

\begin{lemma}
	For $k\geq 0$, if $\supp(\bE^k)\subset\supp(\bS^\ast)$, then $\forall\bv\in\mathbb{R}^n$
	\[
	\|\bM^k\bv\|_{\ell^\infty}\leq\left[\nu\sqrt{\frac{d}{m}}l_k+(\alpha_2+\alpha_1\nu^2d)s_k\right]n\|\bv\|_{\ell^\infty}.
	\]\label{lemm:ehinf}
\end{lemma}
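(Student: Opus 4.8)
The plan is to exploit the additive decomposition $\bM^k = \bF^k + \bG^k + \bE^k$ (valid because $\bM^k = \bH^k + \bE^k$ and $\bH^k = \bF^k + \bG^k$) and to bound the three pieces of $\|\bM^k\bv\|_{\ell^\infty}$ separately via the triangle inequality. The three contributions will account precisely for the $\nu\sqrt{d/m}\,l_k$ term (from $\bG^k$), the $\alpha_1\nu^2 d\,s_k$ term (from $\bF^k$), and the $\alpha_2 s_k$ term (from $\bE^k$). The structural observation that drives everything is that both $\bF^k$ and $\bG^k$ are column-localized: column $i$ of each involves only column $i$ of $\bE^k$ (resp.\ $\bL^k-\bL^\ast$), projected through $\bQ_i\bQ_i^T$.

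First I would treat the $\bG^k$ term. Writing $\bG^k\bv = \sum_{i=1}^n (\bv)_i\,\bQ_i\bQ_i^T(\bL^k - \bL^\ast)\be_i$, the $j$-th entry is $\sum_i (\bv)_i\, \be_j^T\bQ_i\bQ_i^T(\bL^k-\bL^\ast)\be_i$. The key manipulation is the bilinear splitting $\be_j^T\bQ_i\bQ_i^T\bx = (\bQ_i^T\be_j)^T(\bQ_i^T\bx)$, so Cauchy--Schwarz bounds each summand by $\|\bQ_i^T\be_j\|_{\ell^2}\,\|\bQ_i^T(\bL^k-\bL^\ast)\be_i\|_{\ell^2}$. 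Assumption A.4 controls the first factor by $\nu\sqrt{d/m}$; for the second, orthonormality of the columns of $\bQ_i$ gives $\|\bQ_i^T(\bL^k-\bL^\ast)\be_i\|_{\ell^2} = \|\bG^k\be_i\|_{\ell^2}\le l_k$. Summing over $i$ and bounding $|(\bv)_i|\le\|\bv\|_{\ell^\infty}$ then yields $\|\bG^k\bv\|_{\ell^\infty}\le \nu\sqrt{d/m}\,l_k\,n\|\bv\|_{\ell^\infty}$.

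The $\bF^k$ term is entirely analogous. Since $\bS^k - \bS^\ast = -\bE^k$, we have $\bF^k\bv = -\sum_i (\bv)_i\,\bQ_i\bQ_i^T\bE^k\be_i$, and the same Cauchy--Schwarz split bounds each entry by $\|\bQ_i^T\be_j\|_{\ell^2}\,\|\bQ_i^T\bE^k\be_i\|_{\ell^2}$. Here A.4 again handles the first factor, and Lemma~\ref{prop:qle2} (applicable precisely because the hypothesis $\supp(\bE^k)\subset\supp(\bS^\ast)$ is assumed) bounds the second by $\nu\alpha_1\sqrt{md}\,s_k$; the product collapses to $\alpha_1\nu^2 d\,s_k$, giving $\|\bF^k\bv\|_{\ell^\infty}\le\alpha_1\nu^2 d\,s_k\,n\|\bv\|_{\ell^\infty}$.

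Finally, the $\bE^k$ term is the most elementary: since $\supp(\bE^k)\subset\supp(\bS^\ast)$, Assumption A.3 forces each row of $\bE^k$ to carry at most $\alpha_2 n$ nonzeros, each of magnitude at most $\|\bE^k\|_{\ell^\infty} = s_k$, so $|(\bE^k\bv)_j|\le\|\bv\|_{\ell^\infty}\sum_i|(\bE^k)_{ji}|\le\alpha_2 n\,s_k\|\bv\|_{\ell^\infty}$. Adding the three bounds and factoring out $n\|\bv\|_{\ell^\infty}$ reproduces exactly the claimed inequality. I do not anticipate a genuine obstacle; the only subtlety worth flagging is the repeated use of the factorization $\be_j^T\bQ_i\bQ_i^T = (\bQ_i^T\be_j)^T\bQ_i^T$ together with A.4, which is the mechanism by which incoherence of the Jacobian ranges enters, and the fact that the $\bF^k$ estimate leans entirely on Lemma~\ref{prop:qle2}, making the argument self-contained given that lemma.
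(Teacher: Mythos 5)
Your proposal is correct and follows essentially the same route as the paper's proof: the decomposition $\bM^k=\bG^k+\bF^k+\bE^k$, the Cauchy--Schwarz split $\be_j^T\bQ_i\bQ_i^T\bx=(\bQ_i^T\be_j)^T(\bQ_i^T\bx)$ combined with A.4, the definition of $l_k$ for the $\bG^k$ term, Lemma~\ref{prop:qle2} for the $\bF^k$ term, and the row-sparsity bound $\|\bE^k\|_\infty\leq\alpha_2 n s_k$ for the $\bE^k$ term. The only cosmetic difference is that you bound columns of the product entrywise while the paper phrases the same estimates as bounds on $|\be_i^T\bG^k\bv|$, $|\be_i^T\bF^k\bv|$, $|\be_i^T\bE^k\bv|$ and takes a maximum over $i$.
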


\begin{proof}
	$\forall i$, adding the following inequalities up	
	\begin{align*}
		\left|\be_i^T\bG^k\bv\right|&\leq\sum_{j=1}^n\|\be_i^T\bQ_j\|_{\ell^2}\|\bQ_j^T\bG^k\be_j\|_{\ell^2}|{(\bv)}_j|\\
		&\leq n\nu\sqrt{\frac{d}{m}}l_k\|\bv\|_{\ell^\infty},\\
	\end{align*}
	\begin{align*}
		|\be_i^T\bE^k\bv|&\leq\|\bE^k\|_\infty\|\bv\|_{\ell^\infty}\leq\alpha_2 ns_k\|\bv\|_{\ell^\infty},\\
		|\be_i^T\bF^k\bv|&\leq \nu\sqrt{\frac{d}{m}}\left(\max_{1\leq j\leq n}\|\bQ_j^T\bE^k\be_j\|_{\ell^2}\right)n\|\bv\|_{\ell^\infty}\\
		&\leq n\nu\sqrt{\frac{d}{m}}(\alpha_1\nu\sqrt{md}s_k)\|\bv\|_{\ell^\infty}=\alpha_1\nu^2nds_k\|\bv\|_{\ell^\infty}.
	\end{align*}
	and taking maximum over $i$ gives the desired result.
\end{proof}

\begin{lemma}
	$\forall k\geq 0$, if $\supp(\bE^k)\subset\supp(\bS^\ast)$, then $\forall\bu\in\mathbb{R}^m$
	\[
		\left\|{(\bM^k)}^T\bu\right\|_{\ell^\infty}\leq\left[\nu\sqrt{\frac{d}{m}}l_k+\alpha_1(1+\nu^2d)s_k\right]m\|\bu\|_{\ell^\infty}.
	\]\label{lemm:ehtinf}
\end{lemma}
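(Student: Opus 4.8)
The plan is to bound $\|(\bM^k)^T\bu\|_{\ell^\infty}$ entrywise. Since $(\bM^k)^T\bu\in\mathbb{R}^n$, its $\ell^\infty$ norm equals $\max_{1\le l\le n}|\bu^T\bM^k\be_l|$, so it suffices to control $|\bu^T\bM^k\be_l|$ uniformly in $l$. I would first split $\bM^k=\bG^k+\bF^k+\bE^k$ and treat the three pieces separately. The crucial simplification is that right-multiplication by $\be_l$ collapses the defining sums of $\bG^k$ and $\bF^k$: because $\be_i^T\be_l=\delta_{il}$, only the $i=l$ term survives, giving $\bG^k\be_l=\bQ_l\bQ_l^T(\bL^k-\bL^\ast)\be_l$ and $\bF^k\be_l=-\bQ_l\bQ_l^T\bE^k\be_l$. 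This is the transpose-side analogue of Lemma~\ref{lemm:ehinf}, where left-multiplication by $\be_i^T$ played the symmetric role; here the roles of rows and columns (and hence of $\alpha_2$ versus $\alpha_1$, and of $n$ versus $m$) are interchanged.

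The key auxiliary estimate I would establish first is $\|\bQ_l^T\bu\|_{\ell^2}\le\nu\sqrt{dm}\,\|\bu\|_{\ell^\infty}$, obtained by writing $\bQ_l^T\bu=\sum_{j=1}^m(\be_j^T\bu)\,\bQ_l^T\be_j$, applying the triangle inequality, and bounding each $\|\bQ_l^T\be_j\|_{\ell^2}\le\nu\sqrt{d/m}$ via A.4. With this in hand, the $\bG^k$ and $\bF^k$ contributions follow from Cauchy--Schwarz: $|\bu^T\bG^k\be_l|=|(\bQ_l^T\bu)^T\bQ_l^T(\bL^k-\bL^\ast)\be_l|\le\|\bQ_l^T\bu\|_{\ell^2}\,\|\bG^k\be_l\|_{\ell^2}\le\nu\sqrt{dm}\,l_k\|\bu\|_{\ell^\infty}$, using $\|\bG^k\be_l\|_{\ell^2}\le l_k$ and that $\bQ_l$ has orthonormal columns; and $|\bu^T\bF^k\be_l|\le\|\bQ_l^T\bu\|_{\ell^2}\,\|\bQ_l^T\bE^k\be_l\|_{\ell^2}\le\nu\sqrt{dm}\,\|\bu\|_{\ell^\infty}\cdot\nu\alpha_1\sqrt{md}\,s_k=\alpha_1\nu^2dm\,s_k\|\bu\|_{\ell^\infty}$, where the second factor is exactly Lemma~\ref{prop:qle2}. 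The remaining term passes through no $\bQ_l$: since the $l$-th column of $\bE^k$ has at most $\alpha_1 m$ nonzeros (A.3, column sparsity), each bounded by $\|\bE^k\|_{\ell^\infty}=s_k$, I get $|\bu^T\bE^k\be_l|\le\alpha_1 m\,s_k\|\bu\|_{\ell^\infty}$, which supplies the ``$1$'' inside $(1+\nu^2d)$.

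Finally I would add the three bounds (which hold uniformly in $l$), factor out $m\|\bu\|_{\ell^\infty}$, use $\nu\sqrt{dm}/m=\nu\sqrt{d/m}$, and take the maximum over $l$ to reach $\bigl[\nu\sqrt{d/m}\,l_k+\alpha_1(1+\nu^2d)s_k\bigr]m\|\bu\|_{\ell^\infty}$. The computation is largely routine once the decomposition and the $\|\bQ_l^T\bu\|_{\ell^2}$ estimate are in place; the main thing to get right is the bookkeeping that distinguishes this lemma from Lemma~\ref{lemm:ehinf}. Specifically, the $\bE^k$ term here must invoke the \emph{column} sparsity fraction $\alpha_1$ rather than the row fraction $\alpha_2$, and each per-entry incoherence bound from A.4 is summed over the $m$ row indices instead of the $n$ column indices, which is what converts the overall prefactor from $n$ to $m$. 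I expect the only real pitfall to be tracking these $\alpha_1$-versus-$\alpha_2$ and $m$-versus-$n$ substitutions consistently, and verifying that $\sqrt{d/m}\cdot\sqrt{md}=d$ so that the $\bF^k$ term lands precisely on $\alpha_1\nu^2d$.
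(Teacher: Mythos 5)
Your proposal is correct and follows essentially the same route as the paper's proof: the same splitting $\bM^k=\bG^k+\bF^k+\bE^k$, the same key estimate $\|\bQ_l^T\bu\|_{\ell^2}\leq\nu\sqrt{md}\,\|\bu\|_{\ell^\infty}$ from A.4 (the paper derives it by expanding the square with Cauchy--Schwarz, you by the triangle inequality --- an immaterial difference), Lemma~\ref{prop:qle2} for the $\bF^k$ term, and the column-sparsity bound $\alpha_1 m s_k$ for the $\bE^k$ term. Your bookkeeping of the $m$-versus-$n$ and $\alpha_1$-versus-$\alpha_2$ roles matches the paper exactly, so nothing further is needed.
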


\begin{proof}
	For $i=1, \dots, n$, using Cauchy-Schwartz inequality,
	\begin{align*}
		\left\|\bQ_i^T\bu\right\|_{\ell^2}^2&=\sum_{j_1=1}^m\sum_{j_2=1}^m\bu^T\be_{j_1}\be_{j_1}^T\bQ_i\bQ_i^T\be_{j_2}\be_{j_2}^T\bu\\
								 &\leq m^2{\left(\nu\sqrt{\frac{d}{m}}\right)}^2\|\bu\|_{\ell^\infty}^2={(\nu\sqrt{md}\|\bu\|_{\ell^\infty})}^2.
	\end{align*}
	Lemma~\ref{lemm:ehtinf} comes from adding the following inequalities
	\begin{align*}
		|\be_i^T{(\bG^k)}^T\bu|&=|\be_i^T{(\bL^k-\bL^\ast)}^T\bQ_i\bQ_i^T\bu|\leq l_k\|\bQ_i^T\bu\|_{\ell^2}\leq l_k\nu\sqrt{md}\|\bu\|_{\ell^\infty},\\
		|\be_i^T{(\bF^k)}^T\bu|&\leq\left(\max_{1\leq i\leq n}\|\bQ_i^T\bE^k\be_i\|_{\ell^2}\right)\left(\max_{1\leq i\leq n}\|\bQ_i^T\bu\|_{\ell^2}\right)\leq\alpha_1\nu^2mds_k\|\bu\|_{\ell^\infty},\\
		|\be_i^T{(\bE^k)}^T\bu|&\leq\|\bE^k\|_1\|\bu\|_{\ell^\infty}\leq\alpha_1ms_k\|\bu\|_{\ell^\infty},
	\end{align*}
	and taking the maximum over $i$.
\end{proof}

We now start the major procedures. From step~\ref{dtupdate_supp} in Algorithm~\ref{alg:single_supp},
\begin{align*}
	&\sum_{i=1}^n\bJ_i(\dt^k-\dt^\ast)\be_i\be_i^T=\sum_{i=1}^n\bQ_i\bQ_i^T(\bL^k-\bL^\ast+\bS^k-\bS^\ast)\be_i\be_i^T=\bH^k.
\end{align*}
Therefore, from step~\ref{rank_projection} in Algorithm~\ref{alg:single_supp}
\begin{align}
	\sigma_1\bu_1&=\sigma^\ast({\bv^\ast}^T\bv_1)\bu^\ast+\bM^k\bv_1,\label{eqn:uv}\\
	\sigma_1\bv_1^T&=\sigma^\ast(\bu_1^T\bu^\ast){\bv^\ast}^T+\bu_1^T\bM^k\label{eqn:vu}.
\end{align}
where $\bL^{k+1}=\sigma_1\bu_1\bv_1^T$ is the SVD of $\bL^{k+1}$. Thus
\begin{align}
	\frac{\sigma^\ast({\bv^\ast}^T\bv_1)({\bu^\ast}^T\bu_1)}{\sigma_1}-1&=-\frac{\bu_1^T\bM^k\bv_1}{\sigma_1},\nonumber\\
	\bL^{k+1}-\bL^\ast&=\sigma_1\bu_1\bv_1^T-\sigma^\ast\bu^\ast{\bv^\ast}^T\nonumber\\
	&=-\frac{\bu_1^T\bM^k\bv_1}{\sigma_1}\sigma^\ast\bu^\ast{\bv^\ast}^T+\frac{\sigma^\ast({\bv^\ast}^T\bv_1)\bu^\ast\bu_1^T\bM^k}{\sigma_1}+\frac{\sigma^\ast(\bu_1^T\bu^\ast)\bM^k\bv_1{\bv^\ast}^T}{\sigma_1}+\frac{\bM^k\bv_1\bu_1^T\bM^k}{\sigma_1}.\label{eqn:lerr}
\end{align}
By right multiplying~\eqref{eqn:vu} with ${(\bM^k)}^T$ and plugging it into~\eqref{eqn:uv}
\begin{align}
	\sigma_1^2\bu_1-\bM^k{(\bM^k)}^T\bu_1=\sigma_1\sigma^\ast\bu^\ast({\bv^\ast}^T\bv_1)+\sigma^\ast(\bu_1^T\bu^\ast)\bM^k\bv^\ast,\label{eqn:u}
\end{align}
The induction hypotheses assures $\supp(\bE^k)\subset\supp(\bS^\ast)$ and
\begin{align*}
	l_k\leq\frac{1}{16\mu\nu\sqrt{d}}\frac{\sigma^\ast}{\sqrt{n}}, s_k\leq5\widetilde{\mu}\frac{\sigma^\ast}{\sqrt{mn}};
\end{align*}
from step~\ref{rank_projection} in Algorithm~\ref{alg:single_supp} and the inequalities
\begin{align}
	\|\bE^k\|_2&\leq\sqrt{\|\bE^k\|_1\|\bE^k\|_\infty},\nonumber\\
	\|\bH^k\|_F^2&=\sum_{i=1}^n\|\bG^k\be_i\|_{\ell^2}^2+\|\bF^k\be_i\|_{\ell^2}^2\stackrel{\mbox{\tiny Lemma 1}}{\leq}nl_k^2+n{\left(\alpha_1\nu\sqrt{md}s_k\right)}^2\nonumber
\end{align}
we have
\begin{align}
	|\sigma_1-\sigma^\ast|&\leq\|\bM^k\|_2\leq\sqrt{\|\bE^k\|_1\|\bE^k\|_\infty}+\|\bH^k\|_F\nonumber\\
	&\leq\sqrt{\alpha_1\alpha_2mn}s_k+\sqrt{n}l_k+\sqrt{n}\alpha_1\nu\sqrt{md}s_k\label{eqn:eh2}\\
	&\leq\sigma^\ast\left[\frac{1}{16\mu\nu\sqrt{d}}+5\widetilde{\mu}\sqrt{\alpha_1}(\sqrt{\alpha_2}+\nu\sqrt{\alpha_1d})\right]\leq\frac{\sigma^\ast}{8},\nonumber
\end{align}
whenever $\alpha_1\leq\frac{1}{160\widetilde{\mu}\nu\sqrt{d}}$ and $\alpha_2\leq\frac{1}{160\widetilde{\mu}}$. Thus we obtain
\begin{equation}
	\frac{8}{9}\leq\frac{\sigma^\ast}{\sigma_1}\leq\frac{8}{7}.\label{eqn:sigma}
\end{equation}
We next prove that $\bu_1$ and $\bv_1$ are also incoherent vectors. Indeed, from~(\ref{eqn:u}), taking $\|\cdot\|_{\ell^\infty}$ on both sides yields
\begin{align}
	\sigma_1^2\|\bu_1\|_{\ell^\infty}-\left\|\bM^k{(\bM^k)}^T\bu_1\right\|_{\ell^\infty}\leq\sigma_1\sigma^\ast\|\bu^\ast\|_{\ell^\infty}+\sigma^\ast\left\|\bM^k\bv^\ast\right\|_{\ell^\infty}\label{eqn:uincor}.
\end{align}
Using Lemma~\ref{lemm:ehtinf} with $\alpha_1\leq\frac{1}{40\widetilde{\mu}(1+\nu^2d)}$,
\begin{align*}
	\|{(\bM^k)}^T\bu_1\|_{\ell^\infty}\leq\sqrt{\frac{m}{n}}\left[\frac{1}{16\mu}+5\alpha_1(1+\nu^2d)\widetilde{\mu}\right]\sigma^\ast\|\bu_1\|_{\ell^\infty}\leq\frac{1}{4}\sqrt{\frac{m}{n}}\sigma^\ast\|\bu_1\|_{\ell^\infty},
\end{align*}
and Lemma~\ref{lemm:ehinf} with $\alpha_1\leq\frac{1}{80\nu^2d\widetilde{\mu}}$, $\alpha_2\leq\frac{1}{80\widetilde{\mu}}$,
\begin{align}
	\left\|\bM^k{(\bM^k)}^T\bu_1\right\|_{\ell^\infty}&\leq\frac{{\sigma^\ast}^2}{16}\sqrt{\frac{n}{m}}\sqrt{\frac{m}{n}}\|\bu_1\|_{\ell^\infty},\label{eqn:ehehtu}\\
	\left\|\bM^k\bv^\ast\right\|_{\ell^\infty}&\leq\frac{\sigma^\ast}{4}\sqrt{\frac{n}{m}}\|\bv^\ast\|_{\ell^\infty}\leq\frac{\mu\sigma^\ast}{4\sqrt{m}}.\label{eqn:ehv}
\end{align}
Plugging~\eqref{eqn:ehehtu} and~\eqref{eqn:ehv} into~\eqref{eqn:uincor} to obtain (similarly for $\bv_1$)
\begin{equation}
	\|\bu_1\|_{\ell^\infty}\leq \frac{2\mu}{\sqrt{m}}, \|\bv_1\|_{\ell^\infty}\leq \frac{2\mu}{\sqrt{n}}.\label{eqn:u1incor}
\end{equation}
Now using Lemma~\ref{lemm:ehtinf} again, together with~\eqref{eqn:u1incor}
\begin{equation}
	\left\|{(\bM^k)}^T\bu_1\right\|_{\ell^\infty}\leq\left[\frac{1}{8}+2\alpha_1\mu(1+\nu^2d)c_s\right]\frac{\sigma^\ast q^k}{\sqrt{n}},
	\label{eqn:ehu1inf}
\end{equation}
where $c_s:=5\widetilde{\mu}$. Symmetrically,
\begin{equation}
	\|\bM^k\bv_1\|_{\ell^\infty}\leq\left[\frac{1}{8}+2\mu(\alpha_2+\alpha_1\nu^2d)c_s\right]\frac{\sigma^\ast q^k}{\sqrt{m}}.
	\label{eqn:ehv1inf}
\end{equation}
And using Lemma~\ref{lemm:ql2} and~\eqref{eqn:u1incor}, by letting $\delta\leq\frac{1}{20\mu^2}$,
\begin{equation}
	\max_{1\leq i\leq n}\|\bQ_i^T\bM^k\bv_1\|_{\ell^2}\leq\left(\frac{5}{2}\alpha_1\mu\nu\sqrt{d}c_s+\frac{1}{32\mu^2\nu\sqrt{d}}\right)\sigma^\ast q^k.
	\label{eqn:qehv1}
\end{equation}
Also, working in the same manner as in~\eqref{eqn:eh2}, we obtain
\begin{align}
	\left|\bu_1^T\bM^k\bv_1\right|\leq\|\bM^k\|_2&\leq\sigma^\ast q^k\left[c_s\sqrt{\alpha_1}(\sqrt{\alpha_2}+\nu\sqrt{\alpha_1d})+\frac{1}{16\mu\nu\sqrt{d}}\right].\label{eqn:uehv}
\end{align}
For $i=1, 2, \dots, n$, a combination of~\eqref{eqn:lerr}~\eqref{eqn:ehu1inf}~\eqref{eqn:qehv1}~\eqref{eqn:uehv} yields
\begin{align}
	&\left\|\bQ_i^T(\bL^{k+1}-\bL^\ast)\be_i\right\|_{\ell^2}\nonumber\\
	\leq&\frac{\sigma^\ast|\bu_1^T\bM^k\bv_1|}{\sigma_1}\frac{\kappa\mu\sqrt{d}}{\sqrt{mn}}+\frac{\sigma^\ast\kappa}{\sigma_1}\sqrt{\frac{d}{m}}\left\|{(\bM^k)}^T\bu_1\right\|_{\ell^\infty}+\frac{\|\bQ_i^T\bM^k\bv_1\|_{\ell^2}}{\sigma_1}\left[\frac{\sigma^\ast\mu}{\sqrt{n}}+\left\|{(\bM^k)}^T\bu_1\right\|_{\ell^\infty}\right]\nonumber\\
	\leq&\frac{8}{7}\biggl\{\biggl[\frac{1}{16\nu\sqrt{d}}+\frac{1}{8}+c_s\mu\biggl(\sqrt{\alpha_1\alpha_2}+2\alpha_1\biggl(1+\nu^2d+\frac{\nu\sqrt{d}}{2}\biggr)\biggr)\biggr]\kappa\sqrt{\frac{d}{m}}
	+\frac{5}{4}\left(\frac{5}{2}\alpha_1\mu^2\nu\sqrt{d}c_s+\frac{1}{32\mu\nu\sqrt{d}}\right)\biggr\}\frac{\sigma^\ast q^k}{\sqrt{n}},\label{ineq:const}
\end{align}
where in~\eqref{ineq:const} we use~\eqref{eqn:sigma} and the following inequality
\begin{align*}
	\frac{\sigma^\ast\mu}{\sqrt{n}}+\|{(\bM^k)}^T\bu_1\|_{\ell^\infty}\leq\frac{\sigma^\ast\mu}{\sqrt{n}}\left[1+\left(\frac{1}{8\mu}+2\alpha_1(1+\nu^2d)c_s\right)\right]\leq\frac{\sigma^\ast\mu}{\sqrt{n}}\left[1+\left(\frac{1}{8}+\frac{1}{8}\right)\right]\leq\frac{5}{4}\frac{\sigma^\ast\mu}{\sqrt{n}},
\end{align*}
whenever $\alpha_1\leq\frac{1}{16c_s(1+\nu^2d)}$. Now for
$\alpha_1,\alpha_2$ sufficiently small (in the order of
$\mathcal{O}(d^{-1})$,$\mathcal{O}(d^{-\frac{1}{2}})$,
respectively) and $d\leq\frac{\sqrt{m}}{32\mu^2\nu\kappa}$, $\exists q<1$
such that~\eqref{ineq:const} becomes
\[
l_{k+1}\leq\frac{1}{16\mu\nu\sqrt{d}}\frac{\sigma^\ast q^{k+1}}{\sqrt{n}}.
\]
On the other hand, using~\eqref{eqn:lerr}~\eqref{eqn:ehu1inf}~\eqref{eqn:ehv1inf}~\eqref{eqn:uehv},
\begin{align}
	&\|\bL^{k+1}-\bL^\ast\|_{\ell^\infty}\nonumber\\
	\leq&\frac{\sigma^\ast|\bu_1^T\bM^k\bv_1|}{\sigma_1}\frac{\mu^2}{\sqrt{mn}}+\frac{\sigma^\ast}{\sigma_1}\frac{\mu}{\sqrt{m}}\left\|{(\bM^k)}^T\bu_1\right\|_{\ell^\infty}+\frac{\sigma^\ast\|\bM^k\bv_1\|_{\ell^\infty}}{\sigma_1}\frac{\mu}{\sqrt{n}}+\frac{\|\bM^k\bv_1\|_{\ell^\infty}\left\|{(\bM^k)}^T\bu_1\right\|_{\ell^\infty}}{\sigma_1}\nonumber\\
	\leq&\frac{\sigma^\ast q^k\mu^2}{\sqrt{mn}}\frac{\sigma^\ast}{\sigma_1}\biggl\{\frac{1}{16\mu\nu\sqrt{d}}+\frac{1}{4\mu}+4\alpha_1\nu^2dc_s+2(\alpha_1+\alpha_2)c_s\nonumber\\
		&+c_s(\sqrt{\alpha_1\alpha_2}+\alpha_1\nu\sqrt{d})+\left[\frac{1}{8\mu}+2\alpha_1(1+\nu^2d)c_s\right]\left[\frac{1}{8\mu}+2(\alpha_2+\alpha_1\nu^2d)c_s\right]\biggr\}\nonumber\\
	\leq&\frac{\sigma^\ast q^{k+1}\mu^2}{\sqrt{mn}},\label{eqn:lkinf}
\end{align}
again for some $\alpha_1,\alpha_2$ sufficiently small (same orders as in~\eqref{ineq:const}) and some $q<1$. Under the same conditions,
\begin{align*}
	\|\bH^k\|_{\ell^\infty}&\leq\max_{i,j}|\be_i^T\bQ_j\bQ_j^T(\bL^k-\bL^\ast+\bS^k-\bS^\ast)\be_j|\\
	&\stackrel{\mbox{\tiny Lemma 1}}{\leq}\nu\sqrt{\frac{d}{m}}(l_k+\alpha_1\nu\sqrt{md}s_k)\leq\frac{3}{4}\frac{\sigma^\ast q^{k+1}\mu^2}{\sqrt{mn}},
\end{align*}
and $|{(\bL^\ast-\bL^{k+1}+\bH^k)}_{ij}|\leq\zeta_{k+1},\forall(i, j)\not\in\supp(\bS^\ast)$. From step~\ref{supdate1_supp} in Algorithm~\ref{alg:single_supp}, ${(\bS^{k+1})}_{ij}=0$ and $\supp(\bS^{k+1})\subset\supp(\bS^\ast)$. Let $\bN^k=\bL^\ast-\bL^{k+1}+\bH^k+\bS^\ast$, then
\begin{align}
	|{(\bE^{k+1})}_{ij}|=|{(\bS^\ast-\bS^{k+1})}_{ij}|=
	\begin{cases}
		|{(\bL^{k+1}-\bL^\ast-\bH^k\pm\zeta_{k+1})}_{ij}|, & \text{ if }|{(\bN^k)}_{ij}|>\zeta_{k+1},\\
		|{(\bS^\ast)}_{ij}|, & \text{ if }|{(\bN^k)}_{ij}|\leq\zeta_{k+1}.
	\end{cases}\nonumber
\end{align}
In both cases $|{(\bE^{k+1})}_{ij}|\leq\|\bL^{k+1}-\bL^\ast\|_{\ell^\infty}+\|\bH^k\|_{\ell^\infty}+\zeta_{k+1}$ and
\[
s_{k+1}\leq2\zeta_{k+1}=\frac{2\beta_1q^{k+1}}{\sqrt{mn}}\sigma_1\leq5\widetilde{\mu}\frac{q^{k+1}}{\sqrt{mn}}\sigma^\ast.
\]
\qedhere

\section{Analysis of Bundle Robust Alignment}\label{sec:analysis_bundle}
In this section we extend our analysis to the case of multiple overlapping
regions and derive convergence properties of Algorithm~\ref{alg:region}. We
first introduce the notations and assumptions, analogous to A.1 to A.6 in
Section~\ref{subsec:analysis}. We then state an extension to
Theorem~\ref{thm:convergence} in Section~\ref{subsec:thm_multi}, followed by
the proof. Likewise, the key induction step is postponed to
Section~\ref{subsec:key_multi} to avoid obscuring the main flow of analysis.

\begin{table}
	\centering
	\caption{Partial list of symbols and their concise meanings used in Section~\ref{sec:analysis_bundle} and Section~\ref{sec:bundle_alignment}. Please refer to the text for detailed explanations. For some other standard definitions, please refer to the {\bf Notation} part of Section~\ref{sec:robust_alignment}.}\label{tab:def_multi}
	\begin{tabular}{cm{3.5cm}m{10.8cm}}
		\toprule
		\bf Symbols & \bf Space &  \bf Meanings\\
		\toprule
		$s_1,s_2$ & $\mathbb{R}^{h_1\times w_1\times 128},\mathbb{R}^{h_2\times w_2\times 128}$ & SIFT images~\cite{liu_sift_2011}\\
		\midrule
		$L$ & Integers & Maximum pixel displacements\\
		\midrule
		$\bw(\bp)$ & ${\{-L,-L+1,\dots,L\}}^2$ & Discretized motion vector at pixel $\bp$\\
		\midrule
		$m$ & Integers & Number of pixels on the canvas\\
		\midrule
		$n$ & Integers & Number of images\\
		\midrule
		$m_r$ & Integers & Number of pixels in the $r$-th region\\
		\midrule
		$n_r$ & Integers & Number of images contributing to the $r$-th region\\
		\midrule
		$\alpha,\beta_0,\beta_1,\eta,\kappa,\zeta_r,q$ & $\mathbb{R}$ & Positive constant parameters\\
		\midrule
		$\bD_r$ & $\mathbb{R}^{m_r\times n_r}$ & Data matrix for the $r$-th region\\
		\midrule
		$\bL_r$	  & $\mathbb{R}^{m_r\times n_r}$ & Rank-1 component in the $r$-th region \\
		\midrule
		$\bS_r$  & $\mathbb{R}^{m_r\times n_r}$ & Sparse component in the $r$-th region\\
		\midrule
		$\bJ_{r,i}$	  & $\mathbb{R}^{m_r\times d}$ & Jacobian of image $i$ in the $r$-th region\\
		\midrule
		$\tau^{(u)}$ & $\mathbb{R}^{d\times n}$ & Transformation parameters for cell $u$; plane homographies unless otherwise stated\\
		\midrule
		$\Delta\tau^{(u)}$ & $\mathbb{R}^{d\times n}$ & Incremental transformation parameters for cell $u$\\
		\midrule
		$C_1,C_2$ & Integers & Number of cells in row and column\\
		\bottomrule
	\end{tabular}
\end{table}

\begin{algorithm}
	\renewcommand{\algorithmicrequire}{\textbf{Input:}}
	\renewcommand{\algorithmicensure}{\textbf{Output:}}
	\caption{Alternating Minimization for Solving Linearized Subproblem (Multiple Overlapping Case)}
	
	\begin{algorithmic}[1]
		\REQUIRE{${\{\bD_r\in\mathbb{R}^{m_r\times n_r}\}}_r$, ${\{\bJ_{r,i}\in\mathbb{R}^{m_r\times d}\}}_{r,i}$,${\{\beta_{r,0}\}}_r$,$\beta_1$,$q>0$.}
		\FORALL{$r$}
		\STATE{$\bL_r^0\gets 0$, $\bS_r^0\gets\cS_{\zeta_r^0}(\bD_r)$ where $\zeta_r^0=\beta_{r,0}\frac{\|\bD_r\|_2}{\sqrt{m_{r}n_r}}$.}
		\ENDFOR
		\STATE{$\dt^0\gets\sum_r\sum_{i\in\cI_r}{\left(\sum_{l\in\bar{\cI}_i}\frac{\bJ_{l, i}^T\bJ_{l, i}}{\zeta_l^0}\right)}^\dag\frac{\bJ_{r, i}^T(\bS_r^0-\bD_r)\be_{f_r^i}\be_i^T}{\zeta_r^0}$.}
		\FOR{$k=1,2, \dots, K$}
		\FORALL{$r$}
		\STATE{$\bL_r^{k}\gets\cT_1\left\{\bD_r+\sum_{i\in\mathcal{I}_r}\bJ_{r,i}\Delta\tau_i^{k-1}\be_{f_r^i}^T-\bS_r^{k-1}\right\}$,\label{lupdate_supp}}
		\STATE{$\zeta_r^{k}\gets\beta_1\frac{q^{k}}{\sqrt{m_{r}n_r}}\|\bL_r^{k}\|_2$,}
		\STATE{$\bS_r^{k}\gets\cS_{\zeta_r^{k}}\left\{\bD_r+\sum_{i\in\mathcal{I}_r}\bJ_{r,i}\Delta\tau_i^{k-1}\be_{f_r^i}^T-\bL_r^{k}\right\}$.\label{supdate_supp}}
		\ENDFOR
		\STATE{$\dt^{k}\gets\sum_{r,i\in\cI_r}\!\!{\left(\sum_{l\in\bar{\cI}_i}\!\!\frac{\bJ_{l, i}^T\bJ_{l, i}}{\zeta_l^{k}}\right)}^\dag\!\frac{\bJ_{r,i}^T(\bL_r^{k}+\bS_r^{k}-\bD_r)\be_{f_r^i}\be_i^T}{\zeta_r^{k}}$.}
		\ENDFOR
		\ENSURE{$\widehat{\bL_r}\gets\bL_r^K$, $\widehat{\bS_r}=\bS_r^K$, $\widehat{\dt}=\dt^K$.}
	\end{algorithmic}\label{alg:region_supp}
\end{algorithm}

\subsection{Notations}\label{subsec:notations}
We copy Algorithm~\ref{alg:region} in the paper here as Algorithm~\ref{alg:region_supp} for
easier reference. We also include a summary of the notations and symbols in Table~\ref{tab:def_multi}. Let $\bL_r^\ast, \bS_r^\ast$ and $\dt^\ast$ be the true model
parameters we aim to recover and
$\bD_r=\bL_r^\ast+\bS_r^\ast-\sum_{i\in\cI_r}\bJ_{r,i}\Delta\tau_i^\ast\be_{f_r^i}^T$
be the data matrices. We assume $\bS_r^\ast$ has a fraction of at most
$\alpha_r,\alpha_r^\prime$ non-zeros in each column and row. Let
$\bL_r^{k+1}=\sigma_r\bu_r\bv_r^T$ and
$\bL_r^\ast=\sigma^\ast_r\bu^\ast_r{\bv^\ast}_r^T$ be the singular value
decomposition of $\bL_r^{k+1}$ and $\bL^\ast_r$, where
$\bu_r,\bu_r^\ast\in\mathbb{R}^{m_r},\bv_r,\bv_r^\ast\in\mathbb{R}^{n_r}$ are
unit vectors.  Let $\bJ_{r,i}=\bQ_{r,i}\bR_{r,i} (i\in\cI_r)$ be the (reduced)
QR decomposition where $\bQ_{r,i}\in\mathbb{R}^{m_r\times d}$ and
$\bR_{r,i}\in\mathbb{R}^{d\times d}$; following the same reasoning as in
Section~\ref{subsec:analysis}, we define positive constants $\mu,\nu,\delta,\kappa,\gamma$ as
follows
\[
    \max_{1\leq i\leq m, j\in\cI_r}\|\bQ_{r,j}^T\be_i\|_{\ell^2}\leq\nu\sqrt{\frac{d}{m_r}},
\]
\[
    \|\bu^\ast_r\|_{\ell^\infty}\leq\frac{\mu}{\sqrt{m_r}}, \|\bv_r^\ast\|_{\ell^\infty}\leq\frac{\mu}{\sqrt{n_r}},
\]
\[
    \frac{1}{n_r-1}\sum_{j\neq i}\|\bQ_{r,j}^T\bQ_{r,i}\|_2\leq \delta,
\]
\[
    \max_{i\in\cI_r}\|\bQ_{r,i}^T\bu_r^\ast\|_{\ell^2}\leq\kappa\sqrt{\frac{d}{m_r}},\\
\]
\[
	\max_{i\in\cI_r}\|\bR_{r,i}\dt^\ast\be_i\|_{\ell^2}\leq\gamma\frac{\sigma_r^\ast}{\sqrt{n_{r}d}}.
\]
Also, $\cU_r:=\bigcup_{i\in\cI_r}\bar{\cI}_i$, $\omega_r:=\sum_{t\in\cU_r}\sqrt{\frac{\sigma_t^\ast}{\sigma_r^\ast}}\sqrt[4]{\frac{m_{t}n_r}{m_{r}n_t}}$.
Let $\bP_{r, t}^{i, k}=\bR_{r, i}{\left[\sum_{l\in\bar{\cI}_i}\frac{\bR_{l, i}^T\bR_{l, i}}{\zeta_l^k}\right]}^{-1}\frac{\bR_{t, i}^T}{\zeta_t^k}$. Let
$\bE_r^k=\bS_r^\ast-\bS_r^k$, $\bF_r^k=\sum_{i\in\cI_r}\sum_{t\in\bar{\cI_i}}\bQ_{r,i}\bP_{r,t}^{i,k}\bQ_{t,i}^T(\bS_t^k-\bS_t^\ast)\be_{f_t^i}\be_{f_r^i}$, $\bG_r^k=\sum_{i\in\cI_r}\sum_{t\in\bar{\cI_i}}\bQ_{r,i}\bP_{r,t}^{i,k}\bQ_{t,i}^T(\bL_t^k-\bL_t^\ast)\be_{f_t^i}\be_{f_r^i}^T$,
$\bH_r^k=\bF_r^k+\bG_r^k$ and $\bM_r^k=\bH_r^k+\bE_r^k$. It is easy to check that
\begin{align*}
    \bH_r^k=\sum_{i\in\cI_r}\bJ_{r, i}(\dt^k-\dt^\ast)\be_i\be_{f_r^i}^T.
\end{align*}
Therefore, steps~\ref{lupdate_supp} and~\ref{supdate_supp} in Algorithm~\ref{alg:region_supp} can be rewritten as
\begin{align}
    \bL_r^{k+1}&\gets\cT_1\left\{\bM_r^k+\bL_r^\ast\right\},\label{lupdate1_supp}\\
    \bS_r^{k+1}&\gets\cS_{\zeta_r^{k+1}}\left\{\bL_r^\ast-\bL_r^{k+1}+\bH_r^k+\bS_r^\ast\right\}.\label{eqn:supdate1_supp}
\end{align}
Finally, we define real sequences ${\{l_{r,k}\}}_{k},{\{s_{r,k}\}}_{k}$ as follows
\[
    l_{r,k}=\max_{i\in\cI_r}\|\bQ_{r,i}^T(\bL_r^k-\bL_r^\ast)\be_{f_r^i}\|_{\ell^2},s_{r,k}=\|\bE_r^k\|_{\ell^\infty}.
\]

\subsection{Main Result}\label{subsec:thm_multi}
We are now ready to state the theorem about convergence of Algorithm~\ref{alg:region_supp}. Basically it asserts that, under certain conditions, the sequences $\bL^k_r,\bS^k_r$ and $\dt^k$ generated by Algorithm~\ref{alg:region_supp} converge to the underlying true model parameters $\bL^\ast_r,\bS^\ast_r$ and $\dt^\ast$ in a linear rate.

\begin{theorem}
	Let $\tilde{\alpha_r}=\sum_{t\in\cU_r}\alpha_t\sqrt{\frac{\sigma_t^\ast}{\sigma_r^\ast}}\sqrt[4]{\frac{m_{t}n_r}{m_{r}n_t}}$. There exist constants
    $C_{\alpha_1}=\mathcal{O}\left(\frac{1}{d}\right)$, $C_{\alpha_2}=\mathcal{O}\left(\frac{1}{\sqrt{d}}\right)$, $C_d=\mathcal{O}(\sqrt{m})$ and
    $C_\delta$, $C_q$, such that if $\tilde{\alpha_r}\leq C_{\alpha_1}$, $\alpha_r^\prime\leq C_{\alpha_2}$, $d\leq C_d$, $\delta\leq C_\delta$, $C_q\leq q<1$,
    then $\forall\beta_{r,0}\in\left[\frac{\tilde{\mu}\sigma_r^\ast}{\|\bD_r\|_2},\frac{2\tilde{\mu}\sigma_r^\ast}{\|\bD_r\|_2}\right]$, $\beta_1\in\left[2\tilde{\mu},
    2.2\tilde{\mu}\right]$ where $\tilde{\mu}=\mu^2+\gamma\nu$, each $\bS_r^k
    (k=0,1,\dots,K)$ in algorithm~\ref{alg:region_supp} has the property
    $\supp(\bS_r^k)\subset\supp(\bS_r^\ast)$; furthermore, $\forall\varepsilon>0$, $K\geq\max_r\log_q\left(\frac{\varepsilon}{\beta_{r,0}\|\bD_r\|_2}\right)$,
    $\|\widehat{\bL_r}-\bL_r^\ast\|_F\leq\varepsilon$,
	$\|\widehat{\bS_r}-\bS_r^\ast\|_{\ell^\infty}\leq\frac{5\varepsilon}{\sqrt{m_{r}n_r}}$ and
    $\|\widehat{\dt}-\dt^\ast\|_F\leq M\varepsilon$ for some $M>0$.
\end{theorem}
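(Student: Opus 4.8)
The plan is to mirror the proof of Theorem~\ref{thm:convergence}, but to run the induction jointly over all regions simultaneously rather than on a single pair of scalar sequences. Using the notation already established, I would track the full collection $\{l_{r,k}\}_r$ and $\{s_{r,k}\}_r$ and prove by induction on $k$ that, under the stated parameter choices, for every region $r$ one has $\supp(\bE_r^k)\subset\supp(\bS_r^\ast)$, $l_{r,k}\leq\frac{1}{16\mu\nu\sqrt{d}}\frac{\sigma_r^\ast q^k}{\sqrt{n_r}}$ and $s_{r,k}\leq 5\tilde{\mu}\frac{\sigma_r^\ast q^k}{\sqrt{m_r n_r}}$. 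The base case $k=0$ is handled exactly as before: off $\supp(\bS_r^\ast)$, bounding $|(\bD_r)_{ij}|$ by $\|\bL_r^\ast\|_{\ell^\infty}$ plus the linearization term and invoking the incoherence bounds together with the $\gamma$-assumption gives $|(\bD_r)_{ij}|\leq\zeta_r^0$, hence $\supp(\bS_r^0)\subset\supp(\bS_r^\ast)$; the bounds on $l_{r,0}$ and $s_{r,0}$ follow likewise.

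First I would establish region-wise analogues of Lemmas~\ref{prop:qle2}--\ref{lemm:ehtinf}, now carrying the transfer operators $\bP_{r,t}^{i,k}$. These control $\|\bQ_{r,i}^T\bE_r^k\be_l\|_{\ell^2}$, $\|\bQ_{r,i}^T\bM_r^k\bv\|_{\ell^2}$, $\|\bM_r^k\bv\|_{\ell^\infty}$ and $\|(\bM_r^k)^T\bu\|_{\ell^\infty}$. The new ingredient is a uniform bound on the operator norm of $\bP_{r,t}^{i,k}$, the $\zeta$-weighted map routing error from region $t$ into region $r$ through a shared image $i$; here the weighting by the $\zeta_l^k$'s and the near-orthogonality of the $\bQ_{r,i}$'s (controlled by $\delta$) enter. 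With these lemmas in hand, I would repeat the SVD-perturbation analysis verbatim with region subscripts, writing $\bL_r^{k+1}=\cT_1\{\bM_r^k+\bL_r^\ast\}$ and reproducing the analogues of the identities for $\sigma_r\bu_r$, $\sigma_r\bv_r^T$, $\sigma_r^2\bu_r$ and $\bL_r^{k+1}-\bL_r^\ast$. This yields $|\sigma_r-\sigma_r^\ast|\leq\|\bM_r^k\|_2$, the singular-value stability $\frac{8}{9}\leq\sigma_r^\ast/\sigma_r\leq\frac{8}{7}$, incoherence of $\bu_r,\bv_r$, and finally contraction estimates for $l_{r,k+1}$ and $s_{r,k+1}$.

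The main obstacle is the cross-region coupling introduced by the shared transformation update. Because each image $i$ is updated from contributions over all regions in $\bar{\cI}_i$, the error $\bH_r^k$ (and thus $\bG_r^k$) in region $r$ mixes errors from every region $t\in\cU_r$, scaled by $\bP_{r,t}^{i,k}$. Consequently the bound on $l_{r,k+1}$ is no longer self-contained: it involves $\sum_{t\in\cU_r}$ of the \emph{other} regions' quantities $l_{t,k},s_{t,k}$. To close the induction I must show the coupled linear map sending $(l_{r,k},s_{r,k})_r\mapsto(l_{r,k+1},s_{r,k+1})_r$ is a contraction with a uniform factor $q<1$, i.e.\ its spectral radius is below one. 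This is exactly where the normalizations defining $\omega_r$ and $\tilde{\alpha_r}$ --- with the weights $\sqrt{\sigma_t^\ast/\sigma_r^\ast}\,\sqrt[4]{m_t n_r/m_r n_t}$ --- are chosen: they render the cross terms commensurate so that a single $q$ dominates, and they determine the thresholds $C_{\alpha_1}=\mathcal{O}(d^{-1})$, $C_{\alpha_2}=\mathcal{O}(d^{-1/2})$, $C_\delta$ and $C_q$.

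Finally, translating the exponential decay into the stated error bounds is routine and parallels the single-region computation: summing the per-region $\ell^\infty$ bounds gives $\|\widehat{\bL_r}-\bL_r^\ast\|_F\leq\varepsilon$ and $\|\widehat{\bS_r}-\bS_r^\ast\|_{\ell^\infty}\leq\frac{5\varepsilon}{\sqrt{m_r n_r}}$ once $K\geq\max_r\log_q(\varepsilon/(\beta_{r,0}\|\bD_r\|_2))$, while the bound on $\|\widehat{\dt}-\dt^\ast\|_F$ follows by feeding these into the pseudoinverse-weighted $\dt$-update of Algorithm~\ref{alg:region_supp} and summing the contributions across regions.
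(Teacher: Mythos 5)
Your plan follows the same architecture as the paper's proof: a joint induction over all regions carrying supports and the sequences $\{l_{r,k}\}_r$, $\{s_{r,k}\}_r$, region-wise analogues of Lemmas~\ref{prop:qle2}--\ref{lemm:ehtinf}, a uniform operator-norm bound on $\bP_{r,t}^{i,k}$, an SVD-perturbation step, and closure of the coupled recursion via the $\omega_r$, $\tilde{\alpha_r}$ normalizations. However, there is a genuine gap at exactly the step you single out as the new ingredient. You propose to bound $\|\bP_{r,t}^{i,k}\|_2$ using ``the weighting by the $\zeta_l^k$'s and the near-orthogonality of the $\bQ_{r,i}$'s (controlled by $\delta$)''; but $\delta$ plays no role in that bound, since $\bP_{r,t}^{i,k}=\bR_{r,i}\bigl[\sum_{l\in\bar{\cI}_i}\bR_{l,i}^T\bR_{l,i}/\zeta_l^k\bigr]^{-1}\bR_{t,i}^T/\zeta_t^k$ is built entirely from the $\bR$-factors. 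The paper's Lemma~\ref{lemm:p2} instead applies the matrix arithmetic--geometric mean inequality $\|\bA^T\bB\|_2\leq\frac{1}{2}\|\bA\bA^T+\bB\bB^T\|_2$ to $\bA=\bW_i\bR_{r,i}^T/\sqrt{\zeta_r^k}$ and $\bB=\bW_i\bR_{t,i}^T/\sqrt{\zeta_t^k}$, where $\bW_i$ is the inverse square root of the weighted Gram sum, yielding $\|\bP_{r,t}^{i,k}\|_2\leq\frac{1}{2}\sqrt{\zeta_r^k/\zeta_t^k}$; everything therefore hinges on controlling the cross-region threshold ratio $\zeta_r^k/\zeta_t^k$.

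That ratio is a property of the iterates, not of the assumptions, because $\zeta_r^k\propto q^k\|\bL_r^k\|_2/\sqrt{m_rn_r}$ depends on the algorithm's output at step $k$. Hence it must itself be carried as an induction invariant, which your induction (supports, $l_{r,k}$, $s_{r,k}$ only) omits: the paper additionally maintains $\zeta_r^k/\zeta_t^k\leq\frac{2\sigma_r^\ast}{\sigma_t^\ast}\sqrt{m_tn_t/(m_rn_r)}$, verified at $k=0$ from the admissible range of $\beta_{r,0}$ and re-established at each step from the singular-value stability $\frac{8}{9}\leq\sigma_r^\ast/\sigma_r\leq\frac{8}{7}$ of the new iterates. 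Without this invariant the $\bP$-bound, and with it the contraction map you want to analyze, never gets off the ground. A secondary, reparable issue: your stated invariant $l_{r,k}\leq\frac{1}{16\mu\nu\sqrt{d}}\frac{\sigma_r^\ast q^k}{\sqrt{n_r}}$ drops the $\omega_r$ normalization, yet the cross-region coupling produces $\sum_{t\in\cU_r}\|\bP_{r,t}^{i,k}\|_2\,l_{t,k}\leq l_0\,\omega_r\,\sigma_r^\ast q^k/\sqrt{n_r}$, so with your constant the recursion fails to close whenever $\omega_r>1$; the paper takes $l_0=\min_r\frac{1}{16\omega_r\mu\nu\sqrt{d}}$ and correspondingly rescales the conditions on $d$, $\delta$ and $n_r$ by $\omega_r$ (e.g.\ $d\leq\frac{\sqrt{m_r}}{32\mu^2\nu\kappa\omega_r}$, $\delta\leq\frac{1}{20\omega_r\mu^2}$).
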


\begin{proof}
    We prove it by induction. Define $l_0=\min_r\frac{1}{16\omega_r\mu\nu\sqrt{d}}$, $s_0=5(\mu^2+\nu\sqrt{d})$. For $k=0$, $\forall(i,
	j)\not\in\supp(\bS_r^\ast)$, ${(\bS_r^\ast)}_{ij}=0$ and
    \begin{align*}
		|{(\bD_r)}_{ij}|&\leq\|\bL_r^\ast\|_{\ell^\infty}+\|\sum_{i\in\cI_r}^n\bJ_{r,i}\dt^\ast\be_i\be_{f_r^i}^T\|_{\ell^\infty}\\
        &\leq\sigma_r^\ast\|\bu_r^\ast\|_{\ell^\infty}\|\bv_r^\ast\|_{\ell^\infty}+\max_{i,j}|\be_i^T\bQ_{r, i}\bR_{r, i}\dt^\ast\be_j|\\
		&\leq\sigma_r^\ast\frac{\mu^2}{\sqrt{m_{r}n_r}}+\nu\sqrt{\frac{d}{m_r}}\frac{\sigma_r^\ast}{\sqrt{n_{r}d}}\gamma\leq\zeta_{r}^0,
    \end{align*}
	and thus ${(\bS_r^0)}_{ij}=0$, $\supp(\bS_r^0)\subset\supp(\bS_r^\ast)$.
    Furthermore, for $d\leq\min_r\frac{\sqrt{m_r}}{16\omega_r\mu^2\nu\kappa}$,
	$l_{r,0}=\sigma_r^\ast\max_i\|\bQ_{r, i}^T\bu_r^\ast\|_{\ell^2}|{(\bv_r^\ast)}_i|\leq l_0\frac{\sigma_r^\ast}{\sqrt{n_r}}$,
	$\|\bL_r^0-\bL_r^\ast\|_{\ell^\infty}=\|\bL_r^\ast\|_{\ell^\infty}\leq\frac{\sigma_r^\ast\mu^2}{\sqrt{m_{r}n_r}}$
    and
	$s_{r,0}=\|\bE_r^0\|_{\ell^\infty}\leq\zeta_r^0+\|\bL_r^\ast\|_{\ell^\infty}+\|\sum_{i\in\cI_r}\bJ_{r, i}\dt^\ast\be_i\be_{f_r^i}^T\|_{\ell^\infty}\leq s_0\frac{\sigma_r^\ast}{\sqrt{m_{r}n_r}}$.
    Assume that $l_{r,k}\leq l_0\frac{\sigma_r^\ast q^k}{\sqrt{n}}$,
	$\supp(\bS_r^k)\subset\supp(\bS_r^\ast)$, $\frac{\zeta_r^k}{\zeta_t^k}\leq\frac{2\sigma_r^\ast}{\sigma_t^\ast}\sqrt{\frac{m_{t}n_t}{m_{r}n_r}}$ and $s_{r,k}\leq s_0\frac{\sigma_r^\ast q^k}{\sqrt{m_{r}n_r}}$; our goal is to show
    $l_{r,k+1}\leq l_0\frac{\sigma_r^\ast q^{k+1}}{\sqrt{n_r}}$,
	$\supp(\bS_r^{k+1})\subset\supp(\bS_r^\ast)$, $\frac{\zeta_r^{k+1}}{\zeta_t^{k+1}}\leq\frac{2\sigma_r^\ast}{\sigma_t^\ast}\sqrt{\frac{m_{t}n_t}{m_{r}n_r}}$ and $s_{r,k+1}\leq s_0\frac{\sigma_r^\ast q^{k+1}}{\sqrt{m_{r}n_r}}$. We delay this procedure
    to Section~\ref{subsec:key_multi} due to its length.
    For
	$K\geq\max_r\log_q\left(\frac{\varepsilon}{\beta_{r,0}\|\bD_r\|_2}\right)$, as proved in~(\ref{eqn:linf1}),
	$\|\bL_r^K-\bL_r^\ast\|_{\ell^\infty}\leq\frac{\mu^2\sigma_r^\ast q^K}{\sqrt{m_{r}n_r}}\leq\beta_{r, 0}\frac{\|\bD_r\|_2q^K}{\sqrt{m_{r}n_r}}$
	and $\|\bS_r^K-\bS_r^\ast\|_{\ell^\infty}\leq5\beta_{r,0}\frac{\|\bD_r\|_2q^K}{\sqrt{m_{r}n_r}}$,
    we have $\|\bL_r^K-\bL_r^\ast\|_F\leq\varepsilon$,
	$\|\bS_r^K-\bS_r^\ast\|_{\ell^\infty}\leq\frac{5\varepsilon}{\sqrt{m_{r}n_r}}$;
	using Lemma~\ref{lemm:ql21} in Section~\ref{subsec:key_multi}:
    \begin{align*}
        &\|\dt^K-\dt^\ast\|_F\\
		=&\sqrt{\sum_{i=1}^n\left\|\sum_{r\in\bar{\cI}_i}\bR_{r, i}^{-1}\bP_{r, r}^{i, K}\bQ_{r, i}^T(\bL_r^K-\bL_r^\ast+\bS_r^K-\bS_r^\ast)\be_{f_r^i}\right\|_{\ell^2}^2}\\
		\leq&\sqrt{\sum_{i=1}^n{\left[\sum_{r\in\bar{\cI}_i}\|\bR_{r, i}^{-1}\|_2(l_{r, k}+\alpha_r\nu\sqrt{m_{r}d}s_{r, k})\right]}^2}\\
		\leq&\sqrt{\sum_{i=1}^n|\bar{\cI}_i|\sum_{r\in\bar{\cI}_i}\|\bR_{r, i}^{-1}\|_2^2{(l_{r, k}+\alpha_r\nu\sqrt{m_{r}d}s_{r, k})}^2}\\
		\leq&\sqrt{\sum_{i=1}^n\sum_{r\in\bar{\cI_i}}\max_i|\bar{\cI_i}|{\left[\max_r\|\bR_{r,i}\|_2(l_0+\alpha_r\nu\sqrt{d}s_0)\right]}^2\frac{\varepsilon^2}{\tilde{\mu}^2n_r}}\\
		\leq&\frac{\varepsilon}{\tilde{\mu}}\sqrt{R\max_i|\bar{\cI_i}}|\left[\max_r\|\bR_{r,i}\|_2(l_0+\alpha_r\nu\sqrt{d}s_0)\right]
    \end{align*}
	where $R$ is the number of overlapping regions.
\end{proof}

\subsection{Proof of the Key Induction Step}\label{subsec:key_multi}

From~\eqref{lupdate1_supp}
\begin{align}
    \sigma_r\bu_r&=\sigma^\ast_r({\bv^\ast_r}^T\bv_r)\bu^\ast_r+\bM_r^k\bv_r,\label{eqn:uv1}\\
    \sigma_r\bv_r^T&=\sigma_r^\ast(\bu_r^T\bu_r^\ast){\bv_r^\ast}^T+\bu_r^T\bM_r^k\label{eqn:vu1}.
\end{align}
Note $\bu_r^T\bu_r=1$, we have
\begin{equation}
	\frac{\sigma_r^\ast({\bv_r^\ast}^T\bv_r)({\bu_r^\ast}^T\bu_r)}{\sigma_r}-1=-\frac{\bu_r^T\bM_r^k\bv_r}{\sigma_r},\label{eqn:uMv}
\end{equation}
and by combining~\eqref{eqn:uv1}\eqref{eqn:vu1}\eqref{eqn:uMv} together,
\begin{align}
	&\bL_r^{k+1}-\bL^\ast_r\nonumber\\
	=&\,\sigma_r\bu_r\bv_r^T-\sigma_r^\ast\bu_r^\ast{\bv_r^\ast}^T\nonumber\\
    =&\,-\frac{\bu_r^T\bM_r^k\bv_r}{\sigma_r}\sigma_r^\ast\bu_r^\ast{\bv_r^\ast}^T+\frac{\sigma_r^\ast({\bv_r^\ast}^T\bv_r)\bu_r^\ast\bu_r^T\bM_r^k}{\sigma_r}+\frac{\sigma_r^\ast(\bu_r^T\bu_r^\ast)\bM_r^k\bv_r{\bv_r^\ast}^T}{\sigma_r}+\frac{\bM_r^k\bv_r\bu_r^T\bM_r^k}{\sigma_r}.\label{eqn:lerr1}
\end{align}
Plug~\eqref{eqn:vu1} into~\eqref{eqn:uv1} to cancel the $\bM_r^K\bv_r$ term:
\begin{align}
	&\sigma_r^2\bu_r-\bM_r^k{(\bM_r^k)}^T\bu_r=\sigma_r\sigma_r^\ast\bu_r^\ast({\bv_r^\ast}^T\bv_r)+\sigma_r^\ast(\bu_r^T\bu_r^\ast)\bM_r^k\bv_r^\ast.\label{eqn:u1}
\end{align}
We first derive a tight approximation of $\sigma_r$ by $\sigma_r^\ast$; to this end, we first show
\begin{lemma}
    \[
		\forall r, t\in\bar{\mathcal{I}}_i, \|\bP^{i, k}_{r, t}\|_2\leq \sqrt{\frac{\sigma_r^\ast}{\sigma_t^\ast}}\sqrt[4]{\frac{m_{t}n_t}{m_{r}n_r}}
    \]\label{lemm:p2}
\end{lemma}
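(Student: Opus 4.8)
The plan is to treat $\bP_{r,t}^{i,k}$ as a three-factor product organized around the symmetric matrix $A:=\sum_{l\in\bar{\cI}_i}\frac{\bR_{l,i}^T\bR_{l,i}}{\zeta_l^k}$, so that $\bP_{r,t}^{i,k}=\frac{1}{\zeta_t^k}\bR_{r,i}A^{-1}\bR_{t,i}^T$. Since each $\bR_{l,i}^T\bR_{l,i}$ is a Gram matrix and every $\zeta_l^k>0$, $A$ is a sum of positive semidefinite matrices; because the Jacobians have full column rank, each $\bR_{l,i}$ is an invertible $d\times d$ matrix and so $A\succ 0$ is invertible. First I would isolate the two ``diagonal'' contributions corresponding to the indices $r$ and $t$: discarding the remaining (PSD) summands gives the Loewner bounds $A\succeq\frac{1}{\zeta_r^k}\bR_{r,i}^T\bR_{r,i}$ and $A\succeq\frac{1}{\zeta_t^k}\bR_{t,i}^T\bR_{t,i}$.

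Next I would invert these orderings. Since inversion reverses the Loewner order on positive definite matrices, $A^{-1}\preceq\zeta_r^k(\bR_{r,i}^T\bR_{r,i})^{-1}$, and conjugating by $\bR_{r,i}$ together with the projection identity $\bR_{r,i}(\bR_{r,i}^T\bR_{r,i})^{-1}\bR_{r,i}^T=\bI_d$ (valid because $\bR_{r,i}$ is square and invertible) yields $\bR_{r,i}A^{-1}\bR_{r,i}^T\preceq\zeta_r^k\bI_d$. Equivalently $\|\bR_{r,i}A^{-1/2}\|_2^2=\|\bR_{r,i}A^{-1}\bR_{r,i}^T\|_2\le\zeta_r^k$, and symmetrically $\|A^{-1/2}\bR_{t,i}^T\|_2^2\le\zeta_t^k$, where $A^{-1/2}$ is the well-defined symmetric square root of $A^{-1}$.

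Factoring $A^{-1}=A^{-1/2}A^{-1/2}$ and applying submultiplicativity then gives the central estimate $\|\bR_{r,i}A^{-1}\bR_{t,i}^T\|_2\le\|\bR_{r,i}A^{-1/2}\|_2\,\|A^{-1/2}\bR_{t,i}^T\|_2\le\sqrt{\zeta_r^k\zeta_t^k}$, hence $\|\bP_{r,t}^{i,k}\|_2\le\sqrt{\zeta_r^k/\zeta_t^k}$. Finally I would unwind the definition $\zeta_r^k=\beta_1\frac{q^k}{\sqrt{m_rn_r}}\|\bL_r^k\|_2$ to write $\sqrt{\zeta_r^k/\zeta_t^k}=\sqrt[4]{\frac{m_tn_t}{m_rn_r}}\bigl(\|\bL_r^k\|_2/\|\bL_t^k\|_2\bigr)^{1/2}$ and replace the iterate norms $\|\bL_r^k\|_2$ by the true singular values $\sigma_r^\ast$ through the $\sigma$-closeness recorded in the induction (the multi-region analogue of~\eqref{eqn:sigma}), which turns the estimate into the stated $\sqrt{\frac{\sigma_r^\ast}{\sigma_t^\ast}}\sqrt[4]{\frac{m_tn_t}{m_rn_r}}$.

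The hard part is not any single calculation but choosing the decoupling that tames the off-diagonal product $\bR_{r,i}A^{-1}\bR_{t,i}^T$: because $A$ aggregates contributions from every region in $\bar{\cI}_i$, a naive norm split loses the relevant structure, whereas splitting $A^{-1}$ into two symmetric halves lets the Loewner domination by the $r$-th and $t$-th blocks act separately on each half. The projection identity $\bR_{r,i}(\bR_{r,i}^T\bR_{r,i})^{-1}\bR_{r,i}^T=\bI_d$ is what makes each half collapse to the clean factor $\sqrt{\zeta_r^k}$; the only non-algebraic step is the last substitution of $\sigma_r^\ast$ for $\|\bL_r^k\|_2$, which I would justify by the singular-value approximation established earlier in the induction rather than re-derive here.
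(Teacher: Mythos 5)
Your algebra through the intermediate estimate is sound: with $A=\sum_{l\in\bar{\cI}_i}\bR_{l,i}^T\bR_{l,i}/\zeta_l^k$, the Loewner bounds, their inversion, the collapse $\bR_{r,i}(\bR_{r,i}^T\bR_{r,i})^{-1}\bR_{r,i}^T=\bI$, and the split $A^{-1}=A^{-1/2}A^{-1/2}$ correctly yield $\|\bP_{r,t}^{i,k}\|_2\le\sqrt{\zeta_r^k/\zeta_t^k}$. The genuine gap is in your last step: replacing $\|\bL_r^k\|_2/\|\bL_t^k\|_2$ by $\sigma_r^\ast/\sigma_t^\ast$ is not free, and the slack it introduces breaks the stated constant. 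Within the induction, the only available control is the hypothesis $\frac{\zeta_r^k}{\zeta_t^k}\le\frac{2\sigma_r^\ast}{\sigma_t^\ast}\sqrt{\frac{m_tn_t}{m_rn_r}}$, and this factor $2$ cannot be removed: for $k\ge 1$ the $\sigma$-closeness $\frac{8}{9}\le\sigma_r^\ast/\sigma_r\le\frac{8}{7}$ only controls $\|\bL_r^k\|_2/\|\bL_t^k\|_2$ up to a factor $9/7$ of $\sigma_r^\ast/\sigma_t^\ast$, and at $k=0$ the quantity $\beta_{r,0}\|\bD_r\|_2$ is pinned down only within the interval $[\widetilde{\mu}\sigma_r^\ast,2\widetilde{\mu}\sigma_r^\ast]$, so the ratio $\zeta_r^0/\zeta_t^0$ can again exceed $\frac{\sigma_r^\ast}{\sigma_t^\ast}\sqrt{\frac{m_tn_t}{m_rn_r}}$ by a factor of $2$. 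Consequently your chain delivers at best $\|\bP_{r,t}^{i,k}\|_2\le\sqrt{2}\,\sqrt{\sigma_r^\ast/\sigma_t^\ast}\,\sqrt[4]{m_tn_t/(m_rn_r)}$, strictly weaker than the lemma; propagating that extra $\sqrt{2}$ would force reworking $\omega_r$, $\tilde{\alpha_r}$ and all the downstream smallness thresholds.

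The paper closes exactly this factor by using a different key inequality where you used submultiplicativity. With $\bW_i=A^{-1/2}$, $\bA=\bW_i\bR_{r,i}^T/\sqrt{\zeta_r^k}$ and $\bB=\bW_i\bR_{t,i}^T/\sqrt{\zeta_t^k}$, the matrix arithmetic--geometric mean inequality gives, for $r\neq t$,
\begin{align*}
\left\|\bA^T\bB\right\|_2\le\frac{1}{2}\left\|\bA\bA^T+\bB\bB^T\right\|_2=\frac{1}{2}\left\|\bW_i\left(\frac{\bR_{r,i}^T\bR_{r,i}}{\zeta_r^k}+\frac{\bR_{t,i}^T\bR_{t,i}}{\zeta_t^k}\right)\bW_i\right\|_2\le\frac{1}{2},
\end{align*}
because the two Gram blocks are \emph{jointly} dominated by the full sum $A$. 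Your Cauchy--Schwarz split $\|\bA^T\bB\|_2\le\|\bA\|_2\|\bB\|_2$ uses each block's domination separately and tops out at $1$; the joint bound gives $1/2$, and $\frac{1}{2}\cdot\sqrt{2}=\frac{1}{\sqrt{2}}\le 1$ is precisely what absorbs the factor-$2$ slack in the $\zeta$-ratio and produces the lemma's constant (for $r=t$ both arguments give $1$, which matches). If you replace your step-5 split by this inequality, the rest of your argument goes through unchanged.
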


\begin{proof}
	Let $\bW_i={\left[\sum_{l\in\bar{\mathcal{I}}_i}\frac{\bR_{l, i}^T\bR_{l, i}}{\zeta_l^k}\right]}^{-\frac{1}{2}}$. For $r=t$, since for $\bA, \bB$ positive semidefinite, $\|\bA+\bB\|_2\geq\|\bA\|_2$, we have
    \[
		\left\|\bW_i\frac{\bR_{r, i}^T\bR_{r, i}}{\zeta_r^k}\bW_i\right\|_2\leq\left\|\bW_i\left[\sum_{l\in\bar{\mathcal{I}}_i}\frac{\bR_{l,i}^T\bR_{l,i}}{\zeta_l^k}\right]\bW_i\right\|_2=1=\sqrt{\frac{\sigma_r^\ast}{\sigma_t^\ast}}\sqrt[4]{\frac{m_{r}n_r}{m_{r}n_r}};
    \]
	for $r\neq t$, by the Arithmetic-Geometric Mean Inequality $\|\bA^T\bB\|_2\leq\frac{1}{2}\|\bA\bA^T+\bB\bB^T\|_2$~\cite{bhatia_matrix_1997}, taking $\bA=\frac{\bW_i\bR_{r, i}^T}{\sqrt{\zeta_r^k}}$ and $\bB=\frac{\bW_i\bR_{t, i}^T}{\sqrt{\zeta_t^k}}$ gives (notice $r, t\in\bar{\mathcal{I}}_i$)
    \begin{align*}
        \left\|\frac{\bR_{r, i}}{\sqrt{\zeta_r^k}}\bW_i^2\frac{\bR^T_{t, i}}{\sqrt{\zeta_t^k}}\right\|_2&\leq\frac{1}{2}\left\|\bW_i\left(\frac{\bR^T_{r, i}\bR_{r, i}}{\zeta_r^k}+\frac{\bR_{t, i}^T\bR_{t, i}}{\zeta_t^k}\right)\bW_i\right\|_2\\
                                                                                                              &\leq\frac{1}{2}\left\|\bW_i\left(\sum_{l\in\bar{\mathcal{I}}_i}\frac{\bR_{l,i}^T\bR_{l,i}}{\zeta_l^k}\right)\bW_i\right\|_2=\frac{1}{2},
    \end{align*}
    and since $\|\bA\bA^T\|_2=\|\bA^T\bA\|_2$, we obtain
    \[
		\|\bP^{i, k}_{r, t}\|_2=\left\|\frac{\bR_{r, i}}{\sqrt{\zeta_r^k}}\bW_i^2\frac{\bR^T_{t, i}}{\sqrt{\zeta_t^k}}\right\|_2\sqrt{\frac{\zeta_r^k}{\zeta_t^k}}\leq\frac{1}{2}\sqrt{\frac{2\sigma_r^\ast}{\sigma_t^\ast}}\sqrt[4]{\frac{m_{t}n_t}{m_{r}n_r}}\leq\sqrt{\frac{\sigma_r^\ast}{\sigma_t^\ast}}\sqrt[4]{\frac{m_{t}n_t}{m_{r}n_r}}.
    \]
\end{proof}

\noindent From Lemma~\ref{lemm:p2}, $\sum_{t\in\bar{\cI}_i}\|\bP_{r, t}^{i, k}\|_2\frac{\sigma_t^\ast}{\sqrt{n_t}}\leq\omega_r\frac{\sigma_r^\ast}{\sqrt{n_r}}$ and $\sum_{t\in\bar{\cI}_i}\|\bP_{r, t}^{i, k}\|_2\alpha_t\frac{\sigma_t^\ast}{\sqrt{n_t}}\leq\tilde{\alpha_r}\frac{\sigma_r^\ast}{\sqrt{n_r}}$; furthermore,
\begin{lemma}
    \[
        \|\bQ_{r,i}^T\bE_r^k\be_l\|_{\ell^2}\leq\alpha_r\nu\sqrt{d}s_0\frac{\sigma_r^\ast q^k}{\sqrt{n_r}},\forall i\in\mathcal{I}_r, l = 1, 2, \dots, n_r.
    \]\label{prop:qle21}
\end{lemma}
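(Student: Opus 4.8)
The plan is to mirror the proof of Lemma~\ref{prop:qle2} (the single-region analogue) almost verbatim, taking care to replace the global pixel count $m$ by the region-specific count $m_r$ and to invoke the region-specific incoherence bound $\max_{1\le j\le m_r,\,i\in\cI_r}\|\bQ_{r,i}^T\be_j\|_{\ell^2}\le\nu\sqrt{d/m_r}$ in place of A.4. The central observation is that $\bE_r^k\be_l$ is simply the $l$-th column of $\bE_r^k$; since the induction hypothesis supplies $\supp(\bE_r^k)\subset\supp(\bS_r^\ast)$ and $\bS_r^\ast$ has at most a fraction $\alpha_r$ of nonzeros in each column, this column carries at most $\alpha_r m_r$ nonzero entries, each bounded in magnitude by $\|\bE_r^k\|_{\ell^\infty}=s_{r,k}$.

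First I would expand the squared norm by inserting two resolutions of the identity $\sum_{j}\be_j\be_j^T=\bI$ on $\mathbb{R}^{m_r}$, obtaining
\[
\|\bQ_{r,i}^T\bE_r^k\be_l\|_{\ell^2}^2=\sum_{j_1=1}^{m_r}\sum_{j_2=1}^{m_r}\be_l^T{(\bE_r^k)}^T\be_{j_1}\,\be_{j_1}^T\bQ_{r,i}\bQ_{r,i}^T\be_{j_2}\,\be_{j_2}^T\bE_r^k\be_l.
\]
Then I would bound the middle factor by Cauchy--Schwarz and the region-specific incoherence estimate, $\be_{j_1}^T\bQ_{r,i}\bQ_{r,i}^T\be_{j_2}\le(\max_j\|\bQ_{r,i}^T\be_j\|_{\ell^2})^2\le\nu^2 d/m_r$, and bound the remaining double sum by the column-sparsity count, $\sum_{j_1,j_2}|{(\bE_r^k)}_{j_1,l}||{(\bE_r^k)}_{j_2,l}|=\big(\sum_j|{(\bE_r^k)}_{j,l}|\big)^2\le(\alpha_r m_r s_{r,k})^2$. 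Multiplying the two estimates yields the intermediate bound $\|\bQ_{r,i}^T\bE_r^k\be_l\|_{\ell^2}\le\nu\alpha_r\sqrt{m_r d}\,s_{r,k}$, which is exactly the region-indexed form of Lemma~\ref{prop:qle2}.

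Finally I would substitute the induction hypothesis $s_{r,k}\le s_0\,\sigma_r^\ast q^k/\sqrt{m_r n_r}$; the factor $\sqrt{m_r}$ from $\sqrt{m_r d}$ cancels the $1/\sqrt{m_r n_r}$, leaving
\[
\nu\alpha_r\sqrt{m_r d}\,s_{r,k}\le\nu\alpha_r\sqrt{d}\,s_0\,\frac{\sigma_r^\ast q^k}{\sqrt{n_r}},
\]
which is precisely the claimed inequality. I do not expect a genuine obstacle here: the argument is a pure Cauchy--Schwarz plus sparsity-counting estimate, structurally identical to Lemma~\ref{prop:qle2}. The only points demanding care are bookkeeping ones---keeping the basis vectors $\be_{j_1},\be_{j_2}$ in $\mathbb{R}^{m_r}$ while $\be_l$ lives in $\mathbb{R}^{n_r}$, using $m_r$ (not the global $m$) everywhere in the incoherence and sparsity factors, and verifying that the $\sqrt{m_r}$ factors cancel cleanly so the result emerges in the normalized form $\sigma_r^\ast q^k/\sqrt{n_r}$ that the subsequent induction steps rely on.
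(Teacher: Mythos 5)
Your proposal is correct and matches the paper's own proof essentially step for step: the same insertion of two resolutions of the identity, the same Cauchy--Schwarz bound $\be_{j_1}^T\bQ_{r,i}\bQ_{r,i}^T\be_{j_2}\leq\nu^2d/m_r$ via the region-specific incoherence condition, the same column-sparsity count giving $(\alpha_r m_r s_{r,k})^2$, and the same final substitution of the induction hypothesis $s_{r,k}\leq s_0\sigma_r^\ast q^k/\sqrt{m_r n_r}$ with the $\sqrt{m_r}$ cancellation. No gaps; the bookkeeping points you flag (region-specific dimensions, $\be_l\in\mathbb{R}^{n_r}$ versus $\be_{j}\in\mathbb{R}^{m_r}$) are handled exactly as in the paper.
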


\begin{proof}
    Since $\bE_r^k$ has at most $\alpha_r$ fraction of non-zeros in each column, and using Cauchy-Schwartz inequality,
\begin{align*}
	\|\bQ_{r,i}^T\bE_r^k\be_l\|_{\ell^2}^2&=\sum_{j_1=1}^{m_r}\sum_{j_2=1}^{m_r}\be_l^T{(\bE_r^k)}^T\be_{j_1}\be_{j_1}^T\bQ_{r,i}\bQ_{r,i}^T\be_{j_2}\be_{j_2}^T\bE_r^k\be_l\\
										  &\leq\sum_{j_1,j_2}|{(\bE_r^k)}_{j_1, l}||{(\bE_r^k)}_{j_2, l}|{\left(\max_{1\leq p\leq m_r}\|\bQ_{r,i}^T\be_p\|_{\ell^2}\right)}^2\\
										  &\leq{\left(\nu\alpha_r\sqrt{m_{r}d}s_{r,k}\right)}^2\leq{\left(\alpha_r\nu\sqrt{d}s_0\frac{\sigma_r^\ast q^k}{\sqrt{n_r}}\right)}^2.
\end{align*}
\end{proof}
\noindent Combining Lemma~\ref{lemm:p2} and~\ref{prop:qle21}, we bound $\|\bF_r^k\|_F$ as:
\begin{align*}
	\|\bF_r^k\|_F&\leq\sqrt{n_r}\max_i\|\bF_r^k\be_i\|_{\ell^2}\leq\sqrt{n_r}\sum_{t\in\cU_r}\|\bP^{i, k}_{r, t}\|_2\alpha_t\nu\sqrt{d}s_0\frac{\sigma_t^\ast q^k}{\sqrt{n_t}}\\
    &\leq\tilde{\alpha_r}\nu\sqrt{d}s_0\sigma_r^\ast q^k\leq\tilde{\alpha_r}\nu\sqrt{d}s_0\sigma_r^\ast,
\end{align*}
similarly we have $\|\bG_r^k\|_F\leq l_0\omega_r\sigma_r^\ast$; finally, note that
\begin{align*}
	\|\bE^k_r\|_2\leq\sqrt{\|\bE_r^k\|_1\|\bE_r^k\|_\infty}\leq\sqrt{\alpha_r\alpha_r^\prime m_{r}n_r}s_{r, k}\leq\sqrt{\alpha_r\alpha_r^\prime}s_0\sigma_r^\ast,
\end{align*}
\begin{align}
    |\sigma_r-\sigma_r^\ast|&\leq\|\bM_r^k\|_2\leq\|\bE^k_r\|_2+\|\bH_r^k\|_F\leq\sigma_r^\ast\left[l_0\omega_r+s_0(\tilde{\alpha_r}\nu\sqrt{d}+\sqrt{\alpha_r\alpha_r^\prime})\right]\leq\frac{\sigma_r^\ast}{8},\label{eqn:eh21}
\end{align}
whenever $\tilde{\alpha_r}\leq\frac{1}{32s_0\nu\sqrt{d}}$ and $\alpha_r^\prime\leq\frac{1}{32s_0}$. Thus
\begin{equation}
    \frac{8}{9}\leq\frac{\sigma_r^\ast}{\sigma_r}\leq\frac{8}{7}.\label{eqn:lambda1}
\end{equation}
We next prove that $\bu_r$ and $\bv_r$ are also incoherent vectors. Indeed, from~(\ref{eqn:u1}), taking $\|\cdot\|_{\ell^\infty}$ on both sides yields
\begin{align}
	&\sigma_r^2\|\bu_r\|_{\ell^\infty}-\|\bM_r^k{(\bM_r^k)}^T\bu_r\|_{\ell^\infty}\leq\sigma_r\sigma_r^\ast\|\bu_r^\ast\|_{\ell^\infty}+\sigma_r^\ast\|\bM_r^k\bv_r^\ast\|_{\ell^\infty}\label{eqn:uincor1}.
\end{align}
To bound its individual terms, we show that
\begin{lemma}
    \[
		\|{(\bM_r^k)}^T\bu_r\|_{\ell^\infty}\leq\left[l_0\nu\sqrt{d}\omega_r+s_0(1+\nu^2d)\tilde{\alpha_r}\right]\sigma_r^\ast q^k\sqrt{\frac{m_r}{n_r}}{\|\bu_r\|_{l^\infty}}.
    \]\label{lemm:ehtinf1}
\end{lemma}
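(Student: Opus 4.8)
The plan is to mirror the single-region argument of Lemma~\ref{lemm:ehtinf}, decomposing $\bM_r^k=\bG_r^k+\bF_r^k+\bE_r^k$ and bounding the three constituents separately before summing. Since $\|{(\bM_r^k)}^T\bu_r\|_{\ell^\infty}=\max_{1\le p\le n_r}|\bu_r^T\bM_r^k\be_p|$, it suffices to bound $|\bu_r^T\bG_r^k\be_p|$, $|\bu_r^T\bF_r^k\be_p|$ and $|\bu_r^T\bE_r^k\be_p|$ individually and take the maximum over $p$. The one preliminary estimate I would record first is the Cauchy--Schwarz bound $\|\bQ_{r,i}^T\bu_r\|_{\ell^2}\le\nu\sqrt{m_{r}d}\,\|\bu_r\|_{\ell^\infty}$, derived exactly as in Lemma~\ref{lemm:ehtinf} but with $m$, $\bQ_i$ replaced by their region-$r$ counterparts and the region-wise incoherence constant $\nu$.

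For the $\bG_r^k$ term, the columns are indexed by $f_r^i$, $i\in\cI_r$; the expansion is $\bu_r^T\bG_r^k\be_{f_r^i}=\sum_{t\in\bar{\cI}_i}\bu_r^T\bQ_{r,i}\bP_{r,t}^{i,k}\bQ_{t,i}^T(\bL_t^k-\bL_t^\ast)\be_{f_t^i}$. I would bound each summand by $\|\bQ_{r,i}^T\bu_r\|_{\ell^2}\,\|\bP_{r,t}^{i,k}\|_2\,l_{t,k}$ (legitimate since $t\in\bar{\cI}_i$ forces $i\in\cI_t$), insert the preliminary estimate together with the induction hypothesis $l_{t,k}\le l_0\sigma_t^\ast q^k/\sqrt{n_t}$, and collapse the sum over $t$ using the consequence $\sum_{t\in\bar{\cI}_i}\|\bP_{r,t}^{i,k}\|_2\,\sigma_t^\ast/\sqrt{n_t}\le\omega_r\,\sigma_r^\ast/\sqrt{n_r}$ of Lemma~\ref{lemm:p2}; this produces the $l_0\nu\sqrt{d}\,\omega_r$ contribution. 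The $\bF_r^k$ term is handled identically, except that $\|\bQ_{t,i}^T(\bS_t^k-\bS_t^\ast)\be_{f_t^i}\|_{\ell^2}$ is controlled by Lemma~\ref{prop:qle21} rather than by $l_{t,k}$, and the collapse uses the weighted variant $\sum_{t\in\bar{\cI}_i}\|\bP_{r,t}^{i,k}\|_2\,\alpha_t\,\sigma_t^\ast/\sqrt{n_t}\le\tilde{\alpha_r}\,\sigma_r^\ast/\sqrt{n_r}$, yielding the $s_0\nu^2 d\,\tilde{\alpha_r}$ contribution. For the $\bE_r^k$ term I would argue directly as in Lemma~\ref{lemm:ehtinf}: each column of $\bE_r^k$ has at most $\alpha_r m_r$ nonzeros of magnitude at most $s_{r,k}$, so $|\bu_r^T\bE_r^k\be_p|\le\alpha_r m_r s_{r,k}\|\bu_r\|_{\ell^\infty}$, and combining $s_{r,k}\le s_0\sigma_r^\ast q^k/\sqrt{m_{r}n_r}$ with the elementary bound $\alpha_r\le\tilde{\alpha_r}$ (the $t=r$ summand, since $r\in\cU_r$) gives the $s_0\tilde{\alpha_r}$ contribution. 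Adding the three bounds reassembles the claimed coefficient $l_0\nu\sqrt{d}\,\omega_r+s_0(1+\nu^2d)\tilde{\alpha_r}$, each piece carrying the common factor $\sigma_r^\ast q^k\sqrt{m_r/n_r}\,\|\bu_r\|_{\ell^\infty}$.

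The term-by-term Cauchy--Schwarz bounding is routine, being a direct transcription of Lemma~\ref{lemm:ehtinf}. The genuinely new obstacle is the bookkeeping forced by multiple overlapping regions: unlike the single-region case, every entry of $\bM_r^k$ couples region $r$ to each neighboring region $t\in\cU_r$ through the matrices $\bP_{r,t}^{i,k}$, so I must convert all region-$t$ quantities ($\sigma_t^\ast$, $n_t$, $\alpha_t$, $l_{t,k}$, $s_{t,k}$) into region-$r$ scales. This conversion is exactly what the operator-norm estimate of Lemma~\ref{lemm:p2} and the aggregate constants $\omega_r$, $\tilde{\alpha_r}$ are engineered to supply; the care needed is to check that the ratio hypothesis $\zeta_r^k/\zeta_t^k\le 2(\sigma_r^\ast/\sigma_t^\ast)\sqrt{m_tn_t/(m_rn_r)}$ carried through the induction is compatible with these collapses, so that all region-$t$ dependence cancels and only $\omega_r$, $\tilde{\alpha_r}$ and the region-$r$ scale $\sigma_r^\ast q^k\sqrt{m_r/n_r}$ remain.
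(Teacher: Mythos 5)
Your proposal is correct and follows essentially the same route as the paper's own proof: the same decomposition $\bM_r^k=\bG_r^k+\bF_r^k+\bE_r^k$, the same preliminary Cauchy--Schwarz estimate $\|\bQ_{r,i}^T\bu_r\|_{\ell^2}\le\nu\sqrt{m_r d}\,\|\bu_r\|_{\ell^\infty}$, and the same collapses of the region-$t$ sums into $\omega_r$ and $\tilde{\alpha_r}$ via Lemma~\ref{lemm:p2} and Lemma~\ref{prop:qle21}. The only cosmetic difference is that you spell out the column-sparsity argument and the inequality $\alpha_r\le\tilde{\alpha_r}$ for the $\bE_r^k$ term, where the paper compresses this into the induced-norm bound $\|{(\bE_r^k)}^T\bu_r\|_{\ell^\infty}\le\|\bE_r^k\|_1\|\bu_r\|_{\ell^\infty}$.
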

\begin{proof}
For $i\in\cI_r$, using Cauchy-Schwartz inequality,
\begin{align*}
    \|\bQ_{r,i}^T\bu_r\|_{\ell^2}^2&=\sum_{j_1=1}^{m_r}\sum_{j_2=1}^{m_r}\bu_r^T\be_{j_1}\be_{j_1}^T\bQ_{r,i}\bQ_{r,i}^T\be_{j_2}\be_{j_2}^T\bu_r\\
								   &\leq m_r^2{\left(\nu\sqrt{\frac{d}{m_r}}\right)}^2\|\bu_r\|_{\ell^\infty}^2={(\nu\sqrt{m_{r}d}\|\bu_r\|_{\ell^\infty})}^2.
\end{align*}
Lemma~\ref{lemm:ehtinf1} comes from adding the following inequalities:
\begin{align*}
	\left|\be_{f_r^i}^T{(\bG_r^k)}^T\bu_r\right|&=\left|\sum_{t\in\bar{\cI}_i}\be_{f_t^i}^T{(\bL_t^k-\bL^\ast_t)}^T\bQ_{t,i}{\bP^{i, k}_{r,t}}^T\bQ_{r,i}^T\bu_r\right|\leq\sum_{t\in\cU_r}l_{t, k}\left\|{\bP^{i, k}_{r, t}}^T\right\|_2\left\|\bQ^T_{r, i}\bu_r\right\|_{\ell^2}\leq l_0\nu\sqrt{d}\omega_r\sigma_r^\ast q^k\sqrt{\frac{m_r}{n_r}}\|\bu_r\|_{\ell^\infty},\\
	\left|\be_{f_r^i}^T{(\bF_r^k)}^T\bu_r\right|&\leq\sum_{t\in\cU_r}\alpha_t\nu\sqrt{d}s_0\frac{\sigma_t^\ast q^k}{\sqrt{n_t}}\left\|{\bP^{i, k}_{r,t}}^T\right\|_2\left\|\bQ_{r,i}^T\bu_r\right\|_{\ell^2}\leq s_0\nu^2d\tilde{\alpha_r}\sigma_r^\ast q^k\sqrt{\frac{m_r}{n_r}}\|\bu_r\|_{\ell^\infty},\\
	\left\|{(\bE_r^k)}^T\bu_r\right\|_{\ell^\infty}&\leq\left\|\bE_r^k\right\|_1\|\bu_r\|_{\ell^\infty}\leq s_0\tilde{\alpha_r}\sigma_r^\ast q^k\sqrt{\frac{m_r}{n_r}}\|\bu_r\|_{\ell^\infty}.
\end{align*}
\end{proof}
\noindent Notice $q^k\leq 1$; using Lemma~\ref{lemm:ehtinf1} with $\tilde{\alpha_r}\leq\frac{1}{8s_0(1+\nu^2d)}$:
\begin{equation}
	\left\|{(\bM_r^k)}^T\bu_r\right\|_{\ell^\infty}\leq\left[l_0\nu\sqrt{d}\omega_r+s_0(1+\nu^2d)\tilde{\alpha_r}\right]\sigma_r^\ast\sqrt{\frac{m_r}{n_r}}\|\bu_r\|_{\ell^\infty}\leq\frac{1}{4}\sqrt{\frac{m_r}{n_r}}\sigma_r^\ast\|\bu_r\|_{\ell^\infty},\label{eqn:ehehtu1}
\end{equation}
\begin{lemma}
    $\forall\bv\in\mathbb{R}^{n_r}$,
    \[
        \left\|\bM_r^k\bv\right\|_{\ell^\infty}\leq\left[s_0(\alpha_r^\prime+\tilde{\alpha_r}\nu^2d)+l_0\nu\sqrt{d}\omega_r\right]\sigma_r^\ast q^k\sqrt{\frac{n_r}{m_r}}\|\bv\|_{\ell^\infty}.
    \]\label{lemm:ehinf1}
\end{lemma}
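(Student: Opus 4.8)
The plan is to mirror the single--region argument of Lemma~\ref{lemm:ehinf}, decomposing $\bM_r^k=\bG_r^k+\bF_r^k+\bE_r^k$ and bounding a generic entry $\be_p^T\bM_r^k\bv$ (for $1\le p\le m_r$) of the product by the triangle inequality across the three pieces; taking the maximum over $p$ then yields $\|\bM_r^k\bv\|_{\ell^\infty}$. The three resulting summands should line up, respectively, with the $l_0\nu\sqrt{d}\omega_r$, the $s_0\tilde{\alpha_r}\nu^2 d$, and the $s_0\alpha_r^\prime$ contributions in the stated bound.

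For the $\bG_r^k$ term I would expand $\be_p^T\bG_r^k\bv$ as a double sum over $i\in\cI_r$ and $t\in\bar{\cI_i}$ of the scalars $(\be_p^T\bQ_{r,i})\bP_{r,t}^{i,k}(\bQ_{t,i}^T(\bL_t^k-\bL_t^\ast)\be_{f_t^i})(\be_{f_r^i}^T\bv)$, and bound each factor in turn: $\|\bQ_{r,i}^T\be_p\|_{\ell^2}\le\nu\sqrt{d/m_r}$ by the incoherence bound on $\bQ_{r,i}$, $\|\bP_{r,t}^{i,k}\|_2$ by Lemma~\ref{lemm:p2}, the inner factor by $l_{t,k}\le l_0\sigma_t^\ast q^k/\sqrt{n_t}$ (its definition plus the induction hypothesis), and $|\be_{f_r^i}^T\bv|\le\|\bv\|_{\ell^\infty}$. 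Summing over $t$ then collapses $\sum_{t}\|\bP_{r,t}^{i,k}\|_2\sigma_t^\ast/\sqrt{n_t}$ into $\omega_r\sigma_r^\ast/\sqrt{n_r}$ via the first consequence of Lemma~\ref{lemm:p2}, and the outer sum over the $n_r$ indices in $\cI_r$ supplies a factor $n_r$ that combines with $\nu\sqrt{d/m_r}\cdot\sigma_r^\ast/\sqrt{n_r}$ to give exactly $l_0\nu\sqrt{d}\omega_r\,\sigma_r^\ast q^k\sqrt{n_r/m_r}\|\bv\|_{\ell^\infty}$. The $\bF_r^k$ term is handled identically, except that the inner factor $\|\bQ_{t,i}^T(\bS_t^k-\bS_t^\ast)\be_{f_t^i}\|_{\ell^2}=\|\bQ_{t,i}^T\bE_t^k\be_{f_t^i}\|_{\ell^2}$ is controlled through Lemma~\ref{prop:qle21} by $\alpha_t\nu\sqrt{d}s_0\sigma_t^\ast q^k/\sqrt{n_t}$, so that the second consequence of Lemma~\ref{lemm:p2} (the $\tilde{\alpha_r}$ weighted sum) produces the $s_0\tilde{\alpha_r}\nu^2 d$ contribution.

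The remaining $\bE_r^k$ term is the simplest: since $\bE_r^k$ inherits at most an $\alpha_r^\prime$ fraction of nonzeros per row, $|\be_p^T\bE_r^k\bv|\le\|\bv\|_{\ell^\infty}\sum_l|(\bE_r^k)_{pl}|\le\alpha_r^\prime n_r s_{r,k}\|\bv\|_{\ell^\infty}$, and the induction hypothesis $s_{r,k}\le s_0\sigma_r^\ast q^k/\sqrt{m_r n_r}$ turns this into $s_0\alpha_r^\prime\sigma_r^\ast q^k\sqrt{n_r/m_r}\|\bv\|_{\ell^\infty}$. Adding the three bounds and maximizing over $p$ gives the claim. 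I expect the only real subtlety to be the cross-region bookkeeping: one must carry the weights $\sigma_t^\ast/\sqrt{n_t}$ correctly so that the consequences of Lemma~\ref{lemm:p2} collapse the sums over $t\in\bar{\cI_i}$ into clean $\omega_r$ and $\tilde{\alpha_r}$ factors, and verify that the factor $n_r$ from the outer sum over $\cI_r$ merges with the normalizations to produce precisely the $\sqrt{n_r/m_r}$ scaling; everything else is routine Cauchy--Schwarz and operator-norm estimation, exactly as in the single-region Lemma~\ref{lemm:ehinf}.
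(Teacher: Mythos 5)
Your proposal is correct and follows essentially the same route as the paper's own proof: the same decomposition $\bM_r^k=\bG_r^k+\bF_r^k+\bE_r^k$, entrywise bounds via the incoherence of $\bQ_{r,j}$, Lemma~\ref{lemm:p2} (and its $\omega_r$, $\tilde{\alpha_r}$ weighted-sum consequences) for the cross-region terms, Lemma~\ref{prop:qle21} for the $\bF_r^k$ piece, and the row-sparsity of $\bE_r^k$ for the last piece. The bookkeeping you flag as the main subtlety — the factor $n_r$ from the sum over $\cI_r$ merging with $\nu\sqrt{d/m_r}\cdot\sigma_r^\ast/\sqrt{n_r}$ into $\sqrt{n_r/m_r}$ — works out exactly as you describe and matches the paper's computation.
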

\begin{proof}
    For $i=1, 2, \dots, m_r$, the following three inequalities give the desired result when added together,
    \begin{align*}
		|\be_i^T\bG_r^k\bv|&\leq\sum_{j\in\cI_r}\sum_{t\in\bar{\cI}_j}\left\|\be_i^T\bQ_{r,j}\right\|_{\ell^2}\left\|\bP^{i, k}_{r, t}\right\|_2l_{t, k}\left|{(\bv)}_{f_r^j}\right|\leq l_0\omega_r\nu\sqrt{d}\sigma_r^\ast q^k\sqrt{\frac{n_r}{m_r}}\|\bv\|_{\ell^\infty},\\
        \left\|\bE_r^k\bv\right\|_{\ell^\infty}&\leq\left\|\bE_r^k\right\|_\infty\|\bv\|_{\ell^\infty}\leq s_0\alpha_r^\prime\sigma_r^\ast q^k\sqrt{\frac{n_r}{m_r}}\|\bv\|_{\ell^\infty},\\
		\left|\be_i^T\bF_r^k\bv\right|&\leq \sum_j\sum_t\left\|\be_i^T\bQ_{r,j}\right\|_{\ell^2}\left\|\bP^{j, k}_{r, t}\right\|_2\left(\alpha_t\nu\sqrt{d}s_0\frac{\sigma_t^\ast q^k}{\sqrt{n_t}}\right)\|\bv\|_{\ell^\infty}\leq s_0\nu^2d\tilde{\alpha_r}\sigma_r^\ast q^k\sqrt{\frac{n_r}{m_r}}\|\bv\|_{\ell^\infty}.
    \end{align*}
\end{proof}
\noindent Invoking Lemma~\ref{lemm:ehtinf1},~\ref{lemm:ehinf1} with $\tilde{\alpha_r}\leq\frac{1}{16s_0\nu^2d},\alpha_r^\prime\leq\frac{1}{8s_0}$:
\begin{align}
	\left\|\bM_r^k{(\bM_r^k)}^T\bu_r\right\|_{\ell^\infty}&\leq\frac{\sigma_r^\ast}{4}\sqrt{\frac{n_r}{m_r}}\left\|{(\bM_r^k)}^T\bu_r\right\|_{\ell^\infty}\leq\frac{{\sigma_r^\ast}^2}{16}\|\bu_r\|_{\ell^\infty}\nonumber,\\
    \left\|\bM_r^k\bv_r^\ast\right\|_{\ell^\infty}&\leq\frac{\sigma_r^\ast}{4}\sqrt{\frac{n_r}{m_r}}\|\bv_r^\ast\|_{\ell^\infty}\leq\frac{\mu\sigma_r^\ast}{4\sqrt{m_r}}.\label{eqn:ehv1}
\end{align}
Plugging~(\ref{eqn:ehehtu1}) and~(\ref{eqn:ehv1}) into~(\ref{eqn:uincor1}) to obtain (similarly for $\bv_r$)
\begin{equation}
    \|\bu_r\|_{\ell^\infty}\leq \frac{2\mu}{\sqrt{m_r}}, \|\bv_r\|_{\ell^\infty}\leq \frac{2\mu}{\sqrt{n_r}}.\label{eqn:u1incor1}
\end{equation}
Now using Lemma~\ref{lemm:ehtinf1},~\ref{lemm:ehinf1} again, together with~(\ref{eqn:u1incor1}):
\begin{align}
	\left\|{(\bM_r^k)}^T\bu_r\right\|_{\ell^\infty}&\leq\left[\frac{1}{8}+2\tilde{\alpha_r}\mu s_0(1+\nu^2d)\right]\frac{\sigma_r^\ast q^k}{\sqrt{n_r}},\label{eqn:ehu1inf1}\\
    \left\|\bM_r^k\bv_r\right\|_{\ell^\infty}&\leq\left[\frac{1}{8}+2s_0\mu(\alpha_r^\prime+\tilde{\alpha_r}\nu^2d)\right]\frac{\sigma_r^\ast q^k}{\sqrt{m_r}}.\label{eqn:ehv1inf1}
\end{align}
We then proceed to bound $l_{r, k+1}$ in subsequent steps:
\begin{lemma}
    For $i\in\mathcal{I}_r$, $n_r\geq 5\omega_r\mu^2$, $0<\delta\leq\frac{1}{20\omega_r\mu^2}$,
    \[
        \left\|\bQ_{r,i}^T\bM_r^k\bv_r\right\|_{\ell^2}\leq\left(\frac{l_0}{2\mu}+\frac{5s_0\tilde{\alpha_r}\mu\nu\sqrt{d}}{2}\right)\sigma_r^\ast q^k.
    \]\label{lemm:ql21}
\end{lemma}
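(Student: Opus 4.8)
The plan is to decompose $\bM_r^k=\bG_r^k+\bF_r^k+\bE_r^k$ and bound $\|\bQ_{r,i}^T\bM_r^k\bv_r\|_{\ell^2}$ one term at a time, mirroring the single-region argument behind Lemma~\ref{lemm:ql2} while absorbing the region-coupling operators $\bP_{r,t}^{j,k}$ through Lemma~\ref{lemm:p2}. First I would write $\bG_r^k=\sum_{j\in\cI_r}\bQ_{r,j}\mathbf{g}_j\be_{f_r^j}^T$ with the $d$-vector $\mathbf{g}_j:=\sum_{t\in\bar{\cI}_j}\bP_{r,t}^{j,k}\bQ_{t,j}^T(\bL_t^k-\bL_t^\ast)\be_{f_t^j}$, so that $\|\mathbf{g}_j\|_{\ell^2}\leq\sum_{t\in\bar{\cI}_j}\|\bP_{r,t}^{j,k}\|_2\,l_{t,k}$. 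Invoking the induction hypothesis $l_{t,k}\leq l_0\sigma_t^\ast q^k/\sqrt{n_t}$ for every region $t$ together with the estimate $\sum_{t\in\bar{\cI}_i}\|\bP_{r,t}^{i,k}\|_2\,\sigma_t^\ast/\sqrt{n_t}\leq\omega_r\sigma_r^\ast/\sqrt{n_r}$ derived from Lemma~\ref{lemm:p2}, this yields the uniform bound $\|\mathbf{g}_j\|_{\ell^2}\leq l_0\omega_r q^k\sigma_r^\ast/\sqrt{n_r}$.

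Applying $\bQ_{r,i}^T$ on the left and $\bv_r$ on the right, I would split the sum over $j\in\cI_r$ into the diagonal term $j=i$ (where $\|\bQ_{r,i}^T\bQ_{r,i}\|_2=1$) and the off-diagonal terms, controlling the latter by the incoherence assumption $\sum_{j\neq i}\|\bQ_{r,i}^T\bQ_{r,j}\|_2\leq(n_r-1)\delta$ and each entry of $\bv_r$ by $\|\bv_r\|_{\ell^\infty}\leq 2\mu/\sqrt{n_r}$ from~\eqref{eqn:u1incor1}. This gives $\|\bQ_{r,i}^T\bG_r^k\bv_r\|_{\ell^2}\leq\frac{2\mu l_0\omega_r\sigma_r^\ast q^k}{n_r}\,[1+(n_r-1)\delta]$, and the crux of this term is the elementary estimate $\frac{\omega_r}{n_r}[1+(n_r-1)\delta]\leq\frac{\omega_r}{n_r}+\omega_r\delta\leq\frac{1}{5\mu^2}+\frac{1}{20\mu^2}=\frac{1}{4\mu^2}$, which uses exactly the hypotheses $n_r\geq 5\omega_r\mu^2$ and $\delta\leq 1/(20\omega_r\mu^2)$ and collapses the contribution to $\frac{l_0}{2\mu}\sigma_r^\ast q^k$.

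I would treat $\bF_r^k=-\sum_{j\in\cI_r}\bQ_{r,j}\mathbf{f}_j\be_{f_r^j}^T$, with $\mathbf{f}_j:=\sum_{t\in\bar{\cI}_j}\bP_{r,t}^{j,k}\bQ_{t,j}^T\bE_t^k\be_{f_t^j}$, in exactly the same fashion: under the standing induction hypothesis $\supp(\bE_t^k)\subset\supp(\bS_t^\ast)$, Lemma~\ref{prop:qle21} supplies $\|\bQ_{t,j}^T\bE_t^k\be_{f_t^j}\|_{\ell^2}\leq\alpha_t\nu\sqrt{d}\,s_0\sigma_t^\ast q^k/\sqrt{n_t}$, and the companion estimate $\sum_{t\in\bar{\cI}_i}\|\bP_{r,t}^{i,k}\|_2\,\alpha_t\sigma_t^\ast/\sqrt{n_t}\leq\tilde{\alpha_r}\sigma_r^\ast/\sqrt{n_r}$ turns the region sum into $\tilde{\alpha_r}$; after the same diagonal/off-diagonal split this contributes at most $\frac{\nu\sqrt{d}\,s_0\tilde{\alpha_r}}{2\omega_r\mu}\sigma_r^\ast q^k$. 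The remaining term $\bE_r^k$ I would bound by the direct Cauchy--Schwarz computation of~\eqref{eqn:qev} adapted to region $r$, namely $\|\bQ_{r,i}^T\bE_r^k\bv_r\|_{\ell^2}\leq\alpha_r\nu n_r\sqrt{m_r d}\,s_{r,k}\|\bv_r\|_{\ell^\infty}$; substituting $s_{r,k}\leq s_0\sigma_r^\ast q^k/\sqrt{m_r n_r}$, $\|\bv_r\|_{\ell^\infty}\leq 2\mu/\sqrt{n_r}$, and $\alpha_r\leq\tilde{\alpha_r}$ (valid because $r\in\cU_r$ contributes the term $\alpha_r$ to $\tilde{\alpha_r}$) yields $2\mu\tilde{\alpha_r}\nu\sqrt{d}\,s_0\sigma_r^\ast q^k$. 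Using $\mu\geq 1$ and $\omega_r\geq 1$, the $\bF_r^k$ and $\bE_r^k$ contributions together are at most $\frac{5}{2}\mu\tilde{\alpha_r}\nu\sqrt{d}\,s_0\sigma_r^\ast q^k$, and adding the $\bG_r^k$ bound produces precisely the claimed estimate.

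The main obstacle is the careful bookkeeping of the region-dependent weights. Each $\bP_{r,t}^{j,k}$ carries a factor $\sqrt{\sigma_r^\ast/\sigma_t^\ast}\,\sqrt[4]{m_t n_t/(m_r n_r)}$ from Lemma~\ref{lemm:p2}, and I must verify that summing over $t\in\bar{\cI}_j$ produces exactly the aggregate constants $\omega_r$ and $\tilde{\alpha_r}$ of the statement, uniformly over $j\in\cI_r$, without incurring any hidden dependence on $n_r$ or on the relative sizes of neighboring regions. Matching the final numerical constant $5/2$ likewise forces each of the three termwise bounds to be pushed to the tight thresholds on $\delta$, $n_r$, and $\alpha_r$ rather than to loose versions, so the delicate part is confirming that these thresholds remain mutually compatible with the ranges assumed for the remaining parameters.
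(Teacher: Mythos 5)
Your proposal is correct and follows essentially the same route as the paper's own proof: the same decomposition $\bM_r^k=\bG_r^k+\bF_r^k+\bE_r^k$, the same aggregation of the region-coupling operators into $\omega_r$ and $\tilde{\alpha_r}$ via Lemma~\ref{lemm:p2} and Lemma~\ref{prop:qle21}, the same diagonal/off-diagonal split controlled by $[1+(n_r-1)\delta]$ and the incoherence bound $\|\bv_r\|_{\ell^\infty}\leq 2\mu/\sqrt{n_r}$, and the same Cauchy--Schwarz estimate for the $\bE_r^k$ term. The only cosmetic deviations are that your intermediate $\bF_r^k$ bound retains an extra factor $\frac{1}{\omega_r\mu^2}$ before being relaxed (the paper relaxes earlier), and you verify $\frac{\omega_r}{n_r}[1+(n_r-1)\delta]\leq\frac{1}{4\mu^2}$ by splitting into two terms rather than asserting it directly; the final constants agree exactly.
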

\begin{proof}
    Under our choice of $\delta$, $1+(n_r-1)\delta\leq\frac{n_r}{4\omega_r\mu^2}$; thus
\begin{align}
	\left\|\bQ_{r,i}^T\bG_r^k\bv_r\right\|_{\ell^2}&\leq\sum_{j\in\cI_r}\sum_{t\in\bar{\cI}_j}\left\|\bQ_{r, i}^T\bQ_{r, j}\right\|_2\left\|{\bP^{j, k}_{r, t}}^T\right\|_2l_{t, k}\|\bv_r\|_{\ell^\infty}\nonumber\\
												   &\leq[1+(n_r-1)\delta]\omega_{r}l_0\frac{\sigma_r^\ast q^k}{\sqrt{n_r}}\|\bv_r\|_{\ell^\infty}\leq\frac{l_0}{4\mu^2}\sigma_r^\ast q^k\sqrt{n_r}\|\bv_r\|_{\ell^\infty}\leq\frac{l_0}{2\mu}\sigma_r^\ast q^k,\label{eqn:qgv}
\end{align}
and Lemma~\ref{lemm:ql21} comes from adding~(\ref{eqn:qgv}) with the following two inequalities. Note we invoke Lemma~\ref{prop:qle21} in~\eqref{eqn:qfv}:
\begin{align}
    \|\bQ_{r,i}^T\bF_r^k\bv_r\|_{\ell^2}&\leq[1+(n_r-1)\delta]\left(\tilde{\alpha_r}\nu\sqrt{d}s_0\frac{\sigma_r^\ast q^k}{\sqrt{n_r}}\right)\|\bv_r\|_{\ell^\infty}\leq\frac{s_0}{4}\tilde{\alpha_r}\nu\sqrt{d}\sigma_r^\ast q^k\sqrt{n_r}\|\bv_r\|_{\ell^\infty}\leq\frac{s_0}{2}\tilde{\alpha_r}\mu\nu\sqrt{d}\sigma_r^\ast q^k,\label{eqn:qfv}
\end{align}
\begin{align*}
	\|\bQ^T_{r,i}\bE_r^k\bv_r\|_{\ell^2}^2&=\bv_r^T{(\bE_r^k)}^T\bQ_{r,i}\bQ_{r,i}^T\bE_r^k\bv_r=\sum_{j_1=1}^{m_r}\sum_{j_2=1}^{m_r}\bv_r^T{(\bE_r^k)}^T\be_{j_1}\be_{j_1}^T\bQ_{r,i}\bQ_{r,i}^T\be_{j_2}\be_{j_2}^T\bE_r^k\bv_r\\
										  &\leq{\left(\nu\sqrt{\frac{d}{m_r}}\right)}^2{\left(\sum_{i=1}^{n_r}\sum_{j=1}^{m_r}{(\bE_r^k)}_{ji}{(\bv_r)}_i\right)}^2\leq{\left(\nu\sqrt{\frac{d}{m_r}}\right)}^2{\left(\alpha_r m_{r}n_r\|\bE_r^k\|_{\ell^\infty}\|\bv_r\|_{\ell^\infty}\right)}^2\leq{\left(2s_0\tilde{\alpha_r}\mu\nu\sqrt{d}\sigma_r^\ast q^k\right)}^2.\nonumber
\end{align*}
\end{proof}
\noindent Also, working in the same manner as in~(\ref{eqn:eh21}), we obtain
\begin{equation}
    |\bu_r^T\bM_r^k\bv_r|\leq\left[l_0\omega_r+s_0(\tilde{\alpha_r}\nu\sqrt{d}+\sqrt{\alpha_r\alpha_r^\prime})\right]\sigma_r^\ast q^k.\label{eqn:uehv1}
\end{equation}
$\forall i\in\cI_r$, a combination of~\eqref{eqn:lerr1}~\eqref{eqn:ehu1inf1}~\eqref{eqn:uehv1} and Lemma~\ref{lemm:ql21} yields
\begin{align}
    &\|\bQ_{r,i}^T(\bL_r^{k+1}-\bL_r^\ast)\be_{f_r^i}\|_{\ell^2}\nonumber\\
	\leq&\frac{\sigma_r^\ast|\bu_r^T\bM_r^k\bv_r|}{\sigma_r}\frac{\kappa\mu\sqrt{d}}{\sqrt{m_{r}n_r}}+\frac{\sigma_r^\ast\kappa}{\sigma_r}\sqrt{\frac{d}{m_r}}\|{(\bM_r^k)}^T\bu_r\|_{\ell^\infty}+\frac{\|\bQ_i^T\bM_r^k\bv_r\|_{\ell^2}}{\sigma_r}\left[\frac{\sigma_r^\ast\mu}{\sqrt{n_r}}+\|{(\bM_r^k)}^T\bu_r\|_{\ell^\infty}\right]\nonumber\\
	\leq&\frac{8}{7}\frac{\sigma_r^\ast q^k}{\sqrt{n_r}}\biggl\{\kappa\sqrt{\frac{d}{m_r}}\biggl[\frac{1}{16\nu\sqrt{d}}+\frac{1}{8}+2\mu\tilde{\alpha_r}\left(1+\nu^2d+\frac{\nu\sqrt{d}}{2}\right)s_0+\mu s_0\sqrt{\tilde{\alpha_r}\alpha_r^\prime}\biggr]+\frac{5}{4}\left(\frac{5}{2}\tilde{\alpha_r}\mu^2\nu\sqrt{d}s_0+\frac{l_0}{2}\right)\biggr\},\label{ineq:const1}
\end{align}
where in~(\ref{ineq:const1}) we use~(\ref{eqn:lambda1}) and the following inequality
\begin{align*}
	\frac{\sigma_r^\ast\mu}{\sqrt{n_r}}+\|{(\bM_r^k)}^T\bu_r\|_{\ell^\infty}\leq\frac{\sigma_r^\ast\mu}{\sqrt{n_r}}\left\{1+\left[\frac{1}{8\mu}+2\tilde{\alpha_r}(1+\nu^2d)s_0\right]\right\}\leq\frac{\sigma_r^\ast\mu}{\sqrt{n_r}}\left[1+\left(\frac{1}{8}+\frac{1}{8}\right)\right]\leq\frac{5}{4}\frac{\sigma_r^\ast\mu}{\sqrt{n_r}},
\end{align*}
whenever $\tilde{\alpha_r}\leq\frac{1}{16s_0(1+\nu^2d)}$. Now for some $d\leq\frac{\sqrt{m_r}}{32\mu^2\nu\kappa\omega_r}$, $\tilde{\alpha_r},\alpha_r^\prime$ sufficiently small (in the order of $\mathcal{O}(d^{-1})$,$\mathcal{O}(d^{-\frac{1}{2}})$, respectively), $\exists q<1$ such that
\[
    l_{r,k+1}\leq\frac{1}{16\mu\nu\omega_r\sqrt{d}}\frac{\sigma_r^\ast q^{k+1}}{\sqrt{n_r}}=l_0\frac{\sigma_r^\ast q^{k+1}}{\sqrt{n_r}}.
\]
On the other hand, using~(\ref{eqn:lerr1})~(\ref{eqn:ehu1inf1})~(\ref{eqn:ehv1inf1})~(\ref{eqn:uehv1}),
\begin{align}
    &\|\bL_r^{k+1}-\bL_r^\ast\|_{\ell^\infty}\nonumber\\
	\leq&\frac{\sigma_r^\ast|\bu_r^T\bM_r^k\bv_r|}{\sigma_r}\frac{\mu^2}{\sqrt{m_{r}n_r}}+\frac{\sigma_r^\ast}{\sigma_r}\frac{\mu}{\sqrt{m_r}}\|{(\bM_r^k)}^T\bu_r\|_{\ell^\infty}+\frac{\sigma_r^\ast\|\bM_r^k\bv_r\|_{\ell^\infty}}{\sigma_r}\frac{\mu}{\sqrt{n_r}}+\frac{\|\bM_r^k\bv_r\|_{\ell^\infty}\|{(\bM_r^k)}^T\bu_r\|_{\ell^\infty}}{\sigma_r}\nonumber\\
	\leq&\frac{\sigma_r^\ast q^k\mu^2}{\sqrt{m_{r}n_r}}\frac{\sigma_r^\ast}{\sigma_r}\biggl\{l_0\omega_r+\frac{1}{4\mu}+4\tilde{\alpha_r}\nu^2ds_0+2(\tilde{\alpha_r}+\alpha_r^\prime)s_0+(\tilde{\alpha_r}\nu\sqrt{d}+\sqrt{\tilde{\alpha_r}\alpha_r^\prime})s_0\nonumber\\
    +&\left[\frac{1}{8\mu}+2\tilde{\alpha_r}(1+\nu^2d)s_0\right]\left[\frac{1}{8\mu}+2(\alpha_r^\prime+\tilde{\alpha_r}\nu^2d)s_0\right]\biggr\}\nonumber\\
	\leq&\frac{7}{8}\frac{\sigma_r^\ast q^{k+1}\mu^2}{\sqrt{m_{r}n_r}}\leq\frac{\beta_1}{2}\frac{\sigma_r q^{k+1}}{\sqrt{m_{r}n_r}},\label{eqn:linf1}
\end{align}
again for some $\tilde{\alpha_r},\alpha_r^\prime$ sufficiently small (same orders as~(\ref{ineq:const1})) and some $q<1$. Under the same conditions,
\begin{align*}
	|{(\bH_r^k)}_{ij}|&\leq\sum_{t\in\bar{\cI}_j}\left|\be_i^T\bQ_{r,j}\bP_{r, t}^{j, k}\bQ_{t, j}^T(\bL_t^k-\bL_t^\ast+\bS_t^k-\bS_t^\ast)\be_{f_t(j)}\right|\\
					  &\leq\frac{\sigma_r^\ast q^k}{\sqrt{m_{r}n_r}}\nu\sqrt{d}(l_0\omega_r+s_0\tilde{\alpha_r}\nu\sqrt{d})\leq\frac{\sigma_r q^{k+1}}{\sqrt{m_{r}n_r}}\frac{\beta_1}{2},
\end{align*}
and thus $|{(\bL_r^\ast-\bL_r^{k+1}+\bH_r^k)}_{ij}|\leq\zeta_r^{k+1},\forall(i, j)\not\in\supp(\bS_r^\ast)$. From~(\ref{eqn:supdate1_supp}), ${(\bS_r^{k+1})}_{ij}=0$ and $\supp(\bS_r^{k+1})\subset\supp(\bS_r^\ast)$. Let $\bN_r^k=\bL_r^\ast-\bL_r^{k+1}+\bH_r^k+\bS_r^\ast$, then
\begin{align*}
	|{(\bE_r^{k+1})}_{ij}|=|{(\bS_r^\ast-\bS_r^{k+1})}_{ij}|=
    \begin{cases}
		|{(\bL_r^{k+1}-\bL_r^\ast-\bH_r^k\pm\zeta_r^{k+1})}_{ij}| & \text{ if }|{(\bN_r^k)}_{ij}|>\zeta_r^{k+1},\\
		|{(\bS_r^\ast)}_{ij}| & \text{ if }|{(\bN_r^k)}_{ij}|\leq\zeta_r^{k+1},
    \end{cases}
\end{align*}
in both cases we have $|{(\bE_r^{k+1})}_{ij}|\leq\|\bL_r^{k+1}-\bL_r^\ast\|_{\ell^\infty}+\|\bH_r^k\|_{\ell^\infty}+\zeta_r^{k+1}$ and
\[
	s_{r,k+1}\leq2\zeta_r^{k+1}=\frac{2\beta_1q^{k+1}}{\sqrt{m_{r}n_r}}\sigma_r\leq s_0\frac{q^{k+1}}{\sqrt{m_{r}n_r}}\sigma_r^\ast.
\]
\qedhere

\section{Additional Experimental Results}\label{sec:stitch}
This section includes additional experimental results that cannot be included
in the paper due to space constraints.  \tcr{In Fig.~\ref{fig:rasl_align_skyscraper} we further justify the rank-1 constraint by comparing with RASL over the {\it skyscraper\/} dataset~\cite{chang_shape-preserving_2014}, under the same settings as Section~\ref{subsec:rasl}.} In Fig.~\ref{fig:apartments_aligned} we
show aligned and overlayed images for the {\it apartments\/}
dataset~\cite{gao_constructing_2011}. It can be observed that CPW, SPHP and
APAP attempt to align the images according to the thicket, and thus introduce
significant misalignments on the facet of the apartment building. In contrast,
BRAS reliably and successfully achieves a more sensible alignment.

Fig.~\ref{fig:railtracks_stitched} shows stitched images for the {\it
railtracks\/} dataset~\cite{zaragoza_as-projective-as-possible_2014}. While
AutoStitch, ICE, CPW and SPHP either bend the cranes or distort the railtracks,
APAP and BRAS are both able to deliver high-quality stitched images.

\tcr{Fig.~\ref{fig:skyscraper} to Fig.~\ref{fig:couch} show additional image
stitching results on different image sets. These cover a large variety of
interesting scenarios, including multiple images in Fig.~\ref{fig:skyscraper},
Fig.~\ref{fig:forest}, and Fig.~\ref{fig:rooftops}, and multiple homographies
in Fig.~\ref{fig:temple}, Fig.~\ref{fig:carpark} and Fig.~\ref{fig:couch}, etc.}

As was discussed in Section~\ref{subsec:large_objects}, a
particularly challenging case of image alignment is when there are large moving
objects. Fig.~\ref{fig:catabus_aligned} shows one such image set and the aligned
images by APAP, CPW, and BRAS\@. Feature based methods must largely rely on the
moving object features for alignment and the large motion means that the
accuracy of the alignment is fundamentally limited. 

We observe that, APAP and CPW fail to align the static backgrounds accurately,
because they are heavily influenced by the bus motion. \tcr{In contrast, BRAS
achieves good alignment accuracy overall. We note that BRAS being a pixel based method obtains information from the entire image and therefore can automatically adapt to both foreground and background motion. Feature based methods on the other hand must rely on whatever features are detected, which in cases such as this one can be inadequate for accurate alignment and subsequent stitching.}

\tcr{Finally, we further verify the effectiveness of the rank-1 constraint in BRAS by comparing it with a multi-cell extension of~\eqref{eqn:pairwise}, i.e., by integrating the model discussed in Section~\ref{subsec:multiple} into~\eqref{eqn:pairwise}, in Fig.~\ref{fig:multi_cell}. Correspondingly, a smoothness term similar to~\eqref{eqn:formulate_multiple} is added. Evidently, a simple multi-cell extension (without enforcing the rank-1 constraint) yields unsatisfactory result. Furthermore, we observe that, experimentally the multi-cell extension is not stable and frequently diverges, while BRAS reliably performs accurate alignment.}

\begin{figure*}
	\centering
	\subfloat[RASL~\cite{peng_rasl:_2012}]{%
		\begin{tikzpicture}
			\node {\includegraphics[height=0.56\textheight]{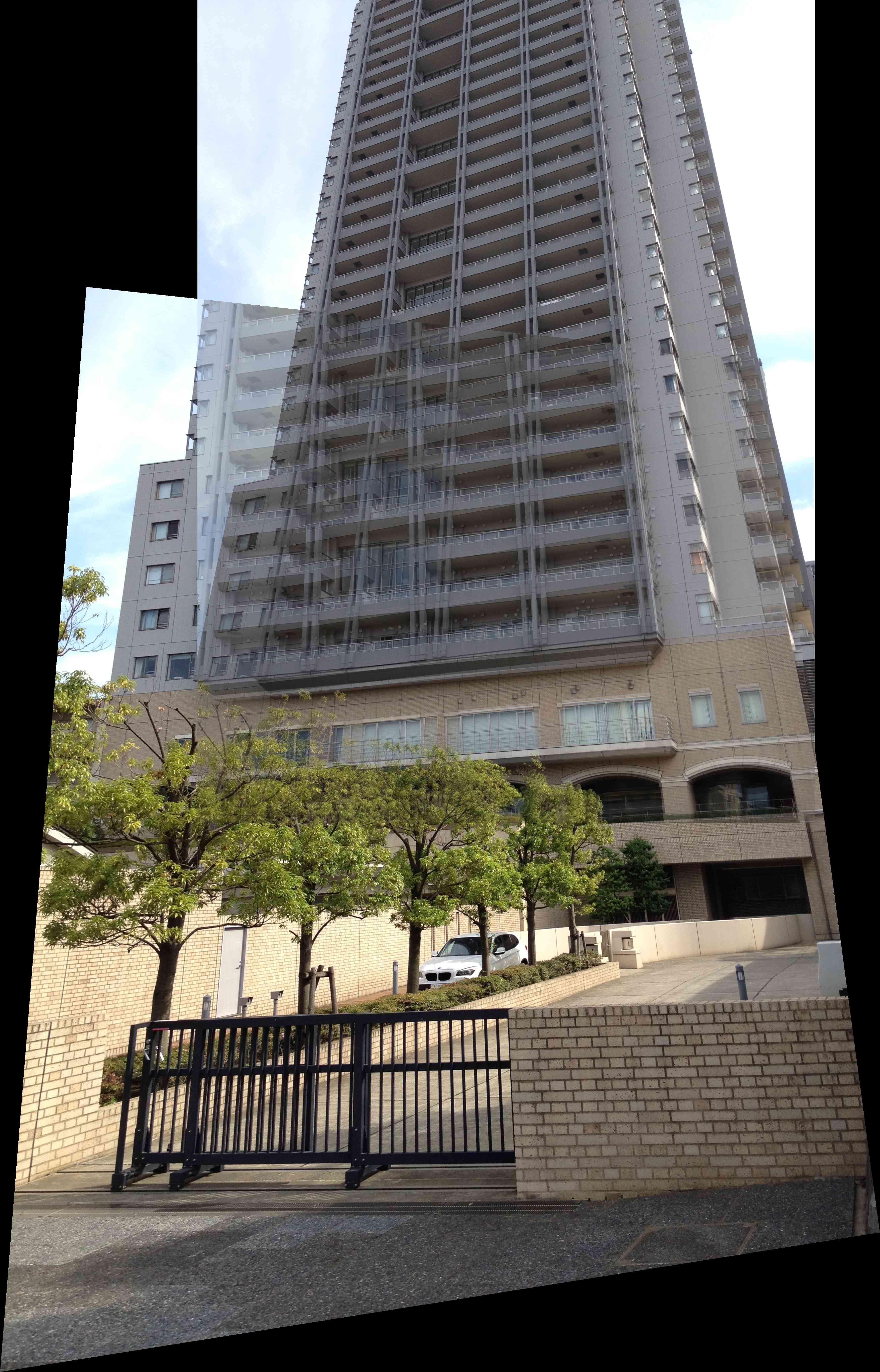}};
			\draw[thick,red] (0.4,1.8) ellipse (2.5cm and 1.5cm);
		\end{tikzpicture}
	}\hspace{-4mm}
	\subfloat[BRAS]{
		\begin{tikzpicture}
			\node {\includegraphics[height=0.56\textheight]{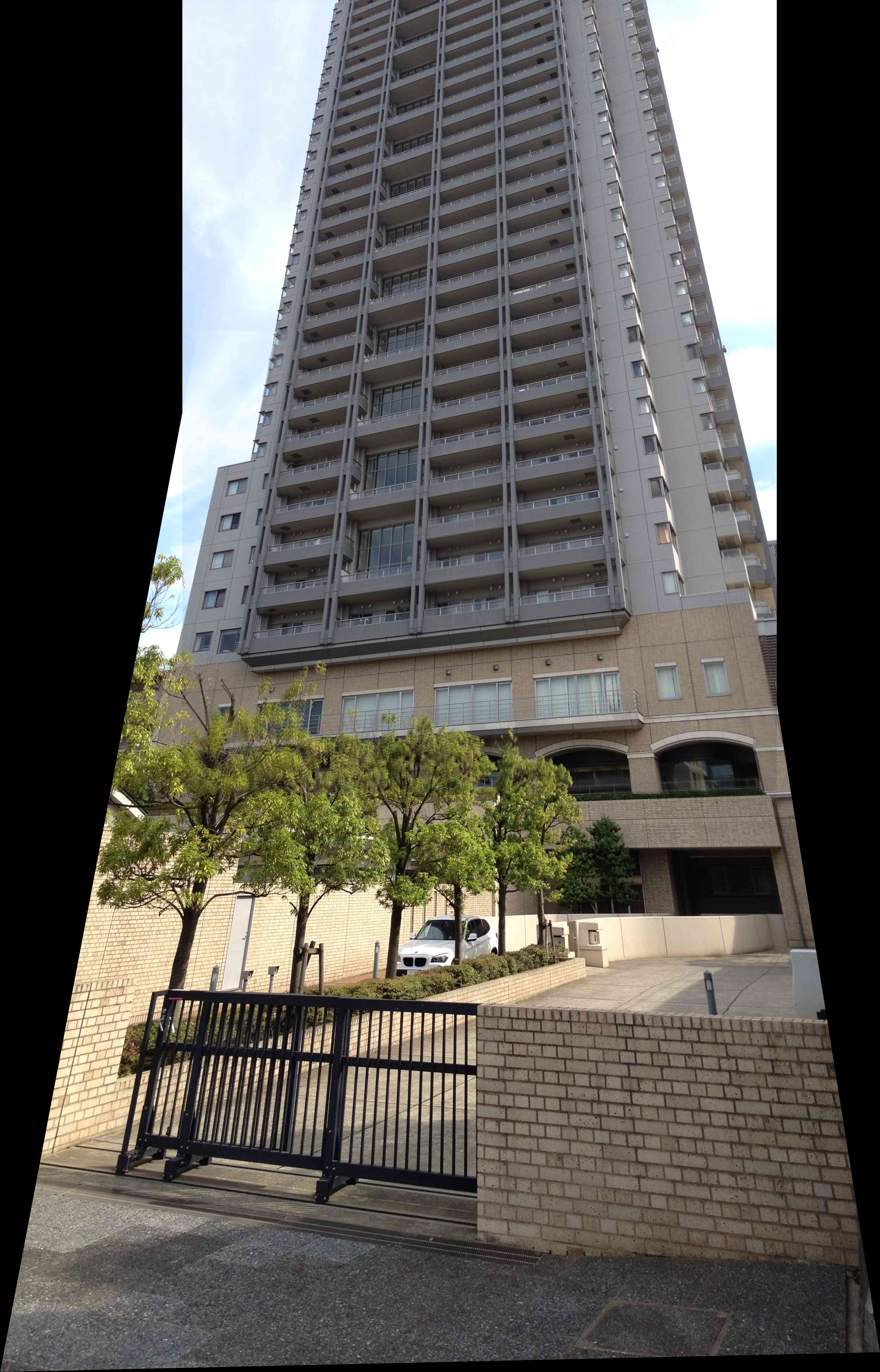}};
		\end{tikzpicture}
	}
	\caption{\tcr{Additional example to verify the effectiveness of the rank-1 constraint: comparison on the \emph{skyscraper} dataset~\cite{chang_shape-preserving_2014} between RASL and BRAS\@. The aligned images are overlayed and misaligned regions are highlighted in red circles. BRAS aligns the images significantly better, as can be seen by the reduction in ghosting artifacts.}}\label{fig:rasl_align_skyscraper}
\end{figure*}

\begin{figure}
	\centering
	\subfloat[CPW~\cite{hu_multi-objective_2015}\label{fig:apartments_CPW_aligned}] {%
		\begin{tikzpicture}[zoomboxarray, zoomboxes below, zoomboxarray rows=1]
		\node [image node] {\includegraphics[height=0.19\textheight]{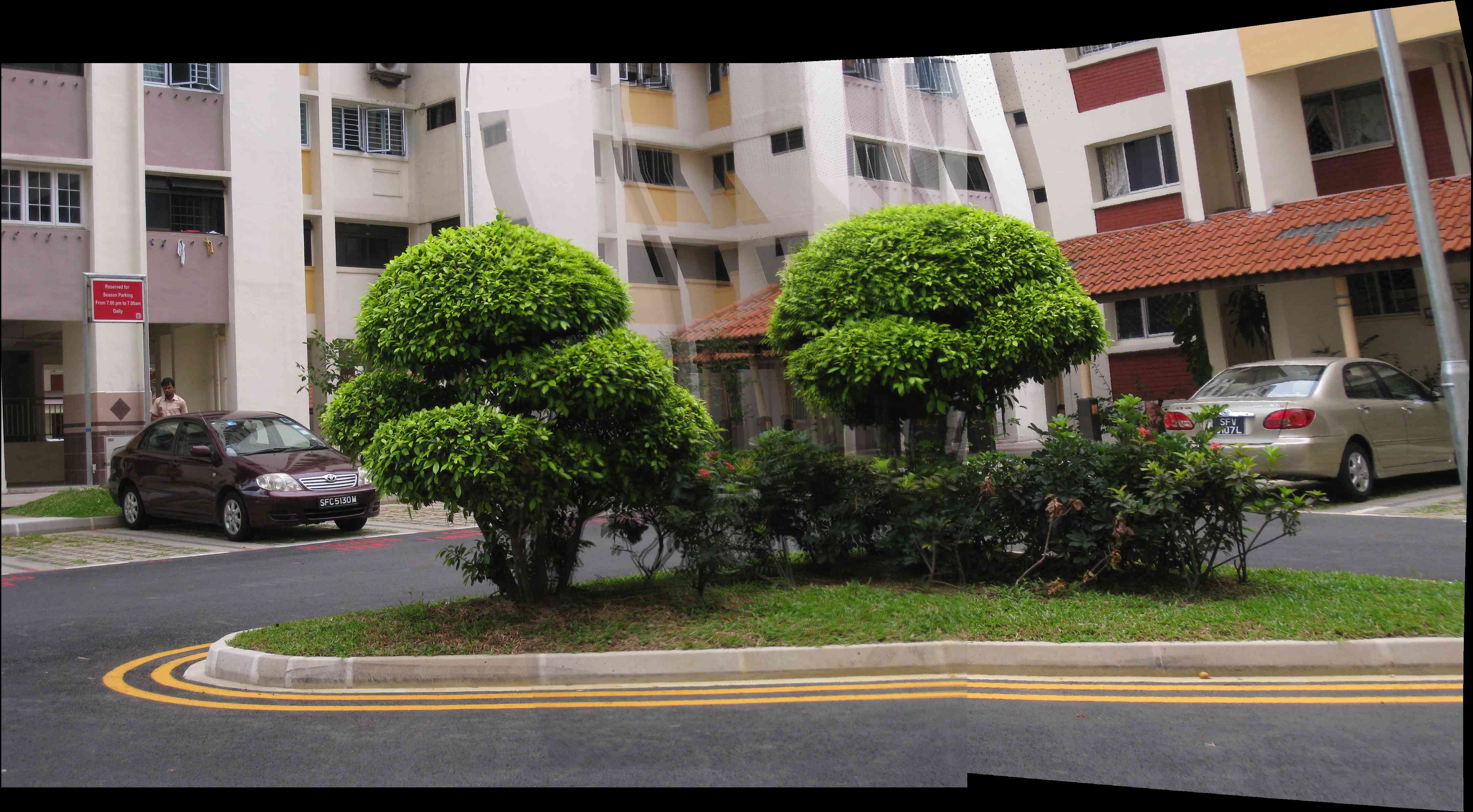}};
		\zoombox[color code=green]{0.47,0.73}
		\zoombox[color code=blue]{0.62,0.73}
		\end{tikzpicture}
	}\,\!
	\subfloat[SPHP~\cite{chang_shape-preserving_2014}\label{fig:apartments_SPHP_aligned}] {%
		\begin{tikzpicture}[zoomboxarray, zoomboxes below, zoomboxarray rows=1]
		\node [image node] {\includegraphics[height=0.19\textheight]{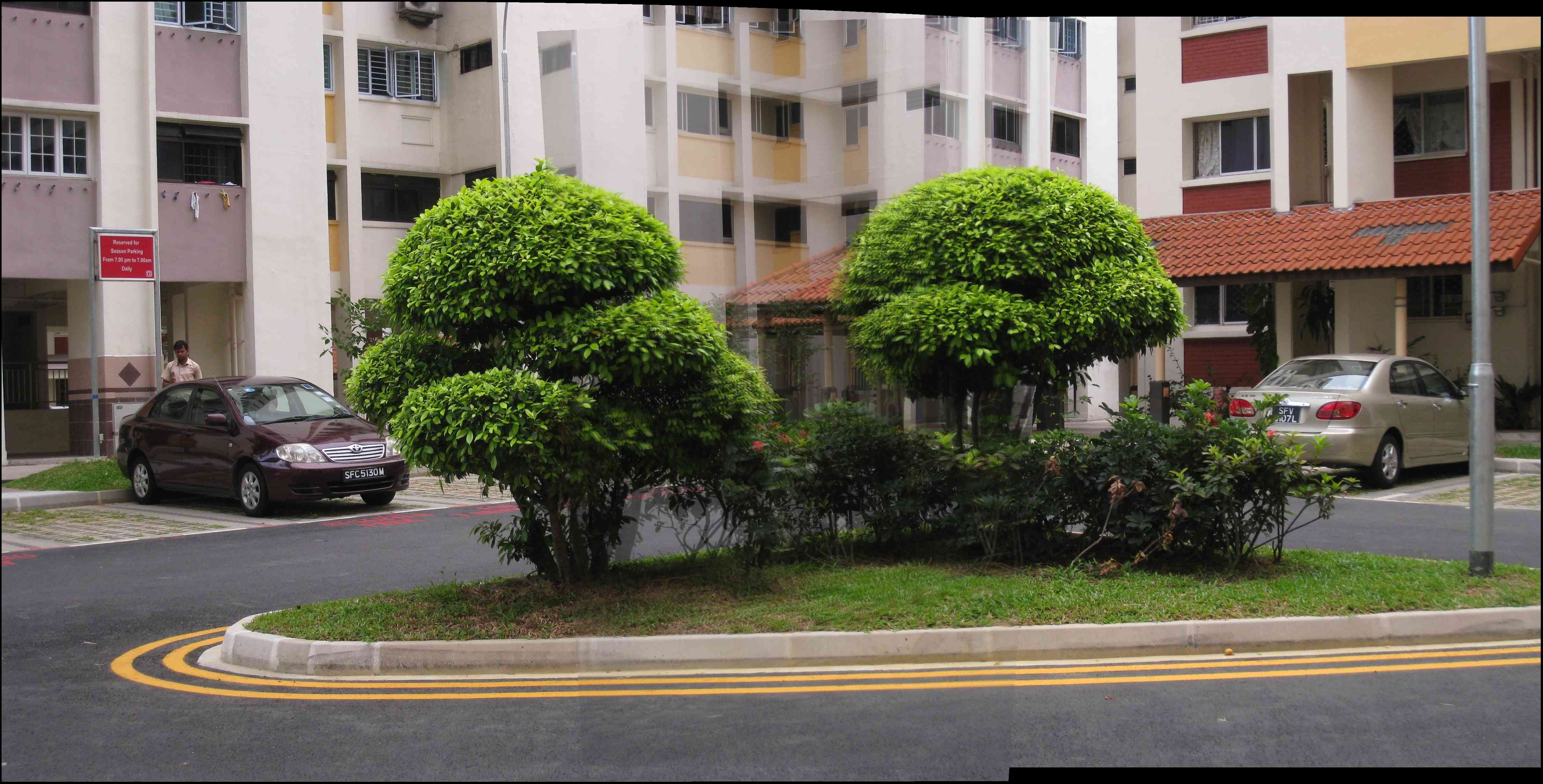}};
		\zoombox[color code=green]{0.5,0.73}
		\zoombox[color code=blue]{0.65,0.77}
		\end{tikzpicture}
	}\\
	\subfloat[APAP~\cite{zaragoza_as-projective-as-possible_2014}\label{fig:apartments_APAP_aligned}] {%
		\begin{tikzpicture}[zoomboxarray, zoomboxes below, zoomboxarray rows=1]
		\node [image node] {\includegraphics[height=0.186\textheight]{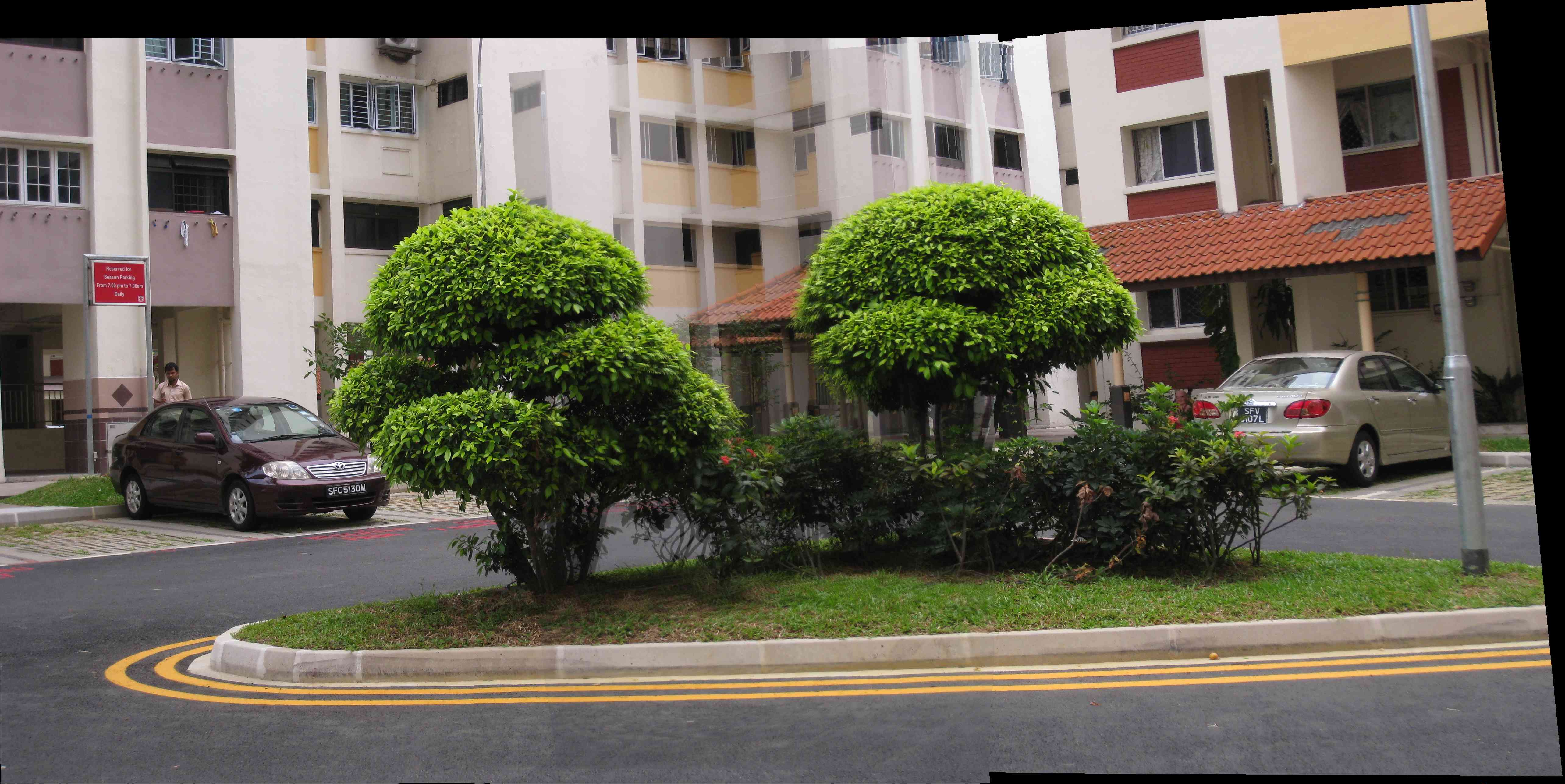}};
		\zoombox[color code=green]{0.5,0.75}
		\zoombox[color code=blue]{0.65,0.79}
		\end{tikzpicture}
	}\,\!
	\subfloat[BRAS\label{fig:apartments_BRAS_aligned}] {%
		\begin{tikzpicture}[zoomboxarray, zoomboxes below, zoomboxarray rows=1]
		\node [image node] {\includegraphics[height=0.186\textheight]{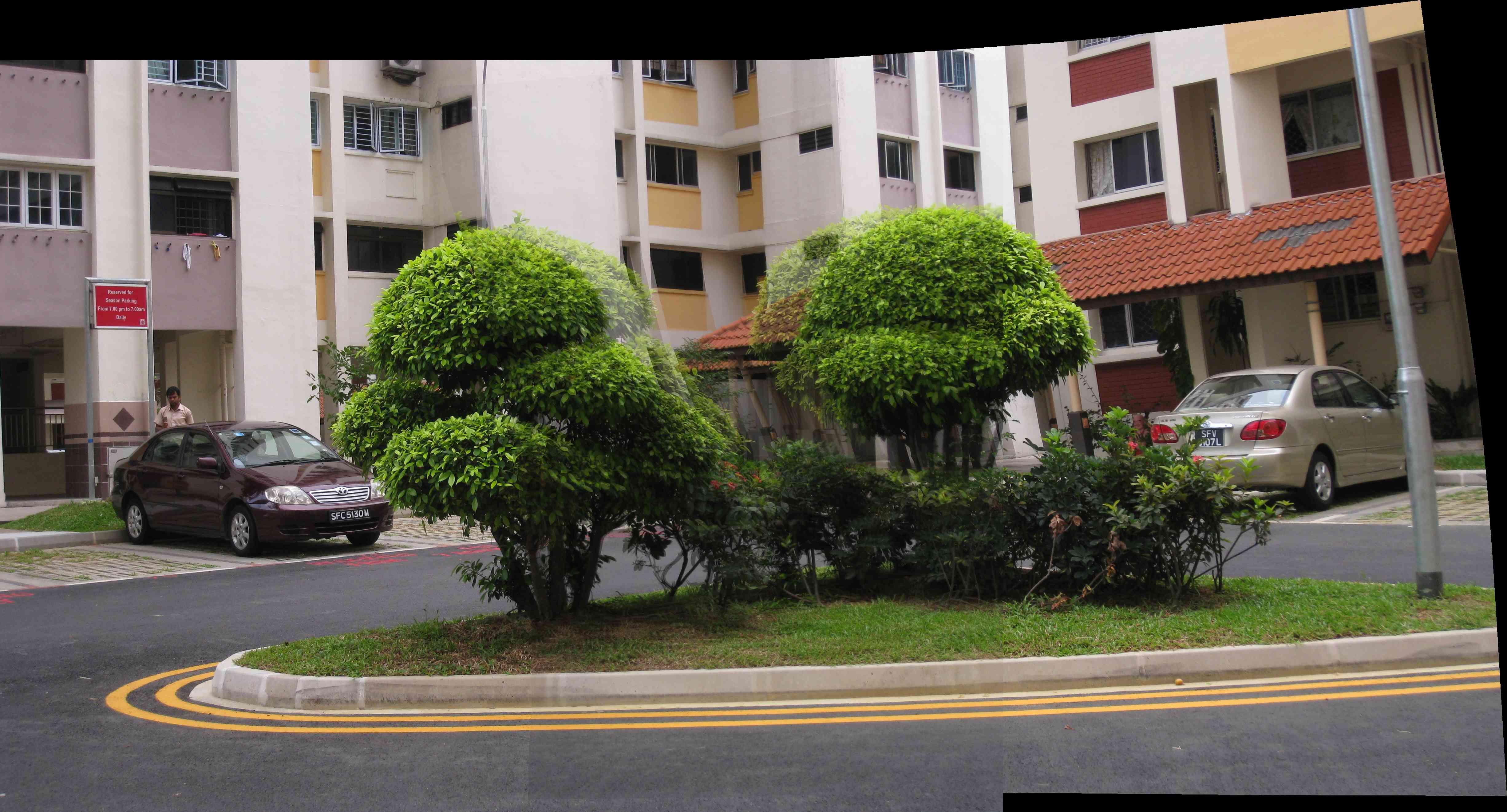}};
		\zoombox[color code=green]{0.45,0.73}
		\zoombox[color code=blue]{0.6,0.77}
		\end{tikzpicture}
	}\\	
	\caption{Aligned and overlayed images on the {\it apartments\/} dataset~\cite{gao_constructing_2011}. The misalignments from \protect\subref{fig:apartments_CPW_aligned} CPW, \protect\subref{fig:apartments_SPHP_aligned} SPHP and \protect\subref{fig:apartments_APAP_aligned} APAP on the facets of the building can be clearly seen in the insets at the bottom.\label{fig:apartments_aligned}}
\end{figure}

\begin{figure}
	\centering
	\subfloat[AutoStitch~\cite{brown_automatic_2007}] {%
		\begin{tikzpicture}[zoomboxarray, zoomboxes below, zoomboxarray rows=1]
		\node [image node] {\includegraphics[height=0.13\textheight]{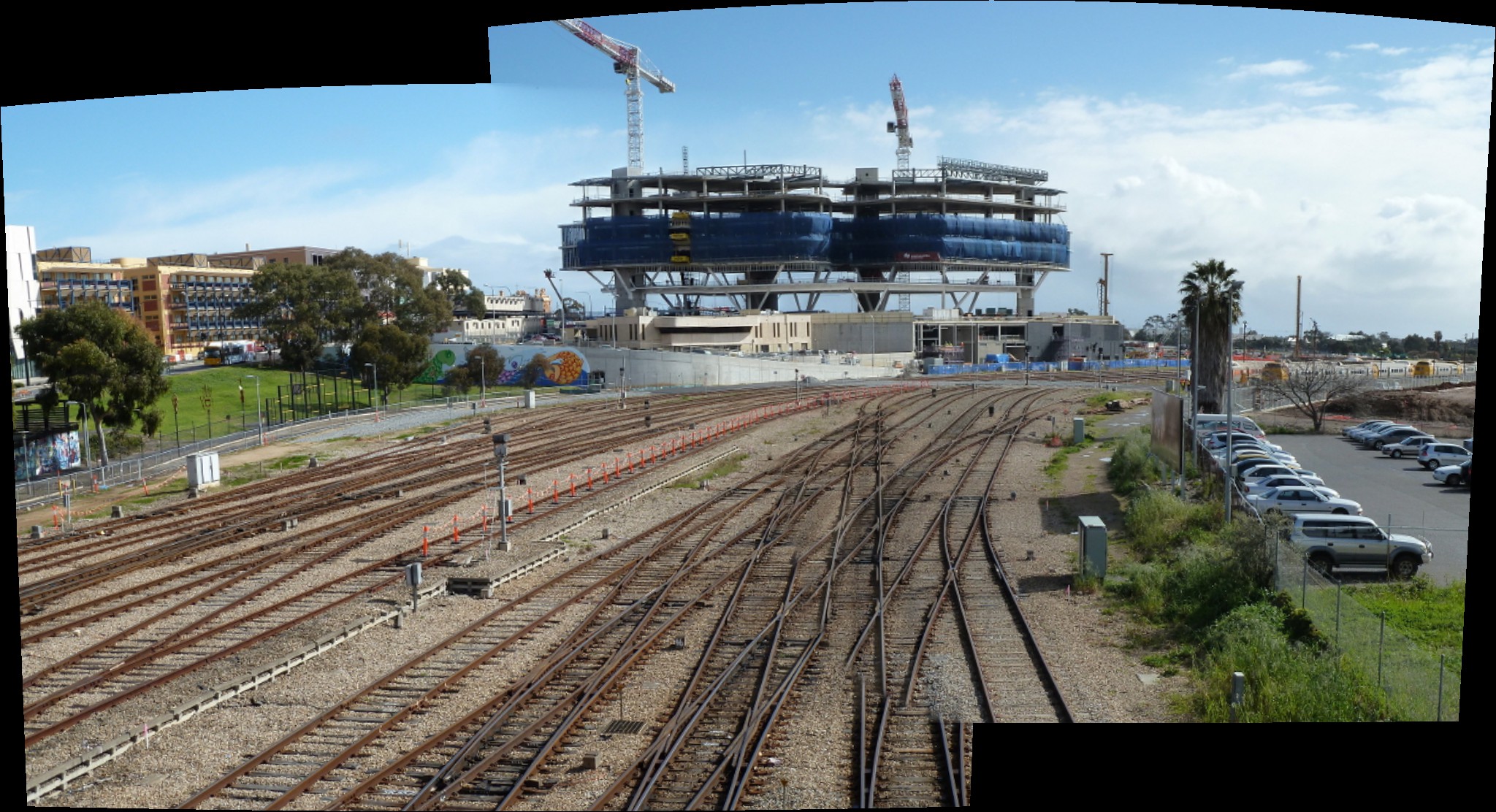}};
		\zoombox[color code=green]{0.4,0.7}
		\zoombox[color code=blue]{0.6,0.2}
		\end{tikzpicture}
	}\,\!
	\subfloat[ICE~\cite{ICE}] {%
		\begin{tikzpicture}[zoomboxarray, zoomboxes below, zoomboxarray rows=1]
		\node [image node] {\includegraphics[height=0.13\textheight]{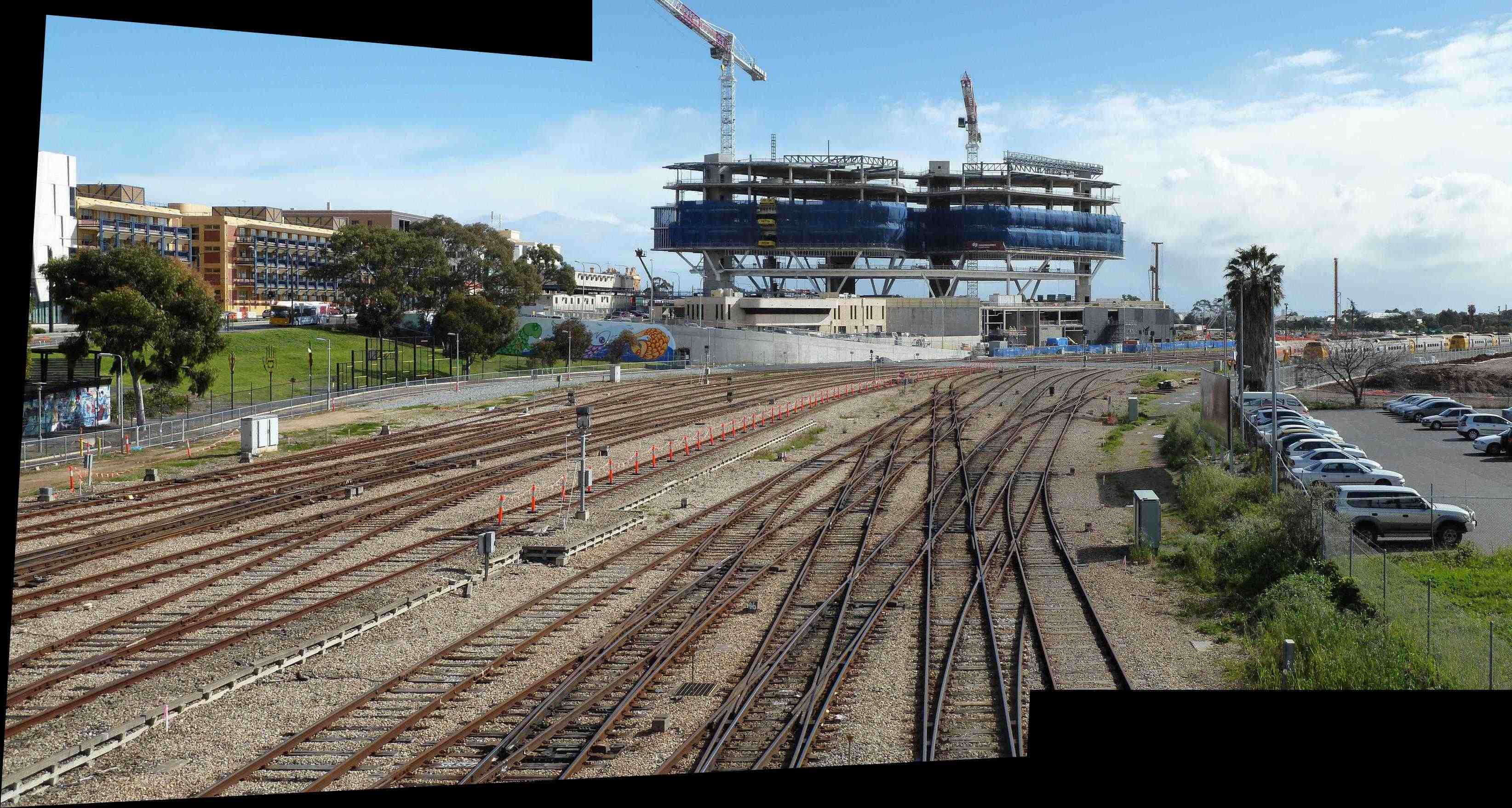}};
		\zoombox[color code=green]{0.46,0.72}
		\zoombox[color code=blue]{0.63,0.235}
		\end{tikzpicture}
	}\,\!
	\subfloat[CPW~\cite{hu_multi-objective_2015}] {%
		\begin{tikzpicture}[zoomboxarray, zoomboxes below, zoomboxarray rows=1]
		\node [image node] {\includegraphics[height=0.13\textheight]{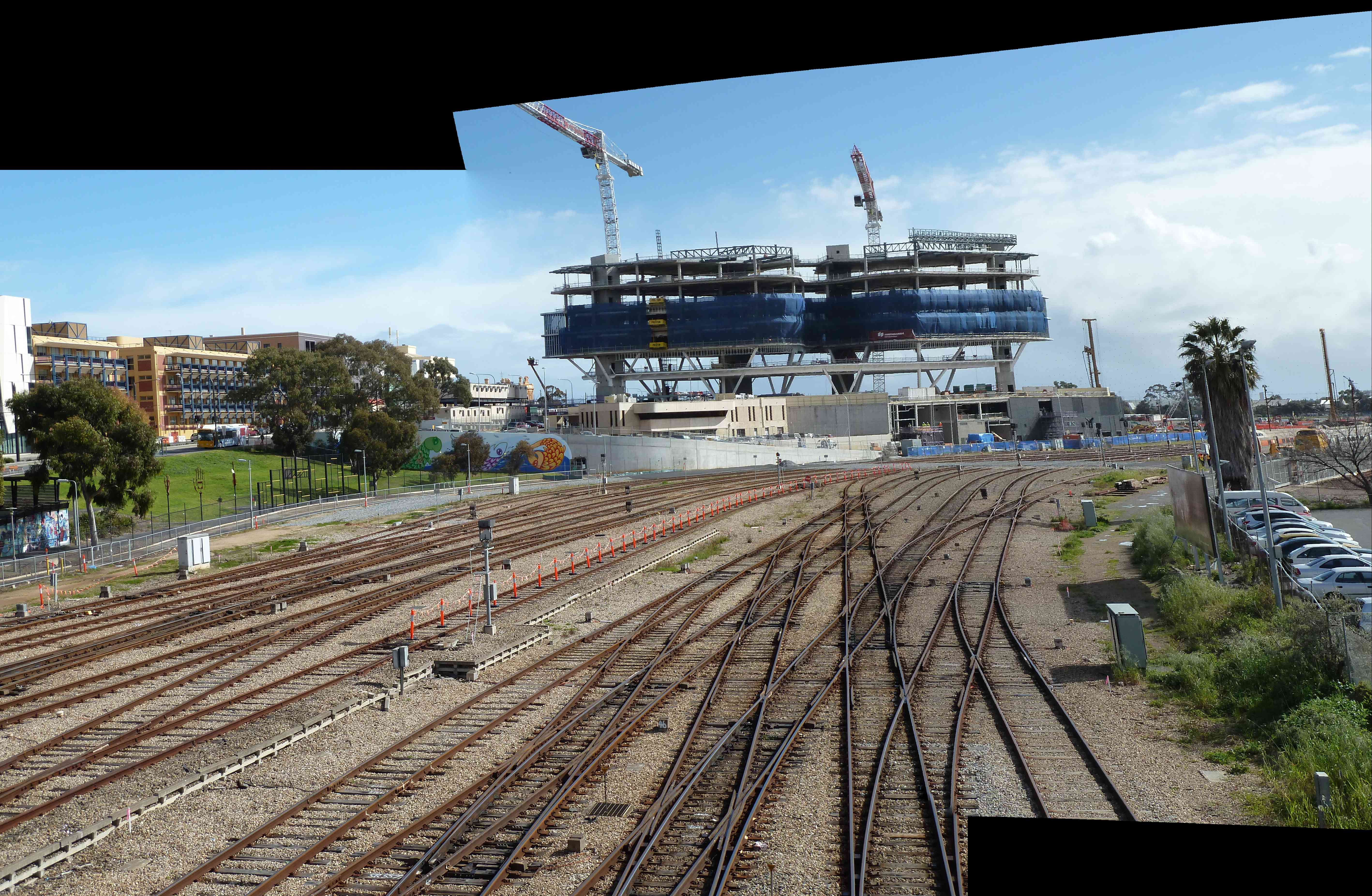}};
		\zoombox[color code=green]{0.43,0.7}
		\zoombox[color code=blue]{0.66,0.2}
		\end{tikzpicture}
	}\\
	
	\subfloat[SPHP~\cite{chang_shape-preserving_2014}] {%
		\begin{tikzpicture}[zoomboxarray, zoomboxes below, zoomboxarray rows=1]
		\node [image node] {\includegraphics[height=0.13\textheight]{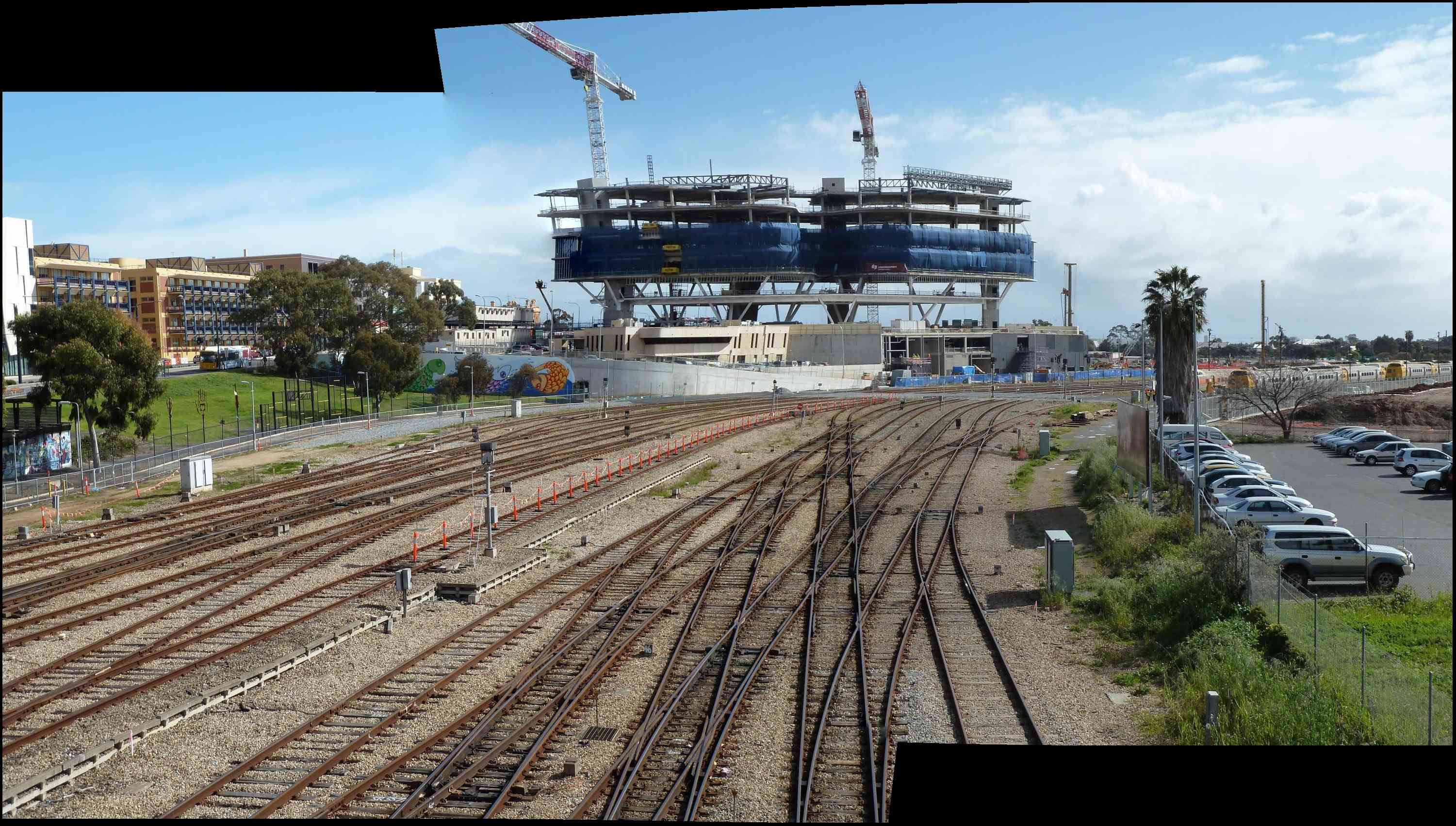}};
		\zoombox[color code=green]{0.4,0.7}
		\zoombox[color code=blue]{0.565,0.19}
		\end{tikzpicture}
	}\,\!
	\subfloat[APAP~\cite{zaragoza_as-projective-as-possible_2014}] {%
		\begin{tikzpicture}[zoomboxarray, zoomboxes below, zoomboxarray rows=1]
		\node [image node] {\includegraphics[height=0.13\textheight]{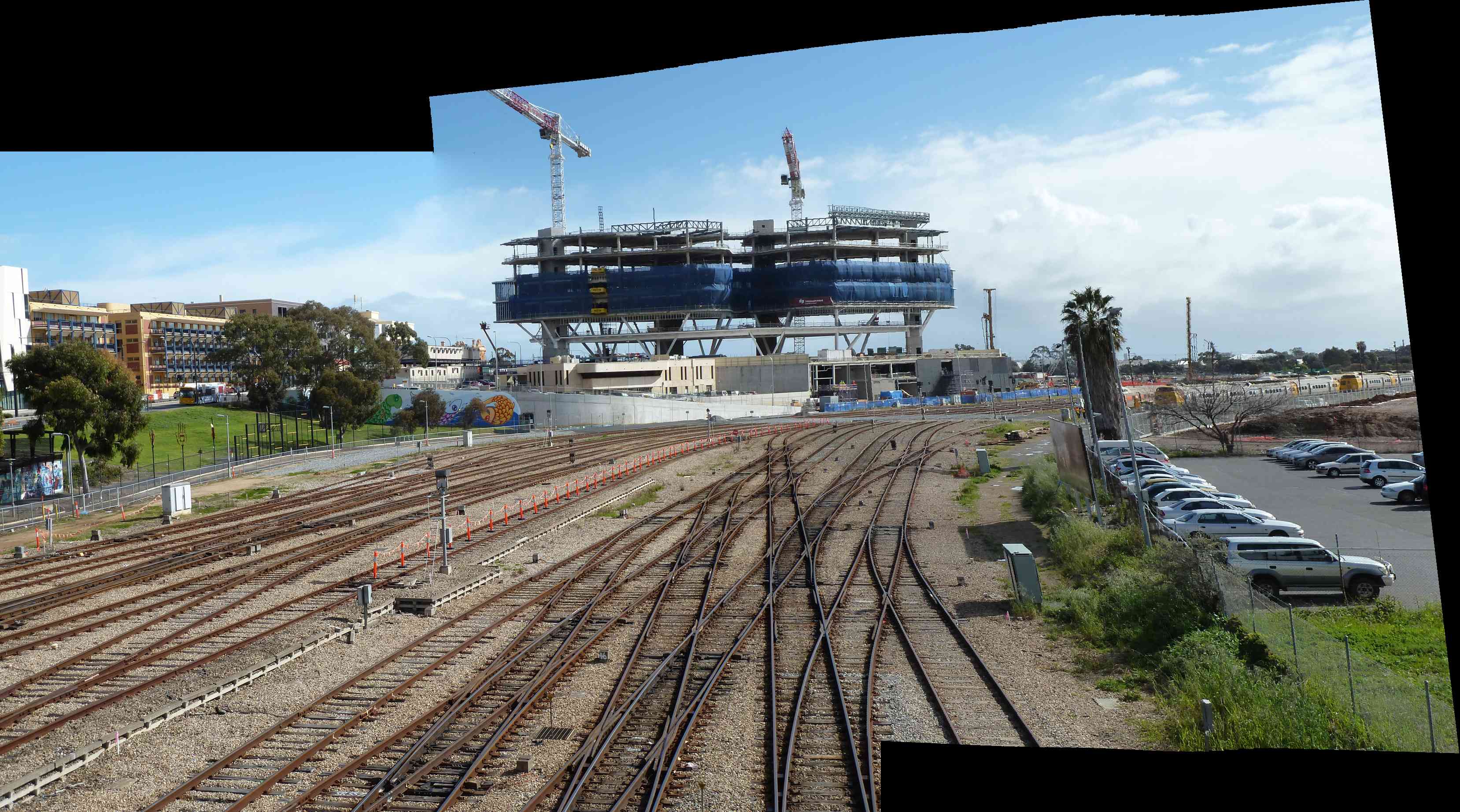}};
		\zoombox[color code=green]{0.357,0.64}
		\zoombox[color code=blue]{0.57,0.18}
		\end{tikzpicture}
	}\,\!
	\subfloat[BRAS] {%
		\begin{tikzpicture}[zoomboxarray, zoomboxes below, zoomboxarray rows=1]
		\node [image node] {\includegraphics[height=0.13\textheight]{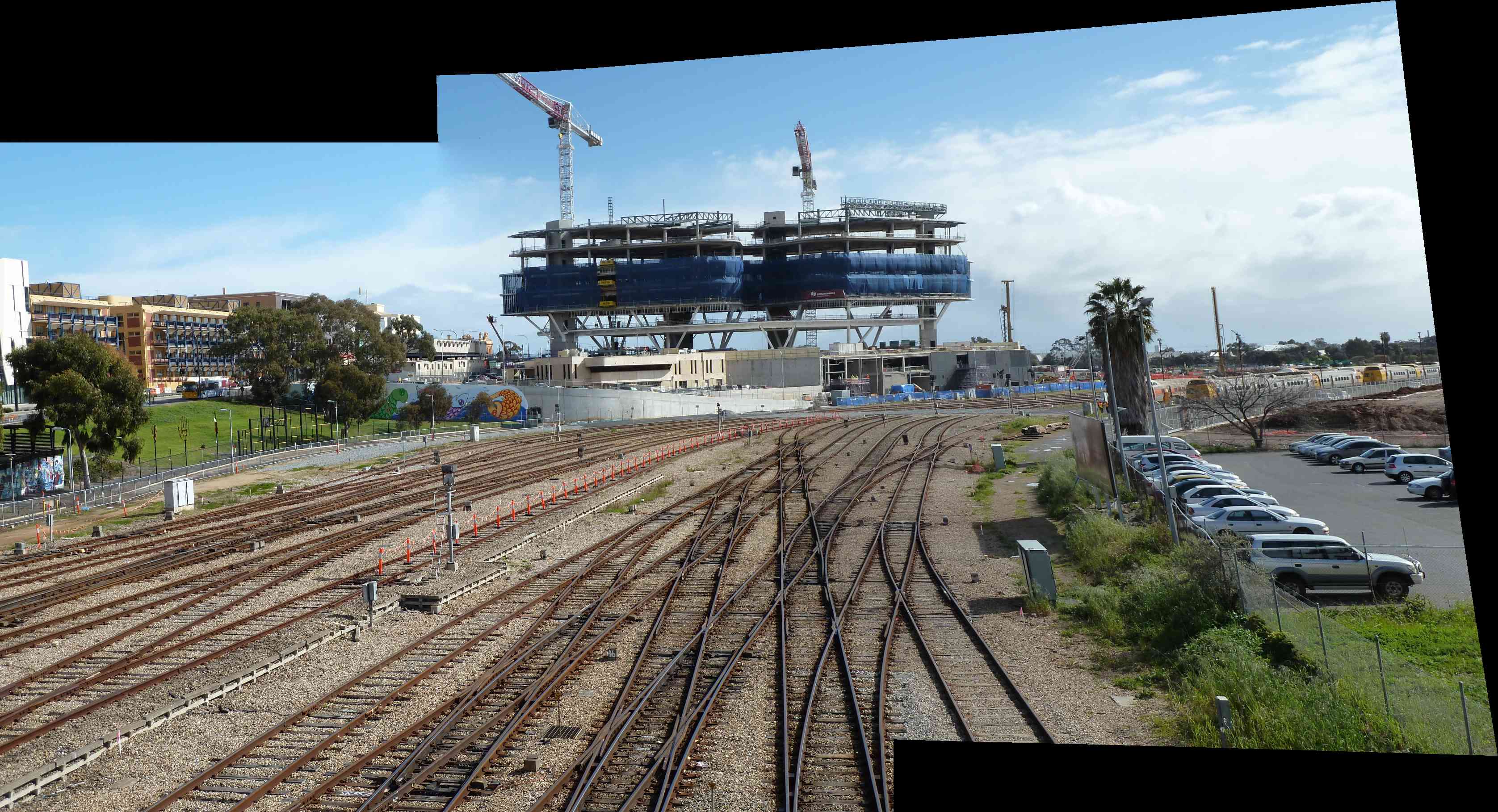}};
		\zoombox[color code=green]{0.35,0.63}
		\zoombox[color code=blue]{0.565,0.185}
		\end{tikzpicture}
	}\\
	\caption{Stitched images on the {\it railtracks\/} dataset~\cite{zaragoza_as-projective-as-possible_2014}. Regions with artifacts in some algorithms are magnified in the insets at the bottom.}\label{fig:railtracks_stitched}
\end{figure}

\begin{figure}
	\subfloat[Input\label{fig:catabus_input}]{%
		\begin{tikzpicture}
		\node (input01) {\includegraphics[width=0.23\textwidth]{figs/catabus_01}};
		\node[right=0em of input01] {\includegraphics[width=0.23\textwidth]{figs/catabus_02}};
		\end{tikzpicture}
	}\hspace{-0.2cm}
	\subfloat[CPW~\cite{hu_multi-objective_2015}\label{fig:catabus_CPW_aligned}]{%
		\begin{tikzpicture}
		\node {\includegraphics[width=0.48\textwidth]{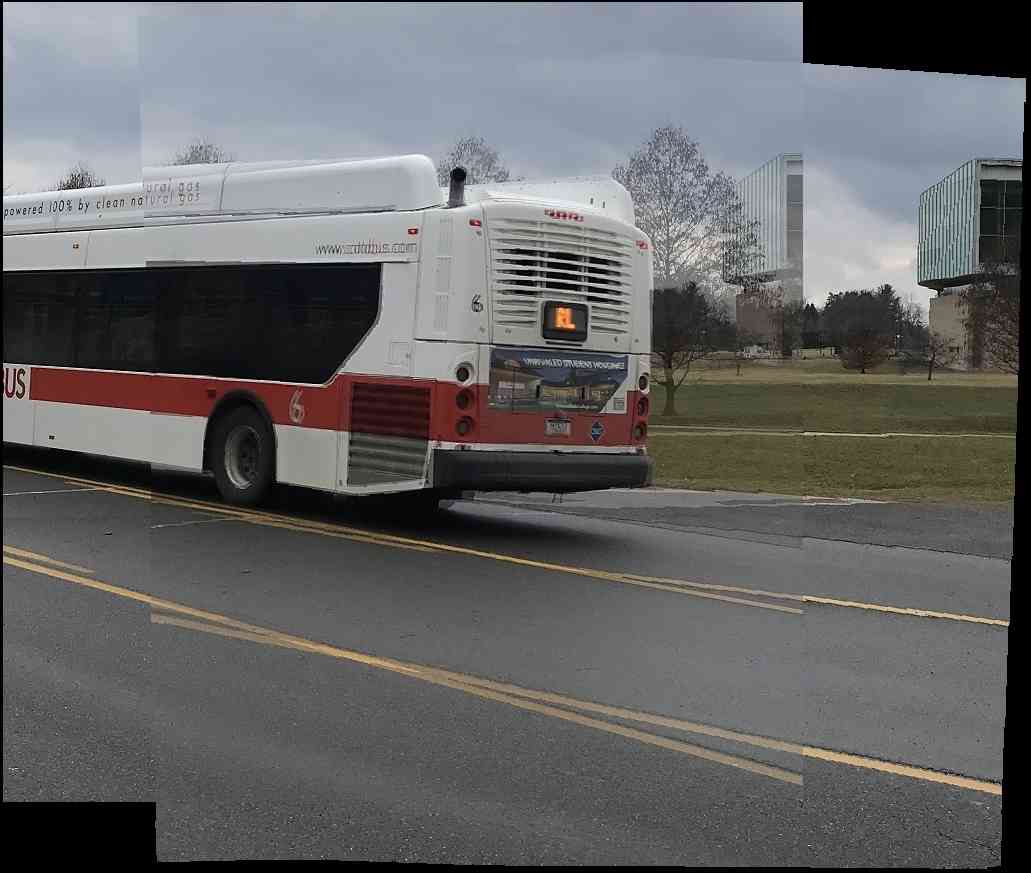}};
		\end{tikzpicture}
	}\\
	\subfloat[APAP~\cite{zaragoza_as-projective-as-possible_2014}\label{fig:catabus_APAP_aligned}]{%
		\begin{tikzpicture}
		\node {\includegraphics[width=0.48\linewidth]{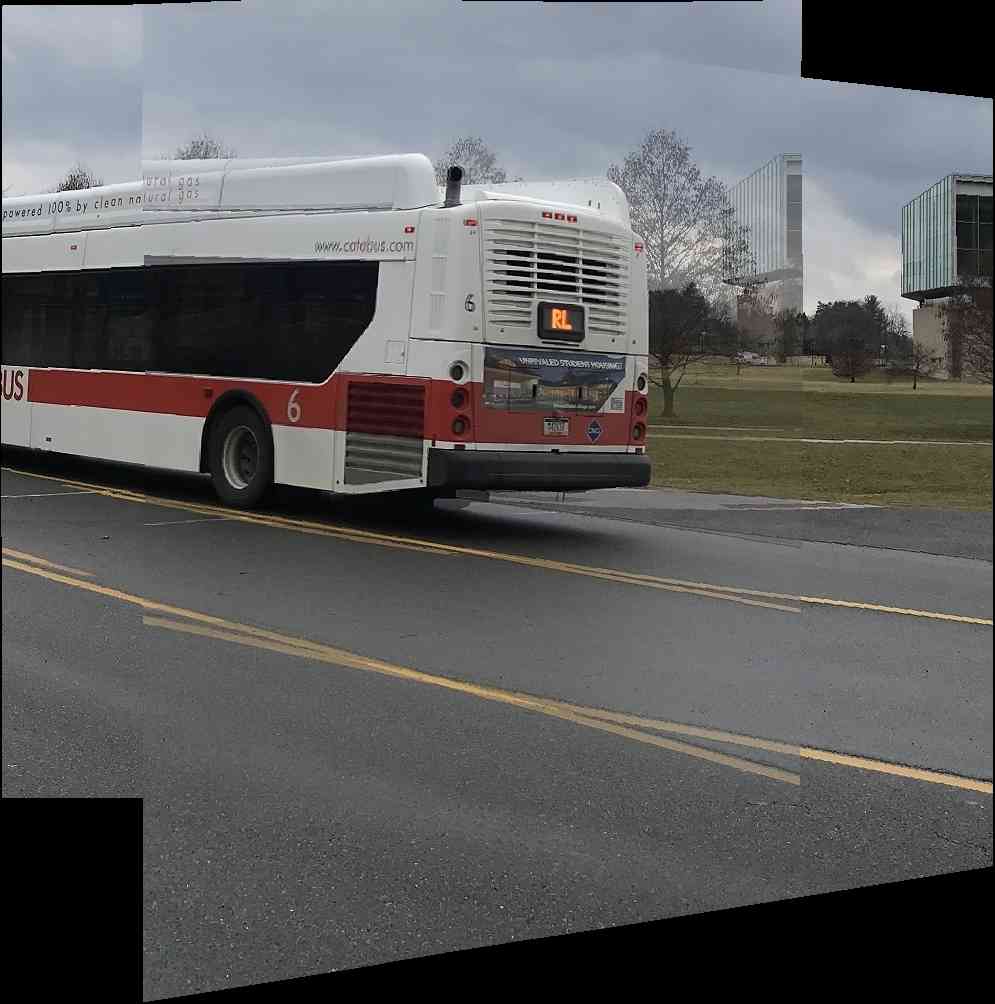}};
		\end{tikzpicture}
	}\hspace{-0.2cm}
	\subfloat[BRAS\label{fig:catabus_BRAS_aligned}]{%
		\begin{tikzpicture}
		\node {\includegraphics[width=0.48\linewidth]{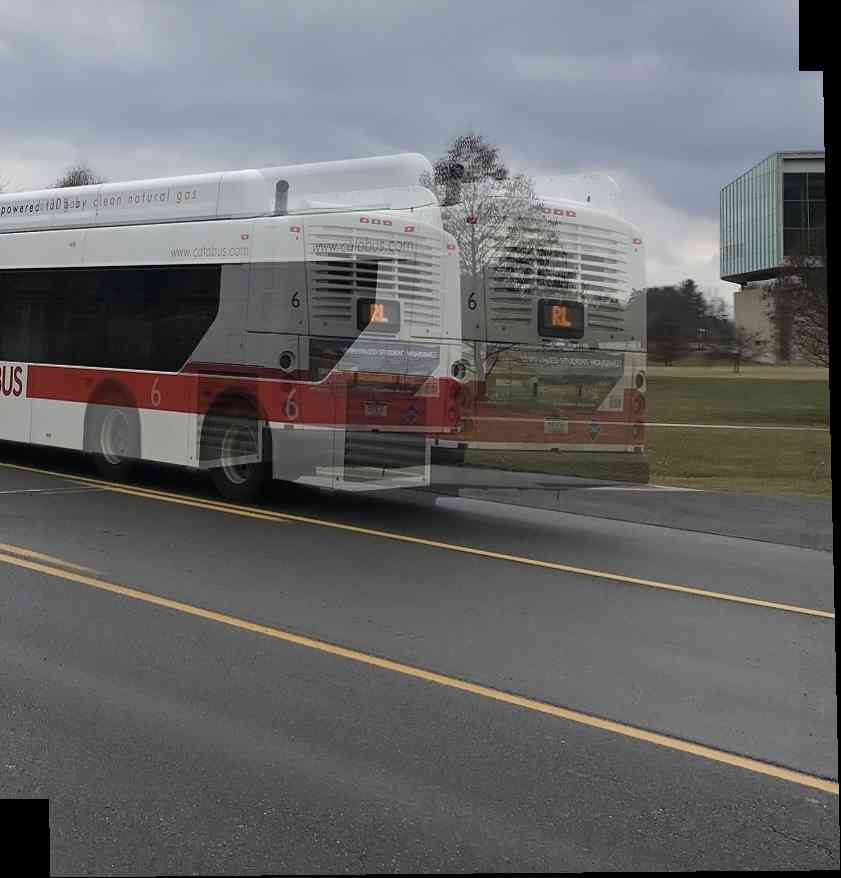}};
		\end{tikzpicture}
	}
	\caption{Alignment results in the presence of large moving objects. \tcr{Feature-based methods \protect\subref{fig:catabus_CPW_aligned} CPW, and \protect\subref{fig:catabus_APAP_aligned} APAP are guided by a dominant number of moving object features and thus misalign the background, while \protect\subref{fig:catabus_BRAS_aligned} BRAS, being pixel based, absorbs information from the whole image and performs alignment more favorable for subsequent stitching.}}\label{fig:catabus_aligned}
\end{figure}

\tcred{
\subsection{Stitched Results for Long Image Sequences}
\label{subsec:wide_fov}
In this section we present two examples from the \emph{CMU}~\cite{CMU} dataset, for aligning and stitching images covering a wide field of view, in Fig.~\ref{fig:CMU0} and Fig.~\ref{fig:CMU1}. The planar homography model is inapplicable to this case and we thus pre-project some of the images into cylindrical coordinates before feeding them into BRAS algorithm. We observe that BRAS does a competent job of aligning and stitching, whereas many state of the art methods such as APAP and CPW are inapplicable to such a scenario.}


\begin{figure*}
	\centering
	\subfloat {\includegraphics[width=0.9\textwidth]{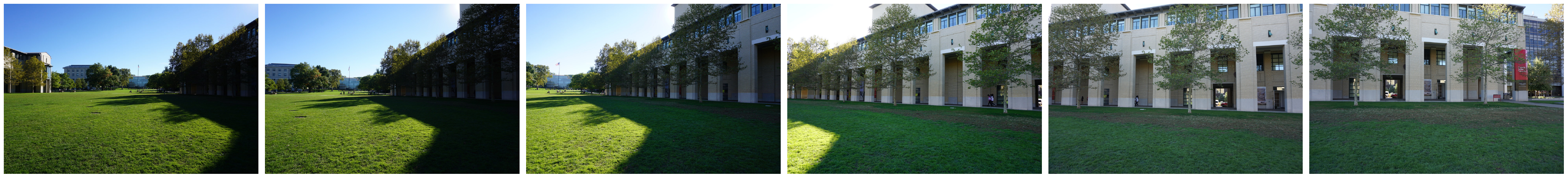}}\\ \vspace{-2mm}
	\subfloat {\includegraphics[width=0.9\textwidth]{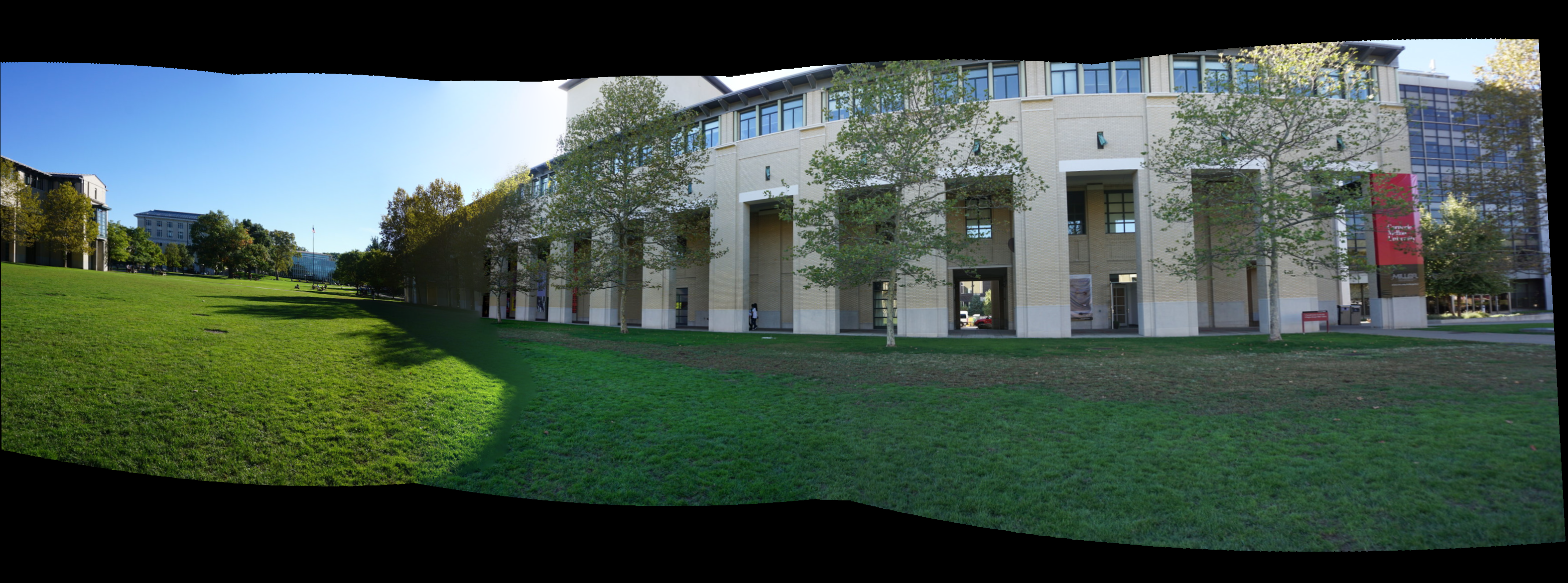}} \vspace{-2mm}
	\caption{\tcred{Stitched image with large number of images: Top: $6$ images from the {\it CMU0} dataset~\cite{CMU}. Bottom: These images are aligned and stitched by BRAS accurately, which verifies its effectiveness in the case with a long sequence of images.}}\label{fig:CMU0}
\end{figure*}

\begin{figure*}
	\centering
	\subfloat {\includegraphics[width=0.9\textwidth]{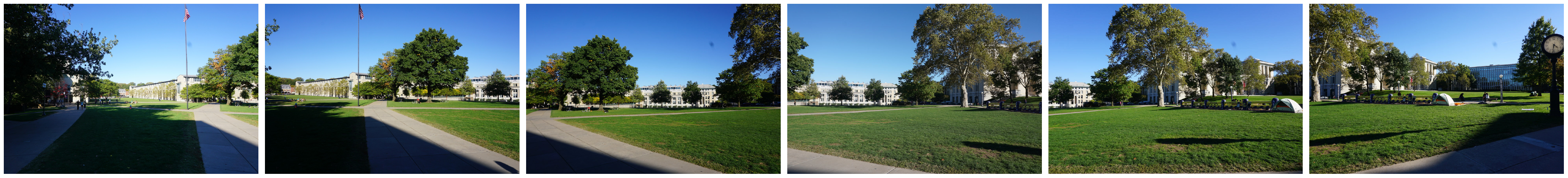}}\\ \vspace{-2mm}
	\subfloat {\includegraphics[width=0.9\textwidth]{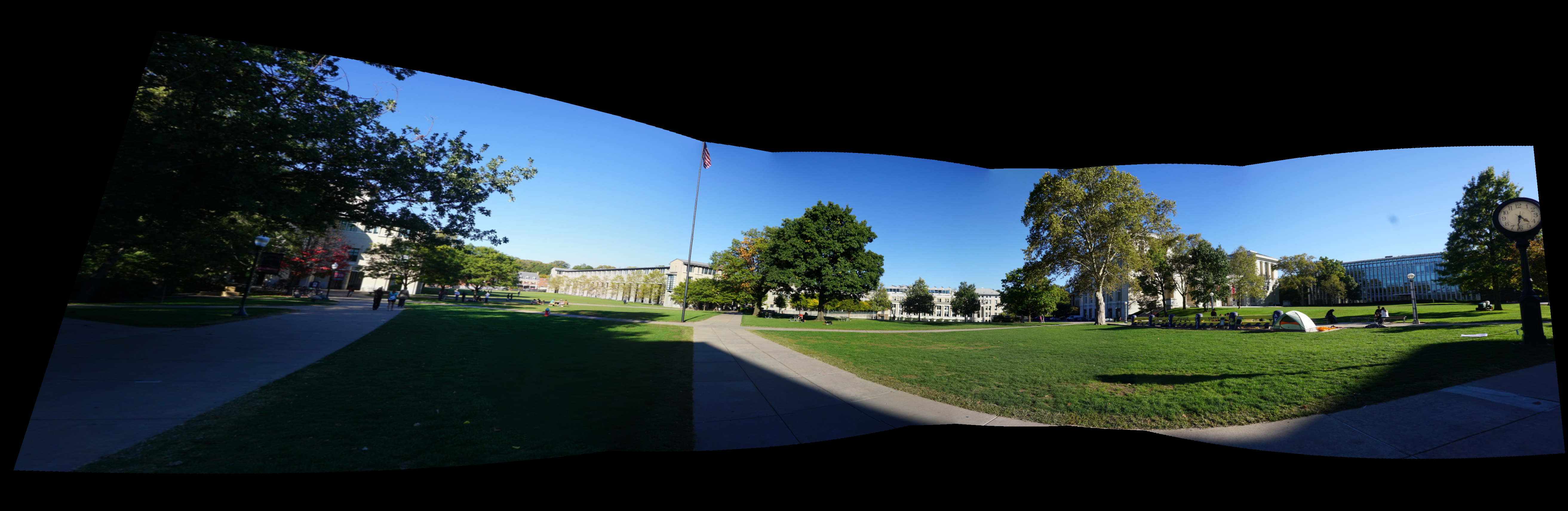}} \vspace{-2mm}
	\caption{\tcred{Stitched image with large number of images: Top: $6$ images from the {\it CMU1} dataset~\cite{CMU}. Bottom: These images are aligned and stitched by BRAS accurately, which verifies its effectiveness in the case with a long sequence of images.}}\label{fig:CMU1}
\end{figure*}


\begin{figure*}
	\centering
	\subfloat[AutoStitch~\cite{brown_automatic_2007}] {%
		\begin{tikzpicture}[zoomboxarray, zoomboxes below, zoomboxarray height=0.15\textheight, zoomboxes yshift=-0.9]
		\node [image node] {\includegraphics[height=0.25\textheight]{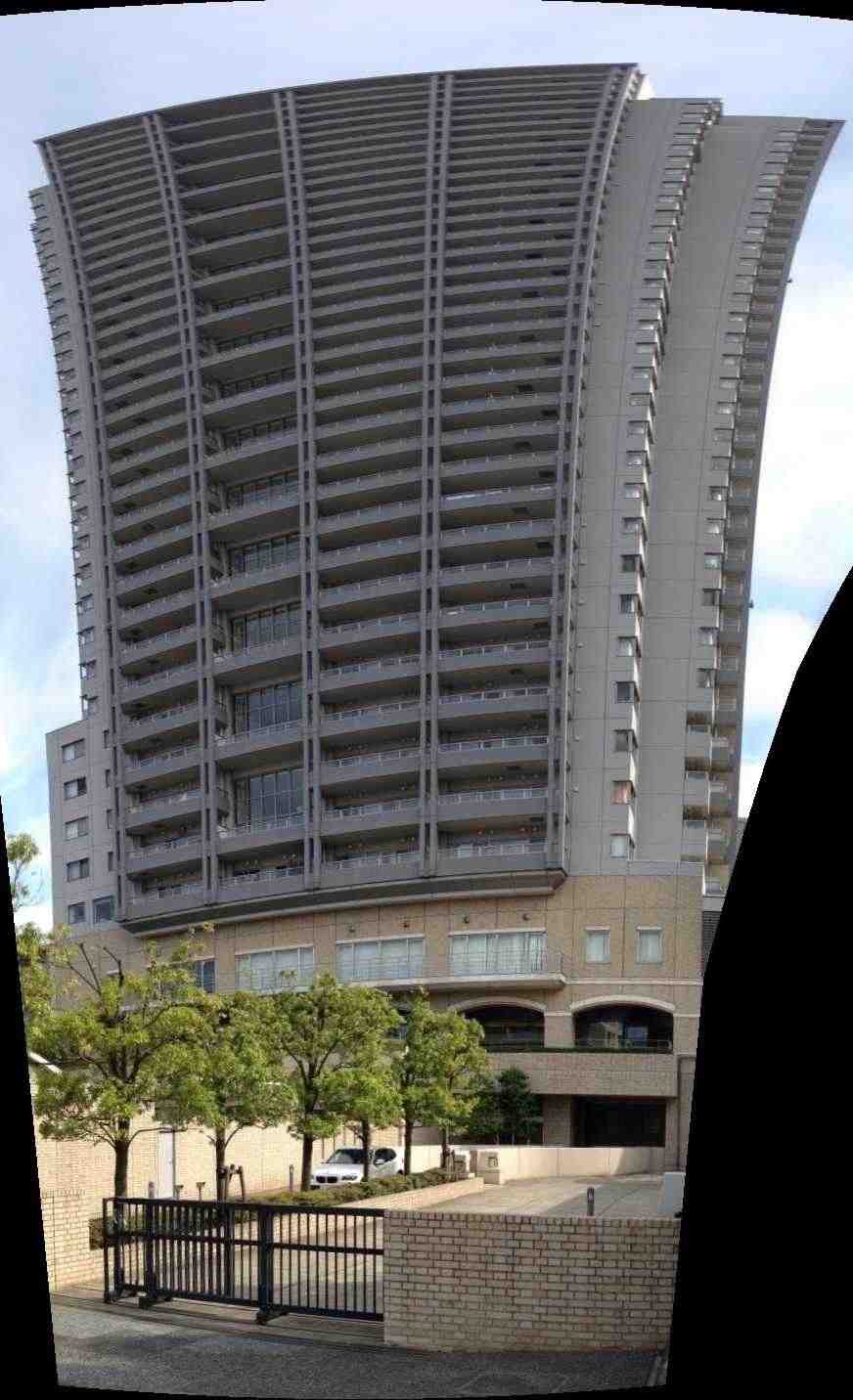}};
		\zoombox[color code=red]{0.38,0.58}
		\zoombox[color code=cyan]{0.6,0.85}
		\zoombox[color code=blue]{0.8,0.33}
		\zoombox[color code=green]{0.85,0.53}
		\end{tikzpicture}
	}\,\!
	\subfloat[ICE~\cite{ICE}] {%
		\begin{tikzpicture}[zoomboxarray, zoomboxes below, zoomboxarray height=0.15\textheight, zoomboxes yshift=-0.9, execute at end picture={\draw[thick,red] (0.8,-1) circle (.4cm);}, execute at end picture={\draw[thick,red] (2.7,-3) circle (.4cm);}]
		\node [image node] {\includegraphics[height=0.25\textheight]{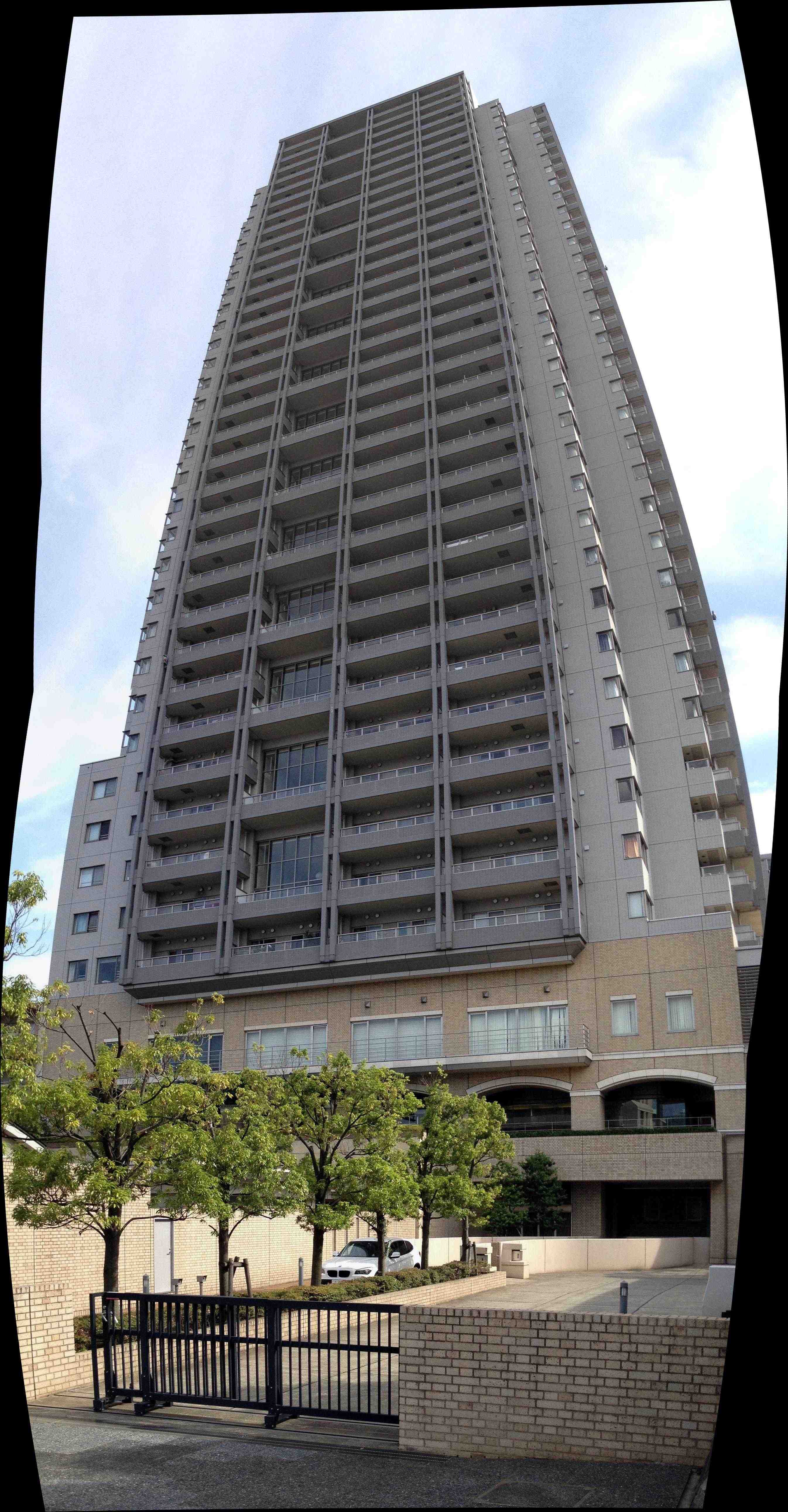}};
		\zoombox[color code=red]{0.43,0.58}
		\zoombox[color code=cyan]{0.6,0.85}
		\zoombox[color code=blue]{0.9,0.33}
		\zoombox[color code=green]{0.9,0.55}
		\end{tikzpicture}
	}\,\!
	\subfloat[SPHP~\cite{chang_shape-preserving_2014}] {%
		\begin{tikzpicture}[zoomboxarray, zoomboxes below, zoomboxarray height=0.15\textheight, zoomboxes yshift=-0.9, execute at end picture={\draw[thick,red] (1.05,-3.2) circle (.4cm);}]
		\node [image node] {\includegraphics[height=0.25\textheight]{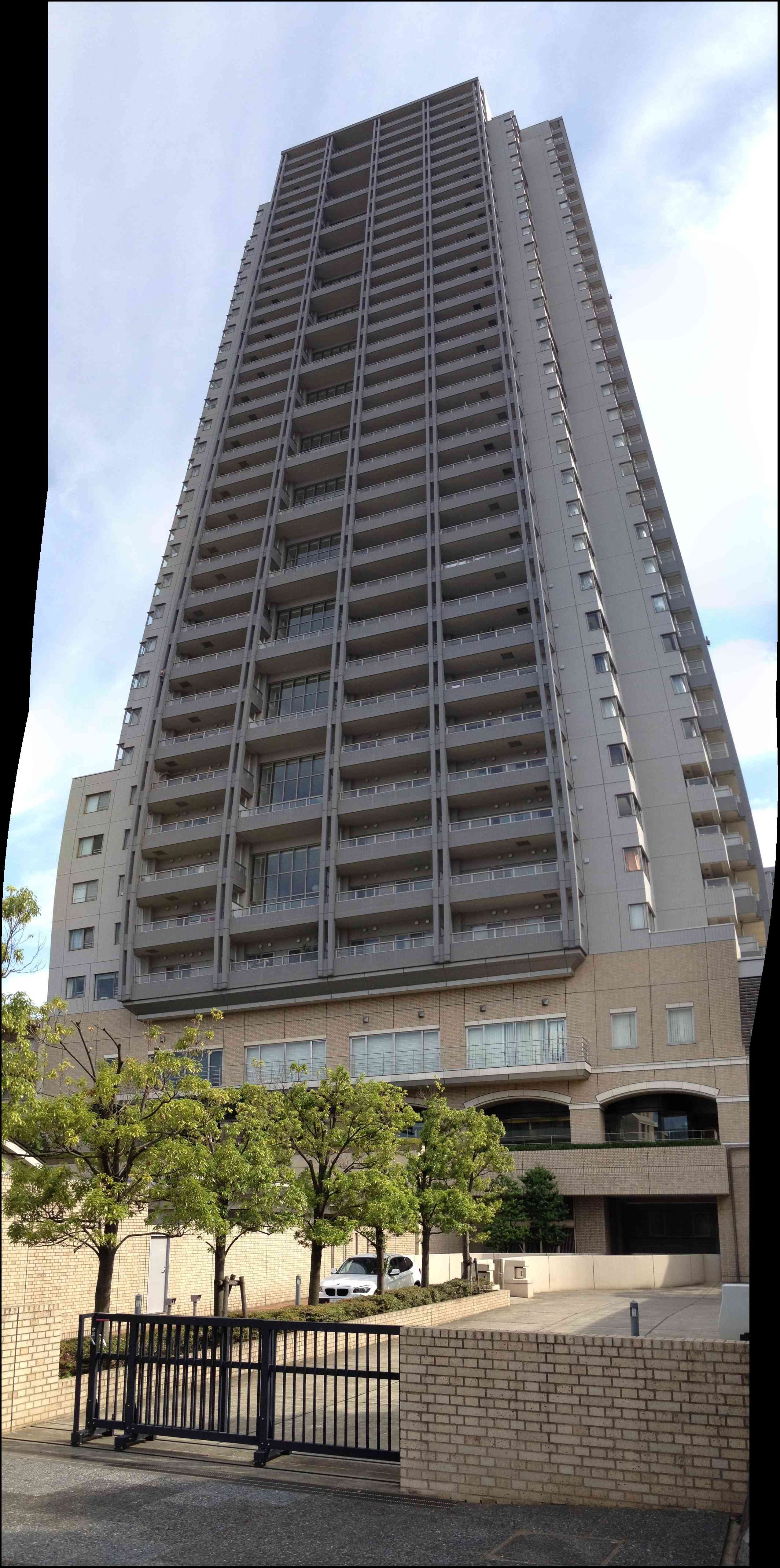}};
		\zoombox[color code=red]{0.43,0.56}
		\zoombox[color code=cyan]{0.6,0.85}
		\zoombox[color code=blue]{0.9,0.35}
		\zoombox[color code=green]{0.9,0.55}
		\end{tikzpicture}
	}\,\!
	\subfloat[APAP~\cite{zaragoza_as-projective-as-possible_2014}] {%
		\begin{tikzpicture}[zoomboxarray, zoomboxes below, zoomboxarray height=0.15\textheight, zoomboxes yshift=-0.9, execute at end picture={\draw[thick,red] (2.5,-1) circle (.4cm);}]
		\node [image node] {\includegraphics[height=0.25\textheight]{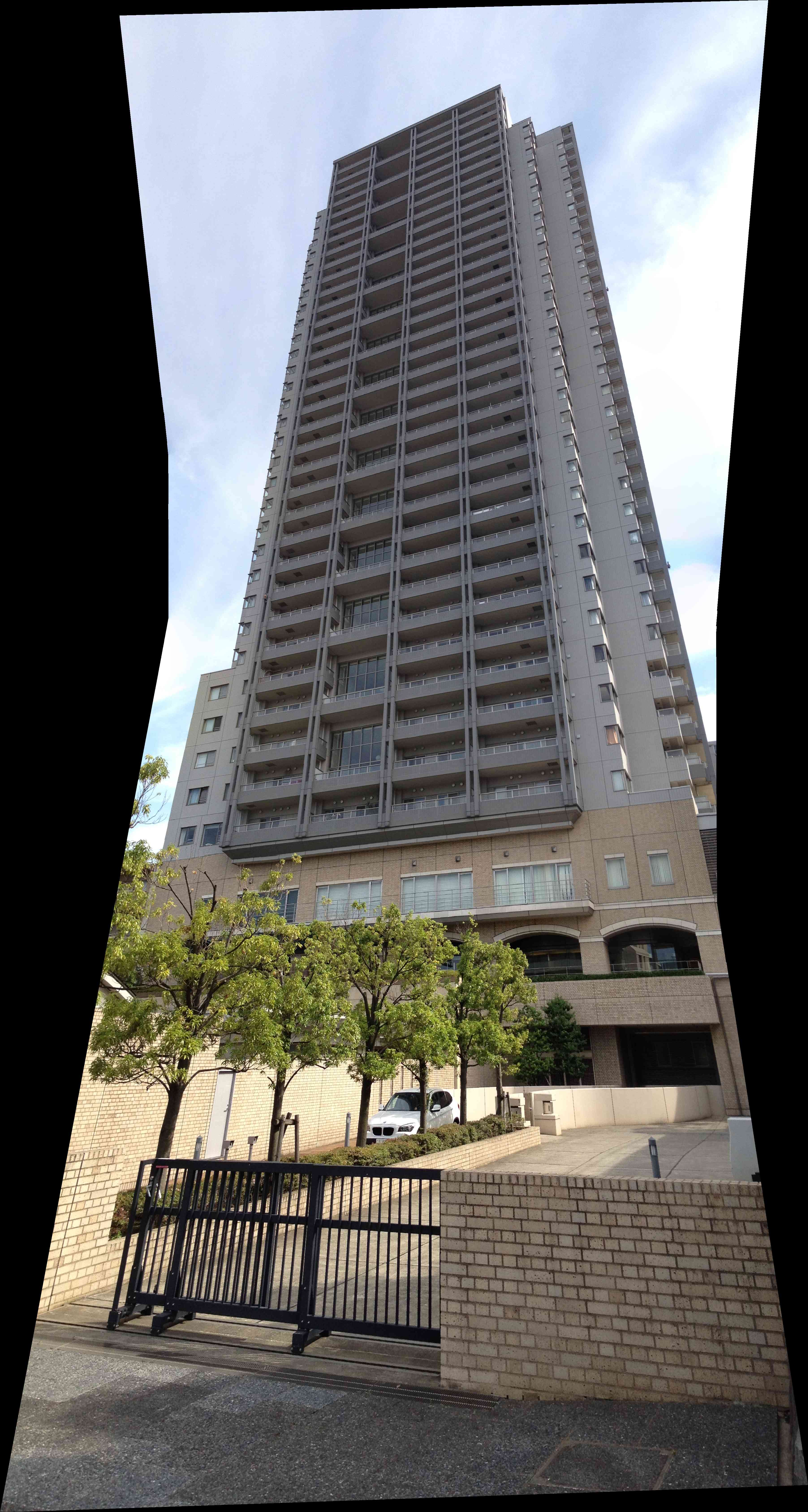}};
		\zoombox[color code=red]{0.48,0.62}
		\zoombox[color code=cyan]{0.6,0.85}
		\zoombox[color code=blue]{0.86,0.43}
		\zoombox[color code=green]{0.83,0.58}
		\end{tikzpicture}
	}\,\!
	\subfloat[BRAS] {%
		\begin{tikzpicture}[zoomboxarray, zoomboxes below, zoomboxarray height=0.15\textheight, zoomboxes yshift=-0.9]
		\node [image node] {\includegraphics[height=0.25\textheight]{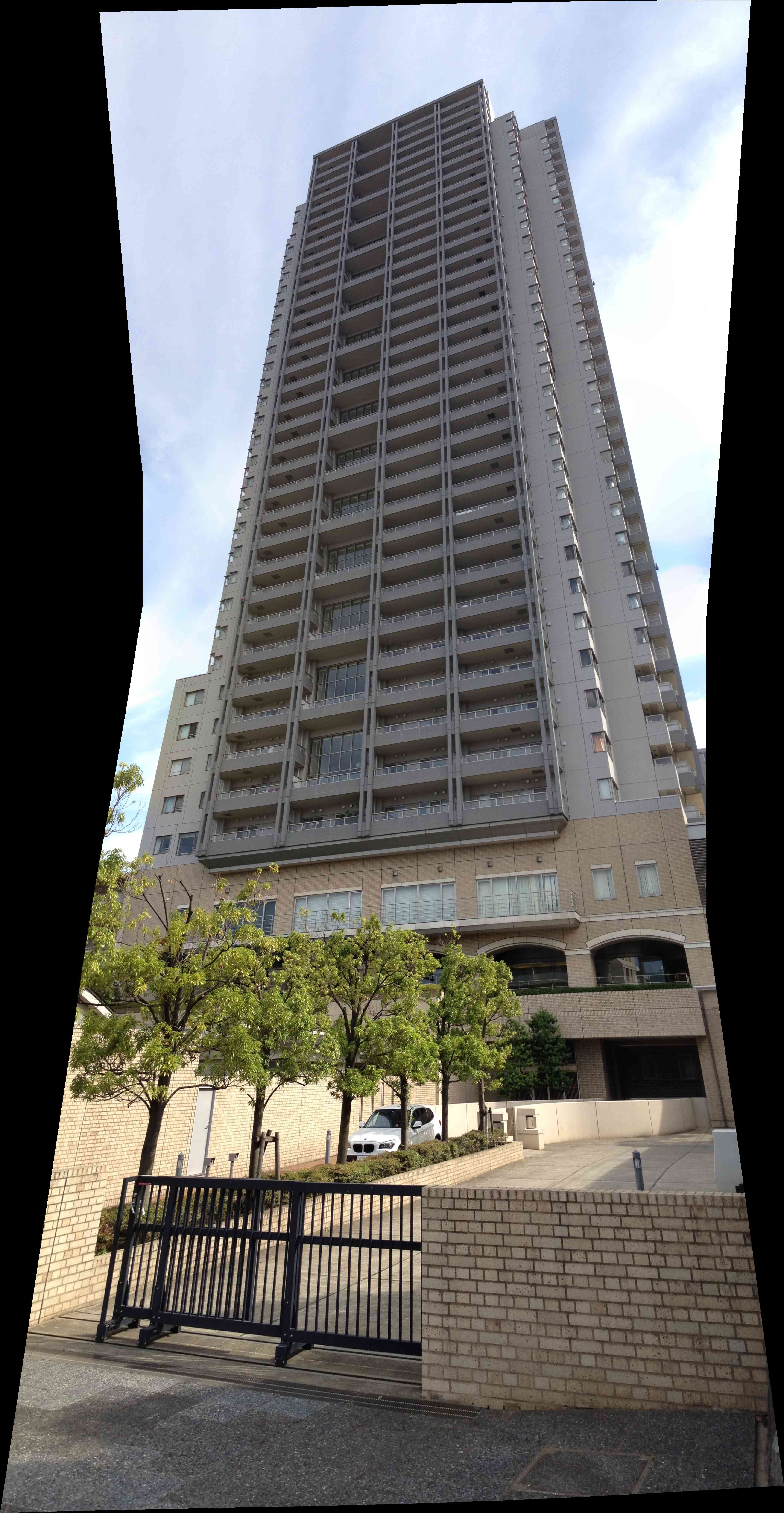}};
		\zoombox[color code=red]{0.43,0.58}
		\zoombox[color code=cyan]{0.6,0.85}
		\zoombox[color code=blue]{0.85,0.42}
		\zoombox[color code=green]{0.83,0.58}
		\end{tikzpicture}
	}\\
	\caption{\tcr{Stitched images on the {\it skyscraper\/} dataset~\cite{chang_shape-preserving_2014}. Regions with artifacts in some algorithms are magnified in boxes. Red circles in the insets highlight the artifacts.}}\label{fig:skyscraper}
\end{figure*}

\begin{figure*}
	\centering
	\subfloat[AutoStitch~\cite{brown_automatic_2007}] {%
		\resizebox{0.83\textwidth}{!}{\begin{tikzpicture}[zoomboxarray, zoomboxarray rows=1, zoomboxarray width=0.4\textwidth, zoomboxes xshift=1.05, execute at end picture={\draw[thick,red] (9,1) circle (.7cm);}, execute at end picture={\draw[thick,red] (12.7,1) circle (.7cm);}]
			\node [image node] {\includegraphics[height=0.13\textheight]{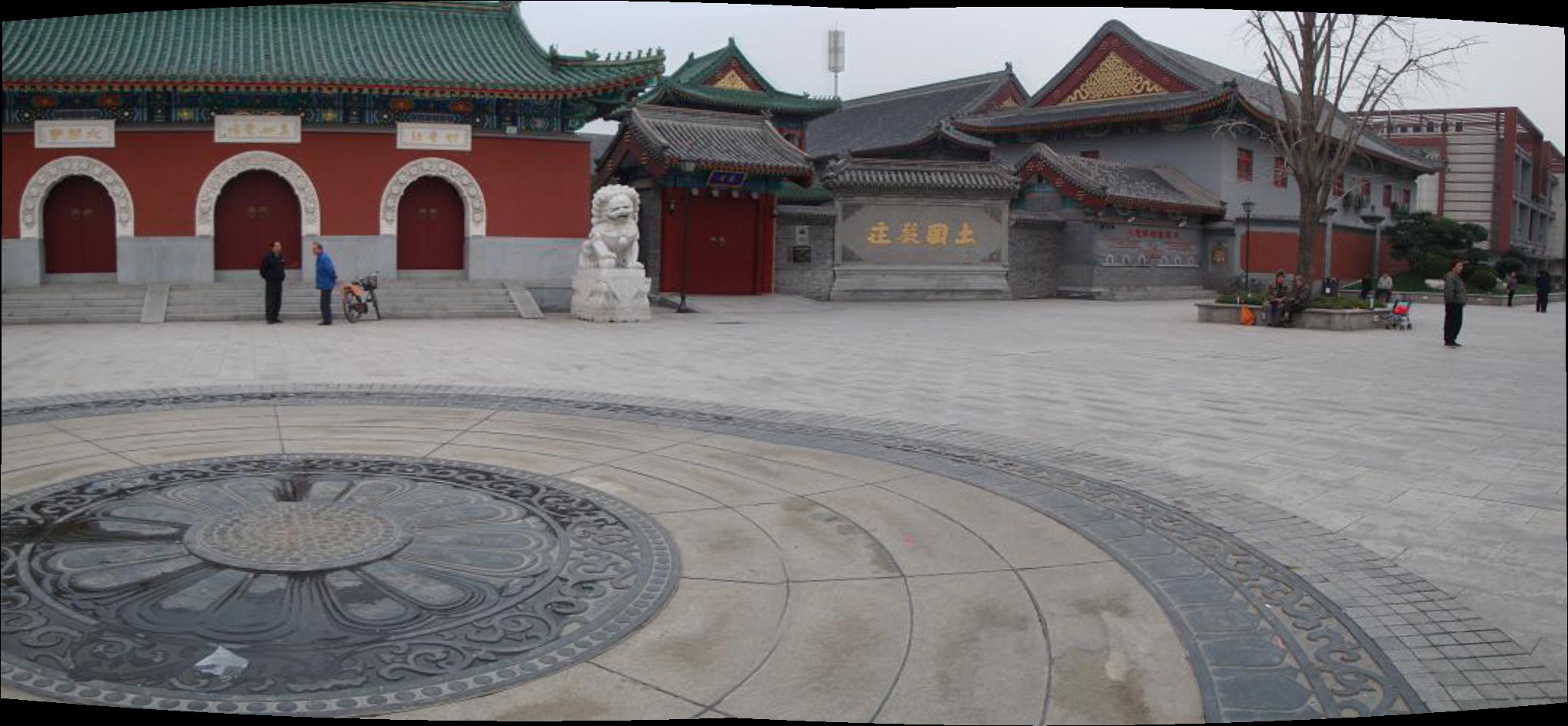}};
			\zoombox[color code=blue, magnification=7]{0.5,0.16} 
			\zoombox[color code=green, magnification=7]{0.5,0.27}
			\end{tikzpicture}
		}
	}\\\vspace{-3mm}
	\subfloat[ICE~\cite{ICE}] {%
		\resizebox{0.83\textwidth}{!}{\begin{tikzpicture}[zoomboxarray, zoomboxarray rows=1, zoomboxarray width=0.4\textwidth, zoomboxes xshift=1.05, execute at end picture={\draw[thick,red] (8.8,1.2) circle (.7cm);}, execute at end picture={\draw[thick,red] (12.6,1.5) circle (.7cm);}]
			\node [image node] {\includegraphics[height=0.13\textheight]{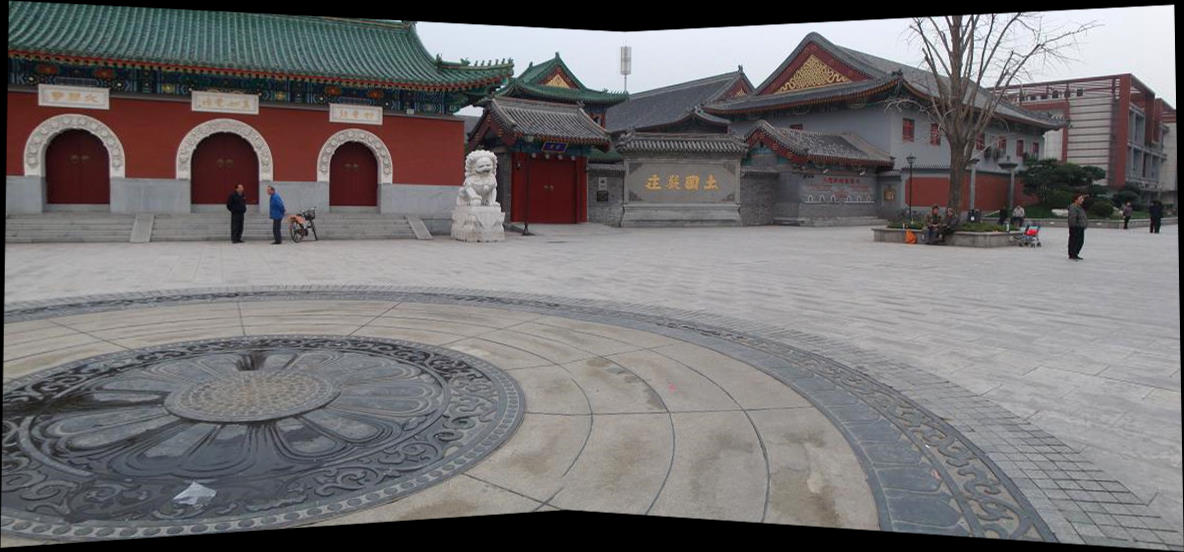}};
			\zoombox[color code=blue, magnification=7]{0.48,0.14}
			\zoombox[color code=green, magnification=7]{0.49,0.32}
			\end{tikzpicture}
		}
	}\\\vspace{-3mm}
	\subfloat[SPHP~\cite{chang_shape-preserving_2014}] {%
		\resizebox{0.83\textwidth}{!}{\begin{tikzpicture}[zoomboxarray, zoomboxarray rows=1, zoomboxarray width=0.4\textwidth, zoomboxes xshift=1.05]
			\node [image node] {\includegraphics[height=0.13\textheight]{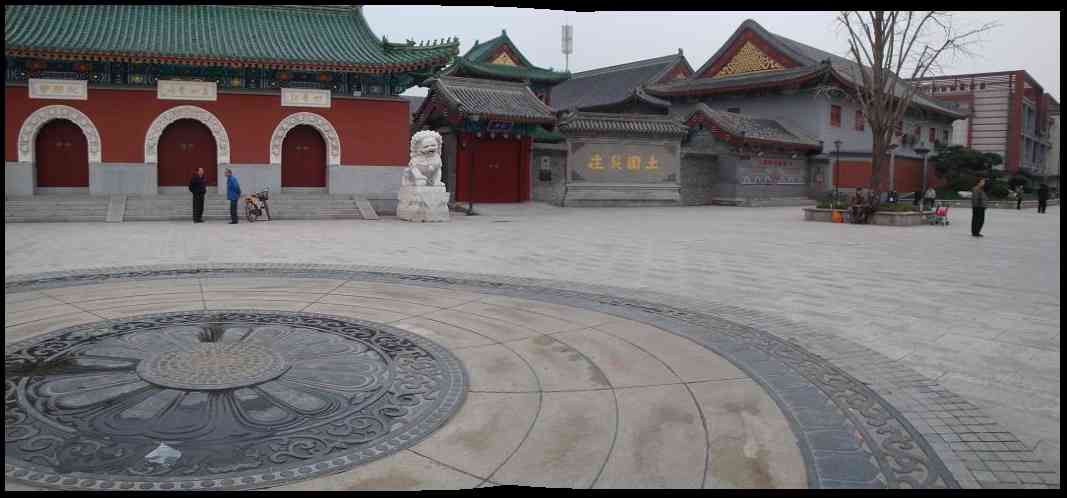}};
			\zoombox[color code=blue, magnification=7]{0.5,0.17}
			\zoombox[color code=green, magnification=7]{0.5,0.28}
			\end{tikzpicture}
		}
	}\\\vspace{-3mm}
	\subfloat[CPW~\cite{hu_multi-objective_2015}] {%
		\resizebox{0.83\textwidth}{!}{\begin{tikzpicture}[zoomboxarray, zoomboxarray rows=1, zoomboxarray width=0.4\textwidth, zoomboxes xshift=1.25]	
			\node [image node] {\includegraphics[height=0.13\textheight]{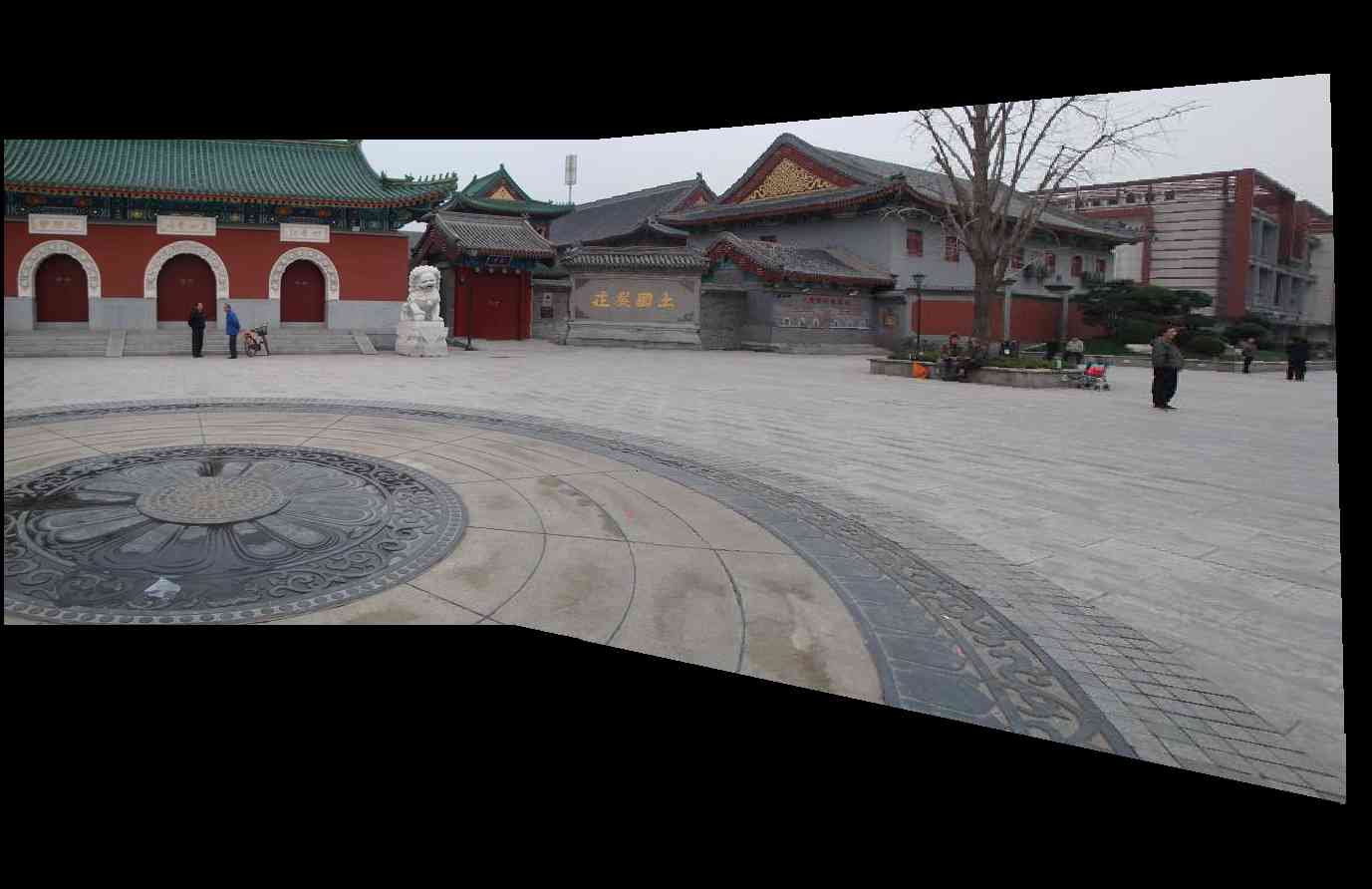}};
			\zoombox[color code=blue, magnification=7]{0.39,0.36}
			\zoombox[color code=green, magnification=7]{0.39,0.44}
			\end{tikzpicture}
		}
	}\\\vspace{-3mm}
	\subfloat[APAP~\cite{zaragoza_as-projective-as-possible_2014}] {%
		\resizebox{0.83\textwidth}{!}{
			\begin{tikzpicture}[zoomboxarray, zoomboxarray rows=1, zoomboxarray width=0.4\textwidth, zoomboxes xshift=1.15]
			\node [image node] {\includegraphics[height=0.13\textheight]{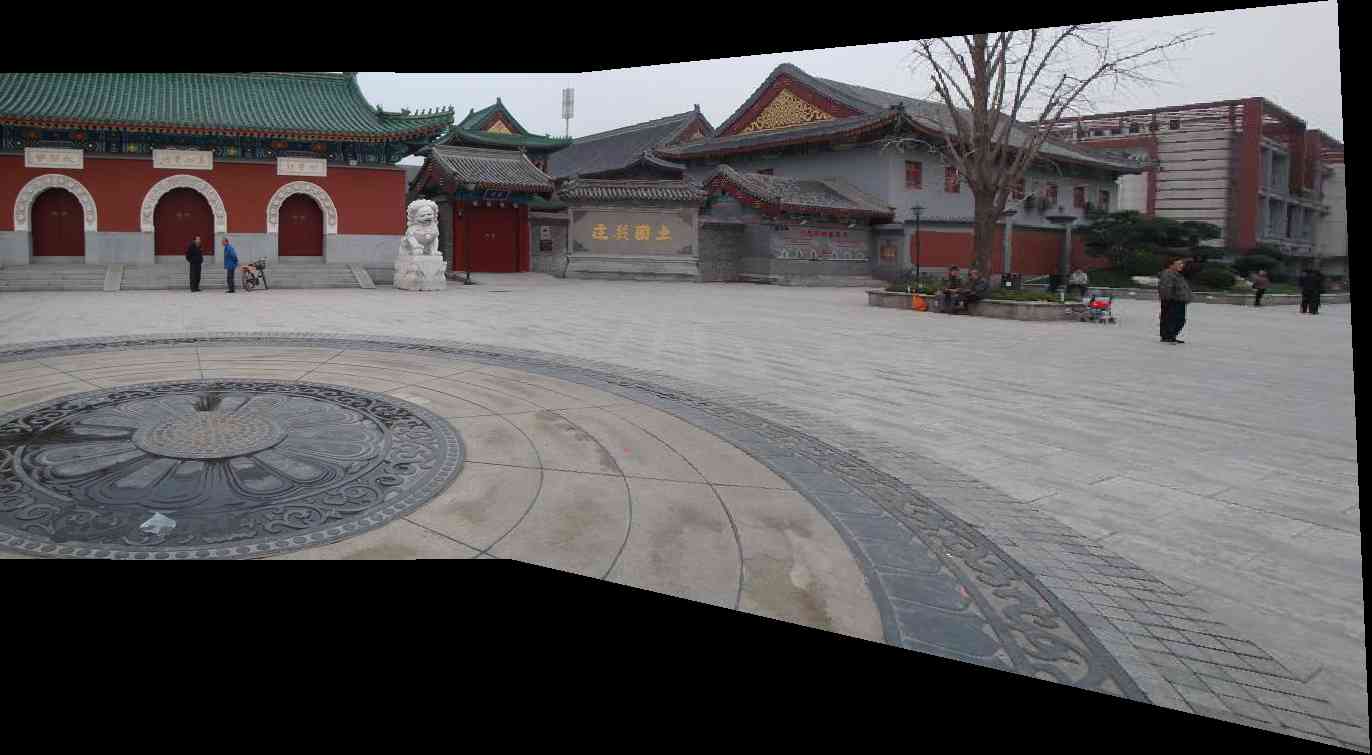}};
			\zoombox[color code=blue, magnification=7]{0.39,0.34}
			\zoombox[color code=green, magnification=7]{0.39,0.44}
			\end{tikzpicture}
		}
	}\\\vspace{-3mm}
	\subfloat[BRAS] {%
		\resizebox{0.83\textwidth}{!}{
			\begin{tikzpicture}[zoomboxarray, zoomboxarray rows=1, zoomboxarray width=0.4\textwidth, zoomboxes xshift=1.15]
			\node [image node] {\includegraphics[height=0.13\textheight]{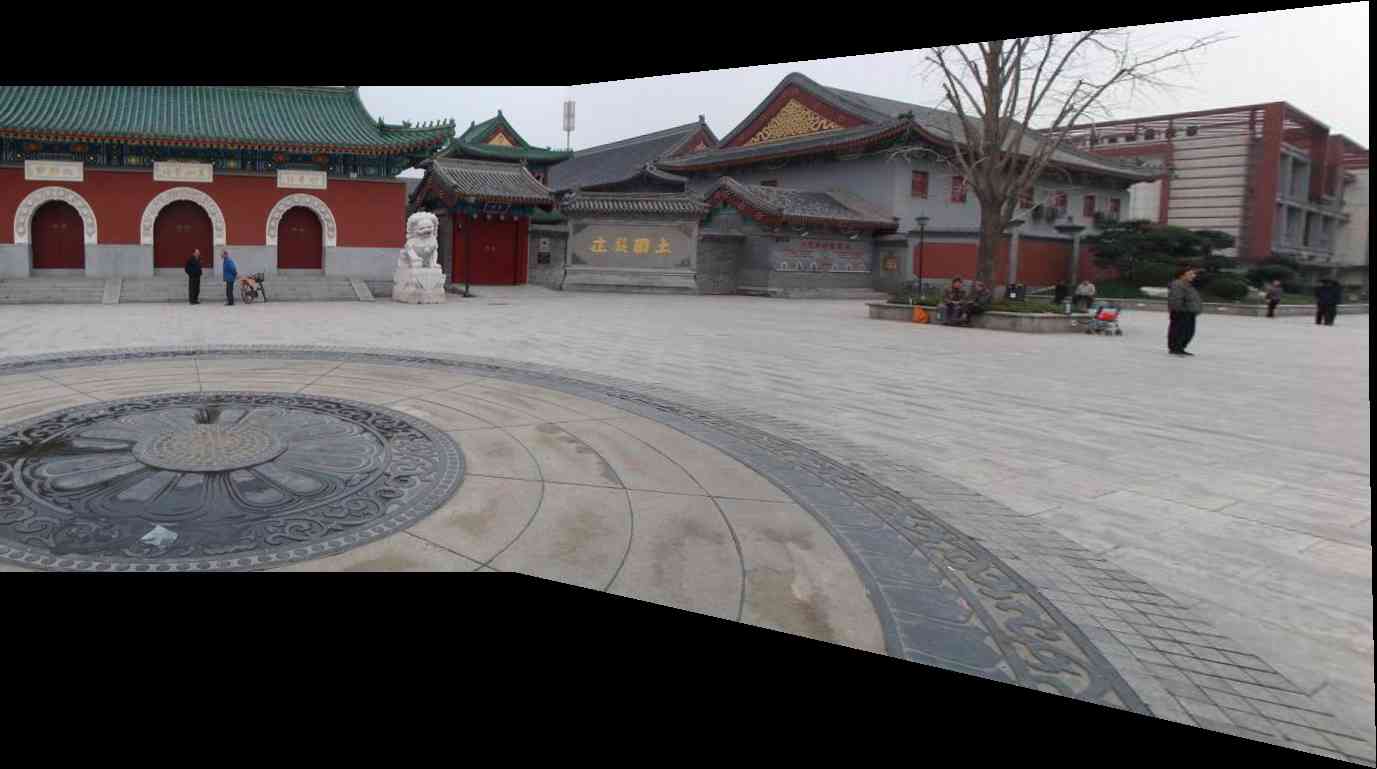}};
			\zoombox[color code=blue, magnification=7]{0.39,0.34}
			\zoombox[color code=green, magnification=7]{0.39,0.44}
			\end{tikzpicture}
		}
	}\\\vspace{-1mm}
	\caption{\tcr{Stitched images on the {\it temple\/} dataset~\cite{gao_constructing_2011}. Red circles in the insets highlight artifacts.}}\label{fig:temple}
\end{figure*}

\begin{figure*}
	\centering
	\subfloat[AutoStitch~\cite{brown_automatic_2007}] {%
		\resizebox{0.9\textwidth}{!}{
			\begin{tikzpicture}[zoomboxarray , zoomboxarray rows=1, zoomboxarray width=0.4\textwidth, zoomboxes xshift=1.05]
			\node [image node] {\includegraphics[height=0.15\textheight]{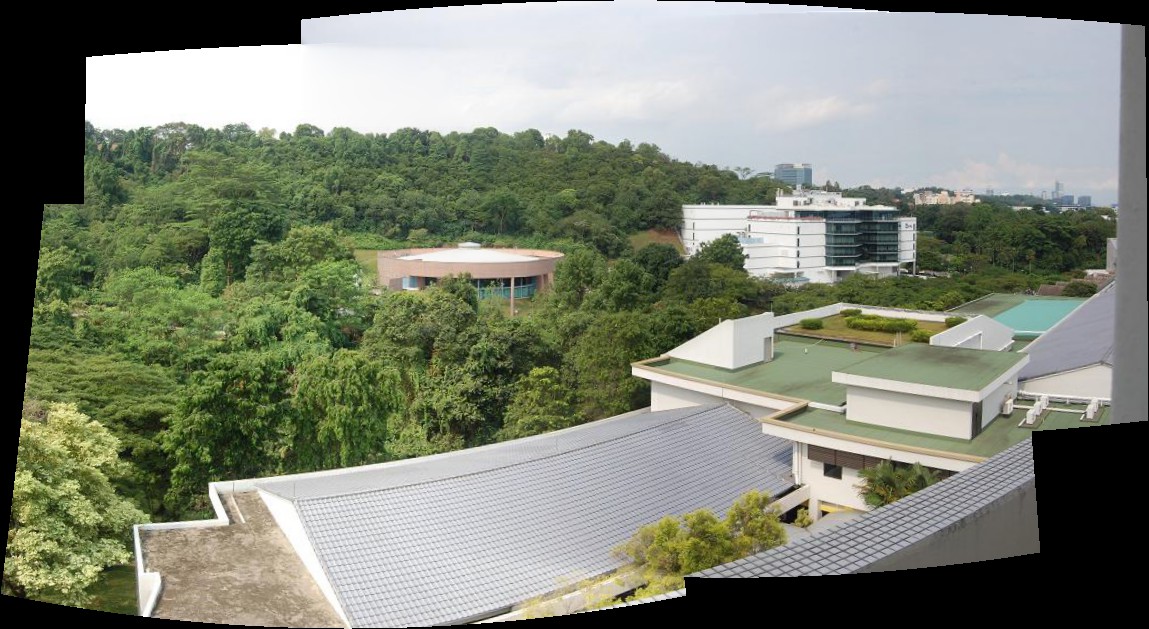}};
			\zoombox[color code=blue, magnification=7]{0.58,0.4}
			\zoombox[color code=green, magnification=6]{0.75,0.62}
			\end{tikzpicture}
		}
	}\,\!
	\subfloat[ICE~\cite{ICE}] {%
		\resizebox{0.9\textwidth}{!}{
			\begin{tikzpicture}[zoomboxarray , zoomboxarray rows=1, zoomboxarray width=0.4\textwidth, zoomboxes xshift=1.1]
			\node [image node] {\includegraphics[height=0.15\textheight]{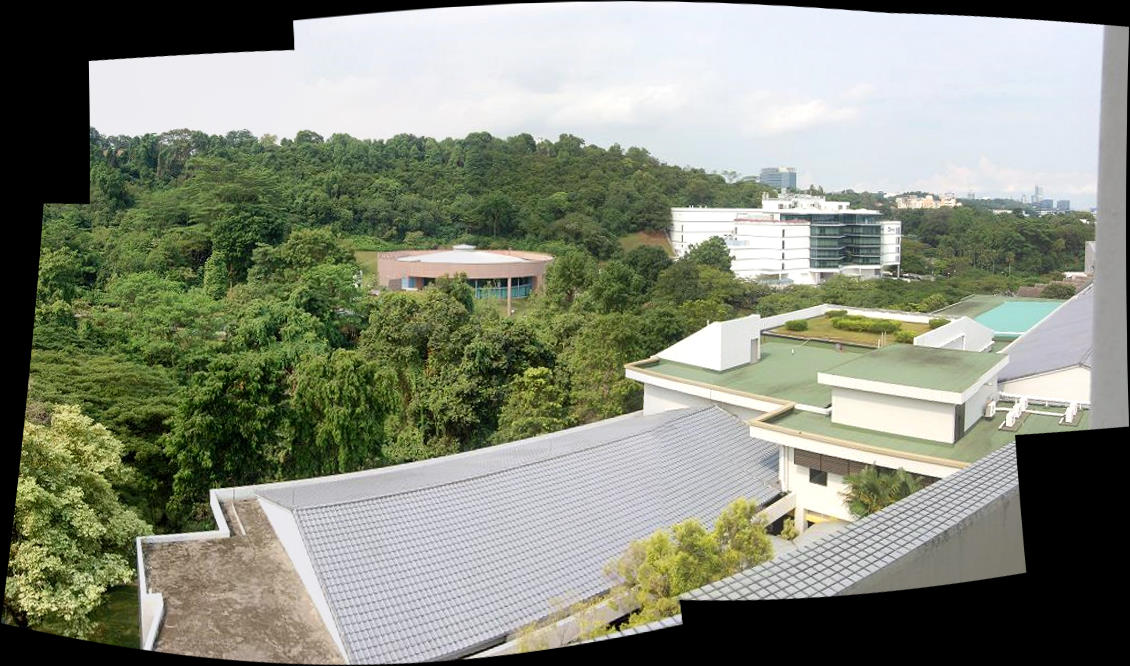}};
			\zoombox[color code=blue, magnification=7]{0.58,0.42}
			\zoombox[color code=green, magnification=6]{0.75,0.62}
			\end{tikzpicture}
		}
	}\,\!
	\subfloat[SPHP~\cite{chang_shape-preserving_2014}] {%
		\resizebox{0.9\textwidth}{!}{
			\begin{tikzpicture}[zoomboxarray , zoomboxarray rows=1, zoomboxarray width=0.4\textwidth, zoomboxes xshift=1.05, execute at end picture={\draw[thick,red] (8.9,0.95) circle (.5cm);}]
			\node [image node] {\includegraphics[height=0.15\textheight]{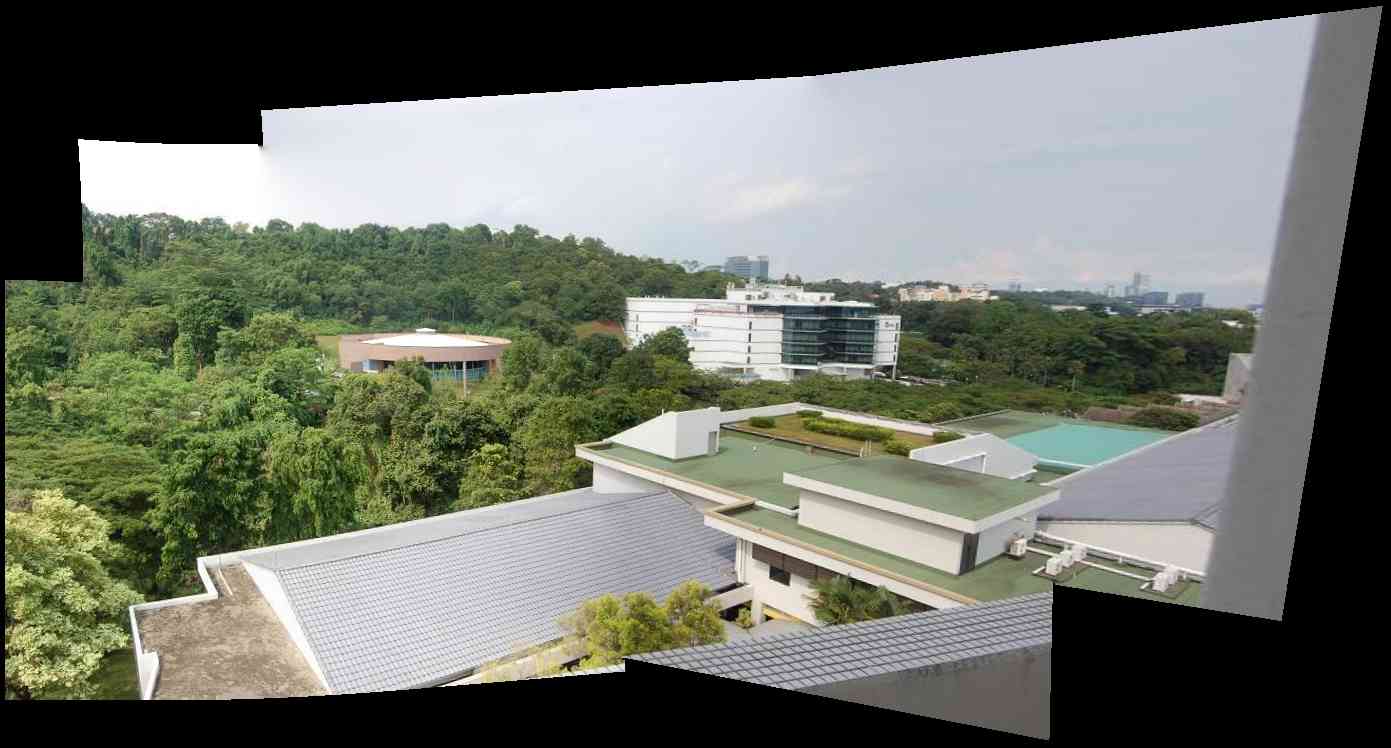}};
			\zoombox[color code=blue, magnification=7]{0.43,0.38}
			\zoombox[color code=green, magnification=6]{0.6,0.53}
			\end{tikzpicture}
		}
	}\,\!
	\subfloat[APAP~\cite{zaragoza_as-projective-as-possible_2014}] {%
		\resizebox{0.9\textwidth}{!}{
			\begin{tikzpicture}[zoomboxarray , zoomboxarray rows=1, zoomboxarray width=0.4\textwidth, zoomboxes xshift=1.08, execute at end picture={\draw[thick,red] (8.7,2.4) circle (.7cm);}, execute at end picture={\draw[thick,red] (12,2) circle (.7cm);}]
			\node [image node] {\includegraphics[height=0.15\textheight]{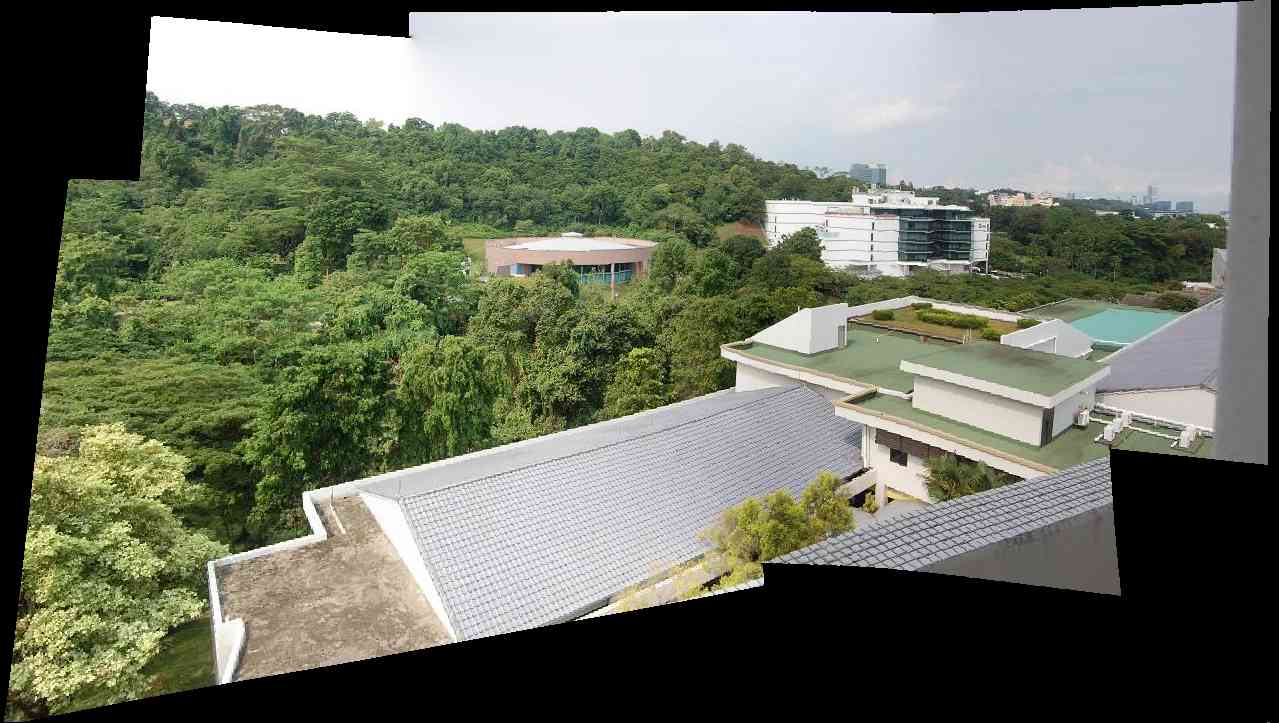}};
			\zoombox[color code=blue, magnification=7]{0.58,0.5}
			\zoombox[color code=green, magnification=6]{0.73,0.65}
			\end{tikzpicture}
		}
	}\,\!
	\subfloat[BRAS] {%
		\resizebox{0.9\textwidth}{!}{
			\begin{tikzpicture}[zoomboxarray , zoomboxarray rows=1, zoomboxarray width=0.4\textwidth, zoomboxes xshift=1.1]
			\node [image node] {\includegraphics[height=0.15\textheight]{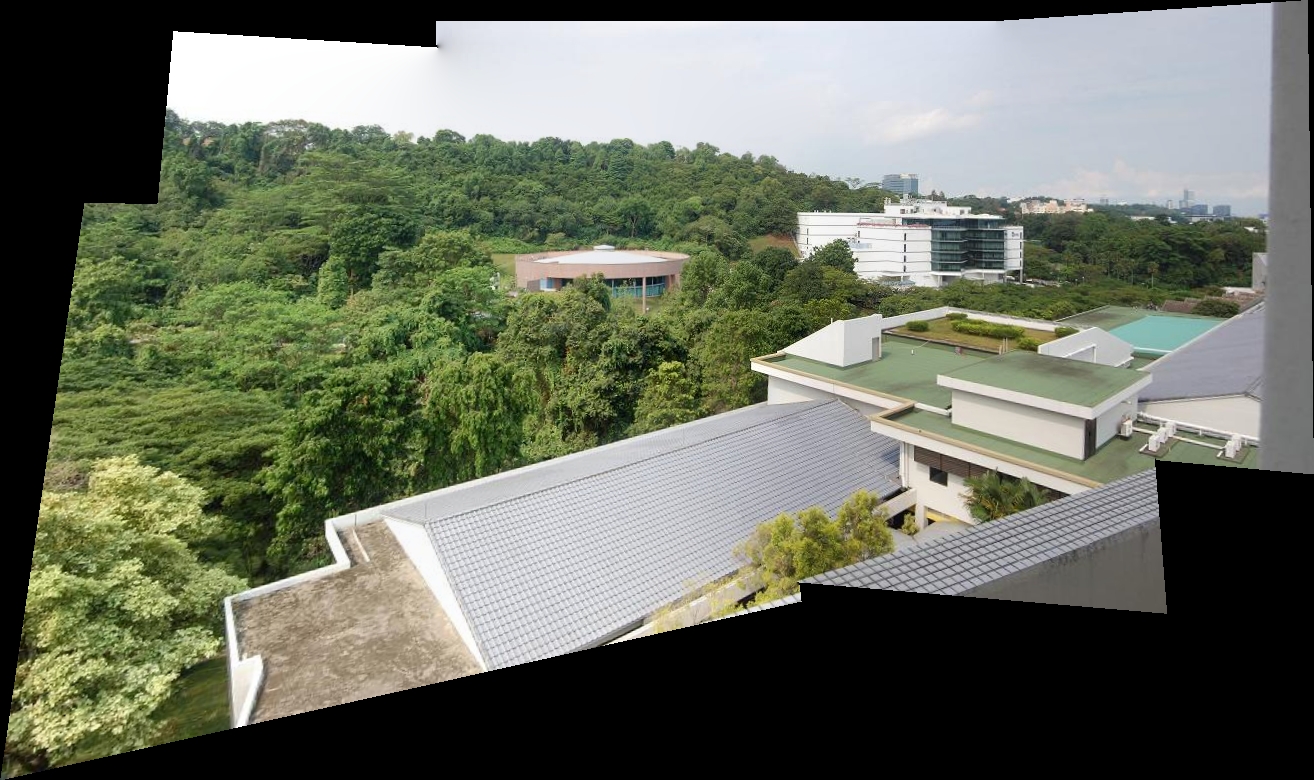}};
			\zoombox[color code=blue, magnification=7]{0.58,0.5}
			\zoombox[color code=green, magnification=6]{0.73,0.65}
			\end{tikzpicture}
		}
	}\\
	\caption{\tcr{Stitched images on the {\it forest\/} dataset~\cite{zaragoza_as-projective-as-possible_2014}. Red circles in the insets highlight artifacts.}}\label{fig:forest}
\end{figure*}

\begin{figure*}
	\centering
	\subfloat[AutoStitch~\cite{brown_automatic_2007}] {%
		\resizebox{0.95\textwidth}{!}{
			\begin{tikzpicture}[zoomboxarray, zoomboxarray rows=1, zoomboxarray width=0.4\textwidth, zoomboxes xshift=.99, execute at end picture={\draw[thick,red] (9.6,2.2) circle (.7cm);}]
			\node [image node] {\includegraphics[height=0.15\textheight]{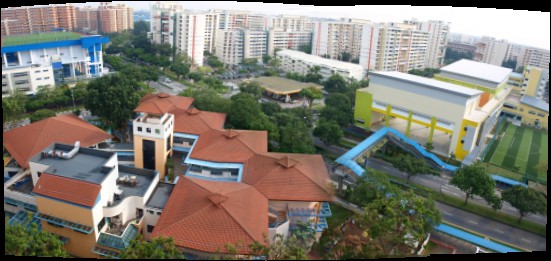}};
			\zoombox[color code=blue]{0.6,0.80}
			\zoombox[color code=green]{0.6,0.55}
			\end{tikzpicture}
		}
	}\,\!
	\subfloat[ICE~\cite{ICE}] {%
		\resizebox{0.95\textwidth}{!}{
			\begin{tikzpicture}[zoomboxarray, zoomboxarray rows=1, zoomboxarray width=0.4\textwidth, zoomboxes xshift=0.99, execute at end picture={\draw[thick,red] (9.9,1) circle (.7cm);}, execute at end picture={\draw[thick,red] (9,2) circle (.7cm);}, execute at end picture={\draw[rotate around={150:(13,1.8)}, thick,red] (13,1.8) ellipse (.8cm and 1.5cm);}]
			\node [image node] {\includegraphics[height=0.15\textheight]{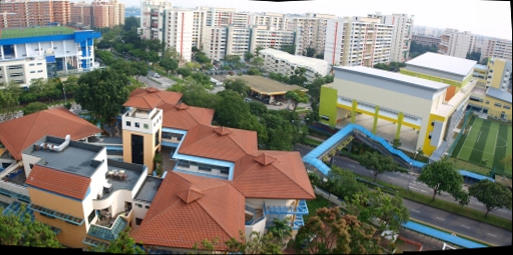}};
			\zoombox[color code=blue]{0.6,0.82}
			\zoombox[color code=green]{0.6,0.55}
			\end{tikzpicture}
		}
	}\,\!
	\subfloat[SPHP~\cite{chang_shape-preserving_2014}] {%
		\resizebox{0.95\textwidth}{!}{
			\begin{tikzpicture}[zoomboxarray, zoomboxarray rows=1, zoomboxarray width=0.4\textwidth, zoomboxes xshift=1.02, execute at end picture={\draw[rotate around={90:(9,2.5)}, thick,red] (9,2.5) ellipse (.6cm and 1.2cm);}]
			\node [image node] {\includegraphics[height=0.15\textheight]{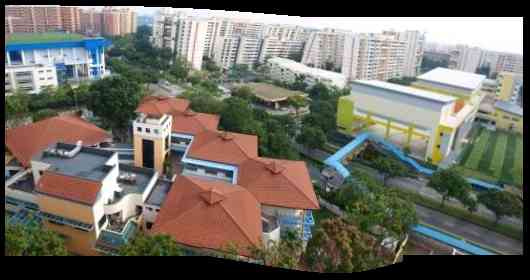}};
			\zoombox[color code=blue]{0.6,0.78}
			\zoombox[color code=green]{0.6,0.55}
			\end{tikzpicture}
		}
	}\,\!
	\subfloat[APAP~\cite{zaragoza_as-projective-as-possible_2014}] {%
		\resizebox{0.95\textwidth}{!}{
			\begin{tikzpicture}[zoomboxarray, zoomboxarray rows=1, zoomboxarray width=0.4\textwidth, zoomboxes xshift=0.97, execute at end picture={\draw[rotate around={0:(9.5,2.3)}, thick,red] (9.5,2.3) ellipse (.6cm and .9cm);}]
			\node [image node] {\includegraphics[height=0.15\textheight]{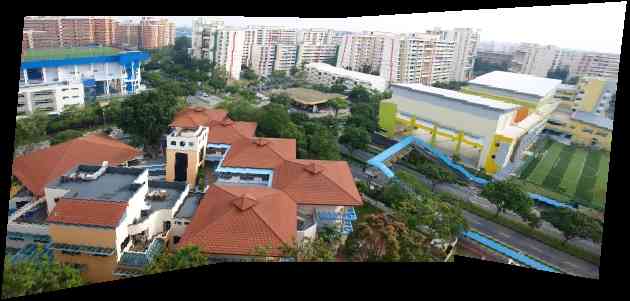}};
			\zoombox[color code=blue]{0.56,0.78}
			\zoombox[color code=green]{0.56,0.55}
			\end{tikzpicture}
		}
	}\,\!
	\subfloat[BRAS] {%
		\resizebox{0.95\textwidth}{!}{%
			\begin{tikzpicture}[zoomboxarray, zoomboxarray rows=1, zoomboxarray width=0.4\textwidth, zoomboxes xshift=0.97]
			\node [image node] {\includegraphics[height=0.15\textheight]{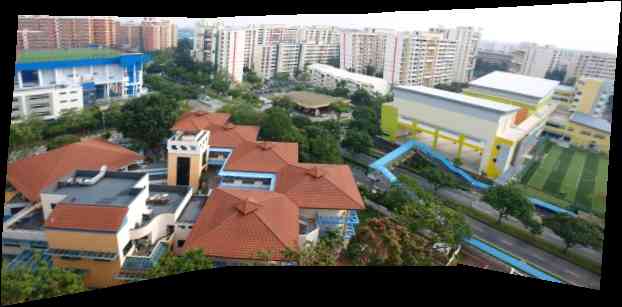}};
			\zoombox[color code=blue]{0.56,0.78}
			\zoombox[color code=green]{0.56,0.55}
			\end{tikzpicture}
		}
	}\\
	\caption{\tcr{Stitched images on the {\it rooftops\/} dataset~\cite{lin_smoothly_2011}. Red circles in the insets highlight artifacts.}}\label{fig:rooftops}
\end{figure*}

\begin{figure*}
	\centering
	\subfloat[AutoStitch~\cite{brown_automatic_2007}] {%
		\resizebox{0.8\textwidth}{!}{%
			\begin{tikzpicture}[zoomboxarray, zoomboxarray rows=1, zoomboxarray width=0.4\textwidth, zoomboxes xshift=1.1, execute at end picture={\draw[thick,red] (8,2.7) circle (.5cm);}]
				\node [image node] {\includegraphics[height=0.13\textheight]{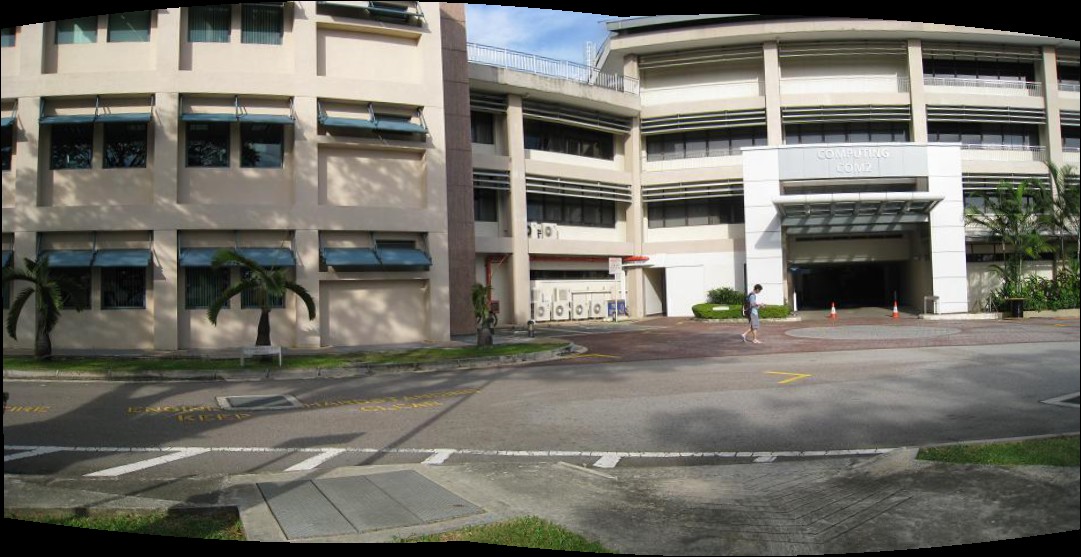}};
				\zoombox[color code=blue, magnification=7]{0.57,0.9}
				\zoombox[color code=green, magnification=6]{0.45,0.2}
			\end{tikzpicture}
		}
	}\\\vspace{-2mm}
	\subfloat[ICE~\cite{ICE}] {%
		\resizebox{0.8\textwidth}{!}{%
			\begin{tikzpicture}[zoomboxarray, zoomboxarray rows=1, zoomboxarray width=0.4\textwidth, zoomboxes xshift=1.1, execute at end picture={\draw[thick,red] (7.9,2.6) circle (.5cm);}]
				\node [image node] {\includegraphics[height=0.13\textheight]{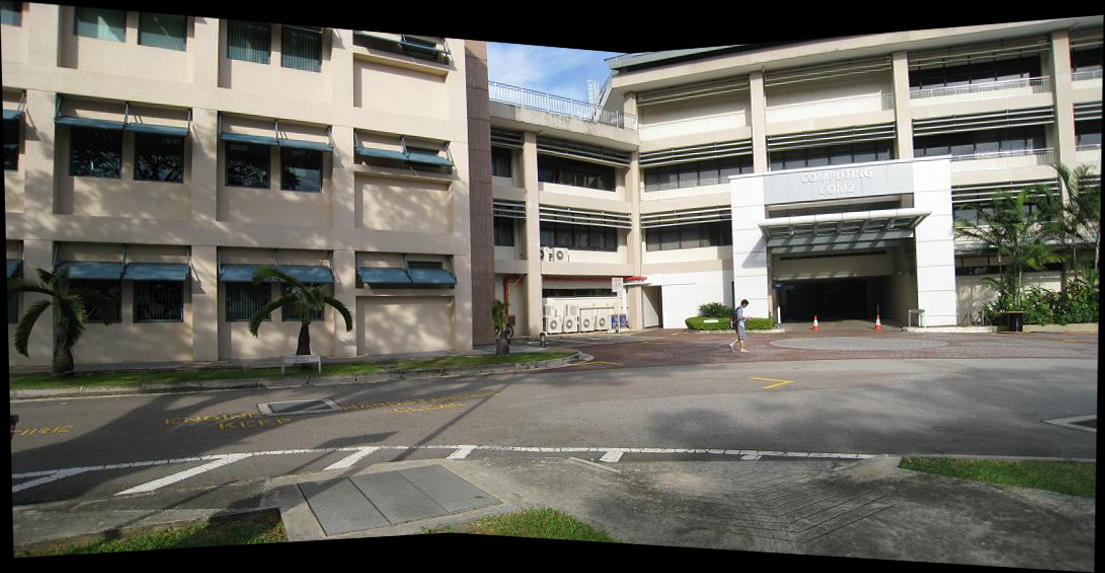}};
				\zoombox[color code=blue, magnification=7]{0.57,0.85}
				\zoombox[color code=green, magnification=6]{0.45,0.2}
			\end{tikzpicture}
		}
	}\\\vspace{-2mm}
	\subfloat[SPHP~\cite{chang_shape-preserving_2014}] {%
		\resizebox{0.8\textwidth}{!}{%
			\begin{tikzpicture}[zoomboxarray, zoomboxarray rows=1, zoomboxarray width=0.4\textwidth, zoomboxes xshift=1.1, execute at end picture={\draw[thick,red] (7.9,2.7) circle (.5cm);}]
				\node [image node] {\includegraphics[height=0.13\textheight]{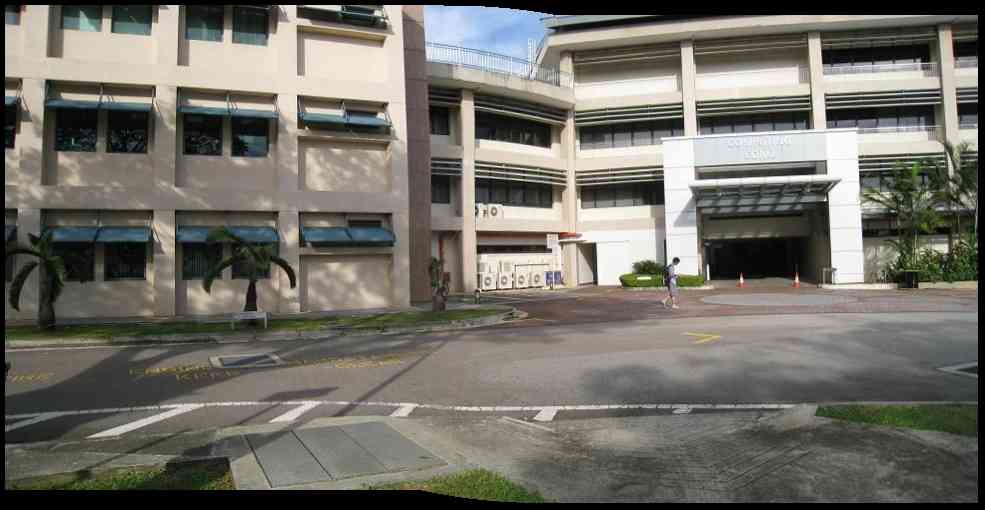}};
				\zoombox[color code=blue, magnification=7]{0.57,0.91}
				\zoombox[color code=green, magnification=6]{0.45,0.2}
			\end{tikzpicture}
		}
	}\\\vspace{-2mm}
	\subfloat[CPW~\cite{hu_multi-objective_2015}] {%
		\resizebox{0.8\textwidth}{!}{%
			\begin{tikzpicture}[zoomboxarray, zoomboxarray rows=1, zoomboxarray width=0.4\textwidth, zoomboxes xshift=1.23, execute at end picture={\draw[thick,red] (7.9,2.6) circle (.5cm);}, execute at end picture={\draw[thick,red] (10,2) circle (.7cm);}]
				\node [image node] {\includegraphics[height=0.13\textheight]{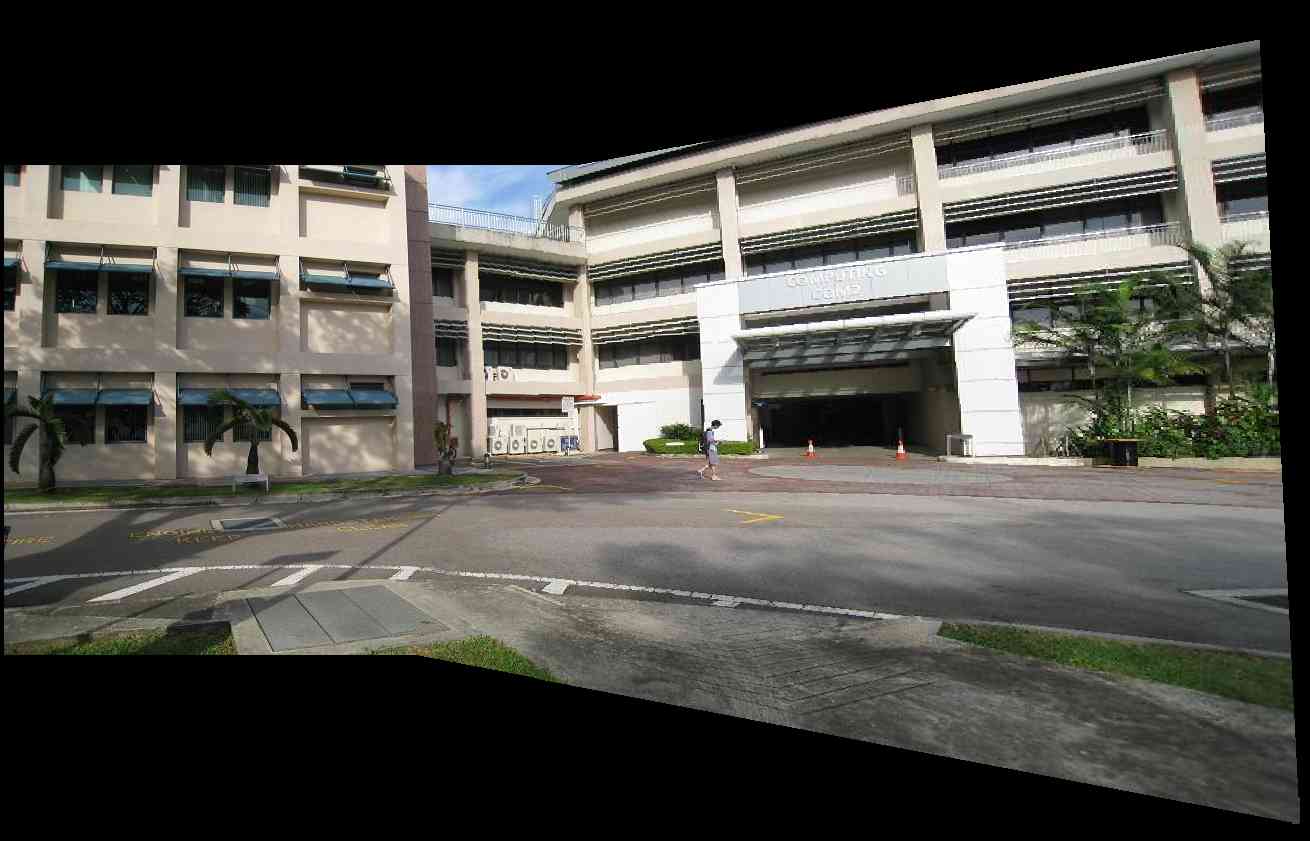}};
				\zoombox[color code=blue, magnification=7]{0.41,0.75}
				\zoombox[color code=green, magnification=6]{0.36,0.29}
			\end{tikzpicture}
		}
	}\\\vspace{-2mm}
	\subfloat[APAP~\cite{zaragoza_as-projective-as-possible_2014}] {%
		\resizebox{0.8\textwidth}{!}{%
			\begin{tikzpicture}[zoomboxarray, zoomboxarray rows=1, zoomboxarray width=0.4\textwidth, zoomboxes xshift=1.2, execute at end picture={\draw[thick,red] (7.5,2.7) circle (.5cm);}]
				\node [image node] {\includegraphics[height=0.13\textheight]{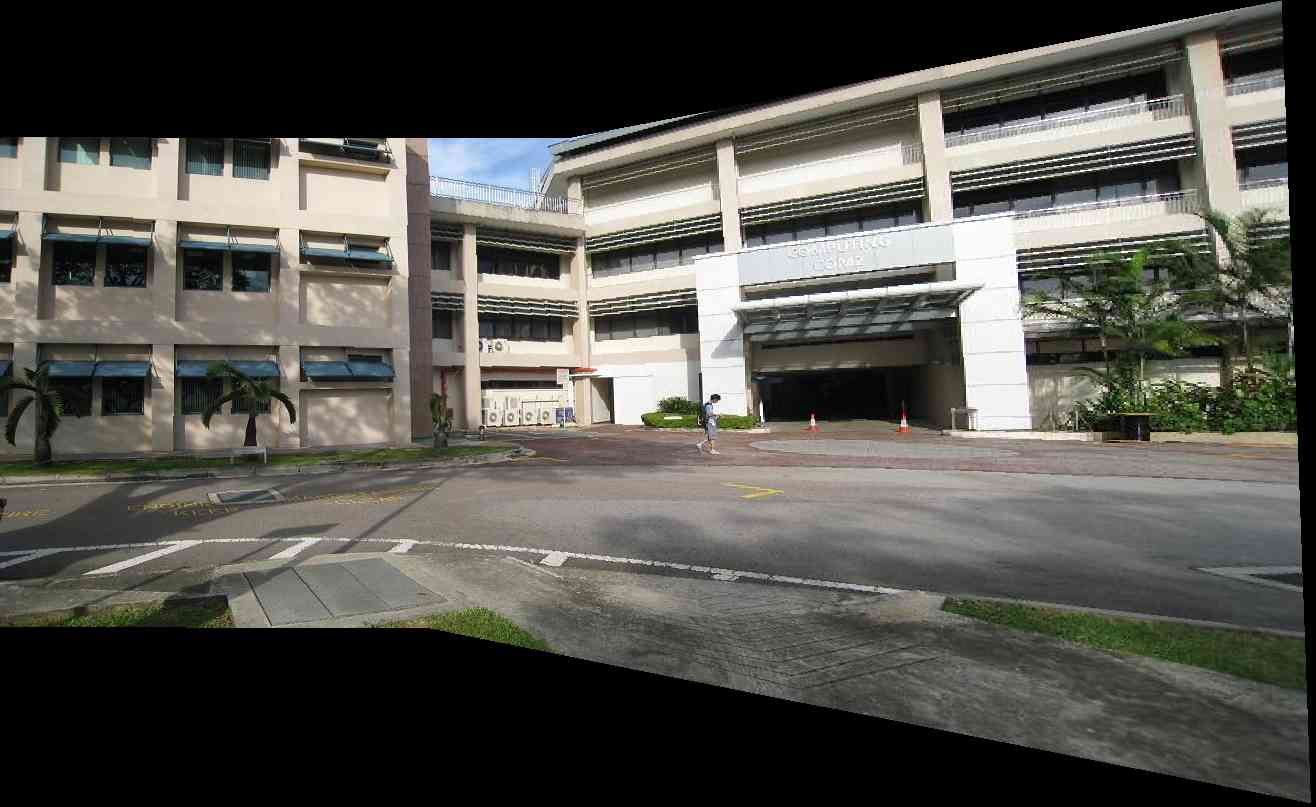}};
				\zoombox[color code=blue, magnification=7]{0.43,0.77}
				\zoombox[color code=green, magnification=6]{0.35,0.3}
			\end{tikzpicture}
		}
	}\\\vspace{-2mm}
	\subfloat[BRAS] {%
		\resizebox{0.8\textwidth}{!}{%
			\begin{tikzpicture}[zoomboxarray, zoomboxarray rows=1, zoomboxarray width=0.4\textwidth, zoomboxes xshift=1.2]
				\node [image node] {\includegraphics[height=0.13\textheight]{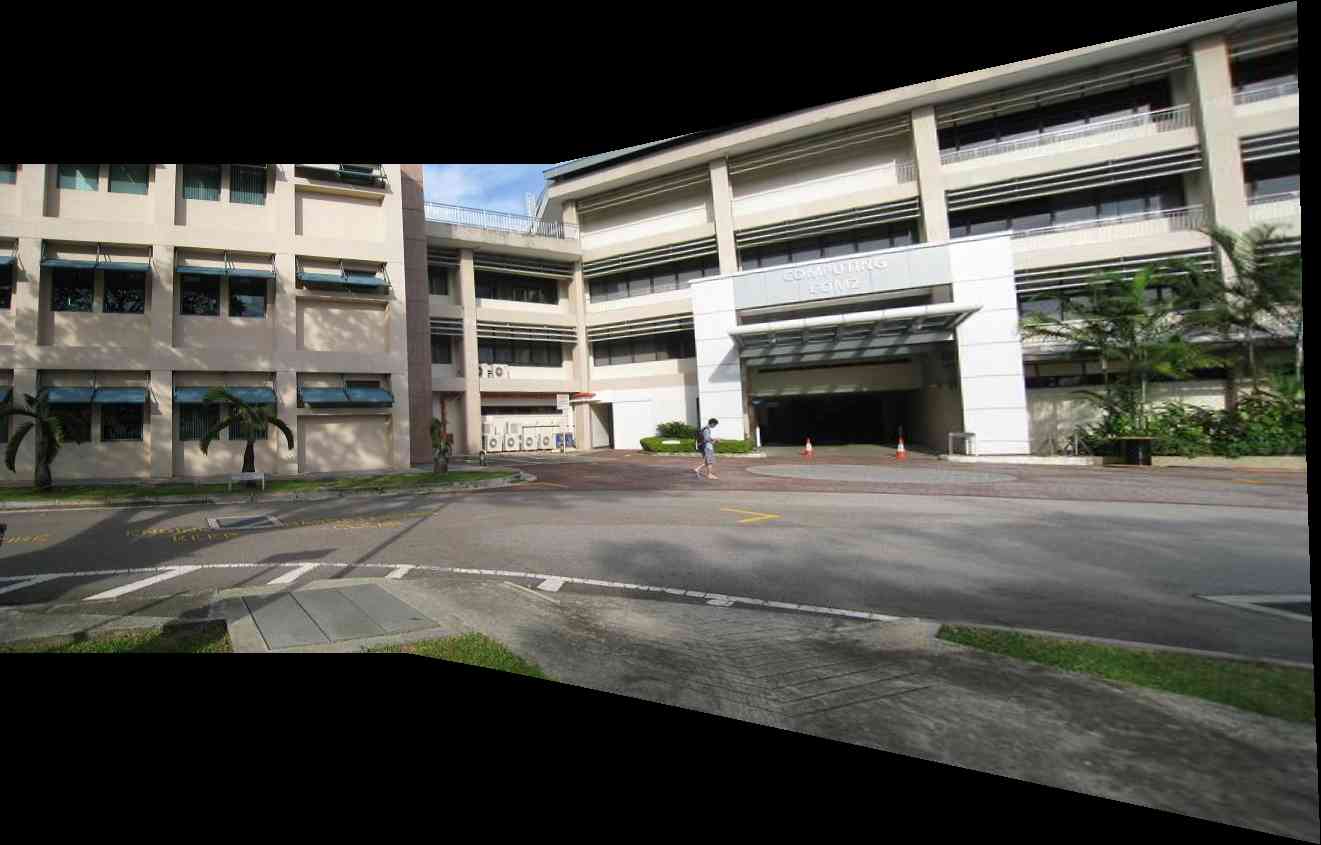}};
				\zoombox[color code=blue, magnification=7]{0.43,0.76}
				\zoombox[color code=green, magnification=6]{0.35,0.3}
			\end{tikzpicture}
		}
	}\\\vspace{-1mm}
	\caption{\tcr{Stitched images on the {\it carpark\/} dataset~\cite{gao_constructing_2011}. Red circles in the insets highlight artifacts.}}\label{fig:carpark}
\end{figure*}

\begin{figure*}
	\centering
	\subfloat[AutoStitch~\cite{brown_automatic_2007}] {%
		\resizebox{0.35\textwidth}{!}{%
			\begin{tikzpicture}[zoomboxarray, zoomboxarray rows=1, zoomboxarray width=3.2cm, zoomboxarray height=1.6cm, zoomboxes xshift=0, zoomboxes yshift=-.8, execute at end picture={\draw[thick,red] (.8,-2.5) ellipse (.7cm and .4cm);}, execute at end picture={\draw[thick,red] (.9,-.7) circle (.5cm);}]
			\node [image node] {\includegraphics[height=0.11\textheight]{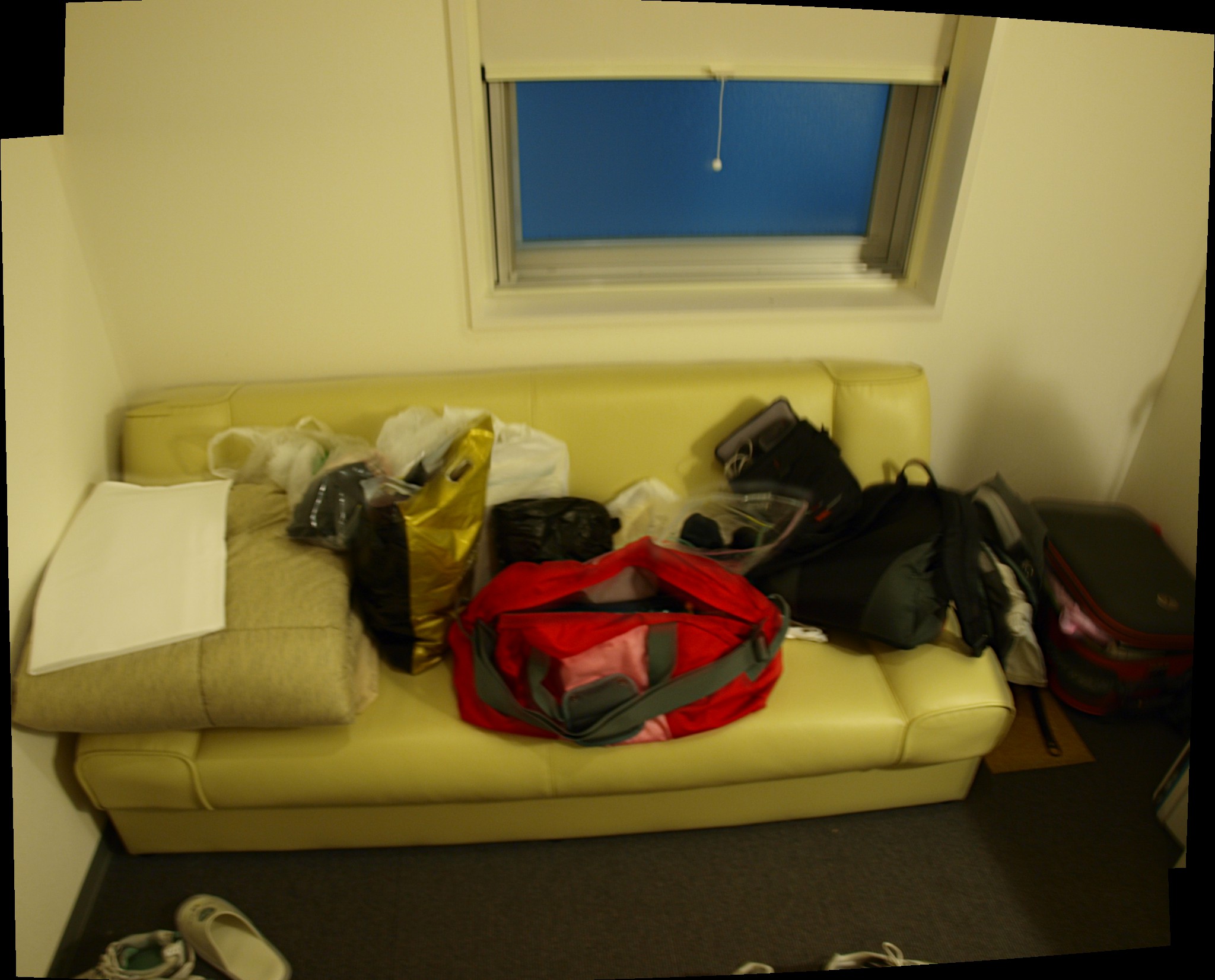}};\vspace{-1cm}
			\zoombox[color code=blue, magnification=3]{0.15,0.55}
			\zoombox[color code=green, magnification=3]{0.75,0.78}
			\zoombox[color code=red, magnification=2]{0.57,0.7}
			\zoombox[color code=cyan, magnification=3]{0.4,0.85}
			\end{tikzpicture}
		}
	}
	\subfloat[ICE~\cite{ICE}] {%
		\resizebox{0.35\textwidth}{!}{%
			\begin{tikzpicture}[zoomboxarray, zoomboxarray rows=1, zoomboxarray width=3.2cm, zoomboxarray height=1.6cm, zoomboxes xshift=0, zoomboxes yshift=-.8, execute at end picture={\draw[thick,red] (2.5,-2.6) circle (.4cm);}, execute at end picture={\draw[thick,red] (.8,-2.6) ellipse (.7cm and .4cm);}]
			\node [image node] {\includegraphics[height=0.11\textheight]{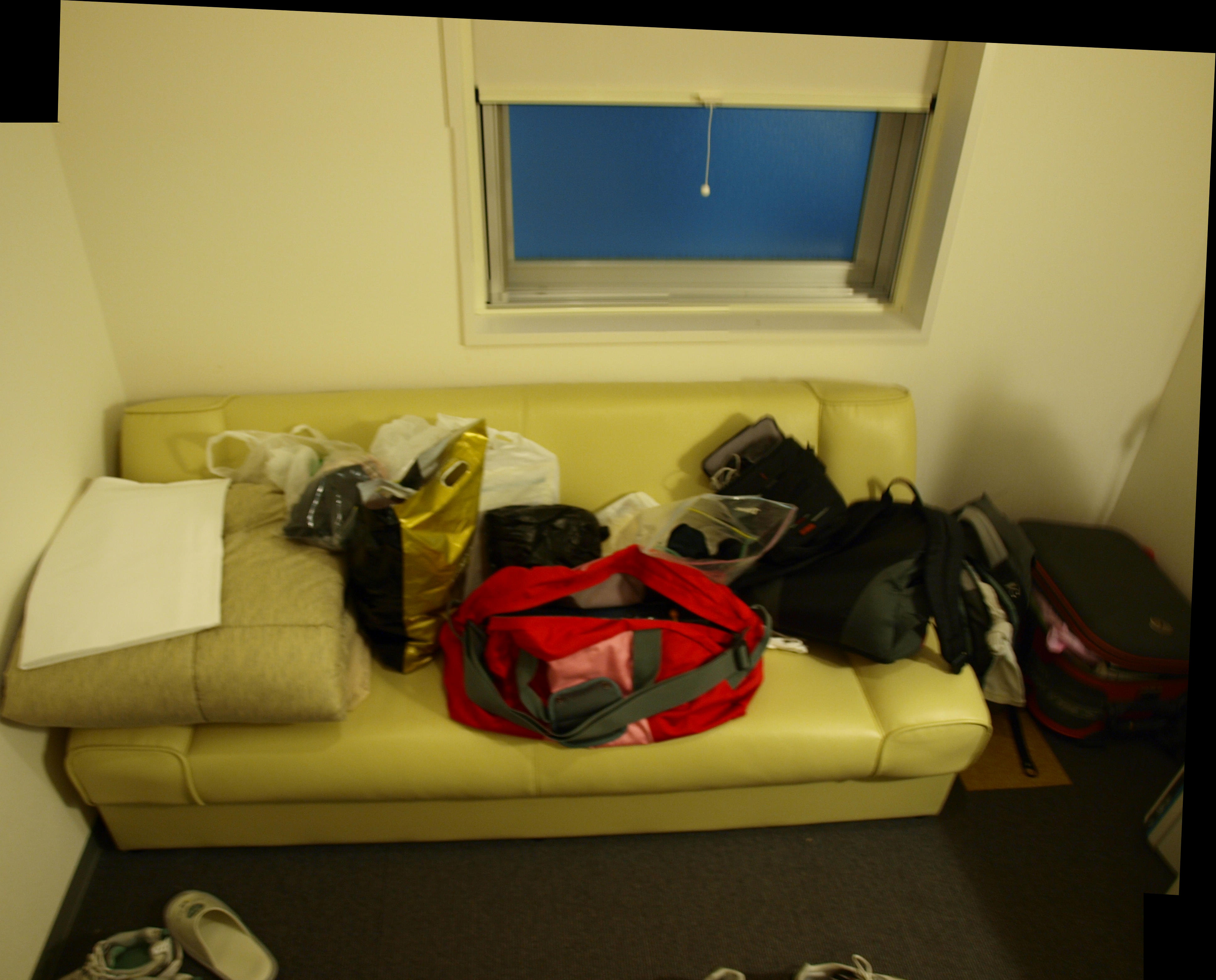}};\vspace{-1cm}
			\zoombox[color code=blue, magnification=3]{0.15,0.55}
			\zoombox[color code=green, magnification=3]{0.75,0.78}
			\zoombox[color code=red, magnification=2]{0.55,0.7}
			\zoombox[color code=cyan, magnification=3]{0.4,0.85}
			\end{tikzpicture}
		}
	}
	\subfloat[SPHP~\cite{chang_shape-preserving_2014}] {%
		\resizebox{0.35\textwidth}{!}{%
			\begin{tikzpicture}[zoomboxarray, zoomboxarray rows=1, zoomboxarray width=3.2cm, zoomboxarray height=1.6cm, zoomboxes xshift=0, zoomboxes yshift=-.8, execute at end picture={\draw[thick,red] (2.8,-2.9) circle (.4cm);}]
			\node [image node] {\includegraphics[height=0.11\textheight]{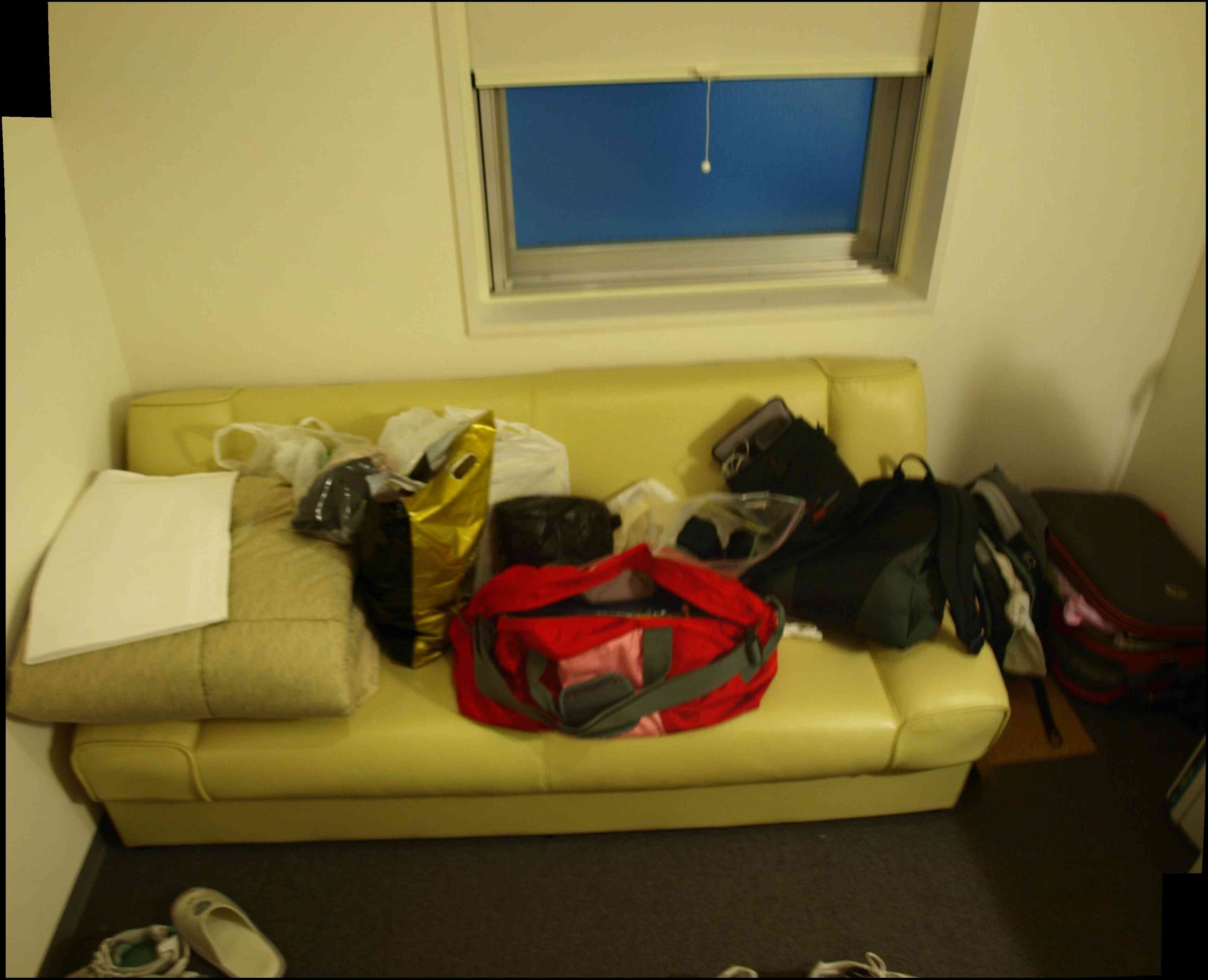}};
			\zoombox[color code=blue, magnification=3]{0.15,0.55}
			\zoombox[color code=green, magnification=3]{0.75,0.78}
			\zoombox[color code=red, magnification=2]{0.57,0.7}
			\zoombox[color code=cyan, magnification=3]{0.8,0.25}
			\end{tikzpicture}
		}
	}\\\vspace{-3mm}
	\subfloat[CPW~\cite{hu_multi-objective_2015}] {%
		\resizebox{0.35\textwidth}{!}{%
			\begin{tikzpicture}[zoomboxarray, zoomboxarray rows=1, zoomboxarray width=3.2cm, zoomboxarray height=1.6cm, zoomboxes xshift=0, zoomboxes yshift=-0.8, execute at end picture={\draw[thick,red] (2.6,-0.8) circle (.5cm);}, execute at end picture={\draw[thick,red] (2.6,-2.7) circle (.5cm);}]
			\node [image node] {\includegraphics[height=0.11\textheight]{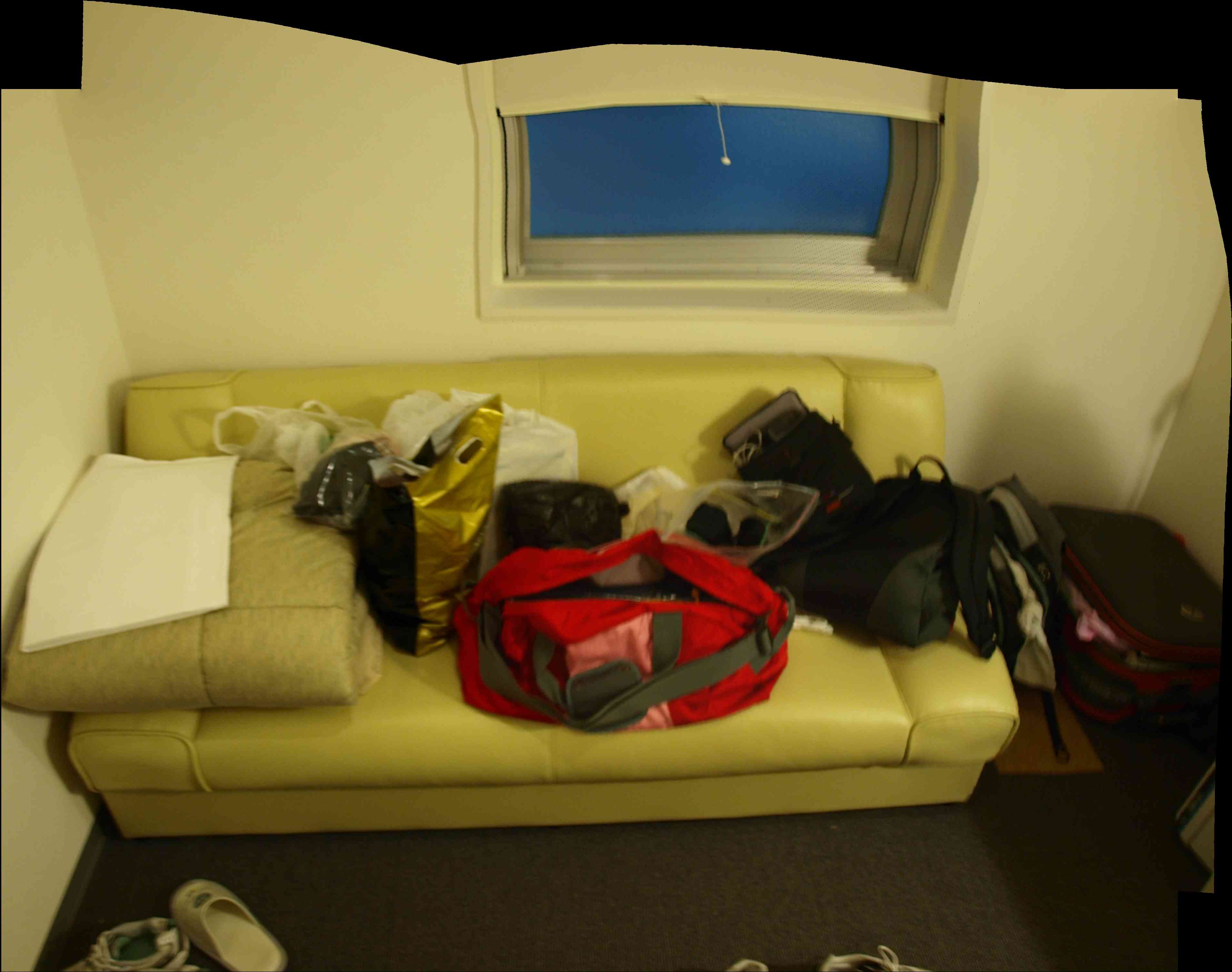}};
			\zoombox[color code=blue, magnification=3]{0.15,0.55}
			\zoombox[color code=green, magnification=3]{0.75,0.78}
			\zoombox[color code=red, magnification=2]{0.57,0.7}
			\zoombox[color code=cyan, magnification=3]{0.4,0.85}
			\end{tikzpicture}
		}
	}
	\subfloat[APAP~\cite{zaragoza_as-projective-as-possible_2014}] {%
		\resizebox{0.35\textwidth}{!}{%
			\begin{tikzpicture}[zoomboxarray, zoomboxarray rows=1, zoomboxarray width=3.2cm, zoomboxarray height=1.6cm, zoomboxes xshift=0, zoomboxes yshift=-0.8, execute at end picture={\draw[thick,red] (.8,-2.6) ellipse (.7cm and .4cm);}]
			\node [image node] {\includegraphics[height=0.11\textheight]{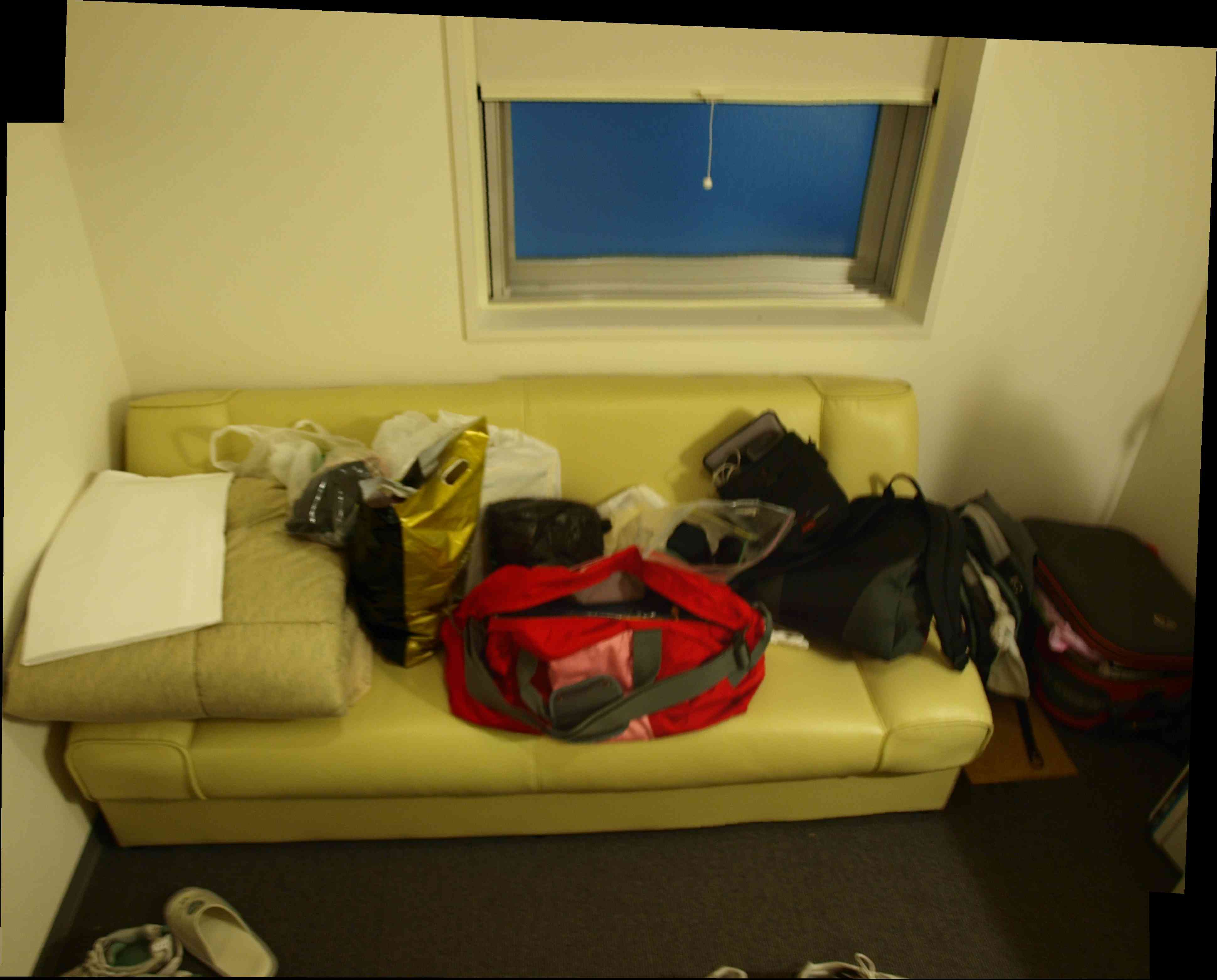}};
			\zoombox[color code=blue, magnification=3]{0.15,0.55}
			\zoombox[color code=green, magnification=3]{0.75,0.78}
			\zoombox[color code=red, magnification=2]{0.57,0.7}
			\zoombox[color code=cyan, magnification=3]{0.4,0.85}
			\end{tikzpicture}
		}
	}
	\subfloat[BRAS] {%
		\resizebox{0.35\textwidth}{!}{%
			\begin{tikzpicture}[zoomboxarray, zoomboxarray rows=1, zoomboxarray width=3.2cm, zoomboxarray height=1.6cm, zoomboxes xshift=0, zoomboxes yshift=-.8]
			\node [image node] {\includegraphics[height=0.11\textheight]{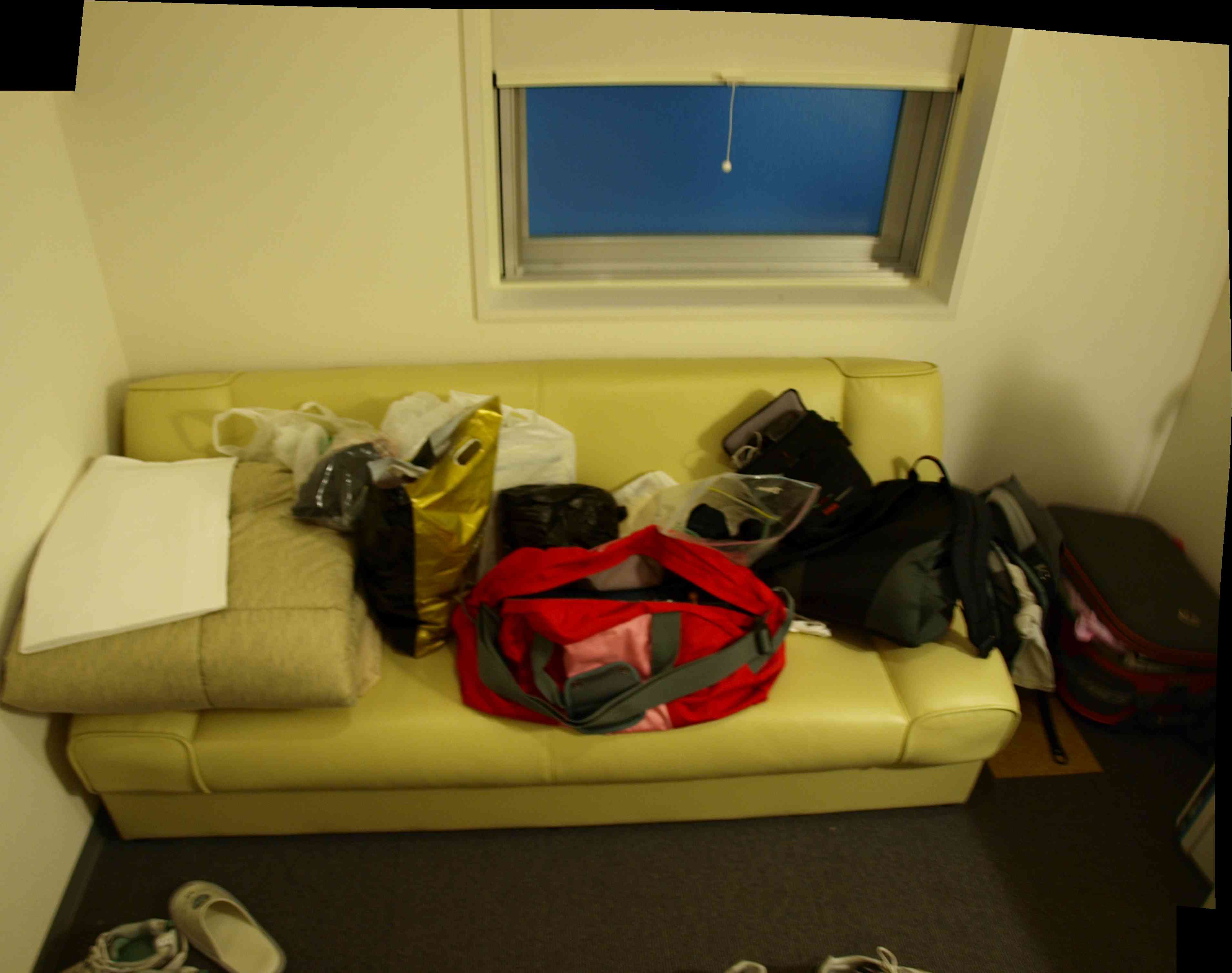}};
			\zoombox[color code=blue, magnification=3]{0.15,0.55}
			\zoombox[color code=green, magnification=3]{0.75,0.78}
			\zoombox[color code=red, magnification=2]{0.57,0.7}
			\zoombox[color code=cyan, magnification=3]{0.4,0.85}
			\end{tikzpicture}
		}
	}\\\vspace{-1mm}
	\caption{\tcr{Stitched images on the {\it couch\/} dataset~\cite{gao_constructing_2011}. Red circles in the insets highlight artifacts.}}\label{fig:couch}
\end{figure*}

\begin{figure*}
	\centering
	\subfloat[\label{subfig:bras_railtracks}]{\includegraphics[width=0.49\textwidth]{figs/railtracks_BRAS_aligned}}\,\!
	\subfloat[\label{subfig:multicell_railtracks}]{\includegraphics[width=0.49\textwidth]{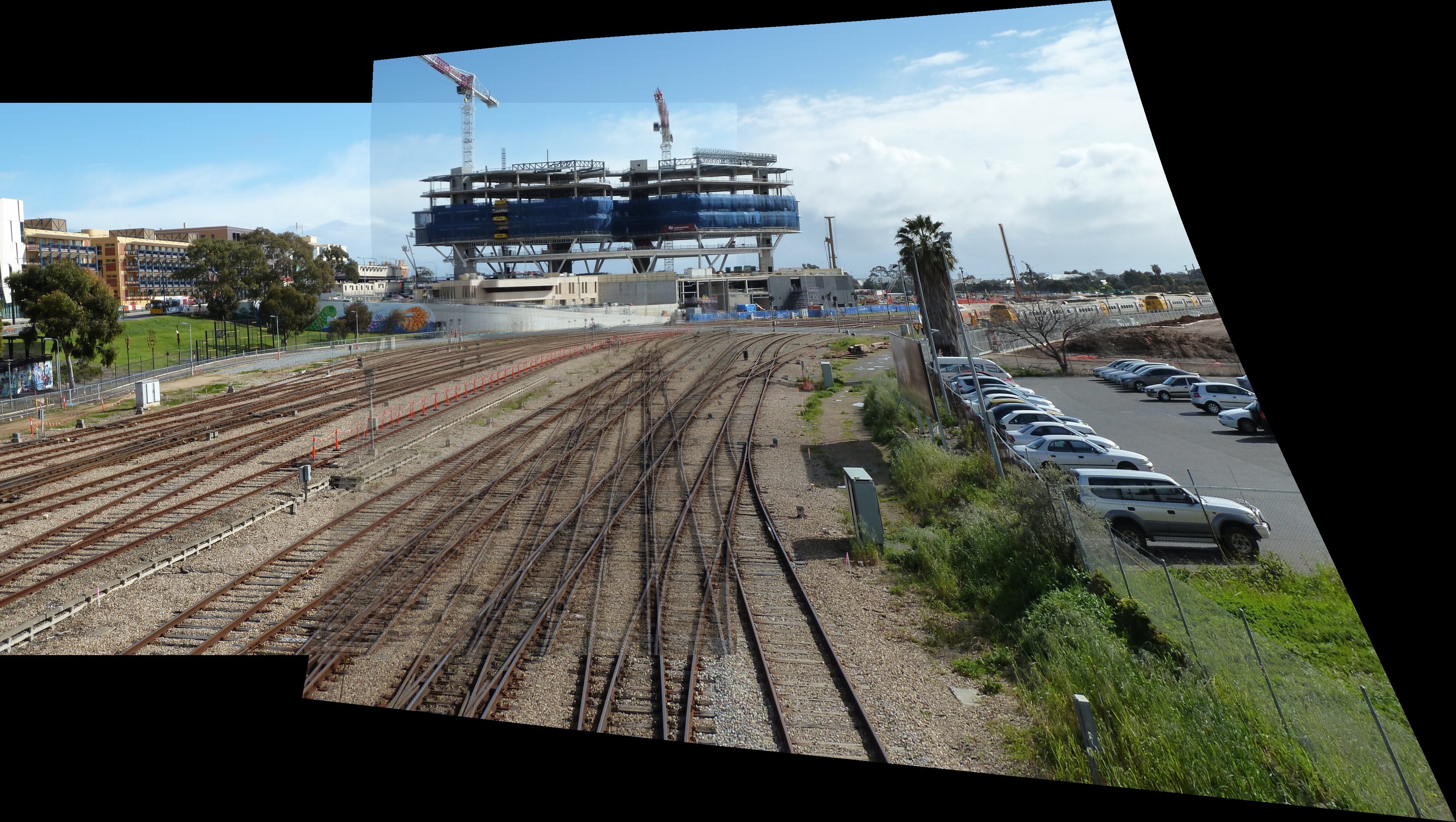}}
	\caption{\tcr{Comparison between \protect\subref{subfig:bras_railtracks} BRAS and \protect\subref{subfig:multicell_railtracks} a multi-cell extension to~\eqref{eqn:pairwise} on the \emph{railtracks} image set. Note that the only difference between these two methods is on whether the rank-1 constraint is imposed and clearly the rank-1 constraint improves the alignment performance.}}\label{fig:multi_cell}
\end{figure*}

\end{document}